\renewcommand{\mkbegdispquote}[2]{\vspace{5pt}\itshape}
\renewcommand{\mkenddispquote}[2]{\vspace{5pt}}
\newcommand{\xmark}{\ding{55}}
\newcommand\score[2]{
\pgfmathsetmacro\pgfxa{#1+1}
\tikzstyle{scorestars}=[star, star points=5, star point ratio=2.25, draw,inner sep=1.3pt,anchor=outer point 3]
  \begin{tikzpicture}[baseline]
    \foreach \i in {1,...,#2} {
    \pgfmathparse{(\i<=#1?"yellow":"gray")}
    \edef\starcolor{\pgfmathresult}
    \draw (\i*1.75ex,0) node[name=star\i,scorestars,fill=\starcolor]  {};
   }
  \end{tikzpicture}
}
\newcolumntype{s}{>{\columncolor[HTML]{AAACED}} p{3.5cm}}
\newcolumntype{l}{>{\columncolor[HTML]{AAACED}} p{4.5cm}}
\newcommand{\comment}[1]{}
\theoremstyle{definition}
\newtheorem{defi}{Definition}[section]
\newtheorem{prop}{Proposition}[section]
\newtheorem{lemma}{Lemma}[section]
\newtheorem{ex}{Example}[section]
\newtheorem{ex*}{Example*}
\newcommand{\lin}{\ensuremath{\mathtt{in}}}
\newcommand{\lout}{\ensuremath{\mathtt{out}}}
\newcommand{\lundec}{\ensuremath{\mathtt{undec}}}
\newcommand{\rel}{\mathcal{R}}
\newcommand{\sen}{\mathcal{S}}
\renewcommand{\P}{\mathcal{P}}
\newcommand{\N}{\mathbb{N}}
\newcommand{\R}{\mathbb{R}}
\definecolor{ForestGreen}{RGB}{33, 166, 55}
\newcommand{\natalia}[1]{\textcolor{red}{#1}}
\newcommand{\jordi}[1]{\textcolor{ForestGreen}{#1}}
\newcommand{\jar}[1]{\textcolor{olive}{#1}}
\newcommand{\maite}[1]{\textcolor{orange}{#1}}
\newcommand{\todo}[1]{\textcolor{magenta}{\underline{{\bf TODO:}} #1}}
\journal{}
\begin{document}

\begin{frontmatter}


\title{A model to support collective reasoning: Formalization, analysis and computational assessment}
%
%



\author[add1]{Jordi Ganzer}
\ead{ jordi.ganzer\_ ripoll@kcl.ac.uk }

\author[add1]{Natalia Criado}
\ead{natalia.criado\_pacheco@kcl.ac.uk}

\author[add2]{Maite Lopez-Sanchez}
\ead{maite\_lopez@ub.edu}

\author[add1,add4]{Simon Parsons}
\ead{sparsons@lincoln.ac.uk}

\author[add3]{Juan A. Rodriguez-Aguilar}
\ead{jar@iiia.csic.es}

\address[add1]{Department of Informatics, King's College London}
\address[add2]{Faculty of Mathematics and Computer Science, University of Barcelona}
\address[add3]{Artificial Intelligence Research Institute (IIIA-CSIC)}
\address[add4]{School of Computer Science, University of Lincoln}

\begin{abstract}

Inspired by e-participation systems, in this paper we propose a new model to represent human debates and methods to obtain collective conclusions from them. This model overcomes drawbacks of existing approaches by allowing users to introduce new pieces of information into the discussion, to relate them to existing pieces, and also to express their opinion on the pieces proposed by other users. 
In addition, our model does not assume that users' opinions are rational in order to extract information from it, an assumption that significantly limits current approaches. 
Instead, we define a weaker notion of rationality that characterises coherent opinions, and we consider different scenarios based on the coherence of individual opinions and the level of consensus that users have on the debate structure. 
Considering these two factors, we analyse the outcomes of different opinion aggregation functions that compute a collective decision based on the individual opinions and the debate structure. 
In particular, we demonstrate that aggregated opinions can be coherent even if there is a lack of consensus and individual opinions are not coherent. 
We conclude our analysis with a computational evaluation demonstrating that collective opinions can be computed efficiently for real-sized debates.

\end{abstract}

\begin{keyword}
Social Choice \sep Collective decision \sep Argumentation

\end{keyword}

\end{frontmatter}


\section{Introduction}

We live in challenging times. With the rise of artificial intelligence and the looming climate emergency, it is no exaggeration to say that humanity is facing several simultaneous existential crises. At the same time, traditional democracy is being undermined by falling participation \cite{hooghe2017tipping,warren2002can}. In response there have been calls \cite{allen2015process,font2015participation,smith2009democratic} for new ways to tighten the relationship between political institutions and the citizens they represent. Social networks and the palpable impact they had in society have inspired new technologies aiming to improve this relationship --- e-governance, or more specifically e-participation systems \cite{fung2001deepening}. These are systems that are designed to allow citizens to propose, discuss, and even decide, government policy through online platforms.


Significant examples of deployed e-participation systems are those used by the city governments of Barcelona \cite{decidim}, Reykjavik \cite{better-reykjavik}, Madrid \cite{decideMAD}, Helsinki \cite{Helsinki19}, and the one used by the French government \cite{parlement-et-citoyens}. In these systems, participants can carry out structured discussions around some issue.
For example the systems deployed by Barcelona, Reykjavik, Madrid, and Helsinki are for proposals about local issues, while the French government system, Parlement et Citoyens, is for the discussion of potential national legislation. Turning to systems that are not tied to a specific institution, there are tools such as consider.it \cite{consider-it}, Appgree \cite{appgree} and Baoqu \cite{baoqu} whose main focus is scalability --- making the systems fit for use by large numbers of participants.

%

Several methodologies have been already explored as a means to tackle this problem. \emph{Social choice theory} \cite{aziz2017computational,List2017} is an approach for establishing the collective opinion of a group facing a choice between many alternatives. Given a set of alternatives and a set of agents who possess preference relations over the alternatives, social choice theory focuses on how to yield a collective choice that appropriately reflects the agents’ individual preferences. A related approach, but one which focuses on the acceptability of a single issue, is \emph{judgement aggregation}. This tackles the problem of whether to collectively accept  a single issue once the participants have put forward their opinion on it \cite{endriss_moulin_2016,list:2002:aggregating}. 
A further approach, one that focuses more on resolving conflicts in opinions, is \emph{computational argumentation} \cite{Rahwan2009}. Given a set of arguments for particular options, and a set of attack relations (conflicts) between the arguments, argumentation is concerned with identifying those arguments that might be accepted by a rational agent (for different ideas of what makes an argument acceptable).
Mixing judgement aggregation or social choice theory together with argumentation we find several proposals that structure debates using arguments and attack relationships, such as \cite{Awad:2015:JAM,Leite2011}, or attack and defence (or support) relationships \cite{Ganzer2018}. These allow participants to put forward arguments, relations between arguments, and opinions about which of these arguments and relations hold. They then produce an output that is intended to reflect the collective opinion of the participants on the the debate.

This article proposes a new formal model, which we call the ``relational model'',  to structure the information in a debate while allowing more expressiveness than existing approaches permit, both in terms of the structure of the debate, and in terms of the opinions expressed by participants. We propose several methods for aggregating the information of the debate captured by the model to obtain an output reflecting the collective view of the participants involved. Furthermore, we analyse the performance of such methods with respect to several social choice properties adapted from those proposed in the social choice literature.
Finally, we study the computational properties of the aggregation methods to establish whether it is feasible to use them in practice.

In more detail the contributions of this work are:

\begin{itemize}[-]
    \item \emph{Increasing expressiveness.} Collective decision-making frameworks based on traditional argumentation models usually take as starting point a fixed argumentation structure that only model attack relationships between arguments or a combination of attack and support/defence relationships between arguments. These frameworks then allow participants to express opinions about the different arguments included in the debate \cite{Awad:2015:JAM,Ganzer2018,Leite2011}. The fixed nature of the argumentation structure, even if it is defined by the participants, represents a significant drawback for e-participation systems. Adopting a fixed argumentation structure using only attacks, or attacks plus supports/defences, reduces what participants are allowed to express. For instance, a fixed attack relationship between two arguments might be problematic for some participants who disagree on the classification of the relationship as an attack and would have defined the same relationship as a defence. 
    To solve this problem, our model uses relationships that are not subjectively classified as attacks or defences, but only represent the connections between elements of the debate. The subjective classification is applied individually by each participant not in terms of attack or defence, but in terms of acceptability of the connections. Thus, the structure of the debate is focused on organising relevant information, not on expressing the subjective opinion of the participants, which is added separately using other tools.
    
    \item \emph{Going beyond abstract argumentation.} Several approaches \cite{Awad:2015:JAM,Coste-Marquis2007,Leite2011} make use of abstract argumentation frameworks \cite{Dung1995}, or some variations of them, as in \cite{Ganzer2018}, to represent the elements of a debate. In such frameworks, whole arguments are the atomic elements. In work on argumentation, this limitation has led to work on ``structured'' \cite{modgil:prakken:ai} or ``rule-based'' \cite{garcia:simari:tplp}  argumentation which constructs arguments out of lower level components like facts and rules. 
    Since we believe that a debate can hinge on being able to address such lower level components, we take a similar, but more general, approach. 
    We construct debates from two types of abstract objects: statements, which represent sentences without reasoning, and the relationships between statements, which represent the existing reasoning connecting the statements\footnote{We could relate the statements with axioms, premises or conclusions, and the relationships by the rules or demonstration steps that lead from premises to conclusions, just as in structured argumentation.}. 
    We make a sharp distinction between these two types of information, statements and reasoning connecting statements, in order to allow them to be subsequently evaluated in different manners by the participants.
    
    \comment{Depicting the structure using a graphic representation, the statements could be viewed as nodes of a graph and the relationships or reasonings as its edges connecting the nodes.\natalia{Here we need to state why abstract argumentation is not good for e-participation systems}}
    
    \comment{\jordi{Maybe not here: Furthermore, since arguing from different perspectives can lead to connecting the same statements via different reasonings we allow the connection of the statements by several reasonings at the same time, each one represented by a different relationship. Having only one reasoning representing a relationship might lead to some participants to not feel sympathetic with the connection while allowing more relationships representing different reasonings might help the participants with different sensibilities to feel more sympathetic with some of the connections between statements and hence strengthen (via their opinion expressed afterwards) the dependence of the statements.}}
    
    \item \emph{Compound and real valued opinions.} Previous work on argumentation-based approaches has only allowed participants in a debate to express opinions about either the arguments \cite{Ganzer2018,Leite2011}, or about the relationships between arguments \cite{Dunne2011}\footnote{\cite{Dunne2011} is not about combining collective opinions on relationships between arguments, but it provides the groundwork for such a system by studying argumentation where the relationships between arguments have different weights.}. Here we allow participants to provide opinions on both statements and the relationships between them. Opinions about relationships capture participants' acceptance, or otherwise, of the reasoning that the relationship represents; and opinions about statements reflect participants' satisfaction with the statement itself.
    
    Furthermore, we follow \cite{Dunne2011,Leite2011} in allowing opinion about relationships and statements to be expressed using real values rather than the discrete values of \cite{Awad:2015:JAM,Ganzer2018}. This feature allows the participants to express their opinions in a wider range of values, making the approach more flexible. (It should be noted though that most existing e-participation systems just allow users to express agreement or disagreement.)

    \item \emph{A more flexible notion of coherence.} Previous work on determining collective opinions makes use of a notion of ``rationality'' in which an opinion is either determined to be acceptable or not acceptable (where ``acceptable'' has different interpretations but reflects the constraints on distributions of opinions across statements) \cite{Awad:2015:JAM,Ganzer2018,RagoToni2017}. We think that this is somewhat limiting. Since the opinions originate with human participants, and humans are not always consistent in their views, we feel that insisting on this rigid form can lead to lose valuable information.
    Hence, we propose a less restrictive notion of rationality, which we call ``coherence'', to assess the degree to which an opinion is coherent, be it from an agent or from the collective aggregation. To take a simple example, an agent valuing with opposite values two related statements, one supporting the other, would be understood as not coherent, while another agent valuing the same statements with similar values would be classified as coherent.
    Note that we do not compare different agents' opinions in establishing coherence, but look within the same opinion.
    
    \item \emph{Aggregation functions exploiting dependencies}. We propose several opinion aggregation functions that use the participant's opinion on a debate to compute a collective opinion. These proposed functions assume different forms of using the dependencies between opinions to combine them. We provide two families of function that include a function that does not use dependencies at all, and two other functions that use dependencies rather differently. These families of functions collective span the ways in which the dependencies can be taken account of, thus making it possible to choose a specific degree of use of the dependencies.
    
    \item \emph{Formal analysis}. We assess these families of functions against a wide-ranging set of properties designed to provide a detailed characterization of their behaviour. We use several properties adapted from the social choice literature \cite{list:2002:aggregating} to fit our model in order to assess the performance of the functions. We carry out the same  study in four scenarios that consider different assumptions for the participants' opinions.

    
    \item \emph{Computational analysis.} We follow the formal analysis with a computational analysis. This first computes the computational complexity of the aggregation functions, and then provides an empirical analysis of those functions when computing collective opinions for a range of scenarios that are larger (in terms of statements and number of opinions) than any online debates that we are aware of. This analysis shows that collective opinions can be computed in real time on quite modest hardware.
\end{itemize}
These contributions can be found in the following sections: section \ref{sec:ModelView} provides an introduction to the relational model; and sections \ref{sec:FormDef} and section \ref{sec:Expectation_Coherence} provide a formal definition of the model. Then, section \ref{sec:Aggregation} defines the problem of computing collective coherence and introduces the properties that will be used to assess aggregation functions, while section~\ref{sec:OpAggFunc} defines a family of aggregation functions and section~\ref{sec:Analysis} uses the properties to analyse the functions in different scenarios (the proofs can be found in \ref{sec:Proofs}). Section~\ref{sec:CompAnalysis} provides the computational assessment of the model; section \ref{sec:RelatedWork} relates the work presented in this paper to other relevant work in the literature; and section~\ref{sec:conclusions} summarises our conclusions.

\section{Foundations of the relational model}\label{sec:ModelView}

This section introduces the main features of the relational model while the following sections provide a formalisation of the model. We start this section by describing the elements of the model and the rationale behind it, as well as the aggregation of opinions using the information enclosed within the model. We also include a discussion of how the model could be used in a participatory system. 

\subsection{Relational Model}\label{sec:featuresRM}

The \emph{Relational Model}, RM for short, is a model designed to represent a collective debate where participants discuss a proposal by putting forward additional information relevant for the discussion and giving their opinions about it. 
The RM is composed of two main parts: the structural part, organising the information (or \emph{content}) of a debate; and the interpretative part, representing the participants' \emph{opinions} in a debate.

\paragraph{Content of a debate}

The  RM has two main abstract elements that capture the structure of a debate: \emph{statements} and the \emph{relationships} among them. \emph{Statements} represent plain sentences that describe facts
such as, for example, ``Increase of house prices in the neighbourhood''. Participants will afterwards be able to express their
opinion about the desirability or undesirability of the sentence. 
\emph{Relationships} represent the reasoning that connects statements. Each relationship connects a set of source statements to some destination statement.
We can think of relationships as logical inferences that may or may not be accepted by the participants of the debate. 
Such acceptance will also be included in participants' opinions.

Commonly, debates discuss a particular subject or proposal, and may even consider a set of proposals. 
In the RM we consider the \emph{target} (of a debate) as a set of statements that represents the proposal. 
The distinguishing feature of these statements, the target of the debate, is that none of them can be a destination statement in any relationship because they are the initiators of the debate. 
Thus, the target acts as root of the structure that captures the debate. 
This structure composed of the statements, relationships and  target is called a \emph{Directed Relational Framework}, or DRF for short. 

\paragraph{Participants' opinions} 
The opinions of the participants are encoded in the form of functions that assign values to the objects that make up the structure of the debate.
The \emph{valuation} function assigns values to statements and the \emph{acceptance} function assigns values to the relationships. 
The opinion of a participant is twofold in order to obtain the two types of subjectivity involved in the debate. 
The first, relating to the valuation function, provides a participant with a way to express their judgement about the facts or possibilities that arise in the debate 
(but does not express 
their view about the truth of the sentence).  
The second, relating to the acceptance function, allows a participant to express the truth they see in each relationship that is included in the debate.
Thus, the desirability or undesirability that each participant feels about each statement of the debate is represented by a positive or negative value assigned with the valuation function, and the conformity that each agent relates to the connections between the statements is represented by an acceptance value assigned by the acceptance function.

\comment{\begin{ex}
Let's follow the statement's example to illustrate these concepts:
The relationship $\r_1=$"If the people vote mostly Yes in the Brexit referendum then the UK will leave the EU" could be the reasoning explaining the relationship between the statements $s_1=$"Hold a Brexit referendum" and the statement $s_2=$"The UK leaves the EU". Participants in a debate that contains these elements can place values on the relationship, indicating whether they accept the reasoning it embodies. (Rational participants would presumably place a value indicating agreement, indicating that one statement follows from the other.). However, participants could place positive or negative opinions on the statements involved, and their choice would depend on whether each wants the UK to leave EU or not, and whether they want to risk this possibility by carrying out a referendum.
\end{ex}}

\begin{ex}
Consider a debate about building a sports centre.
Let's assume a scenario where a proposal $\tau$ ``Construction of a sports centre in particular location in the neighbour'' is discussed. 
In this setting, we may consider the reasoning that ``The construction of the sport centre will imply the demolition of  existing buildings that give historical relevance to the neighbourhood" is represented by the relationship $r_1$  that connects the target $\tau$ to a new statement $s_1$ ``Destruction of the character of the neighbourhood'', as depicted in Figure \ref{pic:ex1}.

Given these elements, participants in the debate can express their opinions by providing a value to both $\tau$ and $s_1$ and by assigning a degree of acceptance to the relationship $r_1$. The values assigned to $\tau$ or $s_1$ will represent how they feel about the respective statements and the degree given to $r_1$ will indicate how much the participant believes in the truth of the reasoning relating both statements. Thus, someone hoping for a new sports center in the neighbour --- hence valuing the target $\tau$ positively --- may not want the neighbourhood to be damaged --- so valuing $s_1$ negatively --- but still believes that the construction of the sport center indeed will imply the demolition of the existing buildings that now are historically relevant (and hence would create that damage) ---thus accepting the reasoning represented by $r_1$. 
\end{ex}

\comment{\begin{figure}[H]
    \centering
    \includegraphics[width=4.5cm]{pics/DRF1relationship.png}
    \caption{ Graphical representation of the relationship between  proposal $\tau$ (building a sports centre) and statement $s_1$ (destroying the neighbourhood's character).   \maite{\todo{Update symbol in the relationship} \maite{
    need also to change the rest (Fig 4, Table 2,...).} }}
    \label{pic:ex1}
\end{figure}}

\begin{figure}[H]
    \centering

   \begin{tikzpicture}[every node/.append style={inner sep=3mm},every place/.style={line width=0.3mm}]

\node[rectangle,draw,line width=0.7mm] (T) {\large{$\tau$}};
\node(s1) [circle,draw] at ($(T)+(-2,-2)$){$s_1$} ;

\node [] (r1) at ($(T)+(-1.1,-0.7)$){$r_1$};
\draw[->,thick,>=latex] (T) to[out=-135,in=45] ($(s1)+(0.5,0.5)$);


\end{tikzpicture}
    \caption{Graphical representation of the relationship between  proposal $\tau$ (building a sports centre) and statement $s_1$ (destroying the neighbourhood's character)}
    \label{pic:ex1}
\end{figure}
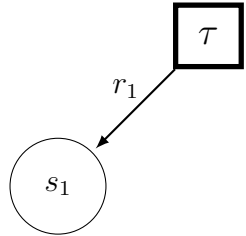

When considering this example it is worth elaborating on the direction of the relationship regarding content and opinion. 
Observe that relationship $r_1$ is directed from $\tau$ towards statement $s_1$ so that $s_1$ can be interpreted as its consequence.
But, conversely, the opinion about $s_1$ ---be it either positive or negative--- will also affect the opinion on $\tau$ through relationship $r_1$. 
Thus, the opinion that a participant will have on a statement, in this case the target, 
will be conditioned both by the consequences of 
this statement and by the things from which it itself follows. 
In general, a direct relationship between statements has to be established and we then decide which direction to take when evaluating the framework. We may define this evaluation so as to relate one statement $s$ (or a set of statements) to (i) its consequence statements, or, to (ii) the statement whose opinion is affected by the opinion about $s$.
We choose the first (i) option to reflect the direction of reasoning (from premises to conclusions) in the debate structure. 
However, our subsequent process aimed at opinion aggregation will traverse these relationships in the opposite (ii) direction, though we will use the same structure to represent the debate.

\comment{Next example illustrates how to structure a debate using the Directed Relational Framework. We will make use of the argumentative scheme we are used to see in discussions so we can translate it into our model and see how changes the structure of the debate. We will evolve this example throughout the paper to illustrate the different concepts we will be introducing.

\begin{ex}
Let be a debate about whether or not to build a Sports centre on a certain location on the surroundings of a neighbourhood.
Hence, the target of our debate is this proposal that can be divided using two statements:
\begin{itemize}
    \item [$\tau_1=$] An specific location on the surroundings of the neighbourhood for the building.
    \item [$\tau_2=$] Build a Sports centre in the neighbourhood.
\end{itemize}
Now, let's see what peoples arguments in a normal discussion can be to, afterwards, structure the information using the objects of directed relational framework.
\begin{itemize}
    \item The argument "The location expected for the sports centre implies the demolition of an antique building" it is clearly relating the "location for the building", so $\tau_1$, to the consequence that will "demolish an antique building", name it $s_1$. So we have a reasoning $r_1$ connecting $\tau_1$ towards $s_1$ representing the reasoning  "Building on that location implies the demolition of the antique building".
    \item Next consider the argument "A sport centre in that location is well placed and easy accessible for the neighbourhood". It decomposes as a initial statements of the relationship $\{\tau_1,\tau_2\}$ towards the statement $s_2=$"The sports centre will be accessible for the neighbourhood" via the reasoning $r_2=$"As the location is on the surroundings of the neighbourhood, it makes it an accessible location for a sports centre".

    \item  The argument "An sports centre will promote the sports practice in the neighbourhood" will divide in the following: the relationship $r_3$ representing the reasoning "An sports centre promotes the sports practice on the people around it." will relate the target $\tau_2$ towards the statement $s_3=$"promotion of sports practice  in the neighbourhood". 
    
    \item The argument "To have an sports centre and the fact that will be very accessible will increase the social activity in the surrounding" is converted into: the relationship $r_4$ representing the reasoning "An sports centre in a good location increases the social activity of the neighbourhood" connects the initial statements $\{s_2,s_3\}$ towards the new statement $s_4=$"Increase of social activity".
    \item And finally, the last argument "The increase of activity on this accessible place will also promote the commerce in the area" connects via the relationship $r_5$, reflecting the reasoning "Social activity in this accessible place will lead to more economic activity, thus the commerce will increase in the area", the initial statements $\{s_3,s_4\}$ to the final statement $s_5=$"Commerce increased in the neighbourhood".
\end{itemize}

Note that in this example we used a few arguments with simple reasonings and statements so that could serve us to illustrate the structure of our model. But, in reality this example would have been more complete and composed of more arguments about other issues relating the discussion, and hence more statements and relationships.

\end{ex}
}

\paragraph{Direct and Indirect Opinion}
\label{subsub:interpretation}

Each participant involved in a debate expresses its opinion in terms of the desirability or undesirability of the different statements in the model, and the acceptability of the relationships linking these statements. 
As mentioned above, relationships are a key element of the RM: they indicate that the opinion about one statement affects the opinion about another statement. 
In particular, we will use the terms: (i) \emph{direct opinion} to refer to the value directly given to a statement by a participant; and (ii) \emph{indirect opinion} to refer to the values given to the related objects whose opinions may condition the direct opinion.
Note that we are dealing with human opinions, and so we cannot expect that opinions are rational --- contradictions or inconsistencies between the direct and indirect opinions expressed by participants may arise, just as in the example shown above. 
By comparing the direct opinion with the indirect opinion about each statement we can categorise ``reasonable'' opinions and provide a notion of \emph{coherence} \cite{thagard2002coherence}.    

\begin{ex} \label{ex:opinion_initial}
Now consider three people taking part in the debate described above with the following opinions: 
\begin{itemize}[-]
    \item {\bf Participant 1} is a middle-aged woman with a family who lives in the neighbourhood where the sports centre is proposed to be built.  
    She values the proposal ($\tau$) very positively, because her family practices sports. 
    Although she has been living in the neighbourhood for a long time, she has no childhood memories of and doesn't care too much for the historical character of the neighbourhood, so she values statement $s_1$ as neutral. 
    Finally, she assigns a small acceptance value to relationship $r_1$ because she acknowledges that building the sports centre will imply the demolition of some buildings, but she doesn't believe that the character of the neighbour will be too affected by such a loss.
    
    \item {\bf Participant 2} is a retired elderly man. He has always lived in the neighbourhood and he would like to preserve its unique features. 
    Hence, he values statement $s_1$ negatively, and also target 
    $\tau$ because he is not interested in sports and would prefer another kind of public building instead. 
    He fully agrees with relationship $r_1$, since he considers that the buildings in the proposed location for the new sports centre are important to the neighbourhood and would be demolished if the centre is built.
    
    \item {\bf Participant 3} is a young postgraduate student who does not practice any regular sports that can be hold in the planed sports centre, so values $\tau$ quite negatively.  
    He agrees with relationship $r_1$ because he acknowledges that the existing buildings could be catalogued as of special architectural interest. 
    However, he is neutral with respect to $s_1$ because he does not care about preserving the character of the neighbourhood.
    
\end{itemize}
\end{ex}



\paragraph{Opinion Aggregation}
\label{sec:model_aggregation}

Once all users have expressed their opinions on statements and relationships, opinions must be aggregated to calculate a collective opinion. 
This aggregation can take into account direct opinions, indirect opinions, or a combination of both direct and indirect.
In establishing suitable aggregation functions, we have to take into account that individual opinions may be incoherent. 
Nonetheless, we aim to design aggregation functions that can combine these ``imperfect'' individual opinions into a ``reasonable'' collective opinion. 

The main contribution of the RM is to distinguish clearly between \emph{objective} and \emph{subjective} information in a simplified way, and to do this differently to many other argumentation models (as \cite{Awad:2015:JAM,Ganzer2017,Leite2011}), where subjective information is jointly represented with objective information in the debate structure leading to predefined relationships (such as attack or defence) that must be shared among all participants. 
Thus, the expressiveness of the RM is significantly higher being able to represent debates where participants do not need to agree on the relationships between different objective facts. 
In addition, these argumentation models usually presume
that users opinions are rational (i.e., fully consistent). However, human debates are  affected by information and/or cognitive limitations of users, which makes this rationality requirement too restrictive. 
We argue that our coherence notion is a weaker version of this rationality requirement that is better suited to model human debates. 


\subsection{The collective decision workflow}
\label{subsec:collectiveWorkflow}


A debate can be built in many ways following the parameters of a DRF. Next we propose a simple procedure that would produce 
a DRF and will serve as a support guide for the sections to come.
There are four main steps to applying our model:

\begin{figure}[p]
    \centering
    \includegraphics[trim=0 11cm 0 0,clip,width=8.5cm]{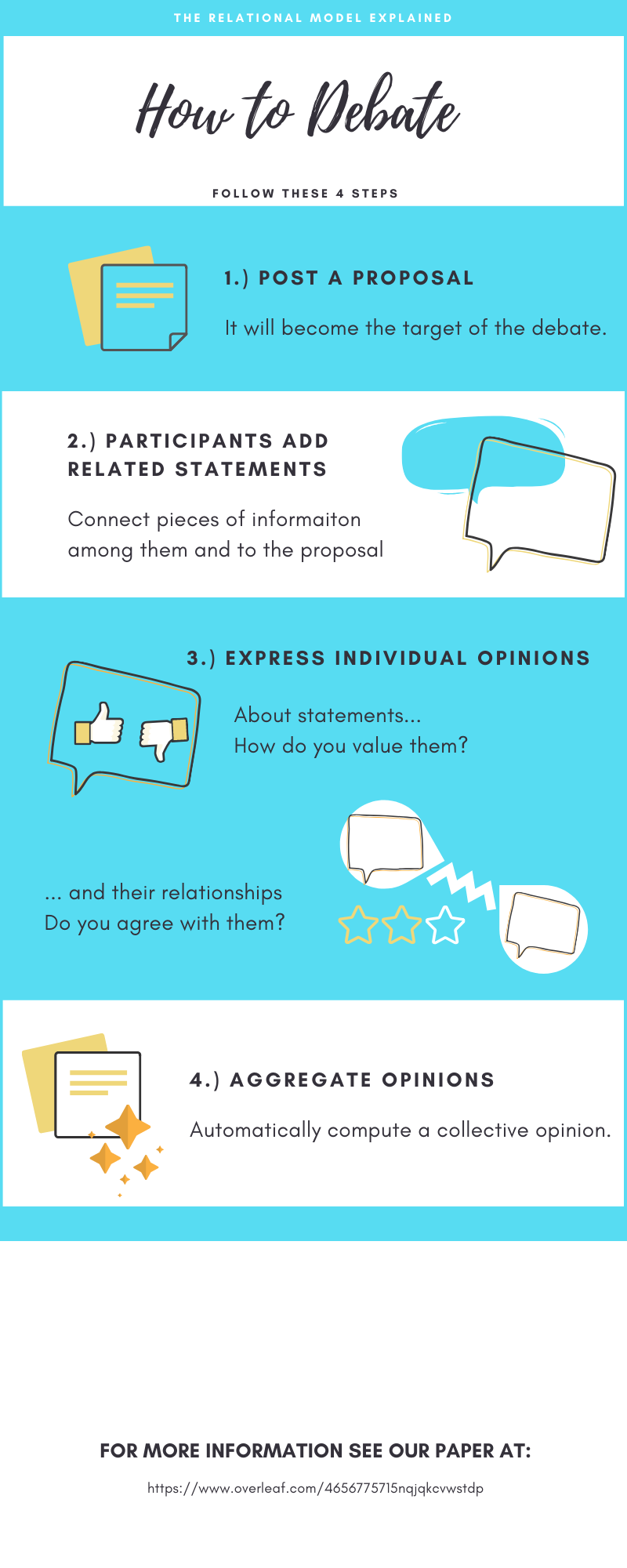}
    \caption{Participation using the relational model: debate steps graphically explained. 
    }
    \label{pic:DebateSteps}
\end{figure}


\begin{description}
\item {\bf Step 1 -} \emph{Start debate.} A set of statements for which we intend to obtain a collective opinion are chosen as targets of the RM.
\item {\bf Step 2 -} \emph{Extend debate.} Participants are then allowed to put forward relationships that will represent relevant reasoning.
A relationship may either 
be put forward in conjunction with new statements 
--so it connects from existing statements to a new statement-- or it may connect existing statements. 
This step continues until no participant wishes to add a further relationship.


\item {\bf Step 3 -} \emph{Input opinions.}
Participants express their opinions on all the relationships and  statements in the RM, by providing \emph{subjective evaluations} of them. The evaluations of statements expresses preferences over them, while the evaluation of relationships expresses agreement, or otherwise, with the reasoning represented by the relationship. 


\item {\bf Step 4 -} \emph{Obtain collective opinion.} The opinions of the participants are merged to establish a consensus view of each statement and relationship that are in the framework. Hence we also obtain the collective opinion of the target statements.
\end{description}
These steps are shown graphically in Figure~\ref{pic:DebateSteps}.

\section{Formalising the relational model}
\label{sec:FormDef}

Recall from section  \ref{sec:ModelView} that we conceptually distinguished between the structural part of the relational model
, namely the statements made during a debate together with their causal relationships, and the opinions put forward by participants in a debate. 
Figure \ref{pic:RelationaModelComp} depicts this distinction, together with all the basic elements presented in this section. 
Thus, in what follows, we formally introduce both concepts, first the structure and then the opinions, with the aid of 
an extension of the example shown in section \ref{sec:ModelView} that we employ throughout the paper.

\begin{figure}[p]
    \centering
     \includegraphics[trim=0 40cm 0 26cm,clip,width=8.9cm]{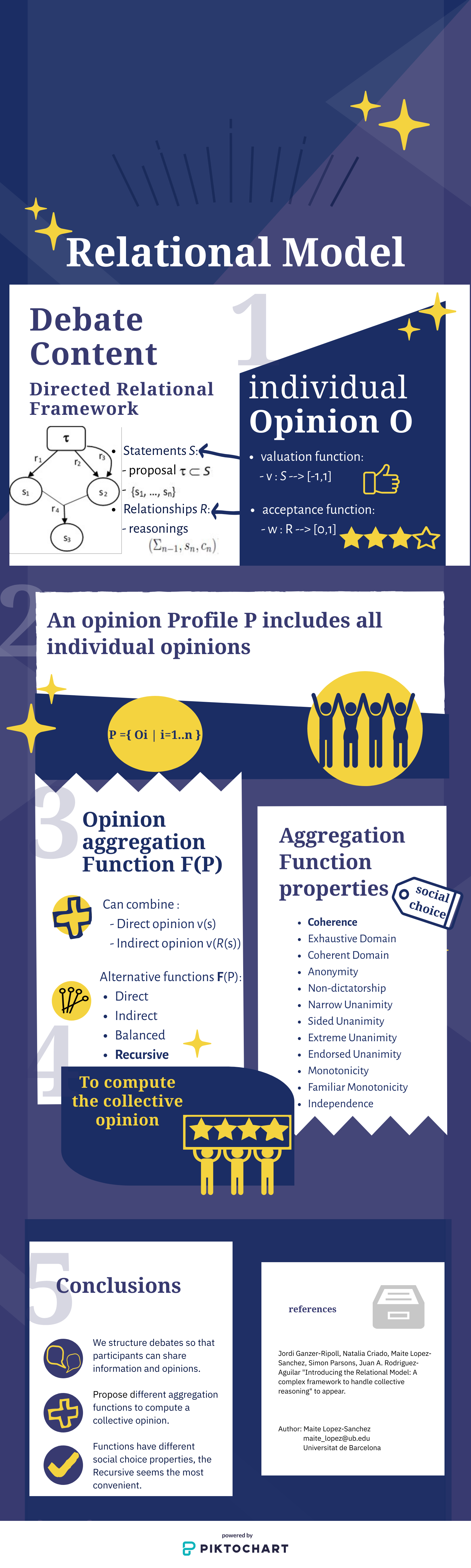}
    \caption{The basic elements of our relational model.}
    \label{pic:RelationaModelComp}
\end{figure}




\subsection{Formalising the structure}
\label{subsec:formalStructure}


First, we introduce the formal notion of  a \emph{relational framework} to capture the relationships between statements. Our notion of relationship will consider a non-empty set of source statements and a destination 
statement. In general, by relating a set of (source) statements to a (destination) statement, we indicate that the source statements support inferring the destination 
statement, though the framework is agnostic about the form that the support and the inference mechanism takes. For instance, in our example in Figure \ref{pic:ex_DRF}, 
statements $s_2$ and $s_3$ support inferring $s_4$. Formally:

\begin{defi}
\label{def:RF}
A \emph{relational framework} $RF$ is a pair $\langle\sen,\rel\rangle$, where $\sen$ is a set of statements and $\rel\subset\P(\sen)\times\sen \times \mathbb{N}$ is a relation fulfilling:

\begin{itemize}

\item Acyclicity. There are no cycles in $\rel$, namely there is no subset of relationships $\{(\Sigma_0,s_1, c_1),\ldots,(\Sigma_{n-1}, s_n, c_n
)\}\subset \rel$ such that $s_i\in \Sigma_i$, $i\in\{1,\dotsc,n-1\}$, and $s_n\in \Sigma_0$.
\end{itemize}
\end{defi}

Since the relation $\rel$ is acyclic, it follows that $\rel$ is neither \emph{reflexive} ( $\forall s\in\sen$, $(\Sigma\cup\{s\},s, c)\notin \rel$) nor \emph{symmetric} ($\forall s_1,s_2\in\sen$, if $(\Sigma_1\cup\{s_1\},s_2, c_2)\in \rel$ then $(\Sigma_2\cup\{s_2\},s_1, c_1)\notin\rel$). Note that we do not impose any restriction on the transitivity of the relation $\rel$.

Notice also that we include a natural number within the relation in order to differentiate relationships between the same set of statements $\Sigma$ and $s$. From a practical perspective, this allows to signal that alternative relationships can bear on the very same statements\footnote{The inclusion of a natural number into the specification of a relationship does not affect the formal contributions of the paper, since, as it will be shown later, relationships are grouped into (and subsequently traversed through) the sets $R^+$ defined in equation \ref{eq:descendants}, without considering any restrictions on the statements they relate.} (as shown
in
Figure 
\ref{pic:ex_DRF}, where target $\tau$ is related to statement 
$s_1$ through relationships 
$r_1$
and 
$r_6$).

Now, since debates are aimed at achieving a collective decision on target topics, we extend our definition above to incorporate the notion of \emph{target} statements as follows:

\begin{defi}
\label{def:DRF}
A \emph{directed relational framework} (DRF) is a tuple $\langle \sen,\rel, T \rangle$ such that:

\begin{enumerate}[(i)]
\item $\langle \sen,\rel\rangle$ is a relational framework;

\item $T \subset\sen$ is a set of target statements;

\item Target statements in $T$ can only be the source of relationships, namely for any relationship $(\Sigma, s, c)\in \rel$, $s\notin 
T
$; and

\item All non-target statements are connected to targets so that for any statement $s \in S$, $s \notin T$, there is a path $\{(\Sigma_0,s_1, c_1),\allowbreak\ldots,(\Sigma_{n-1}, s, c_n)\}\subset \rel$ such that $T \cap\Sigma_0\neq \emptyset$.

\item Every target statement shares some common descendant with some other target statement, namely for every $
\tau\in T$ there is another target $
\tau'\in T$ and some statement $s \in S$ such that there is a path from $
\tau$ to $s$ and another path from $
\tau'$ to $s$.

\end{enumerate}
\end{defi}

Note that
a DRF is constrained to be a \emph{connected} \emph{acyclic} graph, albeit one that can have several targets. This reflects the idea that, since a DRF represents a single debate, every statement in that debate should have some connection
 to the rest of the debate.

In what follows we extend our example in section \ref{sec:ModelView} to produce a graphical representation of a DRF that will help us visualise the information in a debate\footnote{For the sake of simplicity, we limit the example to three participants and a small number of statements and relationships.}. In this representation, statements are nodes and relationships are arcs. Recall that our example considered statements $\tau$ (Construct a new sports centre in a specific place in the neighbour) and $s_1$ (Destruction of the neighbour character), as well as relationship $r_1$ (
The construction of the sport centre will imply the demolition of existing buildings that give historical relevance to the
neighbourhood) connecting both. Besides that, next we consider further statements and relationships as listed in tables \ref{tab:statementsDRF} and \ref{tab:relationshipsDRF} respectively. Finally, figure \ref{pic:ex_DRF} depicts the connections between statements through relationships. Note that $r_4$ is a hyperedge, connecting three statements\footnote{Notice that one participant could introduce an extra relationships from $\tau$ to $s_5$, representing the reasoning ``A new community center will give more relevance to the neighbourhood, that will increase the house price'', which is not the sum of $r_2,r_3$ and $r_4$, but represents a whole new way to connect $\tau$ to $s_5$. This shows that the transitivity allowed in the model, connecting statements from not consecutive levels of the debate via a single relationship, is not intended to represent the combined reasoning formed by the reasoning steps in between.}.

\begin{table}[]
    \centering
            \footnotesize
    \begin{tabular}{|c|c|}
\hline
\textbf{Statement}&\textbf{Description}\\ \hline
$\tau$ & Construction of a sport centre in a particular location in the neighbour\\ \hline
$s_1$ & Destruction of the neighbour character\\ \hline
$s_2$ & Attraction of more affluent residents to the neighbour\\ \hline
$s_3$ & Attraction of new business to the neighbour\\ \hline
$s_4$ & Crime reduction in the neighbour\\ \hline
$s_5$ & Property values raise in the neighbour\\
\hline
\end{tabular}
    \caption{Statements for the sports centre example.}
    \label{tab:statementsDRF}
\end{table}
\normalsize

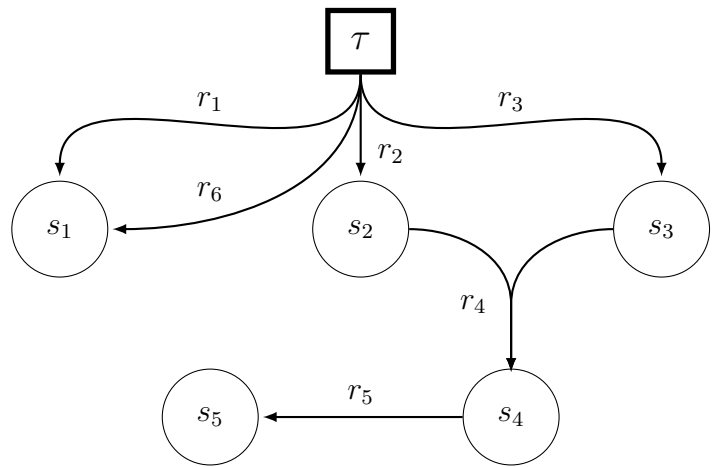
\begin{figure}[H]
    \centering

   \begin{tikzpicture}[every node/.append style={inner sep=3mm},every place/.style={line width=0.3mm}]

\node[rectangle,draw,line width=0.7mm] (T) {\large{$\tau$}};
\node(s1) [circle,draw] at ($(T)+(-4,-2.5)$){$s_1$} ; 
\node(s2) [circle,draw] at ($(T)+(0,-2.5)$){$s_2$} ; 
\node(s3) [circle,draw] at ($(T)+(4,-2.5)$){$s_3$} ; 
\node(s4) [circle,draw] at ($(T)+(2,-5)$){$s_4$} ; 
\node(s5) [circle,draw] at ($(T)+(-2,-5)$){$s_5$} ; 

\node [] (r1) at ($(T)+(-2,-0.8)$){$r_1$};
\draw[->,thick,>=latex] (T) to[out=-90,in=90]  ($(s1)+(0,0.7)$);

\node [] (r2) at ($(T)+(0.4,-1.5)$){$r_2$};
\draw[->,thick,>=latex] (T) to[out=-90,in=90]  ($(s2)+(0,0.7)$);

\node [] (r3) at ($(T)+(2,-.8)$){$r_3$};
\draw[->,thick,>=latex] (T) to[out=-90,in=90] ($(s3)+(0,0.7)$);

\node [] (r4) at ($(s2)+(1.5,-1)$){$r_4$};
\draw[->,thick,>=latex] (s2) to[out=0,in=90] (2,-3.5)  to[out=-90,in=90]   ($(s4)+(0,0.6)$); 
\draw[->,thick,>=latex] (s3) to[out=180,in=90] (2,-3.5) to[out=-90,in=90,line width=0.6mm] ($(s4)+(0,0.6)$);

\node [] (r5) at ($(s4)+(-2,0.3)$){$r_5$};
\draw[->,thick,>=latex] (s4) to[out=180,in=0]  ($(s5)+(0.7,0)$);

\node [] (r6) at ($(T)+(-2,-2)$){$r_6$};
\draw[->,thick,>=latex] (T) to[out=-90,in=0]  ($(s1)+(0.7,0)$);

\end{tikzpicture}
    \caption{The DRF for the sports centre example. }
    \label{pic:ex_DRF}
\end{figure}

\begin{table}[]
    \centering
        \footnotesize
    \begin{tabular}{|c|c|c|}
\hline
\textbf{Relationship}&\textbf{Reasoning}&\textbf{Connection}\\ \hline
$
r_1$ & The construction of the sport centre will imply  the & $\tau$ to $s_1$\\
 & demolition of existing buildings which now give  & \\   & historical relevance to the neighbour &\\
\hline
$r_2$ & The new sport centre will make the neighbour more & $\tau$ to $s_2$\\
 & attractive for wealthy residents because  & \\
  & they are more interested in leisure activities &\\
\hline

$r_3$ 
 &  A new community centre will attract more & $\tau$ to $s_3$\\
 &   businesses to the surrounding area.&\\
\hline

$r_4$
&  Having richer residents and more businesses will  & $\{s_2,s_3\}$ to $s_4$\\
&  increase the security measures around the neighbour&\\
&  and therefore reduce criminal activities.&\\
\hline

$r_5$ & The reduction of crime  will increase the price of  & $s_4$  to $s_5$ \\ & the houses in the neighbour & 
\\ 
\hline
$r_6$ & A new building will make collide the new   & $\tau$  to $s_1$\\
& architecture with the extended old nature of  &\\
&  the area, changing its character & \\

\hline
\comment{\hline
\hline
& \jordi{Below, possible candidates for new reasonings}&\\ 
\hline
\hline
$r_6$ & A new community center will give more relevance to the & $\tau$  to $s_5$\\ 
&   neighbour, increasing the house prices &\\
\hline
$r_7$ & Increased activity in the area makes will make it & $s_2,s_3$  to $s_4$\\ 
&  more crowded reducing then the possibilities to commit &\\
& unseen crimes & \jordi{second one}\\
\hline
$r_8$ & A new building will make collide the new architecture with & $\tau$  to $s_1$\\
& with the extended antique nature of the area, &\\
&  changing its character & \jordi{second one}\\}
\hline

\end{tabular}
    \caption{Reasoning for the sports centre example.}
    \label{tab:relationshipsDRF}
\end{table}
\normalsize

\comment{\begin{figure}
\centering
\begin{tikzpicture}[every node/.append style={inner sep=3mm}]

\node[rectangle,draw,line width=0.7mm] (T) {\large{$\tau$}};
\node(s1) [circle,draw,line width=0.3mm] at ($(T)+(-4,-2.5)$){$s_1$} ; 
\node(s2)[circle,draw,line width=0.3mm] at ($(T)+(0,-2.5)$){$s_2$} ; 
\node(s3)[circle,draw,line width=0.3mm] at ($(T)+(4,-2.5)$){$s_3$} ; 
\node(s4)[circle,draw,line width=0.3mm] at ($(T)+(2,-5)$){$s_4$} ; 
\node(s5)[circle,draw,line width=0.3mm] at ($(T)+(-2,-5)$){$s_5$} ; 

\node [] (r1) at ($(T)+(-2,-0.8)$){$r_1$};
\draw[->,thick,>=latex] (T) to[out=-90,in=90]  ($(s1)+(0,0.7)$);

\node [] (r2) at ($(T)+(0.4,-1.5)$){$r_2$};
\draw[->,thick,>=latex] (T) to[out=-90,in=90]  ($(s2)+(0,0.7)$);

\node [] (r3) at ($(T)+(2,-.8)$){$r_3$};
\draw[->,thick,>=latex] (T) to[out=-90,in=90] ($(s3)+(0,0.7)$);

\node [] (r4) at ($(s2)+(1,-1)$){$r_4$};
\draw[->,thick,>=latex] (s2) to[out=-10,in=90] (1.5,-3.5)  to[out=-90,in=90]   ($(s4)+(-0.5,0.55)$); 
\draw[->,thick,>=latex] (s3) to[out=170,in=90] (1.5,-3.5) to[out=-90,in=90,line width=0.6mm] ($(s4)+(-0.5,0.55)$);

\node [] (r5) at ($(s4)+(-2,0.3)$){$r_5$};
\draw[->,thick,>=latex] (s4) to[out=180,in=0]  ($(s5)+(0.7,0)$);

\node [] (r6) at ($(T)+(-2,-3)$){\jordi{$r_6$}};
\draw[->,thick,>=latex,ForestGreen] (T) to[out=-90,in=90]  ($(s5)+(0,0.7)$);

\node [] (r7) at ($(s3)+(-1,-1)$){\jordi{$r_7$}};
\draw[->,thick,>=latex,ForestGreen] (s2) to[out=10,in=90] (2.5,-3.5) to[out=-90,in=90] ($(s4)+(0.5,0.55)$);
\draw[->,thick,>=latex,ForestGreen] (s3) to[out=190,in=90] (2.5,-3.5) to[out=-90,in=90] ($(s4)+(0.5,0.55)$);

\node [] (r8) at ($(T)+(-2,-2)$){\jordi{$r_8$}};
\draw[->,thick,>=latex,ForestGreen] (T) to[out=-90,in=0]  ($(s1)+(0.7,0)$);

\end{tikzpicture}
\caption{Expanded example.}
\end{figure}}

\subsection{Formalising opinions}
\label{subsec:formalAssessments}

Now we address the formalisation of the opinions put forward by participants in a debate. We consider that opinions can be held both about statements and relationships. We therefore define two functions that capture the opinions of individuals: (i) a \emph{valuation function} over statements; and (ii) an \emph{acceptance function} over relationships. On the one hand, a valuation function conveys the subjective value that an individual places on each statement. On the other hand, an acceptance function expresses the subjective plausibility that an individual assigns to each relationship, representing a reasoning, connecting statements. Formally:

\begin{defi}[Valuation function]
Given a DRF $\langle \sen,\rel, T \rangle$, a \emph{valuation function} $v: \sen\longrightarrow I$ maps each statement to a value in $I=[-a,a]$, $a\in \R^+$.
\end{defi}

Given a statement $s \in \sen$: if $v(s)=a$ we say that $s$ counts on \emph{full positive valuation}; if $v(s)=-a$ we say that $s$ counts on \emph{full negative valuation}; and if $v(s)=0$ we say that $s$ has \emph{neutral valuation}.

\begin{defi}[Acceptance function]
Given a DRF $\langle \sen,\rel, T \rangle$, an \emph{acceptance function} maps each relationship to a value in $I^+$, $w: \rel\  \longrightarrow I^+=[0,a]$.
\end{defi}

Given a relationship $r\in\rel$ and an acceptance function $w$, we will refer to the value $w(r)$ as the \emph{acceptance degree} of $r$. If $w(r)=a$ we say that the acceptance function expresses \emph{full agreement} with the relationship, whereas if $w(r)=0$ we say that it expresses \emph{full disagreement}. Without loss of generality, henceforth we will set $a$ to 1, and hence $I=[-1,1]$ and $I^+=[0,1]$.

Considering our running example, graphically represented in figure \ref{pic:ex_DRF}, figures \ref{pic:ex_valuations} and \ref{pic:ex_acceptances} show the valuation functions and acceptance functions of agents 1, 2, and 3: $v_1$, $v_2$ and $v_3$ encode agents' valuations on statements, while $w_1$, $w_2$ and $w_3$ encode agents' acceptances of relationships.
We consider now the description of agents' opinions in section \ref{subsub:interpretation} to exemplify how they translate into valuations and acceptances. Thus, for instance, agent one is ``highly positive'' on the target $\tau$ ($v_1(\tau) = 0.9$), but neutral regarding statement $s_1$ ($v_1(s_1) = 0$). Furthermore, agent one considers that the plausibility of relationship $r_1$ is ``little'' ($w_1(r_1) = 0.2$).


\begin{figure}[H]
\centering
\begin{tikzpicture}[every node/.append style={inner sep=3mm},every place/.style={line width=0.3mm}]

\node[rectangle,draw,line width=0.7mm] (T) {\large{$\tau$}};
\node(s1) [circle,draw] at ($(T)+(-4,-2.5)$){$s_1$} ; 
\node(s2) [circle,draw] at ($(T)+(0,-2.5)$){$s_2$} ; 
\node(s3) [circle,draw] at ($(T)+(4,-2.5)$){$s_3$} ; 
\node(s4) [circle,draw] at ($(T)+(2,-5)$){$s_4$} ; 
\node(s5) [circle,draw] at ($(T)+(-2,-5)$){$s_5$} ; 

\draw[->,thick,>=latex] (T) to[out=-90,in=90]  ($(s1)+(0,0.7)$);

\draw[->,thick,>=latex] (T) to[out=-90,in=90]  ($(s2)+(0,0.7)$);

\draw[->,thick,>=latex] (T) to[out=-90,in=90] ($(s3)+(0,0.7)$);

\draw[->,thick,>=latex] (s2) to[out=0,in=90] (2,-3.5)  to[out=-90,in=90]   ($(s4)+(0,0.6)$); 
\draw[->,thick,>=latex] (s3) to[out=180,in=90] (2,-3.5) to[out=-90,in=90,line width=0.6mm] ($(s4)+(0,0.6)$);

\draw[->,thick,>=latex] (s4) to[out=180,in=0]  ($(s5)+(0.7,0)$);

\draw[->,thick,>=latex] (T) to[out=-90,in=0]  ($(s1)+(0.7,0)$);

\node [] (vT) at ($(T)+(1.6,0)$) {\scalebox{0.8}{$\arraycolsep=1pt\def\arraystretch{1}
\begin{array}{cc}
 v_1(\tau) =&0.9\\
     v_2(\tau) =&-0.5\\
    v_3(\tau) =&-0.5
\end{array}$}};

\node [] (v1) at ($(s1)+(-1.6,0)$) {\scalebox{0.8}{$\arraycolsep=1pt\def\arraystretch{1}
\begin{array}{cc}
 v_1(s_1) =& 0\\
     v_2(s_1) =&-1\\
    v_3(s_1) =&0
\end{array}$}};

\node [] (v2) at ($(s2)+(-1.6,-0.8)$) {\scalebox{0.8}{$\arraycolsep=1pt\def\arraystretch{1}
\begin{array}{cc}
 v_1(s_2) =&0.7\\
     v_2(s_2) =& 1\\
    v_3(s_2) =&-0.8
\end{array}$}};

\node [] (v3) at ($(s3)+(1.6,0)$) {\scalebox{0.8}{$\arraycolsep=1pt\def\arraystretch{1}
\begin{array}{cc}
 v_1(s_3) =&1\\
     v_2(s_3) =&0.5\\
    v_3(s_3) =&0.5
\end{array}$}};

\node [] (v4) at ($(s4)+(1.6,0)$) {\scalebox{0.8}{$\arraycolsep=1pt\def\arraystretch{1}
\begin{array}{cc}
 v_1(s_4) =&1\\
     v_2(s_4) =&1\\
    v_3(s_4) =&1
\end{array}$}};

\node [] (v5) at ($(s5)+(-1.6,0)$) {\scalebox{0.8}{$\arraycolsep=1pt\def\arraystretch{1}
\begin{array}{cc}
 v_1(s_5) =&-1\\
     v_2(s_5) =&1\\
    v_3(s_5) =&-1
\end{array}$}};

\end{tikzpicture}
\caption{Agents' valuation functions.}
\label{pic:ex_valuations}
\end{figure}
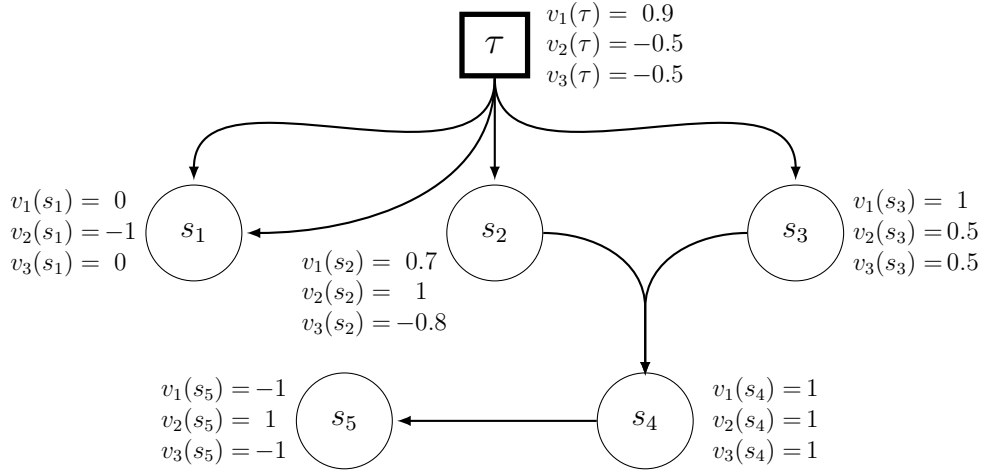

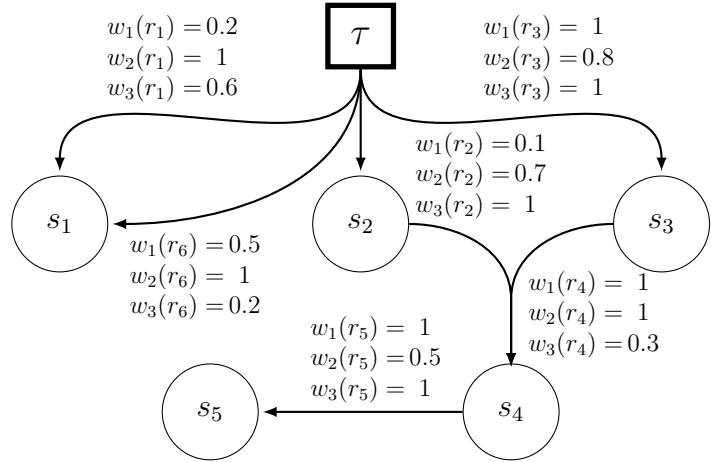
\begin{figure}[H]
\centering
\begin{tikzpicture}[every node/.append style={inner sep=3mm},every place/.style={line width=0.3mm}]

\node[rectangle,draw,line width=0.7mm] (T) {\large{$\tau$}};
\node(s1) [circle,draw] at ($(T)+(-4,-2.5)$){$s_1$} ; 
\node(s2) [circle,draw] at ($(T)+(0,-2.5)$){$s_2$} ; 
\node(s3) [circle,draw] at ($(T)+(4,-2.5)$){$s_3$} ; 
\node(s4) [circle,draw] at ($(T)+(2,-5)$){$s_4$} ; 
\node(s5) [circle,draw] at ($(T)+(-2,-5)$){$s_5$} ; 

\draw[->,thick,>=latex] (T) to[out=-90,in=90]  ($(s1)+(0,0.7)$);

\draw[->,thick,>=latex] (T) to[out=-90,in=90]  ($(s2)+(0,0.7)$);

\draw[->,thick,>=latex] (T) to[out=-90,in=90] ($(s3)+(0,0.7)$);

\draw[->,thick,>=latex] (s2) to[out=0,in=90] (2,-3.5)  to[out=-90,in=90]   ($(s4)+(0,0.6)$); 
\draw[->,thick,>=latex] (s3) to[out=180,in=90] (2,-3.5) to[out=-90,in=90,line width=0.6mm] ($(s4)+(0,0.6)$);

\draw[->,thick,>=latex] (s4) to[out=180,in=0]  ($(s5)+(0.7,0)$);

\draw[->,thick,>=latex] (T) to[out=-90,in=0]  ($(s1)+(0.7,0)$);

\node [] (w1) at ($(T)+(-2.5,-0.3)$) {\scalebox{0.8}{$\arraycolsep=1pt\def\arraystretch{1}
\begin{array}{cc}
 w_1(r_1) =&0.2\\
     w_2(r_1) =&1\\
    w_3(r_1) =&0.6
\end{array}$}};

\node [] (w2) at ($(s2)+(1.6,0.65)$) {\scalebox{0.8}{$\arraycolsep=1pt\def\arraystretch{1}
\begin{array}{cc}
 w_1(r_2) =&0.1\\
     w_2(r_2) =&0.7\\
    w_3(r_2) =&1
\end{array}$}};

\node [] (w3) at ($(T)+(2.5,-0.3)$) {\scalebox{0.8}{$\arraycolsep=1pt\def\arraystretch{1}
\begin{array}{cc}
 w_1(r_3) =&1\\
     w_2(r_3) =&0.8\\
    w_3(r_3) =&1
\end{array}$}};

\node [] (w4) at ($(s4)+(1.1,1.3)$) {\scalebox{0.8}{$\arraycolsep=1pt\def\arraystretch{1}
\begin{array}{cc}
 w_1(r_4) =&1\\
     w_2(r_4) =&1\\
    w_3(r_4) =&0.3
\end{array}$}};

\node [] (w5) at ($(s4)+(-1.8,0.7)$) {\scalebox{0.8}{$\arraycolsep=1pt\def\arraystretch{1}
\begin{array}{cc}
 w_1(r_5) =&1\\
     w_2(r_5) =&0.5\\
    w_3(r_5) =&1
\end{array}$}};

\node [] (w6) at ($(s1)+(1.8,-0.7)$) {\scalebox{0.8}{$\arraycolsep=1pt\def\arraystretch{1}
\begin{array}{cc}
 w_1(r_6) =&0.5\\
     w_2(r_6) =&1\\
    w_3(r_6) =&0.2
\end{array}$}};

\end{tikzpicture}
\caption{Agents' acceptance functions.}
\label{pic:ex_acceptances}
\end{figure}

At this point, we are ready to formally introduce the notion of individual opinion over a $DRF$.
\begin{defi}[Opinion]
Given a $DRF=\langle \sen,\rel,T\rangle$, an \emph{opinion} over the $DRF$ is a pair $O=(v,w)$ such that $v$ is a valuation function and $w$ is an acceptance degree.
\end{defi}
For practical and realistic purposes, we assume that for each relationship at least one agent values it different from 0. Otherwise, it would be the same, in practical terms, to have or not have the relationship in the debate.

Henceforth, we shall note the class of all opinions over a $DRF$ as $\mathbb{O}(DRF)$.

As depicted in figures \ref{pic:ex_valuations} and \ref{pic:ex_acceptances}, each agent $i$ involved in a debate will have its individual opinion $O_i=(v_i,w_i)$. Our goal will be to compute a collective opinion from the opinions issued by the agents.

\section{Characterising coherent opinions}\label{sec:Expectation_Coherence}
\

Previous work on the formal modelling of debates has placed restrictions on the opinions that individuals can put forward. For example, \cite{Awad:2015:JAM} interprets the opinions expressed by individuals as labels, in the sense of \cite{baroni:caminada:giacomini:ker}, for the arguments that they are expressing opinions about. Thus, an argument can be labelled $\lin$, meaning that the individual thinks that it holds, $\lout$, meaning that the individual thinks it does not hold, or $\lundec$, meaning that the individual is not sure whether it holds or not. These labellings are restricted to be \emph{complete} labellings \cite{baroni:caminada:giacomini:ker}, broadly meaning that they conform to a notion of rationality where arguments are $\lout$ if they are attacked by arguments that have been established to be $\lin$, and are $\lin$ if they are only attacked by arguments that are $\lout$. We believe that the restrictions imposed by these notions are too restrictive for modelling human debates, as humans may express opinions that are far from rational.

%
%

Instead, we impose weaker conditions for an individual opinion to be classified as reasonable or \textit{coherent}, along the lines of our former work in \cite{Ganzer2018}. Hence, given a statement we contrast opinions expressed about that statement, the \emph{direct opinion}, with the opinions expressed about the immediate descendants of the statement, what we call the \emph{indirect opinion}, and look for ways in which these may be made somewhat consistent.

Informally, what we do is the following. First, we compute an estimated opinion for a statement based on the indirect opinion about it. Then we say that a set of opinions about a statement are coherent if the estimated opinion for the statement aligns with the direct opinion about the statement. This will be the case when the opinion (valuations) about the descendants is close to the overall opinion (valuation) about the statement. Considering our example in figure \ref{pic:ex_valuations} again, consider statement $\tau$, its descendants ($s_1$, $s_2$, and $s_3$), and the opinion of agent 2 ($v_2$). We observe that although the direct opinion about $\tau$ is rather negative ($
v_2(\tau)=-0.5$), the valuations for its descendants are diverse: while the valuation for $s_1$ is also rather negative ($v_2(s_1)=-1$), and hence in line with $\tau$, the valuations on the other descendants are rather positive ($v_2(s_2) = 1$ and $v_2(s_3) = 0.5$), and hence not in line with $\tau$. Thus, at first sight\footnote{Note that we are not considering acceptances at this point.} it would seem that the overall estimated opinion is not in line with the direct opinion. 

In what follows, we first formalise our notion of  \emph{estimation} as an aggregated measure formed from the indirect opinion about a statement --- i.e., the collection of values on the descendants and their relationships. This will consider valuations and acceptance degrees related to a statement and its relationships
so that the more accepted a relation between a statement and its descendants, the more important the opinion about that descendant. Thereafter, we will formalise our notion of coherence by measuring how close the direct opinion about a statement is to the estimated opinion about that statement.

First, we introduce some concepts and notations that will aid us on later steps.
Given a $DRF=\langle \sen,\rel,T\rangle$, we define the \emph{set of relationships from $s\in \sen$} as the set of relationships having $s$ in the set of initial statements. Formally, 
\begin{equation} \label{eq:descendants}
R^+(s)=\{r=(\Sigma,s',c)\in\rel\ |\  s\in\Sigma\}.
\end{equation}
We will use the term \emph{descendants of a statement $s$, denoted by $D(s)$, to refer to} any statement $s_r$ connected to $s$ by a relationship $r$ that has $s$ as one of the initial statements and $s_r$ as final statement. Formally,
\[
D(s)=\{s_r\in \sen\  |\  r=(\Sigma,s_r,c)\in R^+(s)\}.
\]
Next, we define the concept that connects direct and indirect opinion in order to characterise our notion of coherence. We will use an \emph{estimation function} to compute an estimate of the direct opinion using the values gathered for the indirect opinion. 



\begin{defi} Given a $DRF=\langle \sen,\rel,T\rangle$ and $O=(v,w)$ and opinion over the $DRF$, the \emph{estimation function} is a valuation function:

$$\begin{array}{cccc}
   e:  & \sen & \longrightarrow & I \\
       & s & \longmapsto &   e(s)=f(IO(s))
\end{array}$$

where $IO(s)=\{(v(s_r),w(r))\ |\ r\in R(s),\ s_r\in D(s)\}$ and $f$ is a projection:
$$f:\ IO(s)\longrightarrow I$$

that computes an approximate value for $s$ from the valuations and acceptance degrees in the indirect opinion.

\begin{figure}[H]
\[
\begin{tikzpicture}[node distance=4cm, auto]
\node (A) {$e: \sen$};
\node (C) [below right = 1cm and 2cm of A] {$IO(s)$};
\node(B) [above right =1cm and 2cm of C] {$I$};
\draw[->,line width=0.4mm](A) to node {}(B);

\draw[->](A) to node [below=0.5ex] {}(C);
\draw[->](C) to node [below=0.5ex] {}(B);
\end{tikzpicture}
\]
\caption{A visualization of the estimation function.}
\label{fig:estimation}
\end{figure}
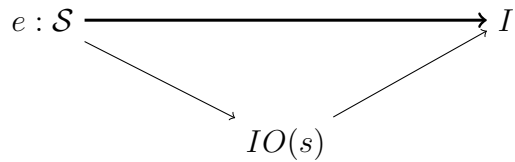
 
\end{defi}
In other words, the estimation function computes an estimated value for a statement using the valuations and acceptance degrees for indirect opinion about that statement. Figure~\ref{fig:estimation} is a diagrammatic representation of the estimation function.
This definition is designed to be an abstract definition that allows for many estimation functions to be defined to compute different approximations for the direct opinion. It therefore specifies a broad family of estimation functions rather than any specific function.

For the rest of this paper we will use a  specific estimation function to compute the estimate opinion on the results later on. We will use the weighted average of the valuations on the descendants, where the weights are the acceptance degrees on the relations leading to each descendant. In this manner, the more accepted a relation, the more valued the opinion on the descendant. Formally,
\[
e(s)=
  \left\{\begin{array}{cc}
   v(s)       & \text{if } R^+(s)=\emptyset \textrm{ or } \sum_{r\in R^+(s)}w(r)=0,  \\\\
   
   \frac {\sum_{r\in R^+(s)}w(r)v(s_r)} {\sum_{r\in R^+(s)}w(r)}  & \text{otherwise.}\\
  \end{array}\right.
\]
%
%
%
\comment{\jar{
\begin{align*}
\bar{d}_O(s)=&
  \begin{cases}
   $d_O(s)$        & \text{if } $D(s)=\emptyset $\\
   $\frac {\sum_{r\in R^+(s)}w(r)v(s_r)} {\sum_{r\in R^+(s)}w(r)}$  & \text{otherwise}\\
  \end{cases}
\end{align*}
}}
%
%
%

Notice that when a statement has no descendants we take the direct option as the estimated opinion, thus we cannot gather any value from an empty indirect opinion. Notice also that this specification of estimated opinion can be regarded as a general approach to compute the direct opinion using the indirect opinion. This is because we are allowing the relationships between statements to represent any kind of reasoning, so we cannot specify a specific behaviour for the estimated value. 

Informally, an opinion is characterised as coherent for a given statement when the value assigned by the participant (issuing the opinion) to the statement  (i.e., its direct opinion) is aligned with the values and plausibility assigned to its descendants (i.e., its estimate opinion). Furthermore, given the continuous values allowed on the opinion, we can choose the degree of coherence by using a parameter $\epsilon$. Formally:
\begin{defi}[Coherence]
\label{def:coherence}
Consider a $DRF=\langle \sen, \rel, T\rangle$ and an $\epsilon\in(0,1)$\footnote{We choose the interval (0,1) for the value of $\epsilon$ as the minimum interval that guarantees that if the direct opinion is 1 (or -1) then an opinion cannot be classified as coherent when the estimation function value is of the opposite sign, i.e., $e(s)\not\leq 0$ (or $e(s)\not\geq 0$ respectively)}  difference. We say that opinion $O=(v,w)$ is \emph{$\epsilon$-coherent} on $s\in\sen$ when 
$$|v(s)-e(s)|< \epsilon.$$

\end{defi}
In general, an opinion $O$ will be $\epsilon$-coherent if it is $\epsilon$-coherent for every statement in $\sen$. We will notate  as $\mathbb{C}_\epsilon(DRF)$ the class of all the $\epsilon$-coherent opinions. Thus, if $O$ is an $\epsilon$-coherent opinion then $O\in\mathbb{C}_\epsilon(DRF)$.

\begin{ex}
Following the example, now we can compare the values from the expectation function and the actual value given by participant 1 to each statement, its direct opinion, see figure \ref{pic:ex_coherence}. We can see that if $\epsilon\in(0.3,1)$ then the opinion of participant 1 for the statements $s_1,s_2,s_3$ and $s_5$ is $\epsilon$-coherent but not for statement $s_4$ due to the difference between direct opinion and estimated value, which is the maximum possible. Because of this statement $s_4$, the opinion of participant 1 cannot be classified as $\epsilon$-coherent for any $\epsilon\in(0,1)$.

\begin{figure}[H]
\centering
\begin{tikzpicture}[every node/.append style={inner sep=3mm},every place/.style={line width=0.3mm}]

\node[rectangle,draw,line width=0.7mm] (T) {\large{$\tau$}};
\node(s1) [circle,draw] at ($(T)+(-4,-2.5)$){$s_1$} ; 
\node(s2) [circle,draw] at ($(T)+(0,-2.5)$){$s_2$} ; 
\node(s3) [circle,draw] at ($(T)+(4,-2.5)$){$s_3$} ; 
\node(s4) [circle,draw] at ($(T)+(2,-5)$){$s_4$} ; 
\node(s5) [circle,draw] at ($(T)+(-2,-5)$){$s_5$} ; 

\draw[->,thick,>=latex] (T) to[out=-90,in=90]  ($(s1)+(0,0.7)$);

\draw[->,thick,>=latex] (T) to[out=-90,in=90]  ($(s2)+(0,0.7)$);

\draw[->,thick,>=latex] (T) to[out=-90,in=90] ($(s3)+(0,0.7)$);

\draw[->,thick,>=latex] (s2) to[out=0,in=90] (2,-3.5)  to[out=-90,in=90]   ($(s4)+(0,0.6)$); 
\draw[->,thick,>=latex] (s3) to[out=180,in=90] (2,-3.5) to[out=-90,in=90,line width=0.6mm] ($(s4)+(0,0.6)$);

\draw[->,thick,>=latex] (s4) to[out=180,in=0]  ($(s5)+(0.7,0)$);

\draw[->,thick,>=latex] (T) to[out=-90,in=0]  ($(s1)+(0.7,0)$);

\node [] (vT) at ($(T)+(2,0)$) {\scalebox{0.8}{$v_1(\tau)-e_1(\tau)=0.3$}};

\node [] (v1) at ($(s1)+(-0,-1)$) {\scalebox{0.8}{$v_1(s_1)-e_1(s_1)=0$}};

\node [] (v2) at ($(s2)+(0,-1)$) {\scalebox{0.8}{$v_1(s_2)-e_1(s_2)=-0.3$}};

\node [] (v3) at ($(s3)+(0,-1)$) {\scalebox{0.8}{$v_1(s_3)-e_1(s_3)=0$}};

\node [] (v4) at ($(s4)+(0,-1)$) {\scalebox{0.8}{$v_1(s_4)-e_1(s_4)=2$}};

\node [] (v5) at ($(s5)+(0,-1)$) {\scalebox{0.8}{$v_1(s_5)-e_1(s_5)=0$}};

\end{tikzpicture}
\caption{Coherence of Agent 1}
\label{pic:ex_coherence}
\end{figure}
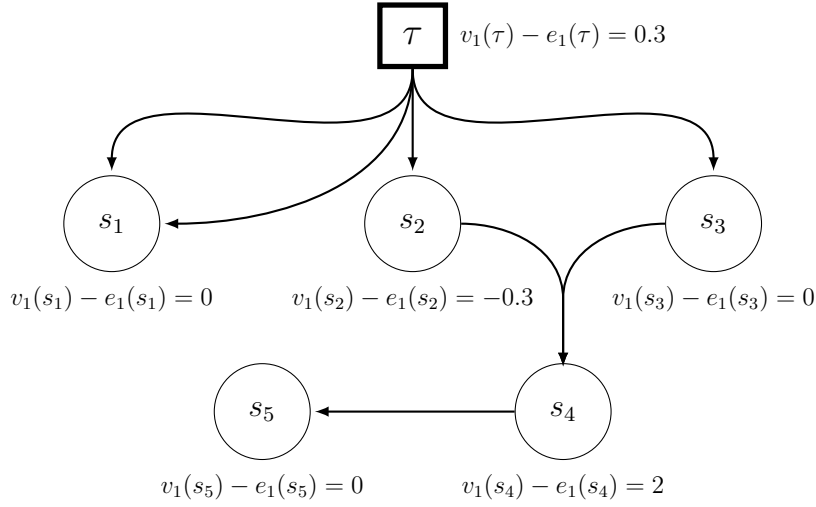

\end{ex}

\section{Formalising the collective decision making problem} \label{sec:Aggregation}

As stated above, our goal is to help agents reach a collective decision on target statements. This corresponds to step 4 of the protocol in Section \ref{subsec:collectiveWorkflow} 
and Figure \ref{pic:DebateSteps} (as well as to the third stage in Figure \ref{pic:RelationaModelComp}). 
In Section \ref{subsec:opinionProblem} we cast our goal as an opinion aggregation problem. We propose to solve such problem using an aggregation function that synthesises a single opinion out of all agents' opinions. 
Although opinions can be aggregated in different ways, here we follow \cite{Ganzer2018} in requiring that the outcome of an aggregation must be fair. 
In particular, Section \ref{subsec:socialChoiceProperties} introduces desirable social choice properties to help analyse and compare opinion aggregation functions.

\subsection{The opinion aggregation problem}
\label{subsec:opinionProblem}

First, we 
define our notion of an opinion profile, which brings together the opinions of the individuals involved in a debate. From hereon we use the term ``agent'' along with the term ``individual'' to refer to the participants in the debate. 
\begin{defi}[Opinion profile]
Let $Ag=\{1,\dotsc,n\}$ be a set of $n$ agents and a $DRF=\langle \sen,\rel,T\rangle$. An \emph{opinion profile} is a collection of opinions $(O_1=( v_1,w_1),\dotsc, O_n=( v_n,w_n) )\in \mathbb{O}(DRF)^n$ over the $DRF$ such that $O_i=(v_i,w_i)$ stands for the opinion of agent $i$.
\end{defi}
The problem at hand is how to aggregate the opinions in an opinion profile to produce a single opinion so  that single opinion is a reasonable summary of the opinions in the opinion profile. If the opinion profile represents the views expressed by individuals in a  debate, the combination should represent the collective opinion of all the individuals. The opinion aggregation function, which we formalise below, is the mechanism for establishing this collective opinion.
\begin{defi}[Opinion aggregation function]
Given a $DRF$ and a set of $n$ agents $Ag$, a function $F:\mathcal{D}\subset \mathbb{O}(DRF)^n\longrightarrow \mathbb{O}(DRF)$ mapping an opinion profile to a single opinion is called an \emph{opinion aggregation function}. Given an opinion profile $P$ in the domain $\mathcal{D}$, $F(P)$ is called the \emph{collective opinion} by $F$ and it will be noted as $F(P)=( v_{F(P)},w_{F(P)})$.
\end{defi}
Thus, in terms of the components of our model, the collective opinion output by an opinion aggregation function combines the collective valuations over statements and the collective acceptances over the relationships linking them. In section \ref{sec:OpAggFunc} we define specific opinion aggregation functions that compute a collective opinion. Before that we introduce the properties that we will use to analyse the aggregation functions.

\subsection{Social choice properties}
\label{subsec:socialChoiceProperties}

Social choice theory provides formal properties to characterise aggregation methods in terms of outcome  fairness \cite{dietrich:2007:generalised}.
In what follows, we formally adapt some of the desirable social properties of an aggregation function introduced in \cite{Awad:2015:JAM} and \cite{Ganzer2018}.
Besides adapting properties, we define some novel properties that characterise aggregation functions motivated by the fact that here we are considering opinions to be continuous-valued in contrast to the discrete-valued opinions considered in \cite{Awad:2015:JAM} and \cite{Ganzer2018}.

First, we characterise aggregation functions in terms of the opinion profiles that they can take as input. Thus, we adapt from \cite{Awad:2015:JAM} the notion of \emph{exhaustive domain} to characterise opinion aggregation functions that are defined for any opinion profile. Thereafter, we modify this property to limit an opinion aggregation function to operate with $\epsilon$-coherent opinion profiles.

\begin{description}
\item{\textbf{Exhaustive Domain (ED) .}} An opinion aggregation function $F$ satisfies \emph{exhaustive domain} if its domain is $\mathcal{D} = \mathbb{O}(DRF)^n$, namely if the function can operate over  all profiles.

\item{\textbf{$\epsilon$-Coherent Domain ($\epsilon$-CD).}}
An opinion aggregation function $F$ satisfies \emph{$\epsilon$-coherent domain} if its domain $\mathcal{D}$ contains all $\epsilon$-coherent opinion profiles, namely $\mathbb{C}_{\epsilon}(DRF)^n\subseteq\mathcal{D}$.
\end{description}
Note that we will sometimes refer to $\epsilon$-Coherent Domain as ``coherent domain''.
\begin{lemma}
\label{lemma:ED-CD}
An opinion aggregation function satisfying exhaustive domain also satisfies $\epsilon$-coherent domain.
\end{lemma}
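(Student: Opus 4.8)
The statement is essentially definitional: Exhaustive Domain demands that the domain $\mathcal{D}$ equals the full set $\mathbb{O}(DRF)^n$, whereas $\epsilon$-Coherent Domain only demands the weaker inclusion $\mathbb{C}_\epsilon(DRF)^n \subseteq \mathcal{D}$. So the whole argument reduces to observing that $\mathbb{C}_\epsilon(DRF)^n \subseteq \mathbb{O}(DRF)^n$, i.e. that every $\epsilon$-coherent opinion is in particular an opinion.

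The plan is therefore: first, suppose $F$ satisfies Exhaustive Domain, so that its domain is $\mathcal{D} = \mathbb{O}(DRF)^n$. Second, recall from Definition~\ref{def:coherence} and the surrounding text that $\mathbb{C}_\epsilon(DRF)$ is defined as a subclass of $\mathbb{O}(DRF)$ — an $\epsilon$-coherent opinion is by construction an opinion $O=(v,w)$ that additionally satisfies the coherence inequality $|v(s)-e(s)|<\epsilon$ for every $s\in\sen$. Hence $\mathbb{C}_\epsilon(DRF) \subseteq \mathbb{O}(DRF)$, and taking $n$-fold Cartesian products preserves the inclusion, giving $\mathbb{C}_\epsilon(DRF)^n \subseteq \mathbb{O}(DRF)^n = \mathcal{D}$. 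Third, this is exactly the condition $\mathbb{C}_\epsilon(DRF)^n \subseteq \mathcal{D}$ required by $\epsilon$-Coherent Domain, so $F$ satisfies $\epsilon$-Coherent Domain.

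There is no real obstacle here — the only thing to be careful about is making explicit the inclusion $\mathbb{C}_\epsilon(DRF) \subseteq \mathbb{O}(DRF)$, which follows immediately from the definition of coherence as a restriction placed on opinions rather than as a separate notion. One could also note, as a remark, that the converse fails in general: a function whose domain is exactly $\mathbb{C}_\epsilon(DRF)^n$ satisfies $\epsilon$-Coherent Domain but not Exhaustive Domain whenever there exist non-$\epsilon$-coherent opinions over the $DRF$, which is the typical case. I would keep the proof to two or three sentences and not belabour it.
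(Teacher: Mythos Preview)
Your proposal is correct and follows essentially the same approach as the paper's proof, which simply notes that an aggregation function accepting any opinion profile will in particular accept $\epsilon$-coherent ones. You have just made explicit the inclusion $\mathbb{C}_\epsilon(DRF)^n \subseteq \mathbb{O}(DRF)^n = \mathcal{D}$ that the paper leaves implicit in its one-line ``straightforward'' proof.
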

\begin{proof}
Straightforward, since an aggregation function taking any opinion profile will also take $\epsilon$-coherent opinion profiles.
\end{proof}

Moreover, we also define \emph{collective $\epsilon$-coherence} as a property characterising opinion aggregation functions that produce $\epsilon$-coherent collective opinions. Therefore, our notion of collective $\epsilon$-coherence here is more relaxed than the crisp notion of coherence in \cite{Ganzer2018}.

\begin{description}
\item{\textbf{Collective $\epsilon$-coherence ($\epsilon$-CC).}} An opinion aggregation function $F$ satisfies \emph{$\epsilon$-collective coherence} if for all $P\in\mathcal{D}$, then $F(P)\in \mathbb{C}_\epsilon(DRF)$.
\end{description}
In accordance with \cite{Ganzer2018}, here we consider $\epsilon$-CC as the most desirable property that can be satisfied by an aggregation function, since collective coherence is the foundation of the acceptability of collective decisions \cite{thagard2002coherence}. Notice also that, as in \cite{Ganzer2018}, collective coherence can be regarded as the counterpart of the notion of \emph{collective rationality} in \cite{Awad:2015:JAM}.

Next, anonymity and non-dictatorship characterise the importance of the agents involved in a debate that yields a collective opinion. On the one hand, anonymity is a social choice property requiring that the opinions of all the agents involved in a debate are considered to be equally significant. On the other hand, non-dictatorship requires that no agent overrules the opinions of the rest of agents.

\begin{description}
\item{\textbf{Anonymity (A)}} Let $P=(O_1,\dotsc,O_n)$ be an opinion profile in $\mathcal{D}$, $\sigma$ a permutation over $Ag$, and $P'=(O_{\sigma(1)},\dotsc,\allowbreak O_{\sigma(n)})$ the opinion profile resulting from applying $\sigma$ over $P$. An opinion aggregation function $F$ satisfies \emph{anonymity} if $F(P)=F(P')$.
\end{description}

\begin{description}
\item{\textbf{Non-Dictatorship (ND).}} 
An opinion aggregation function $F$ satisfies \emph{non-dictatorship} if no agent $i \in Ag$ satisfies that $F(P)=O_i$ for every opinion profile $P\in \mathcal{D}$.
\end{description}
Notice that non-dictatorship is a weaker version of anonymity since it follows directly 
from it --- any aggregation function that satisfies anonymity will satisfy non-dictatorship.

Now we consider how an opinion aggregation function behaves when agents agree on their opinions about statements.
Unanimity is the social choice property that characterises the behaviour of aggregation functions when there is agreement among agents' opinions. Here we will adapt the classic notion of unanimity in \cite{Awad:2015:JAM}, here called \emph{narrow unanimity}, and the notion of \emph{endorsed unanimity} from \cite{Ganzer2018} that helps exploit dependencies between statements. Since the notion of narrow unanimity is rather rigid, we also define two new, more flexible unanimity properties. Therefore, overall we define a family of unanimity properties, and below we study the relationships between them.

\begin{description}
\item{\textbf{Narrow Unanimity (NU).}}
Let $P=(O_1,\dotsc,O_n)$ be an opinion profile, where $ P \in \mathcal{D}$. An opinion aggregation profile $F$ satisfies \emph{narrow unanimity} if, for any $s\in S$ such that $v_i(s)=\lambda$ for all $i\in\{1,\dotsc,n\}$, then $v_{F(P)}(s)=\lambda$ holds.
\end{description}
Narrow unanimity defines unanimity as being when all agents share the very same opinion. While this is very possible in settings where agents only have a few discrete possibilities for expressing their opinion, as in \cite{Awad:2015:JAM,Ganzer2018}, it is not likely to occur in the setting we are studying here, where opinions can take a wide range of values. As a result, we propose some relaxed variations which are more useful for the setting we consider. First, we say that \emph{sided unanimity} will hold when, for each statement, either all opinions on it are positive or negative. Formally,
\begin{description}
\item{\textbf{Sided Unanimity (SU).}}
Let $P=(O_1,\dotsc,O_n)$ be an opinion profile, where $P \in \mathcal{D}$. An opinion aggregation profile $F$ satisfies \emph{sided-unanimity} if for every $s\in S$:
\begin{itemize}
\item  if $v_i(s)>0$ for all $i\in Ag$ then $v_{F(P)}(s)>0$; 
\item  if $v_i(s)<0$ for all $i\in Ag$ then $v_{F(P)}(s)<0$. 
\end{itemize} 
\end{description}
We also find a weaker version of sided unanimity to be worth distinguishing:
\begin{description}
\item{\textbf{Weak Unanimity (WU).}}
Let $P=(O_1,\dotsc,O_n)$ be an opinion profile, where $P \in \mathcal{D}$. An aggregation profile $F$ satisfies \emph{Weak unanimity} if, for every  $s\in S$:  
\begin{itemize}
\item  if $v_i(s)=1$ for all $i\in Ag$ then $v_{F(P)}(s)>0$; 
\item  if $v_i(s)=-1$ for all $i\in Ag$ then $v_{F(P)}(s)<0$. 
\end{itemize}  
\end{description}


Although WU requires that all agents agree on fully positive (1) or fully negative (-1) valuations on statements, it does not require that the output of the opinion aggregation function takes on those same values, as required by narrow unanimity. This property has value when translating valuations expressed in a discrete model such as those in \cite{Awad:2015:JAM,Ganzer2018} into our model, and so has value in allowing us to relate our model to those which came before.

From the definitions above, it follows that the three notions of unanimity are related.

\begin{prop}[Unanimity relationships]
\label{prop:unanimity}
If an opinion aggregation function satisfies Sided Unanimity then it satisfies Weak Unanimity. If an opinion aggregation function satisfies Narrow Unanimity then it satisfies Weak Unanimity.
\end{prop}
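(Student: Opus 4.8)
The plan is to prove both implications directly from the definitions, observing that in each case the antecedent of the relevant clause of Weak Unanimity (all agents assigning $1$, or all assigning $-1$, to a statement) immediately triggers the antecedent of the corresponding clause of Sided Unanimity or Narrow Unanimity, whose conclusion in turn subsumes what Weak Unanimity demands. No facts about the domain $\mathcal{D}$, about coherence, or about the structure of the DRF are needed; the whole argument lives inside the inequalities on the value interval $I=[-1,1]$.

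First I would treat SU $\Rightarrow$ WU. Fix an opinion aggregation function $F$ satisfying Sided Unanimity, an opinion profile $P=(O_1,\dots,O_n)\in\mathcal{D}$, and a statement $s\in\sen$. If $v_i(s)=1$ for every $i\in Ag$, then in particular $v_i(s)>0$ for every $i$, so the first clause of SU gives $v_{F(P)}(s)>0$, which is exactly the first clause of WU. Symmetrically, if $v_i(s)=-1$ for all $i$ then $v_i(s)<0$ for all $i$, and the second clause of SU gives $v_{F(P)}(s)<0$. Since $P$ and $s$ were arbitrary, $F$ satisfies WU. Next I would treat NU $\Rightarrow$ WU. Fix $F$ satisfying Narrow Unanimity, $P\in\mathcal{D}$, and $s\in\sen$. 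If $v_i(s)=1$ for all $i$, apply NU with $\lambda=1$ to obtain $v_{F(P)}(s)=1$, and since $1>0$ the first clause of WU holds; likewise $v_i(s)=-1$ for all $i$ yields $v_{F(P)}(s)=-1<0$ by NU with $\lambda=-1$, giving the second clause. Hence $F$ satisfies WU.

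I do not expect any real obstacle: the only point requiring a moment's care is reading off the three definitions correctly and noting that "$v_i(s)=1$ for all $i$" entails the strict-inequality hypothesis "$v_i(s)>0$ for all $i$" used by SU and forces the exact value $1$ under NU, both of which deliver the strict conclusion $v_{F(P)}(s)>0$ of WU (and symmetrically on the negative side). The proposition is therefore essentially a bookkeeping observation about how the antecedents and conclusions of the unanimity variants nest, and the write-up should be just the two short paragraphs above.
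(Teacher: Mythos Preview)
Your proof is correct and takes essentially the same approach as the paper: the paper's argument is the same direct unpacking of definitions (phrased for SU $\Rightarrow$ WU as the contrapositive, but that is of course the same content), and for NU $\Rightarrow$ WU it is exactly your instantiation with $\lambda=\pm 1$.
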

\begin{proof}
Clearly, if an aggregation function cannot hold the sign of the aggregation when the assumptions of Weak Unanimity are satisfied, then it is straightforward to see that will fail to satisfy Sided Unanimity. 

Obviously, if an aggregation function returns a value $\lambda$ when all the agents  valued the statement as $\lambda$, it will return the value when it is $1$ or $-1$.

\end{proof}

Below, we will show that we need further assumptions to prove that NU implies SU.

As a final unanimity property, we adapt the notion of endorsed unanimity from \cite{Ganzer2018} to consider unanimity based on indirect opinions. In short, an opinion aggregation function will satisfy \emph{endorsed unanimity} if, for each statement, the collective opinion on the statement is in line with the unanimous indirect opinion on it. Formally,
\begin{description}
\item {\textbf{Endorsed  Unanimity (EU).}}
Let $P=(O_1,\dotsc,O_n)$ be an opinion profile such that $ P \in \mathcal{D}$. An aggregation profile $F$ satisfies \emph{endorsed unanimity} if for every $s\in\sen$: 
\begin{enumerate}[(i)]
\item if  $v_i(s_d)=1$ for any $i\in Ag$ and $s_d\in D(s)$ (called \emph{full positive support}), then $v_{F(P)}(s)>0$; and
\item if  $v_i(s_d)=-1$ for any $i\in Ag$ and $s_d\in D(s)$ (called \emph{full negative support}), then $v_{F(P)}(s)<0$.
\end{enumerate}
\end{description}
Note that, this property is closely related to the notion of coherence. We will show below that restricting the domain to coherent opinion profiles will help to fulfil coherence.

Next, we introduce monotonicity properties to study how the result of an opinion aggregation function changes as opinions change. First, we adapt the notion of monotonicity from \cite{Awad:2015:JAM}: if some of the direct opinions about a statement increase (or decrease) the collective opinion should increase (or decrease) accordingly.
\begin{description}
\item{\textbf{Monotonicity (M) } }
Let $s\in S$ be a statement, and $P=(O_1,\dotsc,O_n)$ and $P'=(O'_1,\dotsc,O'_n)$ such that for every $i$ $v_i(s)\leq v'_i(s)$. We say that an opinion aggregation function $F$ is \emph{monotonic} if $v_{F(P)}(s)\leq v_{F(P')}(s)$.
\end{description}
Notice that M only takes into account the direct opinion about each statement. Since we aim at handling opinion aggregation functions that merge both direct and indirect opinions, we define a variation of monotoniciy taking this into account.
To do this we adapt the notion of \emph{familiar monotonicity}\footnote{The name derives from the fact that this form of monotonicity takes into account opinion about the descendents of a statement which make up its family.} in \cite{Ganzer2018}. In our case, familiar monotonicity requires that when the direct opinion on a statement increases, the collective opinion does not decrease provided that the opinions on the descendants of the statement do not change either. Formally:


\begin{description}
\item{\textbf{Familiar Monotonicity (FM).}} 
Let $s\in S$ be 
a statement, and $P=(O_1,\dotsc,O_n)$ and $P'=(O'_1,\dotsc,O'_n)$ such that for every opinion $i$ satisfies that $v_i(s)\leq v'_i(s)$, and, $w_i(r)=w'_i(r)$ and $v_i(s_r)=v'_i(s_r)$ for every relationship $r \in R(s)$ and its associated descendant $s_r\in D(s)$. We say that an opinion aggregation function $F$ satisfies FM if $v_{F(P)}(s)\leq v_{F(P')}(s)$.
\end{description}
The following lemma establishes the relationship between monotonicity properties.
\begin{lemma}
\label{lem:monotonicity}
An opinion aggregation function that satisfies monotonicity also satisfies familiar monotonicity.
\end{lemma}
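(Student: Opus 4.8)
The plan is to observe that Familiar Monotonicity is nothing more than Monotonicity restricted to a smaller class of pairs of opinion profiles; once this is made precise, the implication is immediate. Concretely, I would fix an arbitrary statement $s\in S$ together with an arbitrary pair of profiles $P=(O_1,\dotsc,O_n)$ and $P'=(O'_1,\dotsc,O'_n)$ satisfying the hypotheses of FM for $s$, and then simply invoke M.

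In detail: the FM hypothesis for $s$ requires, for every $i$, that $v_i(s)\leq v'_i(s)$ and, in addition, that $w_i(r)=w'_i(r)$ and $v_i(s_r)=v'_i(s_r)$ for every $r\in R(s)$ with associated descendant $s_r\in D(s)$. The first of these conjuncts, $v_i(s)\leq v'_i(s)$ for all $i$, is \emph{exactly} the hypothesis required by Monotonicity at the statement $s$; the further conjuncts are extra constraints that only shrink the collection of admissible pairs. Hence any pair $(P,P')$ admissible for FM (at a given $s$) is also admissible for M (at that same $s$). Since $F$ satisfies M, we obtain $v_{F(P)}(s)\leq v_{F(P')}(s)$, which is precisely the conclusion FM demands. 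As $s$ and the pair $(P,P')$ were arbitrary among those meeting the FM hypotheses, $F$ satisfies FM.

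The only point requiring any care is the direction of the implication: because FM quantifies over a strictly smaller set of profile pairs than M (it adds the constraints on the $w_i(r)$ and the $v_i(s_r)$), it is the logically \emph{weaker} property, and so is implied by M rather than the other way around. There is no genuine obstacle here — the argument is a one-line domain-restriction observation, entirely analogous to Lemma~\ref{lemma:ED-CD}.
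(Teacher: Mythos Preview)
Your proposal is correct and is essentially identical in spirit to the paper's own proof, which likewise observes that the FM hypotheses merely add extra constraints (on the descendants' valuations and relationship acceptances) to those of M, so that any pair $(P,P')$ admissible for FM is already admissible for M. The paper compresses this into a single sentence, but your slightly more detailed unpacking of the quantifier restriction is exactly the intended argument.
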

\begin{proof}
Since FM assumes a restriction on the descendants' opinions that monotonicity does not, clearly fulfilling monotonicity implies the fulfilment of familiar monotonicity.
\end{proof}
Now we are ready to prove the relationship between narrow unanimity and sided unanimity via monotonicity.
\begin{prop}\label{NarrowMono:Sided}\label{prop:NU-SU}
An opinion aggregation function that satisfies Narrow unanimity and Monotonicity also satisfies Sided unanimity.
\end{prop}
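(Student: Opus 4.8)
The plan is to reduce a sided-unanimous profile to a narrowly-unanimous one by flattening all valuations on the statement down to (resp.\ up to) their common extreme, apply Narrow Unanimity there, and then transfer the conclusion back to the original profile using Monotonicity.

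First I would fix a statement $s\in S$ and an opinion profile $P=(O_1,\dots,O_n)\in\mathcal{D}$ with $v_i(s)>0$ for every $i\in Ag$, and set $\mu=\min_{i\in Ag}v_i(s)$, so $\mu>0$. I then build the auxiliary profile $P^\star$ that agrees with $P$ everywhere except that every agent's valuation of $s$ is lowered to $\mu$; thus $v^\star_i(s)=\mu\le v_i(s)$ for all $i$, while all acceptances and all valuations of the other statements are untouched. Narrow Unanimity applied to $P^\star$ (on which all agents value $s$ identically at $\mu$) gives $v_{F(P^\star)}(s)=\mu$, and Monotonicity applied to the pair $P^\star,P$ at $s$ gives $v_{F(P^\star)}(s)\le v_{F(P)}(s)$; chaining these yields $v_{F(P)}(s)\ge\mu>0$, which is the first clause of Sided Unanimity. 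The negative clause is symmetric: for $v_i(s)<0$ for all $i$, take $M=\max_{i\in Ag}v_i(s)<0$, flatten all valuations of $s$ up to $M$ to obtain $P^\star$ with $v_i(s)\le v^\star_i(s)=M$, and combine $v_{F(P^\star)}(s)=M$ (Narrow Unanimity) with $v_{F(P)}(s)\le v_{F(P^\star)}(s)=M$ (Monotonicity) to conclude $v_{F(P)}(s)<0$.

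The only delicate point — and the main obstacle — is that Narrow Unanimity and Monotonicity are only guaranteed for profiles lying in the domain $\mathcal{D}$ of $F$, so one must be sure the flattened profile $P^\star$ again belongs to $\mathcal{D}$. Since $P^\star$ alters only valuations and leaves every acceptance function (hence the standing assumption that each relationship is valued non-zero by some agent) intact, $P^\star$ is a bona fide opinion profile; and if $F$ satisfies Exhaustive Domain this is immediate, while more generally it holds whenever $\mathcal{D}$ is closed under replacing an agent's valuation of a statement by a constant. I would make this domain hypothesis explicit in the proof (it is in any case the setting in which the proposition is invoked).
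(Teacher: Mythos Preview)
Your proposal is correct and follows essentially the same approach as the paper: flatten all valuations of $s$ to a common positive value (the paper picks any $\lambda>0$ with $v_i(s)\ge\lambda$, you take the minimum $\mu$), apply Narrow Unanimity to the flattened profile, then use Monotonicity to carry the inequality back. Your explicit attention to whether the auxiliary profile $P^\star$ lies in $\mathcal{D}$ is a point the paper silently assumes; in the paper's context the aggregation functions under study all satisfy Exhaustive Domain, so the issue does not arise.
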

\begin{proof}
Let $F$ be an opinion aggregation function that fulfils Narrow unanimity and Monotonicity. Let $P=(O_1,\dotsc,O_n)$ be an opinion profile over a $DRF$ and $s\in\sen$. Assume that for any $i\in Ag$, $v_i(s)\geq \lambda$ for a certain $\lambda>0$. Consider the opinion profile $P'$ such that for any $i$ $v'_i(s)=\lambda$. Then, by Monotonicity of $F$, $v_{F(P)}(s)\geq v_{F(P')}(s)$ holds, and by Narrow unanimity of $F$, $v_{F(P')}(s)=\lambda$. Then $v_{F(P)}(s)\geq \lambda >0$, proving that also satisfies Sided Unanimity. The proof for the negative case is analogous.
\end{proof}
Finally, we introduce the property of independence, which will be used to emphasise the difference between those opinion aggregation functions that exploit direct and indirect opinions and the ones that do not. Essentially, the property states that the collective opinion on a statement will only depend on the direct opinions on it. Therefore, independence disregards indirect opinions.
\begin{description}
\item{\textbf{Independence (I)}}
Let be two profiles $P=(O_1,\dotsc,O_n)$ and $P'=(O'_1,\dotsc,O'_n)$, such that $P,P' \in \mathcal{D}$ ; and $s\in \sen$ 
a statement, such that for all agents $i\in \{1,\dotsc,n\}$ $v_i(s)=v'_i(s)$. An aggregation function $F$ satisfies \emph{independence} if $v_{F(P)}(s)=v_{F(P')}(s)$.
\end{description}
Next result shows the relationship between monotonicity and independence.
\begin{prop}\label{prop:MtoI}
An opinion aggregation function that satisfies monotonicity also satisfies independence.
\end{prop}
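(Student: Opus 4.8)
The plan is to obtain independence from monotonicity by applying the latter twice, once in each direction, exploiting the fact that monotonicity is phrased with a non-strict inequality on the inputs so that equality of inputs is a legitimate special case.

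Concretely, let $F$ be an opinion aggregation function satisfying Monotonicity, fix a statement $s\in S$, and take two profiles $P=(O_1,\dotsc,O_n)$ and $P'=(O'_1,\dotsc,O'_n)$ in $\mathcal{D}$ with $v_i(s)=v'_i(s)$ for every agent $i$. First I would note that $v_i(s)\le v'_i(s)$ holds for all $i$, so Monotonicity applied to the pair $(P,P')$ gives $v_{F(P)}(s)\le v_{F(P')}(s)$. Then, since equally $v'_i(s)\le v_i(s)$ for all $i$, Monotonicity applied to the pair $(P',P)$ — i.e.\ with the roles of the two profiles interchanged — gives $v_{F(P')}(s)\le v_{F(P)}(s)$. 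The two inequalities together yield $v_{F(P)}(s)=v_{F(P')}(s)$, which is exactly the Independence condition for $s$; since $s$ was arbitrary, $F$ satisfies Independence.

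There is essentially no obstacle: the only points worth a sentence of care are that both $P$ and $P'$ belong to the common domain $\mathcal{D}$ (this is part of the hypothesis of Independence, so $F$ is defined on them and Monotonicity is applicable), and that the argument is really just the antisymmetry of $\le$ transported through $F$ by Monotonicity, so no appeal to continuity, to the $\epsilon$-coherence structure, or to the specific estimation function is needed.
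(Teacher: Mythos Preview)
Your proof is correct and follows essentially the same approach as the paper: split the equality $v_i(s)=v'_i(s)$ into the two inequalities $v_i(s)\le v'_i(s)$ and $v'_i(s)\le v_i(s)$, apply Monotonicity in each direction, and combine to get $v_{F(P)}(s)=v_{F(P')}(s)$. Your extra remark about both profiles lying in $\mathcal{D}$ is a reasonable point of care, but the argument is otherwise identical to the paper's.
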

\begin{proof}
Let $s\in\sen$ be a statement and $P=(O_1,\dotsc,O_n)$, $P'=(O'_1,\dotsc,O'_n)$  two opinion profiles satisfying the assumptions of independence on $s$, i.e., for every $i\in Ag$ $v_i(s)=v_i'(s)$, and $F$ an aggregation function satisfying monotonicity.

For each $i$, the equality $v_i(s)=v_i'(s)$ is equivalent to (a): $v_i(s)\geq v_i'(s)$, and, (b): $v_i(s)\leq v_i'(s)$. Thus, assuming monotonicity from (a) we can deduce $v_{F(P)}(s)\geq v_{F(P')}(s)$, and from (b) we can deduce that $v_{F(P)}(s)\leq v_{F(P')}(s)$. Hence, we conclude that $v_{F(P)}(s)= v_{F(P')}(s)$ proving that $F$ satisfies independence.
\end{proof}



Having listed these properties, it is important to note that they are not all equal. For a multi-party discussion, we believe that the most important property is collective coherence. If an aggregation function is collectively coherent, the resulting combined opinion will be coherent regardless of the coherence of the initial opinions that are being merged.
In other words, an aggregation function that satisfies collective coherence will always discover a coherent overall opinion no matter how incoherent are the opinions on which it is based.
Along with collective coherence, the properties that we would like to see for an aggregation function are the two domain related properties --- exhaustive domain and coherent domain --- because they allow for broad applicability of the function, and naturally we would prefer exhaustive domain because of its wider reach.
Finally, we regard the usual social choice properties of anonymity and non-dictatorship as essential. 

Among the unanimity properties, we find sided, weak and endorsed unanimity,  which allow more maneuverability when using dependencies to build the collective opinion than narrow unanimity, to be more desirable than narrow unanimity. We do not consider narrow unanimity to be desirable because of its close relationship to independence\footnote{Though it is not directly related --- we need to add further minor assumptions to narrow unanimity for it to imply independence.}. Thus, an aggregation function satisfying narrow unanimity would forbid the use of the indirect opinion the way we consider to be necessary.

Though it is natural to require some form of monotonicity, we do not consider the property M to be desirable because of its relationship to independence and the discarding of indirect opinion. Thus, in its place we prefer familiar monotonicity, which takes into consideration indirect opinion.

Finally, although we focus the design of aggregation functions towards the use of both direct and indirect opinion, we include independence, monotonicity and narrow unanimity in the set of properties we consider in order to to emphasize whether aggregation functions take account of indirect opinion or not.


\section{Aggregation functions to enact collective decision making}\label{sec:OpAggFunc}

This section defines a family of aggregation functions for the relational model. All of the aggregation functions that we define use some combination of direct and indirect opinions to generate a collective opinion. Our aim is to explore the spectrum of aggregation functions from functions that only employ direct opinions to functions that only employ indirect opinions, so that we can later on (in section \ref{sec:Analysis}) compare the benefits that they yield in terms of social choice properties. This will allow us to learn how to best exploit indirect opinions to obtain collective opinions.

We start by defining an aggregation function that only aggregates direct opinions, and thus disregards indirect opinions.
This function will obtain a collective opinion by averaging valuations per statement and acceptance degrees per relation from the individual opinions in an opinion profile. Formally:
\begin{defi}[Direct aggregation]
Let $\langle \sen, \rel, T \rangle$ be a $DRF$ and $P=(O_1,\dotsc ,O_n)$ an opinion profile over the $DRF$. The \emph{direct aggregation} of $P$ over the $DRF$ is defined as a function $D(P)=(v_{D(P)},w_{D(P)})$,
where $v_{D(P)}(s)=\frac1n\sum_{i=1}^n v_i(s)$ and  $w_{D(P)}(r)=\frac1n\sum_{i=1}^n w_i(r)$ for any statement $s\in \sen$ and relationship $r\in \rel$.
\end{defi}
\begin{ex}
Figure \ref{pic:direct} 
shows the result of applying the direct function to the opinion profile of our running example, shown in figures \ref{pic:ex_valuations} and \ref{pic:ex_acceptances}.

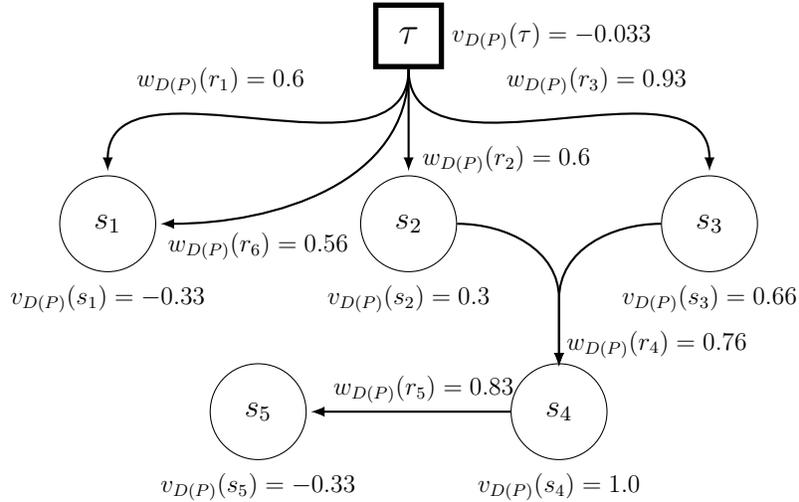
\begin{figure}[H]
\centering
\begin{tikzpicture}[every node/.append style={inner sep=3mm},every place/.style={line width=0.3mm}]

\node[rectangle,draw,line width=0.7mm] (T) {\large{$\tau$}};
\node(s1) [circle,draw] at ($(T)+(-4,-2.5)$){$s_1$} ; 
\node(s2) [circle,draw] at ($(T)+(0,-2.5)$){$s_2$} ; 
\node(s3) [circle,draw] at ($(T)+(4,-2.5)$){$s_3$} ; 
\node(s4) [circle,draw] at ($(T)+(2,-5)$){$s_4$} ; 
\node(s5) [circle,draw] at ($(T)+(-2,-5)$){$s_5$} ; 

\draw[->,thick,>=latex] (T) to[out=-90,in=90]  ($(s1)+(0,0.7)$);

\draw[->,thick,>=latex] (T) to[out=-90,in=90]  ($(s2)+(0,0.7)$);

\draw[->,thick,>=latex] (T) to[out=-90,in=90] ($(s3)+(0,0.7)$);

\draw[->,thick,>=latex] (s2) to[out=0,in=90] (2,-3.5)  to[out=-90,in=90]   ($(s4)+(0,0.6)$); 
\draw[->,thick,>=latex] (s3) to[out=180,in=90] (2,-3.5) to[out=-90,in=90,line width=0.6mm] ($(s4)+(0,0.6)$);

\draw[->,thick,>=latex] (s4) to[out=180,in=0]  ($(s5)+(0.7,0)$);

\draw[->,thick,>=latex] (T) to[out=-90,in=0]  ($(s1)+(0.7,0)$);

\node [] (vT) at ($(T)+(1.9,0)$) {\scalebox{0.8}{$v_{D(P)}(\tau)=-0.033$}};

\node [] (v1) at ($(s1)+(-0,-1)$) {\scalebox{0.8}{$v_{D(P)}(s_1)=-0.33$}};

\node [] (v2) at ($(s2)+(0,-1)$) {\scalebox{0.8}{$v_{D(P)}(s_2)=0.3$}};

\node [] (v3) at ($(s3)+(0,-1)$) {\scalebox{0.8}{$v_{D(P)}(s_3)=0.66$}};

\node [] (v4) at ($(s4)+(0,-1)$) {\scalebox{0.8}{$v_{D(P)}(s_4)=1.0$}};

\node [] (v5) at ($(s5)+(0,-1)$) {\scalebox{0.8}{$v_{D(P)}(s_5)=-0.33$}};

\node [] (w1) at ($(T)+(-2.5,-0.6)$) {\scalebox{0.8}{$w_{D(P)}(r_1)=0.6$}};

\node [] (w2) at ($(s2)+(1.3,0.85)$) {\scalebox{0.8}{$w_{D(P)}(r_2)=0.6$}};

\node [] (w3) at ($(T)+(2.5,-0.6)$) {\scalebox{0.8}{$w_{D(P)}(r_3)=0.93$}};

\node [] (w4) at ($(s4)+(1.3,0.9)$) {\scalebox{0.8}{$w_{D(P)}(r_4)=0.76$}};

\node [] (w5) at ($(s4)+(-1.8,0.3)$) {\scalebox{0.8}{$w_{D(P)}(r_5)=0.83$}};

\node [] (w6) at ($(s1)+(2,-0.3)$) {\scalebox{0.8}{$w_{D(P)}(r_6)=0.56$}};

\end{tikzpicture}
\caption{Direct function: aggregated valuations}
\label{pic:direct}
\end{figure}

\comment{\begin{figure}
\begin{minipage}{.49\textwidth}
\centering
    \includegraphics[width=6.5cm]{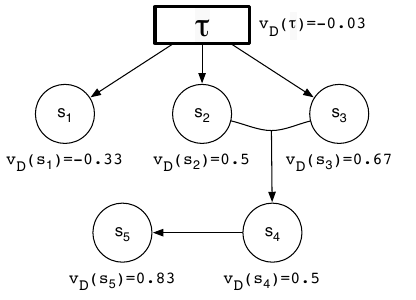}
    \caption{Direct function: aggregated  valuations.}
    \label{pic:direct}
\end{minipage}%
\begin{minipage}{.49\textwidth}
\centering
    \includegraphics[width=5.5cm]{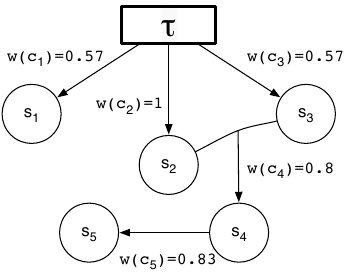}
    \caption{Direct function: aggregated  acceptances.}
    \label{pic:aggregation_weights}
\end{minipage}
\end{figure}}
\end{ex}
Next, we define an aggregation function that only aggregates indirect opinions, and disregards direct opinions. It is thus the converse of the direct aggregation function defined above. The aggregation of indirect opinions is based on the aggregation of the estimation functions introduced in section \ref{sec:Expectation_Coherence}. Formally:
\begin{defi}[Indirect aggregation]
$\langle \sen, \rel, T \rangle$ be a $DRF$ and $P=(O_1,\dotsc \allowbreak ,O_n)$ an opinion profile over the $DRF$.
The \emph{indirect aggregation} of $P$ over the $DRF$ is defined as a function $I(P)=(v_{I(P)},w_{I(P)})$,
where $v_{I(P)}(s)=\frac1n\sum_{i=1}^n e_i(s)$, where $e_i$ is an estimation function, and  $w_{I(P)}(r)=\frac1n\sum_{i=1}^n w_i(r)$ for any statement $s\in \sen$ and relationship $r\in \rel$.
\end{defi}
Notice that, while the direct aggregation function computes the average of individuals' direct opinions, the indirect aggregation function computes the average of individuals' indirect opinions. This is achieved by aggregating individuals' estimated opinions using an estimation function.
We observe also that both functions calculate in the same way the aggregation of acceptance degrees.
This will be the case for all the aggregation functions defined in this section, and hence the difference between them will be the way in which they aggregate valuations.
\begin{ex}
Figure \ref{pic:indirect} shows the aggregated (collective) valuations obtained by the indirect aggregation function for the opinion profile of our running example shown in figure \ref{pic:ex_valuations}. For the sake of illustrating the computation, notice that $v_I(\tau)=(e_1(\tau)+ e_2(\tau)+e_3(\tau))/3=0.079$ where 
$e_1(\tau)=(0.2\cdot 0 + 0.5 \cdot 0 + 0.1 \cdot 0.7 + 1 \cdot 1)/1.8 = 0.59,$
$ e_2(\tau) = -0.257$, and $e_3(\tau)=-0.107$. The aggregated acceptance degrees are the same as in figure \ref{pic:direct}.

\begin{figure}[H]
\centering
\begin{tikzpicture}[every node/.append style={inner sep=3mm},every place/.style={line width=0.3mm}]

\node[rectangle,draw,line width=0.7mm] (T) {\large{$\tau$}};
\node(s1) [circle,draw] at ($(T)+(-4,-2.5)$){$s_1$} ; 
\node(s2) [circle,draw] at ($(T)+(0,-2.5)$){$s_2$} ; 
\node(s3) [circle,draw] at ($(T)+(4,-2.5)$){$s_3$} ; 
\node(s4) [circle,draw] at ($(T)+(2,-5)$){$s_4$} ; 
\node(s5) [circle,draw] at ($(T)+(-2,-5)$){$s_5$} ; 

\draw[->,thick,>=latex] (T) to[out=-90,in=90]  ($(s1)+(0,0.7)$);

\draw[->,thick,>=latex] (T) to[out=-90,in=90]  ($(s2)+(0,0.7)$);

\draw[->,thick,>=latex] (T) to[out=-90,in=90] ($(s3)+(0,0.7)$);

\draw[->,thick,>=latex] (s2) to[out=0,in=90] (2,-3.5)  to[out=-90,in=90]   ($(s4)+(0,0.6)$); 
\draw[->,thick,>=latex] (s3) to[out=180,in=90] (2,-3.5) to[out=-90,in=90,line width=0.6mm] ($(s4)+(0,0.6)$);

\draw[->,thick,>=latex] (s4) to[out=180,in=0]  ($(s5)+(0.7,0)$);

\draw[->,thick,>=latex] (T) to[out=-90,in=0]  ($(s1)+(0.7,0)$);

\node [] (vT) at ($(T)+(1.7,0)$) {\scalebox{0.8}{$v_{I(P)}(\tau)=0.076$}};

\node [] (v1) at ($(s1)+(-0,-1)$) {\scalebox{0.8}{$v_{I(P)}(s_1)=-0.333$}};

\node [] (v2) at ($(s2)+(0,-1)$) {\scalebox{0.8}{$v_{I(P)}(s_2)=1.0$}};

\node [] (v3) at ($(s3)+(0,-1)$) {\scalebox{0.8}{$v_{I(P)}(s_3)=1.0$}};

\node [] (v4) at ($(s4)+(0,-1)$) {\scalebox{0.8}{$v_{I(P)}(s_4)=-0.333$}};

\node [] (v5) at ($(s5)+(0,-1)$) {\scalebox{0.8}{$v_{I(P)}(s_5)=-0.333$}};

\end{tikzpicture}
\caption{Indirect function: aggregated valuations}
\label{pic:indirect}
\end{figure}
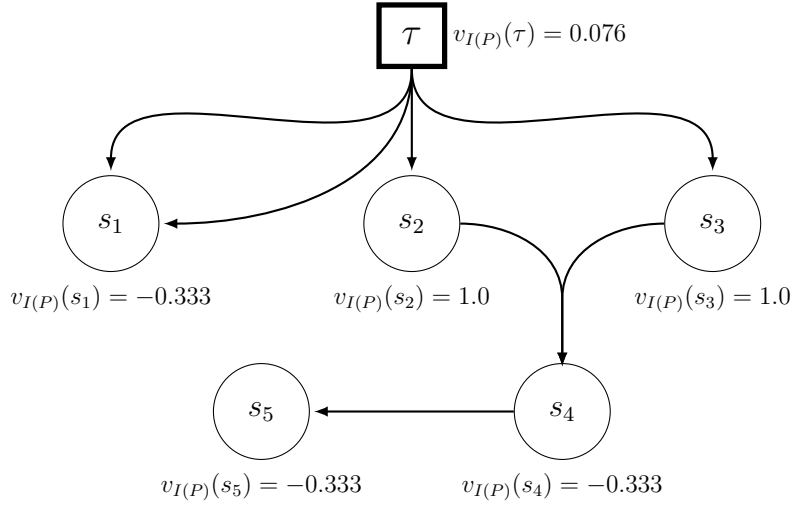

\end{ex}
%
%
%

Having defined two extremes of our spectrum of functions, we introduce a family of aggregation functions based on a linear combination of the direct and indirect aggregation functions.
\begin{defi}[$\alpha$-Balanced aggregation] 
\label{def:balanced-aggregation}
Let $\langle \sen, \rel, T \rangle$ be a $DRF$ and $P=(O_1,\dotsc ,O_n)$ an opinion profile over the $DRF$.
Given the direct aggregation $D(P)=(v_{D(P)},w_{D(P)})$, the indirect aggregation $I(P)=(v_{I(P)},w_{I(P)})$, and $\alpha \in [0,1]$, we define the aggregation function 
$B_\alpha(P)=(v_{B_\alpha(P)},w_{B_\alpha(P)})$, where:
\begin{align*}
v_{B_\alpha(P)} & = \alpha \cdot v_{D(P)}+(1-\alpha) \cdot v_{I(P)}\\ 
w_{B_\alpha(P)}(r) & =\frac1n\sum_{i=1}^n w_i(r)
\end{align*}
for any statement $s\in \sen$ and relationship $r\in \rel$. We say that $B_\alpha$ is an $\alpha$-balanced aggregation function.
\end{defi}

By changing the value of $\alpha$ we set the importance of the direct opinion with respect to the indirect opinion. The functions resulting from definition \ref{def:balanced-aggregation} form a family of balanced aggregation functions: $\{B_\alpha\}_{\alpha\in[0,1]}$. In particular, by setting $\alpha$ to $0$ we obtain the indirect aggregation function, and by setting it to $1$ we obtain the direct aggregation function. 

Next, we define an aggregation function that exploits indirect opinions differently. For a given statement, the so-called \emph{recursive aggregation} function calculates its aggregated valuation by using the collective opinion on its descendants, which in turn is recursively computed from their descendants, and so on. This recursive computing ends up when reaching statements without descendants whose indirect opinion is empty. Therefore, the recursive aggregation, unlike balanced aggregations, disregards individual valuations in the indirect opinion, and employs their collective opinions instead. 
%
\begin{defi}[Recursive aggregation] 
Let $\langle \sen, \rel, T \rangle$ be a $DRF$ and $P=(O_1,\dotsc ,O_n)$ an opinion profile over the $DRF$.
The \emph{recursive aggregation} of $P$ over the $DRF$ is defined as a function $R(P)=(v_{R(P)},w_{R(P)})$,
where
\[
v_{R(P)}(s)=\left\{\begin{array}{cc}
\frac{1}{\sum_{r\in R^+(s)} w_{R(P)}(r)} \sum_{r\in R^+(s)} v_{R(P)}(s_r) \cdot w_{R(P)}(r) & \text{if } R^+(s)\neq\emptyset \\
    v_{D(P)}(s) & \text{otherwise} 
\end{array}\right.
\]
\comment{\begin{align*}\label{eq:sign}
v_{R(P)}(s) &= 
   \begin{cases}
    $\frac{1}{\sum_{r\in R^+(s)} w_{R(P)}(r)} \sum_{r\in R^+(s)} v_{R(P)}(s_r)w_{R(P)}(r)$ & \text{if } $R^+(s)\neq\emptyset$ \\
    $v_{D(P)}(s)$ & \text{otherwise} 
\end{cases}
\end{align*}}
and $w_{R(P)}(r)=\frac1n\sum_{i=1}^n w_i(r)$
for any statement $s\in \sen$ and relationship $r\in \rel$.
\end{defi}

Recall that $R^+(s)$ stands for the relationships connecting $s$ to a descendant $s_r$ of $s$ through the relationship $r$.



Thus, the recursive function computes the average of the indirect collective opinion computed so far. In fact, we could say that, due to its recursive character, the function computes the estimated opinion for each statement in a bottom-up manner. Thus, the aggregation of opinions starts considering the direct opinions at the ``leaves'' of the debate, namely at the statements with no descendants, and moves up until reaching the targets.


\begin{ex} 
Figure \ref{pic:recursive} shows the aggregated (collective) valuations obtained by the recursive aggregation function for the opinion profile of our running example shown in figure \ref{pic:ex_valuations}. The aggregated acceptance degrees are the same as in figure \ref{pic:direct}. Again, for the sake of illustrating the computation, please notice that we start by computing $v_R(s_1)=v_D(s_1)=-0.33$ and $v_R(s_5)=v_D(s_5)=-0.33$ and, from these, we can compute $v_R(s_4)=v_R(s_5) \cdot w(r_5)/w(r_5)=-0.33$, $v_R(s_2)=v_R(s_4)\cdot w(r_4)/w(r_4)=-0.33=v_R(s_3)$ so to finally compute $v_R(\tau)=(-0.33\cdot0.6 -0.33\cdot 0.56 - 0.33\cdot 0.6 - 0.33\cdot0.93)/2.69=-0.33$.

\begin{figure}[H]
\centering
\begin{tikzpicture}[every node/.append style={inner sep=3mm},every place/.style={line width=0.3mm}]

\node[rectangle,draw,line width=0.7mm] (T) {\large{$\tau$}};
\node(s1) [circle,draw] at ($(T)+(-4,-2.5)$){$s_1$} ; 
\node(s2) [circle,draw] at ($(T)+(0,-2.5)$){$s_2$} ; 
\node(s3) [circle,draw] at ($(T)+(4,-2.5)$){$s_3$} ; 
\node(s4) [circle,draw] at ($(T)+(2,-5)$){$s_4$} ; 
\node(s5) [circle,draw] at ($(T)+(-2,-5)$){$s_5$} ; 

\draw[->,thick,>=latex] (T) to[out=-90,in=90]  ($(s1)+(0,0.7)$);

\draw[->,thick,>=latex] (T) to[out=-90,in=90]  ($(s2)+(0,0.7)$);

\draw[->,thick,>=latex] (T) to[out=-90,in=90] ($(s3)+(0,0.7)$);

\draw[->,thick,>=latex] (s2) to[out=0,in=90] (2,-3.5)  to[out=-90,in=90]   ($(s4)+(0,0.6)$); 
\draw[->,thick,>=latex] (s3) to[out=180,in=90] (2,-3.5) to[out=-90,in=90,line width=0.6mm] ($(s4)+(0,0.6)$);

\draw[->,thick,>=latex] (s4) to[out=180,in=0]  ($(s5)+(0.7,0)$);

\draw[->,thick,>=latex] (T) to[out=-90,in=0]  ($(s1)+(0.7,0)$);

\node [] (vT) at ($(T)+(1.8,0)$) {\scalebox{0.8}{$v_{R(P)}(\tau)=-0.33$}};

\node [] (v1) at ($(s1)+(-0,-1)$) {\scalebox{0.8}{$v_{R(P)}(s_1)=-0.33$}};

\node [] (v2) at ($(s2)+(0,-1)$) {\scalebox{0.8}{$v_{R(P)}(s_2)=-0.33$}};

\node [] (v3) at ($(s3)+(0,-1)$) {\scalebox{0.8}{$v_{R(P)}(s_3)=-0.33$}};

\node [] (v4) at ($(s4)+(0,-1)$) {\scalebox{0.8}{$v_{R(P)}(s_4)=-0.33$}};

\node [] (v5) at ($(s5)+(0,-1)$) {\scalebox{0.8}{$v_{R(P)}(s_5)=-0.33$}};

\end{tikzpicture}
\caption{Recursive function: aggregated valuations}
\label{pic:recursive}
\end{figure}
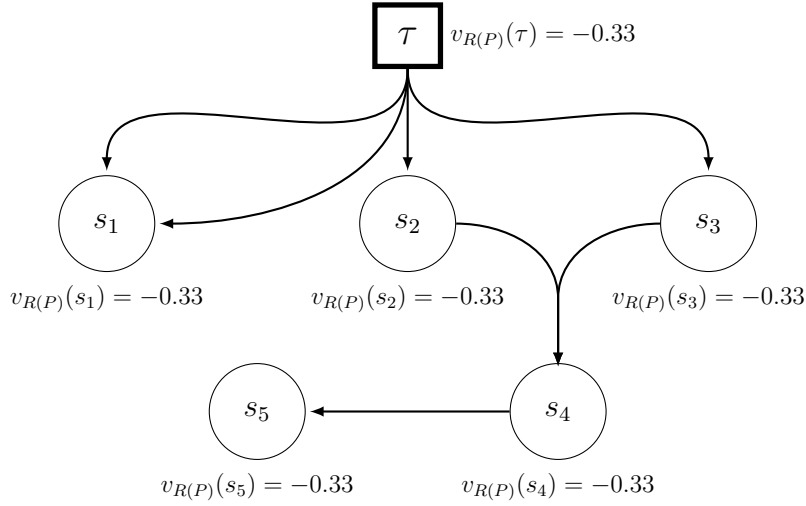
\end{ex}
Similarly to what we did for the balanced function above, next we define a family of aggregation functions based on combining  the direct and recursive aggregation functions.
%
%
%
\begin{defi}[$\alpha$-recursive aggregation] Let $\langle \sen, \rel, T \rangle$ be a $DRF$ and $P=(O_1,\dotsc ,O_n)$ an opinion profile over the $DRF$. Given the direct aggregation $D(P)=(v_{D(P)},w_{D(P)})$, the recursive aggregation $R(P)=(v_{R(P)},w_{R(P)})$, and $\alpha \in [0,1]$, we define the aggregation function $R_\alpha(P)=(v_{R_\alpha(P)},w_{R_\alpha(P)})$, where:
\begin{align*}
    v_{R_\alpha(P)} & = \alpha \cdot v_{D(P)}+(1-\alpha) \cdot v_{R(P)} \\
    w_{R_\alpha(P)}(r) & = \frac1n\sum_{i=1}^n w_i(r)
\end{align*} for any statement $s\in \sen$ and relationship $r\in \rel$. We say that $R_\alpha$ is an $\alpha$-recursive aggregation function.
\end{defi}

\section{Analysing opinion aggregation functions}
\label{sec:Analysis}

In this section we compare the aggregation functions introduced in section \ref{sec:OpAggFunc} in terms of their satisfaction, or otherwise, of the social choice properties introduced in section \ref{sec:Aggregation}. Our analysis will run along two dimensions: (1) the \emph{coherence} of an opinion profile; and (2) the \emph{consensus} on the acceptance degrees of an opinion profile. Thus, we will consider whether agents' opinions are constrained to be coherent (the opinion profile is coherent) or not 
, and whether agents agree on acceptance degrees (there is consensus on acceptance degrees) or not. This results in four debate scenarios to analyse:
\begin{enumerate}
    \item Unconstrained opinion profiles;
    \item Constrained opinion profiles: assuming consensus on acceptance degrees;
    \item Constrained opinion profiles: assuming coherent profiles; and
    \item Constrained opinion profiles: assuming consensus on acceptance degrees and coherent profiles.
\end{enumerate}
The analysis of these scenarios will help us assess the price that must be paid if the opinions stated by participating agents are not necessarily coherent. The scenarios will also help us assess that must be paid when the relationships between statements are opened for discussion by means of acceptance degrees.
    
Note that, in the interests of readability, we do not include the formal analysis in the body of the paper. Rather we present the results of that analysis. The formal analysis, both statements of results and the proofs of the results, can be found in  \ref{sec:Proofs}.



\subsection{Unconstrained opinion profiles}\label{subsec:generalResults}

This is the most general scenario we can consider for analysis. We assume unconstrained opinion profiles, which means that \emph{any} opinion profile is deemed to be possible input for the aggregation functions introduced in Section \ref{sec:OpAggFunc}. In other words, the domain of our aggregation functions is the class $\mathbb{O}(DRF)^n$ itself, and hence opinions need not to be coherent nor have consensus on the acceptance degrees. 



Table \ref{tableGeneralResults} displays the social choice properties fulfilled by the functions defined in section \ref{sec:OpAggFunc} in this general case. There is one column per aggregation function and one row per social choice property.
Notice that we distinguish between desirable social choice properties and other properties as discussed in section \ref{subsec:socialChoiceProperties}.
In the table, a green square (with a tick) indicates that a property is fulfilled, while a red square (with a cross) indicates that a property is not fulfilled. As to the more general aggregation functions, $\alpha$-B(alanced), and $\alpha$-R(ecursive), in some cases we specify the values $\alpha$ for which a given property holds. Notice that, for both families, we show the results considering $\alpha\in(0,1)$, not considering 0 or 1, thus the cases for the extreme values represent aggregations functions already displayed in other columns.


\begin{table}[h]
\centering
\footnotesize
\begin{tabular}{ | l | >{\centering}m{0.9cm} | >{\centering}p{0.9cm} 
| >{\centering}p{0.9cm} | >{\centering}p{1.3cm} | c |}
\hline
\rowcolor{lightgray} Desirable properties  &  D & I & R & $\alpha$-B & \textbf{$\alpha$-R}\\
\hline
\textbf{Collective coherence}\score{1}{1}
& \cellcolor[HTML]{FF0000} \textcolor{white}{\xmark} & \cellcolor[HTML]{FF0000} \textcolor{white}{\xmark}  \comment{ & \cellcolor[HTML]{FF0000} \textcolor{white}{\xmark}} & \cellcolor[HTML]{04B45F} \checkmark & \cellcolor[HTML]{FF0000} \textcolor{white}{\xmark} & \cellcolor[HTML]{04B45F} $\alpha<\epsilon/2$ \\
\hline
Exhaustive domain & \cellcolor[HTML]{04B45F} \checkmark & \cellcolor[HTML]{04B45F} \checkmark  \comment{& \cellcolor[HTML]{04B45F}  \checkmark} & \cellcolor[HTML]{04B45F} \checkmark & \cellcolor[HTML]{04B45F}  \checkmark & \cellcolor[HTML]{04B45F} \checkmark\\
\hline
Coherent domain & \cellcolor[HTML]{04B45F} \checkmark & \cellcolor[HTML]{04B45F} \checkmark  \comment{& \cellcolor[HTML]{04B45F}  \checkmark} & \cellcolor[HTML]{04B45F} \checkmark & \cellcolor[HTML]{04B45F}  \checkmark & \cellcolor[HTML]{04B45F} \checkmark\\
\hline
Anonymity  & \cellcolor[HTML]{04B45F} \checkmark & \cellcolor[HTML]{04B45F} \checkmark  \comment{& \cellcolor[HTML]{04B45F}  \checkmark} & \cellcolor[HTML]{04B45F} \checkmark & \cellcolor[HTML]{04B45F}  \checkmark & \cellcolor[HTML]{04B45F} \checkmark\\
\hline
Non-dictatorship & \cellcolor[HTML]{04B45F} \checkmark & \cellcolor[HTML]{04B45F} \checkmark  \comment{& \cellcolor[HTML]{04B45F}  \checkmark} & \cellcolor[HTML]{04B45F} \checkmark & \cellcolor[HTML]{04B45F}  \checkmark & \cellcolor[HTML]{04B45F} \checkmark\\
\hline

Sided Unanimity &  \cellcolor[HTML]{04B45F} \checkmark  & \cellcolor[HTML]{FF0000} \textcolor{white}{\xmark}  & \cellcolor[HTML]{FF0000} \textcolor{white}{\xmark} & \cellcolor[HTML]{FF0000} \textcolor{white}{\xmark} & \cellcolor[HTML]{FF0000} \textcolor{white}{\xmark}\\
\hline
Weak Unanimity &  \cellcolor[HTML]{04B45F} \checkmark  &\cellcolor[HTML]{FF0000} \textcolor{white}{\xmark} & \cellcolor[HTML]{FF0000} \textcolor{white}{\xmark} & \cellcolor[HTML]{04B45F} $\alpha > 1/2$ & \cellcolor[HTML]{04B45F} $\alpha>1/2$\\
\hline
Endorsed Unanimity &  \cellcolor[HTML]{FF0000} \textcolor{white}{\xmark}  & \cellcolor[HTML]{04B45F} \checkmark  & \cellcolor[HTML]{FF0000} \textcolor{white}{\xmark} & \cellcolor[HTML]{04B45F} $\alpha < 1/2$ & \cellcolor[HTML]{FF0000} \textcolor{white}{\xmark}\\
\hline

Familiar monotonicity & \cellcolor[HTML]{04B45F} \checkmark & \cellcolor[HTML]{04B45F} \checkmark  & \cellcolor[HTML]{FF0000} \textcolor{white}{\xmark} & \cellcolor[HTML]{04B45F} \checkmark & \cellcolor[HTML]{FF0000} \textcolor{white}{\xmark}\\

\hline\hline \rowcolor{lightgray} Other properties & & & & &\\ \hline
Monotonicity & \cellcolor[HTML]{04B45F} \checkmark & \cellcolor[HTML]{FF0000} \textcolor{white}{\xmark} \comment{& \cellcolor[HTML]{FF0000} \textcolor{white}{\xmark}} & \cellcolor[HTML]{FF0000} \textcolor{white}{\xmark} & \cellcolor[HTML]{FF0000} \textcolor{white}{\xmark} & \cellcolor[HTML]{FF0000} \textcolor{white}{\xmark}\\
\hline
Narrow Unanimity &  \cellcolor[HTML]{04B45F} \checkmark  &\cellcolor[HTML]{FF0000} \textcolor{white}{ \xmark} \comment{& \cellcolor[HTML]{FF0000} \textcolor{white}{ \xmark}} & \cellcolor[HTML]{FF0000} \textcolor{white}{\xmark} & \cellcolor[HTML]{FF0000} \textcolor{white}{ \xmark} & \cellcolor[HTML]{FF0000} \textcolor{white}{ \xmark}\\
\hline
Independence & \cellcolor[HTML]{04B45F} \checkmark & \cellcolor[HTML]{FF0000} \textcolor{white}{\xmark} & \cellcolor[HTML]{FF0000} \textcolor{white}{\xmark} & \cellcolor[HTML]{FF0000} \textcolor{white}{\xmark} & \cellcolor[HTML]{FF0000} \textcolor{white}{\xmark}\\
\hline
\end{tabular}
\caption{Social choice properties satisfied by aggregation functions D(irect), I(ndirect), R(ecursive), $\alpha$-B(alanced), and $\alpha$-R(ecursive) for: (i) a general scenario considering unconstrained opinion profiles; (ii) a scenario considering constrained opinion profiles: consensus on acceptance degrees.}
\label{tableGeneralResults}
\end{table}
\normalsize

\noindent
\textbf{Domain and anonymity.} Table \ref{tableGeneralResults} shows that Exhaustive Domain (ED), Coherent Domain, Anonymity and Non-dictatorship are fulfilled by all the proposed opinion aggregation functions. This is because of the agnostic treatment of opinion profiles implemented by the aggregation functions. Since no constraints are imposed on opinion profiles received as input, ED is satisfied, and since no agent in an opinion profile receives a special treatment, Anonymity holds. Satisfying this family of properties is important. On the one hand, fulfilling ED ensures that any opinion profile is valid as input to a collective opinion, that is, the aggregation functions do not filter out participants' opinions prior to computing a collective opinion. On the other hand, satisfying Anonymity  guarantees that all participants are equally important when calculating a collective opinion. 


\noindent
\textbf{Collective coherence.} The (D)irect and (I)ndirect functions do not satisfy collective coherence. As a result, neither does any $\alpha$-Balanced aggregation function because it results from the linear combination of D and I. The result of such aggregation methods largely depends on the coherence of the opinion profile at hand, which in this scenario can be as incoherent as possible. More positively, the (R)ecursive aggregation function does satisfy Collective Coherence (CC). Out of the family of recursive aggregation functions ($\alpha$-R), which relies on D and R, those for which  
$\alpha<\epsilon/2$, where $\epsilon$ is set to assess the coherence of the output, also satisfy CC. This tells us that the closer is $\alpha$ to 0 (the less the use of the direct opinion), the more coherent the collective opinion obtained by an $\alpha$-R function will be. The closer $\alpha$ is to $\epsilon/2$, the less coherent the collective opinion obtained by an $\alpha$-R function will be. When $\alpha$ goes beyond $\epsilon/2$, the $\alpha$-R function depends too much on the direct opinion (which does not satisfy CC) and CC does not hold.


\noindent
\textbf{Unanimity.} 
Narrow, Sided and Weak Unanimity are not satisfied by the Indirect and Recursive aggregation functions. This is because the indirect opinion, employed by all these aggregation functions, ignores unanimity on the direct opinion of a statement and in some cases these functions can produce a result in the opposite direction. On the other hand, the Direct function, which only depends on the direct opinions of a statement, does satisfy all the unanimity properties. This benefits the Balanced and Recursive families, which satisfy Weak unanimity for some values of $\alpha$. Notice that only Balanced and Recursive aggregation functions for which $\alpha$ is greater than $1/2$ satisfy Weak unanimity. This is to lessen the influence of the \mbox{indirect} opinion and sway the result towards the Direct aggregation function, which does satisfy the property. Regarding the Narrow and Sided unanimity properties, not even the influence of the Direct function is enough to guarantee that unanimity is preserved, and therefore no aggregation function in the Balanced or Recursive families fulfil them for any value of $\alpha$.


On the other hand, regarding  Endorsed unanimity, the situation changes for the Direct and Indirect functions. They flip sides so that the Direct function does not fulfill Endorsed unanimity, but the Indirect function does. This is because the unanimity in this case resides in indirect opinions, and hence it is in line with the Indirect function, which only depends on indirect opinions. However, this goes against the Direct function, which disregards indirect opinions, and hence unanimity on its values. Conversely to the Weak unanimity case for the Balanced family, now we need that the values of $\alpha$ are less than $1/2$ to sway the balanced aggregation towards the Indirect function, and hence, satisfy Endorsed unanimity. Next, although it might seem reasonable that aggregation functions in the Recursive family also fulfil Endorsed unanimity, they do not. This is caused by the the recursive behaviour of these aggregation functions, which can  overlook unanimity on indirect opinions to use instead opinions deep in the debate on which there might be no unanimity. And last, due to the failure of the Direct and Recursive aggregation functions to fulfil Endorsed unanimity, so do all the aggregation functions in the Recursive family, no matter the value of $\alpha$.

\noindent
\textbf{Monotonicity.}
The Familiar Monotonicity property is fulfilled by the Direct function (as a consequence of fulfilling Monotonicity), the Indirect function, and therefore by the whole family of Balanced functions that are combinations of the Direct and Indirect functions. The Recursive function, and hence the Recursive family, fails to satisfy Familiar Monotonicity because given a statement, the aggregated opinion about its descendants does not solely depend on the valuations on these descendants alone. Instead, the  aggregated opinion about its descendants recursively depends on descendants down the relational framework. Thus, changes of opinion on "grandchildren" statements can cause a change of opinion independently of any change of the direct opinion.


\noindent
\textbf{Other properties.} For the sake of completeness, next we also analyse the fulfilment of further, non-desirable, properties. As expected, Independence is not fulfilled by any of the function making use, at any degree, of indirect opinion, namely the Indirect function, the Recursive function, and the Balanced and Recursive  families (for any  $\alpha<1$). Also related to the use of indirect opinions, we observe that Narrow Unanimity and Monotonicity are not fulfilled by any of these four functions. Overall, these properties can only be satisfied when employing the direct opinion alone to obtain the aggregated collective opinion. This reinforces our discussion in section \ref{subsec:socialChoiceProperties} about disregarding these properties to consider alternative properties more orientated to aggregation functions that take account of dependencies.




\subsection{Constrained opinions: assuming consensus on acceptance degrees}\label{subsec:consensusResults}


Assuming consensus on acceptances does not lead to any gain in the fulfilment of social choice properties with respect to those already claimed in section \ref{subsec:generalResults}, so previous table \ref{tableGeneralResults} show the results for this scenario as well. Nonetheless, we deemed worth analysing this debate scenario because of the multiple already-in-use participation systems that do not allow participants to value differently the relationships between sentences in a debate. In our case, if we assume that participants agree on acceptance degrees, the collective opinion will only depend on valuations over sentences. This is equivalent to considering a debate where participants are allowed to value sentences, but do not express their opinions on the relationships between them. 


We refer the reader to \ref{sec:Proofs} for the proofs regarding the fulfilment of social choice properties in section \ref{subsec:generalResults}. By analyising such proofs, we observe that assuming consensus on acceptances does not yield any further benefit that allows the hail the fulfilment of any social choice property that did not hold in the analysis of section \ref{subsec:generalResults}. This motivates that we have decided not to include any additional proofs for this scenario in \ref{Proofs:Uniform}. Hence, the proofs in section \ref{subsec:generalResults} are also valid for this scenario.


\subsection{Constrained opinions: assuming coherent profiles}\label{sec:coherentResults}


In the following, we assume that the opinion profile in a debate is constrained to be coherent at some degree (according to some value $\epsilon\in(0,1)$), so that each of the opinions in the profile is always coherent. Recall that we consider that coherence occurs when the direct and indirect opinions are in line. Therefore, assuming coherence is expected to have a positive impact on aggregation functions that exploit  indirect opinions to compute a collective opinion.  Here we assess the gain.

Table \ref{tableCoherentResults} shows the desirable properties satisfied by our aggregation functions when assuming coherence. The light 
green squares with check marks identify properties that are now satisfied, but were not (in table \ref{tableGeneralResults}) when not imposing coherence. Therefore, assuming coherence yields new positive results. More precisely, table \ref{tableCoherentResults} shows that assuming coherence leads to the satisfaction of desirable unanimity properties for several of the functions.
First, given the coherence assumption, the unanimity on the direct opinion drags the indirect opinion to become more similar to it, and therefore the Indirect function gains Weak unanimity.  Now, since the Direct function also satisfies it, it follows that all $\alpha$-Balanced functions now fulfil it too. Furthermore, thanks to the alignment that the coherence assumption brings between the direct and indirect opinions, the Direct function fulfils the Endorsed unanimity property. Therefore, having Endorsed unanimity fulfilled now by the Indirect and Direct functions, the aggregation functions in the Balanced family also fulfil it for any $\alpha$.

Observe that unanimity and the coherence assumption work well together.  Unanimity on one sentence 
brings together its direct and indirect opinions, making it impossible for both to be far apart, and therefore allowing the Direct and Indirect functions to fulfill more unanimity properties.

Finally, the family of Recursive function now fulfils Endorsed unanimity, though, not for any $\alpha$. Depending on the degree of coherence allowed in the opinion profile, i.e. the value of $\epsilon$, the interval of $\alpha$ values allowing $R_\alpha$ to fulfil Endorsed unanimity will change. In this case, $\alpha$ has to be greater than  $1/(2-\epsilon)$, representing the need to overcome the bad result obtained by the Recursive function with respect to the Endorsed unanimity property. 

\begin{table}[h]
\centering
\footnotesize
\begin{tabular}{ | l | >{\centering}m{1cm} | >{\centering}p{1cm} 
| >{\centering}p{1cm} | >{\centering}p{1cm} | c |}
\hline
\rowcolor{lightgray} Desirable properties  &  D & I & R & $\alpha$-B & \textbf{$\alpha$-R}\\
\hline
\textbf{Collective coherence}\score{1}{1}
& \cellcolor[HTML]{FF0000} \textcolor{white}{\xmark} & \cellcolor[HTML]{FF0000} \textcolor{white}{\xmark}  \comment{ & \cellcolor[HTML]{FF0000} \textcolor{white}{\xmark}} & \cellcolor[HTML]{04B45F} \checkmark & \cellcolor[HTML]{FF0000} \textcolor{white}{\xmark} & \cellcolor[HTML]{04B45F} $\alpha<\epsilon/2$ \\
\hline
Exhaustive domain & \cellcolor[HTML]{04B45F} \checkmark & \cellcolor[HTML]{04B45F} \checkmark  \comment{& \cellcolor[HTML]{04B45F}  \checkmark} & \cellcolor[HTML]{04B45F} \checkmark & \cellcolor[HTML]{04B45F}  \checkmark & \cellcolor[HTML]{04B45F} \checkmark\\
\hline
Coherent domain & \cellcolor[HTML]{04B45F} \checkmark & \cellcolor[HTML]{04B45F} \checkmark  \comment{& \cellcolor[HTML]{04B45F}  \checkmark} & \cellcolor[HTML]{04B45F} \checkmark & \cellcolor[HTML]{04B45F}  \checkmark & \cellcolor[HTML]{04B45F} \checkmark\\
\hline
Anonymity  & \cellcolor[HTML]{04B45F} \checkmark & \cellcolor[HTML]{04B45F} \checkmark  \comment{& \cellcolor[HTML]{04B45F}  \checkmark} & \cellcolor[HTML]{04B45F} \checkmark & \cellcolor[HTML]{04B45F}  \checkmark & \cellcolor[HTML]{04B45F} \checkmark\\
\hline
Non-dictatorship & \cellcolor[HTML]{04B45F} \checkmark & \cellcolor[HTML]{04B45F} \checkmark  \comment{& \cellcolor[HTML]{04B45F}  \checkmark} & \cellcolor[HTML]{04B45F} \checkmark & \cellcolor[HTML]{04B45F}  \checkmark & \cellcolor[HTML]{04B45F} \checkmark\\
\hline

Sided Unanimity &  \cellcolor[HTML]{04B45F} \checkmark  &  \cellcolor[HTML]{FF0000} \textcolor{white}{\xmark}  & \cellcolor[HTML]{FF0000} \textcolor{white}{\xmark} & \cellcolor[HTML]{FF0000} \textcolor{white}{\xmark} & \cellcolor[HTML]{FF0000} \textcolor{white}{\xmark}\\
\hline
\cellcolor[HTML]{DBFB8B} \textbf{Weak Unanimity} &  \cellcolor[HTML]{04B45F} \checkmark  &\cellcolor[HTML]{99F7AB} \checkmark & \cellcolor[HTML]{FF0000} \textcolor{white}{\xmark} & \cellcolor[HTML]{99F7AB} \checkmark & \cellcolor[HTML]{04B45F} $\alpha>1/2$\\
\hline
\cellcolor[HTML]{DBFB8B} \textbf{Endorsed Unanimity} &  
\cellcolor[HTML]{99F7AB} \checkmark & \cellcolor[HTML]{04B45F} \checkmark  & \cellcolor[HTML]{FF0000} \textcolor{white}{\xmark} &
\cellcolor[HTML]{99F7AB} \checkmark &
\cellcolor[HTML]{99F7AB} $\alpha>\frac 1 {2-\epsilon}$\\
\hline

Familiar monotonicity & \cellcolor[HTML]{04B45F} \checkmark & \cellcolor[HTML]{04B45F} \checkmark  & \cellcolor[HTML]{FF0000} \textcolor{white}{\xmark} &
\cellcolor[HTML]{04B45F} \checkmark &
\cellcolor[HTML]{FF0000} \textcolor{white}{\xmark}\\

\hline
\end{tabular}
\caption{Highlighted, in light color, the fulfilment of additional desirable properties, on top of those in Table~\ref{tableGeneralResults}, when assuming coherent opinions. }
\label{tableCoherentResults}
\end{table}
\normalsize

\subsection{Constrained opinions: assuming consensus on acceptance degrees and coherent profiles}\label{subsec:sameAcceptanceResults}

In what follows we assume both previous constraints on the opinion profiles: coherence on the opinions and consensus on acceptances degrees. First, consensus on acceptance degrees on relationships represents a more simplified debate where participants only provide their opinions on sentences. Second, the coherence assumed on opinions aligns direct and indirect opinions. Overall, both assumptions yield major benefits in terms of the satisfaction of desired social choice properties, as we discuss next.

Table \ref{tableUniformAcceptance} shows the gain in fulfilment of desirable properties with respect to table \ref{tableCoherentResults}. The light 
green squares with check marks identify properties that are now satisfied, but were not (in table \ref{tableCoherentResults}) when not imposing consensus on acceptance degrees. Now, besides the aggregation functions in the Recursive family, which now satisfies Collective coherence for any $\alpha$, the rest of aggregation functions under study also satisfy $\epsilon$-Collective coherence when the opinion profiles are $\delta$-coherent, $0<\delta\leq\epsilon$.  

This major improvement is because the consensus on acceptance degrees forbids the participants to value a relationship as 0, which is key to ensure collective coherence for the Direct and Indirect functions when the opinion profiles are coherent. We assume that for each relationship at least one agent has valued it other than 0, because otherwise it would be as if the relationship did not exist, and this forces all the participants to have a positive value too.


In this manner, if we have both $\epsilon$-coherent opinions and consensus, then all our aggregation functions can guarantee $\epsilon$-coherent aggregated opinions, which may increase the acceptability of the results from the participants.


Notice that assuming consensus on acceptance degrees is quite reasonable. Indeed, we believe that such consensus is actually very likely to be found in many debates where the relationships are classified first and afterwards the participants are allowed to give their opinions. Our model can fit perfectly with these kinds of scenario by setting all the acceptance degrees as a constant value for every participant. Furthermore, the procedure to create the debate, and therefore the DRF, could be adapted so there is a first stage in which a collective value is established for each of the relationships, and then there is a second stage in which values are assigned to the statements. 


\begin{table}[h]
\centering
\footnotesize
\begin{tabular}{ | l | >{\centering}m{1cm} | >{\centering}p{1cm} 
| >{\centering}p{1cm} | >{\centering}p{1cm} | c |}
\hline
\rowcolor{lightgray} Desirable properties  &  D & I & R & $\alpha$-B & \textbf{$\alpha$-R}\\
\hline
\cellcolor[HTML]{DBFB8B} \textbf{Collective coherence}\score{1}{1} &
\cellcolor[HTML]{99F7AB} \checkmark & 
\cellcolor[HTML]{99F7AB} \checkmark & \cellcolor[HTML]{04B45F} \checkmark & \cellcolor[HTML]{99F7AB} \checkmark & \cellcolor[HTML]{99F7AB} \checkmark \\
\hline
Exhaustive domain & \cellcolor[HTML]{04B45F} \checkmark & \cellcolor[HTML]{04B45F} \checkmark  \comment{& \cellcolor[HTML]{04B45F}  \checkmark} & \cellcolor[HTML]{04B45F} \checkmark & \cellcolor[HTML]{04B45F}  \checkmark & \cellcolor[HTML]{04B45F} \checkmark\\
\hline
Coherent domain & \cellcolor[HTML]{04B45F} \checkmark & \cellcolor[HTML]{04B45F} \checkmark  \comment{& \cellcolor[HTML]{04B45F}  \checkmark} & \cellcolor[HTML]{04B45F} \checkmark & \cellcolor[HTML]{04B45F}  \checkmark & \cellcolor[HTML]{04B45F} \checkmark\\
\hline
Anonymity  & \cellcolor[HTML]{04B45F} \checkmark & \cellcolor[HTML]{04B45F} \checkmark  \comment{& \cellcolor[HTML]{04B45F}  \checkmark} & \cellcolor[HTML]{04B45F} \checkmark & \cellcolor[HTML]{04B45F}  \checkmark & \cellcolor[HTML]{04B45F} \checkmark\\
\hline
Non-dictatorship & \cellcolor[HTML]{04B45F} \checkmark & \cellcolor[HTML]{04B45F} \checkmark  \comment{& \cellcolor[HTML]{04B45F}  \checkmark} & \cellcolor[HTML]{04B45F} \checkmark & \cellcolor[HTML]{04B45F}  \checkmark & \cellcolor[HTML]{04B45F} \checkmark\\
\hline

Sided Unanimity &  \cellcolor[HTML]{04B45F} \checkmark  &  \cellcolor[HTML]{FF0000} \textcolor{white}{\xmark}  & \cellcolor[HTML]{FF0000} \textcolor{white}{\xmark} & \cellcolor[HTML]{FF0000} \textcolor{white}{\xmark} & \cellcolor[HTML]{FF0000} \textcolor{white}{\xmark}\\
\hline

Weak Unanimity &  \cellcolor[HTML]{04B45F} \checkmark  &\cellcolor[HTML]{04B45F} \checkmark & \cellcolor[HTML]{FF0000} \textcolor{white}{\xmark} & \cellcolor[HTML]{04B45F} \checkmark & \cellcolor[HTML]{04B45F} $\alpha>1/2$\\
\hline
Endorsed Unanimity &  \cellcolor[HTML]{04B45F} \checkmark & \cellcolor[HTML]{04B45F} \checkmark  & \cellcolor[HTML]{FF0000} \textcolor{white}{\xmark} & \cellcolor[HTML]{04B45F} \checkmark & \cellcolor[HTML]{04B45F} $\alpha>\frac 1 {2-\epsilon}$\\
\hline

Familiar monotonicity & \cellcolor[HTML]{04B45F} \checkmark & \cellcolor[HTML]{04B45F} \checkmark  & \cellcolor[HTML]{FF0000} \textcolor{white}{\xmark} & \cellcolor[HTML]{04B45F} \checkmark & \cellcolor[HTML]{FF0000} \textcolor{white}{\xmark}\\

\hline
\end{tabular}
\caption{Highlighted, in light color, the fulfilment of additional desirable properties, on top of those in Table~\ref{tableCoherentResults}, when assuming coherent opinions and consensus on acceptance degrees.
}
\label{tableUniformAcceptance}
\end{table}
\normalsize


\comment{
The next table \ref{tableCompleteComparisonDRF} displays the results obtained for all the functions including the generalised families. \todo{TO DISCUSS: \maite{What is the aim of this table?  I THINK THIS (and the rest of this subsection) IS REDUNDANT... .........
}}

The results are shown in different ways. The green squares (or those marked by a check mark), represent those row\maite{(?)} properties fulfilled by the \natalia{aggregation} functions. In the case of the generalised functions, if needed, it is specified the interval of values for $\alpha$ needed to satisfy the property. Next, 
\maite{light green}
squares (tick between parentheses) represent those properties which \natalia{are fulfilled under the assumption of coherent opinion profiles. Again, in} the case of the generalised families it is specified for which values of $\alpha$ the property is fulfilled. Finally, a red square (or cross mark) simply means that the column function doesn't satisfy the row property. \maite{(PENDING: table notation has been explained before) }


\begin{table}[h]
\centering
\begin{tabular}{ | l | >{\centering}m{1cm} | >{\centering}p{1cm} 
| >{\centering}p{1cm} | >{\centering}p{1cm} | c |}
\hline
\rowcolor{lightgray} Properties  &  $D$ & $I$ \comment{& $B$} & $R$ & $\alpha-I$ & $\alpha-R$\\
\hline
\textbf{Collective coherence}
& \cellcolor[HTML]{FF0000} \textcolor{white}{\xmark} & \cellcolor[HTML]{FF0000} \textcolor{white}{\xmark}  \comment{ & \cellcolor[HTML]{FF0000} \textcolor{white}{\xmark}} & \cellcolor[HTML]{04B45F} \checkmark & \cellcolor[HTML]{FF0000} \textcolor{white}{\xmark} & \cellcolor[HTML]{04B45F} $\alpha<\epsilon/2$ \\
\hline
Exhaustive domain & \cellcolor[HTML]{04B45F} \checkmark & \cellcolor[HTML]{04B45F} \checkmark  \comment{& \cellcolor[HTML]{04B45F}  \checkmark} & \cellcolor[HTML]{04B45F} \checkmark & \cellcolor[HTML]{04B45F}  \checkmark & \cellcolor[HTML]{04B45F} \checkmark\\
\hline
Coherent domain & \cellcolor[HTML]{04B45F} \checkmark & \cellcolor[HTML]{04B45F} \checkmark  \comment{& \cellcolor[HTML]{04B45F}  \checkmark} & \cellcolor[HTML]{04B45F} \checkmark & \cellcolor[HTML]{04B45F}  \checkmark & \cellcolor[HTML]{04B45F} \checkmark\\
\hline
Anonymity  & \cellcolor[HTML]{04B45F} \checkmark & \cellcolor[HTML]{04B45F} \checkmark  \comment{& \cellcolor[HTML]{04B45F}  \checkmark} & \cellcolor[HTML]{04B45F} \checkmark & \cellcolor[HTML]{04B45F}  \checkmark & \cellcolor[HTML]{04B45F} \checkmark\\
\hline
Non-dictatorship & \cellcolor[HTML]{04B45F} \checkmark & \cellcolor[HTML]{04B45F} \checkmark  \comment{& \cellcolor[HTML]{04B45F}  \checkmark} & \cellcolor[HTML]{04B45F} \checkmark & \cellcolor[HTML]{04B45F}  \checkmark & \cellcolor[HTML]{04B45F} \checkmark\\
\hline
Narrow Unanimity &  \cellcolor[HTML]{04B45F} \checkmark  &\cellcolor[HTML]{FF0000} \textcolor{white}{ \xmark} \comment{& \cellcolor[HTML]{FF0000} \textcolor{white}{ \xmark}} & \cellcolor[HTML]{FF0000} \textcolor{white}{\xmark} & \cellcolor[HTML]{FF0000} \textcolor{white}{ \xmark} & \cellcolor[HTML]{FF0000} \textcolor{white}{ \xmark}\\
\hline
Sided Unanimity &  \cellcolor[HTML]{04B45F} \checkmark  &\cellcolor[HTML]{DBFB8B} (\checkmark) \comment{& \cellcolor[HTML]{DBFB8B} (\checkmark)} & \cellcolor[HTML]{FF0000} \textcolor{white}{\xmark} & \cellcolor[HTML]{DBFB8B} (\checkmark) & \cellcolor[HTML]{04B45F} $\alpha>\frac 1 {\min_i v_i(s) +1}$\\
\hline
Weak Unanimity &  \cellcolor[HTML]{04B45F} \checkmark  &\cellcolor[HTML]{DBFB8B} (\checkmark) \comment{& \cellcolor[HTML]{DBFB8B} (\checkmark)} & \cellcolor[HTML]{FF0000} \textcolor{white}{\xmark} & \cellcolor[HTML]{DBFB8B} (\checkmark) & \cellcolor[HTML]{04B45F} $\alpha>1/2$\\
\hline
Endorsed Unanimity &  \cellcolor[HTML]{DBFB8B} (\checkmark)  &\cellcolor[HTML]{04B45F} \checkmark \comment{& \cellcolor[HTML]{DBFB8B} (\checkmark)} & \cellcolor[HTML]{FF0000} \textcolor{white}{\xmark} & \cellcolor[HTML]{DBFB8B} (\checkmark) & \cellcolor[HTML]{DBFB8B} $\alpha>\frac 1 {\min_i v_i(s) +1}$\\
\hline
Monotonicity & \cellcolor[HTML]{04B45F} \checkmark & \cellcolor[HTML]{FF0000} \textcolor{white}{\xmark} \comment{& \cellcolor[HTML]{FF0000} \textcolor{white}{\xmark}} & \cellcolor[HTML]{FF0000} \textcolor{white}{\xmark} & \cellcolor[HTML]{FF0000} \textcolor{white}{\xmark} & \cellcolor[HTML]{FF0000} \textcolor{white}{\xmark}\\
\hline

Familiar monotonicity & \cellcolor[HTML]{04B45F} \checkmark & \cellcolor[HTML]{04B45F} \checkmark  & \cellcolor[HTML]{FF0000} \textcolor{white}{\xmark} & \cellcolor[HTML]{04B45F} \checkmark & \cellcolor[HTML]{FF0000} \textcolor{white}{\xmark}\\
\hline

Independence & \cellcolor[HTML]{04B45F} \checkmark & \cellcolor[HTML]{FF0000} \textcolor{white}{\xmark} & \cellcolor[HTML]{FF0000} \textcolor{white}{\xmark} & \cellcolor[HTML]{FF0000} \textcolor{white}{\xmark} & \cellcolor[HTML]{FF0000} \textcolor{white}{\xmark}\\
\hline
\end{tabular}
\caption{Comparison of the social choice properties fulfilled by the opinion aggregation functions proposed. Color code:  green means fully satisfied, 
\maite{light green} means satisfied under 
\maite{the assumption of coherent opinion profiles} and red means not satisfied. Two last columns may specify for which values of $\alpha$ the property is fully satisfied (green) or satisfied under some assumptions (
\maite{light green}).\natalia{(Maybe we can put collective coherence as the last row, so that the results in the table are ordered as in the text)} \maite{(However it was on top in all tables...)} \todo{CAREFUL: \maite{Properties are ordered differently than previous tables}}}
\label{tableCompleteComparisonDRF}
\end{table}

\natalia{I would make this explanation into sections that are based on the properties not on the results obtained. That is: Domain Properties, Unanimity properties, Anonymity, Monotonicicty properties, Independence and Collective coherence.}

At first glance, table \ref{tableCompleteComparisonDRF} shows us that Exhaustive Domain, Coherent Domain, Anonymity and Non-dictatorship are fulfilled without question by all the opinion aggregation functions proposed. This is not strange due to the general treatment of the information given to all the functions defined. All the functions don't rely on the type of profiles as input to compute an output, which implies directly that ED will be satisfied. And, it get clear at the definitions of the functions that there is no different treatment of the opinion depending on the agents, which clearly implies that Anonymity will be satisfied.\natalia{(In addition to explaining why we have this result I would also add why this is important or useful: e.g., they accept any kind of opinion profiles and they offer guarantees about treating fairly all treatment to participants.)}\\

Going on to the messy\natalia{(this sounds a bit informal)} part of the table, we will focus on the lower half and will leave the Collective Coherence row for last as its results relate to the second table \ref{tableComparisonCC2}. First of all, remark the Independence property that clearly separates those functions using only the direct opinion, so the Direct function satisfies it, and those functions using any degree of indirect opinion, thus $I,R,B_\alpha$ and $R_\alpha$ don't satisfy it. This property clearly reflects the employment of dependencies used  by the aggregation functions. 

Now look at the Unanimity results, NU, SU, WU, and EU. Regarding 
\maite{Narrow U}nanimity, we can see that this property its completely related to the Monotonicity or Independence properties due to the fact that we are imposing a direct link between the direct opinion and the outcome of the aggregation function. So, it is expected that only the Direct function fulfils NU. 

At the another side, Sided and Weak Unanimity are clearly related one to the other, though WU is more easily fulfilled than SU. We can see that as long as we are using more indirect opinion we are losing Unanimity properties. The Indirect function I relies on the coherence of the profiles in order to fulfil these two properties. And, the Recursive function, which it's even worse due to its dependency to even more far related statements (the opinion on each statement depends on the opinion at the leafs of the graph), it completely fails at all the Unanimity properties.
Regarding the generalised families, $B_\alpha$ satisfies SU, WU, and EU assuming the conditions needed for its parent functions D and I: For SU and WU inherits the restrictions of $I$, and for EU inherits the assumptions of $D$. On the other hand, $R_\alpha$, which inherits from $R$ a complete fail on Unanimity properties has to bounce its weight towards the Direct function in order to gain some Unanimity properties. Although, for SU and EU it's conditions on $\alpha$ are impractical when facing any kind of opinion profile. Relating WU, we see the half of the family more closer to $D$ ($\alpha>\frac 1 2$) satisfies the property.\natalia{I found these explanations a bit difficult to follow. Maybe having the lemmas before in the section in which we introduce the properties may help the reader to understand the relationship between the different results. }

As to Monotonicity and Familiar Monotonicity properties, we will focus on the second one, the fulfilment of Monotonicity is clearly related to the fulfilment of Independence. 

We see that FM is fulfilled by every functions less the ones using $R$. That is because of the type of restriction that the property applies to the indirect opinion: FM needs the indirect opinion regarding the descendants to be still but it doesn't bind the opinion on further statements that relate more indirectly. So, it is predictable that $I$, which only depends on the descendants opinions, fulfil FM, and $R$, which uses more far away opinions than only the ones on the descendants, does not fulfil it.

Finally, let's focus on the row relating the Collective coherence results. Clearly, the Recursive function, which is defined as the expectation of the collective opinion obtained from aggregating satisfies CC, while the other two functions, Direct and Indirect functions, does not. Not even $I$, which uses only indirect opinion, can fulfil CC. 

The family $B_\alpha$, which depends on $D$ and $I$, does not satisfy CC, while the other family $R_\alpha$, relaying on $D$ and $R$, satisfies CC for $\alpha>\frac \epsilon 2$, where $\epsilon$ is set for the coherence of the output. In this family, as more close $\alpha$ is to $0$ more coherent will get the results of $R_\alpha$, as more close to $\frac \epsilon 2$ less similar will be the indirect and direct collective opinion. 

Changing a bit the conditions of the debate, imagining a debate where the relationships are set and valued before starting the valuation process of the statements, we can produce a $DRF$ where the relationships have a defined acceptance degree equal for every agent. Many 
processes regarding the debate construction may lead to this paradigm.
In this new context, there is a big improvement regarding the fulfilment of Collective Coherence, which is reflected in next table \ref{tableComparisonCC2}. The aggregation functions that didn't satisfy CC now, assuming constant acceptance degrees, fulfil CC when restricting the domain to only coherent profiles\footnote{That is, if the coherence of the profiles is more restrictive than the coherence wanted at the output (see proposition \ref{prop:Final}}.

\begin{table}[h]
\centering
\begin{tabular}{ | l | >{\centering}m{1cm} | >{\centering}p{1cm} 
| >{\centering}p{1cm} | >{\centering}p{1cm} | c |}
\hline
\rowcolor{lightgray} Properties $w_i(c)=\lambda_c$ &  $D$ & $I$ \comment{& $B$} & $R$ & $\alpha-I$ & $\alpha-R$\\
\hline
\textbf{Collective coherence}
& \cellcolor[HTML]{DBFB8B} (\checkmark) & \cellcolor[HTML]{DBFB8B} (\checkmark)  & \cellcolor[HTML]{04B45F} \checkmark & \cellcolor[HTML]{DBFB8B} (\checkmark) & \cellcolor[HTML]{DBFB8B} (\checkmark)\\
\hline
\end{tabular}
\caption{New results for constant acceptance degrees. Color code:  green means fully satisfied, \maite{light green }
means satisfied under some assumptions and red means not satisfied. \maite{[]}
}
\label{tableComparisonCC2}
\end{table}
}

\subsection{Summary}
\label{subsec: summary}

From the analysis for each debate scenario above, we can draw the following general observations:
\begin{itemize}
    \item The aggregation functions of the recursive family achieve collective coherence provided that they place little weight on direct opinions (or opinions are coherent and there is consensus on acceptance degrees).
    
    \item Coherence in opinion profiles favours unanimity (specifically, WU and EU), though in different ways. $I$ and $\alpha$-Balanced are fully satisfied, while the family of recursive functions leans on the direct aggregation function to fulfil some unanimity properties with restrictions. 
    As a result, the $\alpha$-Recursive family only satisfy WU and EU under strong conditions on $\alpha$, because the $R$ function never satisfies them.
    
    \item Coherent opinion profiles are not enough for D, I, and $\alpha$-Balanced functions to achieve collective coherence. They also require consensus on acceptance degrees. Recursive functions do not require such consensus (in fact, not even the non-coherent opinion profiles), and hence they are \emph{robust} to the divergence of opinions on the relations between statements in a debate.
    
    \item While the D, I, $B_\alpha$ functions manage to achieve familiar monotonicity in all scenarios, the aggregation functions in the recursive family cannot even when counting on coherent opinion profiles and consensus on acceptance degrees.
    This is because the  aggregated opinion on descendants recursively depends on descendants down the relational model. Thus, changes of opinion on ``grandchildren'' or further down sentences can cause a change of opinion independently of any change of the direct opinion.
\end{itemize}

Based on these general observations above, it is the task of the decision maker to decide the aggregation operator to choose considering: (1) the features of the debate scenario at hand; and (2) the desirable properties to guarantee. As a rule of thumb, since in actual-world debates we cannot assume individual rationality (coherence), we believe that recursive aggregation functions are the best choice to achieve collective rationality, though we would pay the price of losing some other valuable properties, in particular unanimity for values of $\alpha$ that promote a large use of the direct opinion. Otherwise, if we do not value the coherence of the collective output, or we can guarantee somehow that the opinions of  participants are coherent and the participatory system at hand does not allow for divergence on acceptances degrees, the Direct function becomes the aggregation function of choice. Within such constrained settings, the Direct aggregation function fulfils almost every property considered, enve all of them in the debate scenario in section \ref{subsec:sameAcceptanceResults}. We conclude that it seems a good trade-off to consider the Recursive family,  which can behave as similar to the Direct or to the Recursive function as wanted, and set the value of $\alpha$ depending on the features and goals in hand. 

\section{Computational Analysis}
\label{sec:CompAnalysis}

The purpose of this section is twofold. First, given the 
opinion aggregation problem 
in section \ref{subsec:opinionProblem}, we explain the complexity of the different algorithms for computing a collective decision on its target. In particular, we provide an algorithm for computing the recursive aggregation function. Thereafter, in section \ref{subsec:empirical_analysis}, we empirically analyse the use of that algorithm to solve real-world collective decision problems.

\subsection{Computing Aggregation Functions}

\begin{algorithm}
 \footnotesize
\caption{Compute recursive aggregation}\label{alg:recursive}
 \begin{algorithmic}[1]
 \Function{ComputeRecursiveAggregation}{$\langle \sen, \rel, T \rangle,(O_1,\ldots,O_n)$}
 \For{each relationship $r \in \mathcal{R}$} \Comment{Compute averaged acceptances}
 \State aggregated\_acceptance[r]$ \gets $ average\_acceptances($w_1(r),\ldots,w_n(r)$)
 \EndFor
  \State $\mathcal{H}(\langle \sen, \rel, T\rangle) \gets$ DRF\_to\_B-hypergraph($\langle \sen, \rel, T \rangle$) \Comment{Generate B-hypergraph reprentation of DRF}
 \State sorted\_sentences $ \gets$ reverse(topological\_sorting($\mathcal{H}(\langle \sen, \rel,T\rangle$)))\Comment{Compute topological sorting of the DRF B-hypergraph}
 \For{s in sorted\_sentences}\Comment{Compute aggregated valuations}
 \State valuation[s] $\gets$ 0 \Comment{To accumulate aggregated valuations over descendants}
 \State normaliser[s] $\gets$ 0 \Comment{To normalise aggregated valuations over descendants}
 \State compute relationships $R(s)$ to descendants 
 \If{$R(s) \neq \emptyset$}\Comment{if s has descendants} 
 \For{each relationship $r \in R(s)$}
 \State $s_r \gets$ descendant from relationship $r$ 
 \State valuation[s] $\gets$ valuation[s] + aggregated\_valuation[$s_r$] $\cdot$ aggregated\_acceptance[r]
  \State normaliser[s] $\gets$ normaliser[s] + aggregated\_acceptance[r]
 \EndFor
 \State valuation[s] $\gets$ valuation[s] / normaliser[s]
 \Else \Comment{s has no descendants}
 \State valuation[s] $\gets$ average\_valuations($v_1(s),\ldots,v_n(s)$)
 \EndIf
 \State aggregated\_valuation[s] $\gets$ valuation[s]
 \EndFor
 \State \textbf{return} aggregated\_valuation,aggregated\_acceptance 
 \EndFunction
 \end{algorithmic}
 \end{algorithm}
\normalsize

All of the aggregation functions proposed in Section \ref{sec:OpAggFunc} can be calculated by tractable algorithms. For example, the direct function calculates the average for all statements and relationships in a $DRF$ $\langle \sen, \rel, T \rangle$ considering the direct opinions in an opinion profile $P=(O_1,\ldots,O_n)$. Hence, its complexity is given by  $\mathbf{O}((|\mathcal{R}|+|\mathcal{S}|)\times|P|)$, where $|\mathcal{R}|, |\mathcal{S}|$ are the number of relationships and statements, respectively; and $|P|$ is the number of opinions in an opinion profile. 
Computing the indirect and balanced functions can be done by calculating the aggregated acceptance of each relationship as an average and by calculating the aggregated valuation of each statement as the average of the estimation function, which in turn is an average of the indirect opinions for that statement. Hence, their complexity is given by  $\mathbf{O}(|\mathcal{R}|\times|\mathcal{S}|\times|P|)$. The calculation of the recursive function can be done by calculating the aggregated acceptance of each relationship as an average and calculating the aggregated valuation of each statement by starting with statements with no descendants and using these results to calculate the aggregated valuation of the statements directly connected to them. For example, algorithm \ref{alg:recursive} contains the pseudocode for the recursive function. 
In particular, the algorithm starts by computing aggregated acceptances ($w_{R(P)}$) as a weighted average (lines 2-3), which has a complexity of $\mathbf{O}(|\mathcal{R}|\times |P|)$.  
Then, the algorithm computes aggregated valuations ($v_{R(P)}$) starting from the statements with no descendants. In order to do that, we first perform a topological sorting of the DRF. This can be achieved by: (1) transforming the graph associated to the DRF into an acyclic B-hypergraph\footnote{B-hypergraphs are a particular type of hypergraph with efficient algorithms for path finding\cite{gallo1993directed}. Obtaining a B-hypergraph from a DRF is straightforward. In fact, the graph associated with a DRF is a B-hypergraph with the exception of the relationships that connect the very same statements. For instance, consider relationships $r_1$ and $r_6$ in figure \ref{pic:ex_DRF} linking $\tau$ to $s_1$. Since in a hypergraph there cannot be two or more hyperedges over the very same nodes, we will only consider one single hyperedge. In our example, it suffices to consider either $r_1$ or $r_6$. We do not lose anything by doing this simplification because we want to obtain the topological sorting of a DRF, and hence considering one of the relationships connecting the very same statements is enough.}, denoted by $\mathcal{H}(\langle S,\mathcal{R}, T\rangle)$ (line 4); and (2) then performing the topological sorting over the B-hypergraph (line 5).
Starting from the sentences without descendants, the algorithm computes aggregated valuations until reaching the statements in $T$ (lines 5-18). The calculation of the topological sorting for B-hypergraphs has been studied in \cite{gallo1993directed}\footnote{In particular, \cite{gallo1993directed} provides an algorithm to calculate the inverse topological sorting in a F-hypergraph. Any given B-hypergraph can be transformed into a symmetric F-hypergraph by changing the direction of the hyperedges. Note the inverse topological sorting of the symmetric F-hypergraph coincides with the topological sorting in the original B-hypergraph. } and has  a complexity of $\mathbf{O}(|\mathcal{R}|\times|\mathcal{S}|)$. The calculation of the aggregated valuations has a complexity of $\mathbf{O}(|\mathcal{S}|\times max(|\mathcal{R}|,|P|))$. In real debates the number of opinions is usually significantly higher than the number of relationships, hence, the complexity of the calculation of aggregated valuations is given by $\mathbf{O}(|\mathcal{S}|\times|P|)$.  Thus, the total complexity of the recursive functions is $\mathbf{O}(|\mathcal{R}|\times|P|)$.

In \url{https://bitbucket.org/jariiia/workspace/projects/DRF}
we provide a publicly-available implementation of algorithm~\ref{alg:recursive} together with all the aggregation functions defined in this paper. Furthermore, we also provide guidelines to reproduce the experiments reported in section \ref{subsec:empirical_analysis} below.

\subsection{Empirical Analysis}
\label{subsec:empirical_analysis}

In what follows we empirically analyse the time required by our implementation of algorithm \ref{alg:recursive} to compute collective decisions. Our purpose is to investigate whether our  approach can handle collective decision making in practice.

Based on the analysis above, we generated debates, which required to the synthetic generation of DRFs and opinion profiles. On the one hand, we artificially generated DRFs whose sentences are the nodes of a directed acyclic B-hypergraph and whose hyperedges represent the relationships between sentences. We chose the number of sentences in our synthetic DRFs within $\{100,150,200\}$ to represent small, medium and large scenarios. 
Regarding the relationships between sentences, we considered two parameters: 
\begin{itemize}
    \item \emph{Density of relationships.} Given a relationship $r = (\Sigma,s)$, we say that the number of sentences in $\Sigma$ is the density of $r$. Since each relationship is represented as a hyperedge in a B-hypergraph, the density of relationships amounts to the size of the tails of hyperedges in the hypergraph. The (average) density of relationships in our artificial DRFs took values within $\{1,2,3\}$. We set the density value to 1 to generate DRFs for which there is a one-to-one connection between sentences, and so each DRF is in fact a DAG. As to the other two density values (2 and 3), they allow us to generate DRFs where each relationship has two sentences connected to one sentence, and three sentences connected to one sentence respectively.
    \item \emph{Density of number of relationships}, namely the average number of sentences to which each sentence is connected to through relationships. This corresponds to the average out degree of each sentence in the DRF. We chose values for this parameter within  $\{1,2.5,5\}$ to generate DRFs with low, medium, and high density of relationships.
\end{itemize}

On the other hand, to finish generating a debate, we must generate opinion profiles. We generated profiles with number of opinions within $\{10^6,3\cdot10^6,5\cdot10^6\}$, 
to represent the largest known actual-world scenarios\footnote{To the best of our knowledge, the Brexit discussion on UK  \cite{petitionsBrexit} constitutes the largest such discussion: news outlets reported when the number of supporters passed 2 million \cite{BBCnewsBrexit} and the numbers kept growing during the 6 month period that the discussion was open. By the time it closed, there were 6,103,056 participants \cite{petitionsBrexit}. Contrasting numbers of participants can be found for other popular initiatives such as an environmental proposal in Parlement et Citoyens which had 51,493 votes \cite{parlement-et-citoyensEnvironment}
and in the participatory budgeting process in Helsinki \cite{Helsinki19} with 54,246 registered people, which represents 10\% of the city voters. Note that the Parlement et Citoyens and Helsinki debates are probably more representative of real online debates than the Brexit example, where participants were, in effect, just voting on a specific proposal.}. The values for valuations and acceptances were randomly generated within $[-1,1]$ and $[0,1]$ respectively.  

All the computations of collective decisions four our artificially generated debates were performed on an Ubuntu 16.04 box with an Intel(R) Core(TM) i7-8700K CPU @ 3.70GHz, with 31GiB system memory, and 8th Gen Core Processor Host Bridge/DRAM R. 
Furthermore, our experiments only considered the recursive aggregation function (specified in algorithm \ref{alg:recursive}) because it is the more expensive out of those introduced in section \ref{sec:Aggregation}.

We performed three types of analysis:
\begin{itemize}
    \item \emph{Sensitivity to number of participants.} Figure \ref{pic:participant_sensitivity} shows that the time to compute collective decisions increases as the number of participants increases.   The figure shows the results for a medium density of number of relationships and a low density of relationships (which is the most expensive case as we discuss below).
    \item \emph{Sensitivity to density of number of relationships.} Figure \ref{pic:density_sensitivity} shows that the time to compute collective decisions increases as the density of number of relationships grows. Notice though most actual-world scenarios would lie between the low and medium cases, and hence it would take less than one second to solve even the largest debate.
    \item \emph{Sensitivity to density of relationships.} Figure  \ref{pic:relationship_size_sensitivity} shows that the time to compute collective decisions decreases as the density of relationships increases. The figure shows the results for $3 \cdot 10^6$ opinions and a medium density of number of relationships. This tells you that in fact, our algorithm needs more time when relationships between sentences are one-to-one.
\end{itemize}

Overall, notice that computing collective decisions in all the artificially generated debates took less than 1.6 seconds. Therefore, we can conclude that our opinion aggregation functions can be employed to cope with large-scale debates in real time.


\begin{figure}[H]
    \centering
    \includegraphics[width=7cm]{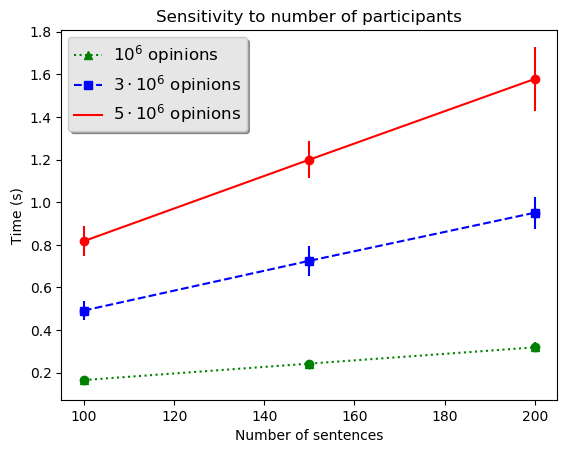}
    \caption{Sensitivity to number of participants in a DRF. The results show the computational time required by the recursive aggregation function as the number of sentences grow.}
    \label{pic:participant_sensitivity}
\end{figure}

\begin{figure}[H]
    \centering
    \includegraphics[width=7cm]{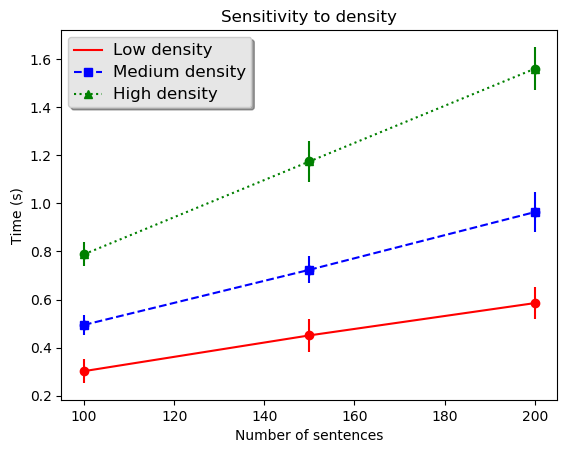}
    \caption{Sensitivity to density of relationships in a DRF. Density is considered in terms of the average out degree of sentences: low (2.0), medium (5.0), high (10.0). The results show the computational time required by the recursive aggregation function for $3 \cdot 10^6$ opinions as the number of sentences grow.}
    \label{pic:density_sensitivity}
\end{figure}

\begin{figure}[H]
    \centering
    \includegraphics[width=7cm]{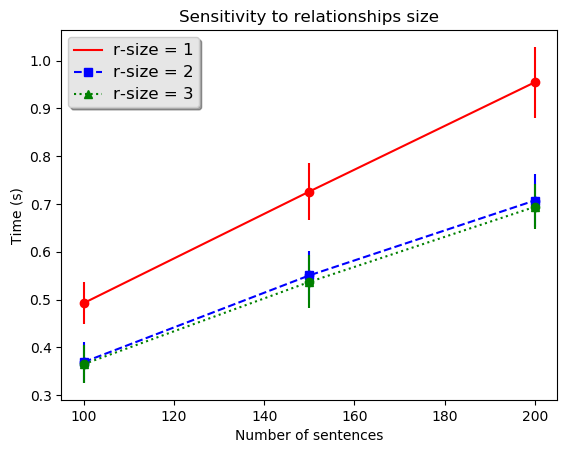}
    \caption{Sensitivity to the number of sentences in the source of a relationship (size of the tail of hypergraph edges). The results show the computational time required by the recursive aggregation function as the number of sentences grow and as the size of hyperedges (tails) grows.}
    \label{pic:relationship_size_sensitivity}
\end{figure}





\section{Related Work}
\label{sec:RelatedWork}

In this section we review the work from the literature that is closest to the work in this paper. That includes work which has focused on support for online discussions, on approaches for computing the outcome of a set of arguments, and on analysing the behaviour of discussions from the standpoint of social choice theory. We discuss work in these three areas separately below, acknowledging that 
some work could validly be considered under more than one heading.

\subsection{Tools for online discussion} 

As mentioned above, we see this work as being inspired by work on online discussion forums such as Decidim Barcelona \cite{decidim}, Better Reykjav\'{\i}k \cite{better-reykjavik}, Decide Madrid \cite{decideMAD}, and Parlement et Citoyens \cite{parlement-et-citoyens}, where participants can carry out a structured discussion around some topic, typically a policy proposal. 
These particular sites allow participants to offer arguments for and against a proposal , and vote(/support) for it in the context of a specific institution. However, some of the citizen participation tools behind some of these sites, such as Decidim \cite{decidimORG} for Barcelona or Consul \cite{consul} for Madrid, have been used in other cities and organisations. Thus, for example, City of Helsinki \cite{Helsinki19} applies an instance of Decidim and the New York City Participatory Budgeting \cite{NewYorkPartBudgeting} is based on Consul. Other participatory tools have also proliferated outside the context of public institutions. There,
we find\maite{:} 
\texttt{consider.it} \cite{consider-it}, 
Appgree \cite{appgree}, and Baoqu \cite{baoqu}  whose  main focus is  scalability -- making the systems fit for use by large numbers of participants \maite{--; } 
Loomio \cite{jackson2016open,loomio}, where participants can both comment on proposals, albeit in an unstructured way, and also vote on them; or Kialo \cite{kialo}, which organises debates in an structured way.
 What distinguishes our work from all of these approaches is that, as discussed in Section~\ref{sec:ModelView}, we provide a much more expressive framework than any existing participation system. First, the DRF makes it possible to express opinions about both statements and relationships between statements, and, similarly to \cite{Dunne2011,Leite2011,baroni2015automatic}, it allows opinions to be expressed using real values, unlike most e-participation systems, which limit to allowing users to express agreement or disagreement with arguments. Second, our framework has been conceived to support relationships involving multiple statements, hence going beyond the limiting pairwise relationships supported by current e-participation systems.

There is also a long-standing line of work which develops tools to map the structure of arguments on some topic.
This line of work draws from a range of sources, nicely summarized in \cite{shum2003roots}, and is exemplified by 
\cite{carr:visual,reed2004araucaria,suthers1995belvedere,vangelder2003enhancing}. 
The focus of this work is on drawing the relationships between arguments as a means of helping people understand the scenarios, and as \cite{benn2011argument,iandoli2014socially,rinner2006argumentation} point out, the resulting maps can be used to support group decision-making.
However, all these approaches deal only with visual \emph{representation} of the arguments --- there is no attempt to compute a summary of the discussion.
In contrast, our focus is on using the results of debate as input to a computational process, rather than providing support for the debate itself. 
In that sense our work could be viewed as a post-processing stage that could be applied in conjunction with any of the tools listed above to support structured discussion.
Indeed, the generality of the relational model means that it is well-fitted to such a task.

There are other approaches that allow for structured argument-based discussions 
and aim to compute the outcome of the discussion.
One notable body of research here is Klein's work on the Deliberatorium \cite{klein2012enabling,klein2015roadmap} which allows for the presentation of arguments and their interactions, and aggregates the opinions.
Unlike our work, there is no analysis of the properties of the aggregation with respect to social choice principles.

In contrast to the work discussed so far, where participants in the debate have the task of structuring their arguments into the correct format, \cite{cabrio2013natural} considers extracting arguments from natural language texts and constructing a formal argumentation representation from them. 
Such a representation can then be summarised as discussed in \cite{rajendran2016assessing}.
\cite{baroni2015automatic} discusses how this kind of approach can be combined with the argumentation-visualization methods described above.

\subsection{Computational argumentation}

Computational argumentation \cite{Rahwan2009} has a lengthy history within artificial intelligence, going back at least as far as \cite{fox:barber:bardhan:mim,mcguire1981opportunistic}. 

At the time of writing, work in this area is split into two broad groups.
First, historically, is work which is concerned with the internal structure of arguments --- what arguments are constructed from, and how this construction takes place.
Early examples of this work include those cited above, along with \cite{fox:krause:elvang:uai93} and \cite{krause:ci}, \cite{loui:ci}, \cite{parsons:ijcqqpr97}, and \cite{prakken:sartor:jancl}.
This line of work has reached its current endpoint with \emph{structured argumentation} systems like logic-based argumentation \cite{besnard:hunter:aij}, assumption-based argumentation \cite{dung:kowalski:toni:aij} and structured argumentation systems such as \textsc{aspic+} \cite{modgil:prakken:ai}, and DeLP \cite{garcia:simari:tplp}.
Second is the line of work on 
\emph{abstract argumentation}, begun by \cite{Dung1995}, which focuses much less on the internal structure of arguments, and instead is mainly concerned with the relationships between arguments. 
This has led to a large body of work expanding on \cite{Dung1995}, for example \cite{baroni:giacomin:semantics,modgil:caminada:proof,vreeswijk:ai}.
In \cite{Dung1995}, the focus is solely on ``attack'' relations, where
arguments are in conflict, and subsequent work has expanded the scope to consider ``support'' relations as well \cite{amgoud2008bipolarity,cayrol:gradual,Cayrol2005}.

There is another way to, broadly, classify work on argumentation into two groups.
One line of work, again exemplified by \cite{Dung1995}, focuses on argumentation as a mechanism for extracting consistent points of view from an inconsistent knowledge base. 
The other line of work deals with how arguments combine, or accrue, in favour of, or against some conclusion. 
This distinction cuts across the structured/abstract distinction with, for example \cite{baroni:giacomin:semantics} being concerned with consistency in abstract argumentation, and \cite{modgil:prakken:ai} dealing with consistency in structured argumentation.
On the other hand,  \cite{besnard:hunter:aij,prakken:icail05,verheij:dgnmr95} discuss accrual in structured argumentation, while \cite{cayrol:gradual} looks at accrual in abstract argumentation.

The work mentioned above uses argumentation as a mechanism for a single entity to come to a conclusion. 
However, as \cite{sycara:decision,walton:krabbe:book} and others have pointed out, argumentation is also a natural mechanism for multiple entities to use to reach consensus on some topic. 
As a result, argumentation has been used \cite{amgoud:maudet:parsons:icmas00,mcburney:parsons:inbook} in multiagent systems as a mechanism for \emph{rational interaction} \cite{mcburney:thesis} for a particular meaning of ``rational''. 
That is ``rational'' in the sense that each stage in the interaction is supported by well-founded reasons.
Here we build upon this prior work in rational interaction. Our approach allows participants in a discussion to put forward opinions about some topic under discussion, opinions that may be either in favour or against the topic.

Our work connects to several of these themes in argumentation.
First, since we are interested in aggregating the opinion from a number of participants, our work is clearly related to the use of argumentation in multiagent interaction. 
Second, the fact that individual steps in the participants' reasoning process are represented in our approach means that our work is connected to work on structured argumentation. (We would argue that it is more abstract, since the relationships that connect statements are not restricted to be rules as in \cite{dung:kowalski:toni:aij,garcia:simari:tplp,modgil:prakken:ai}). Third, our work connects with the idea of argumentation as a means of extracting a coherent view from a number of conflicting opinions.
The fact that this coherent view takes into account the votes of participants also gives our work a fourth connection with the argumentation literature, in its relation to work on accrual.
This voting aspect also places our work in close relation to that of social argumentation \cite{Leite2011,RagoToni2017}, and previous work on collective argumentation\footnote{Note that our work, and \cite{Awad:2015:JAM,caminada2011judgment,Ganzer2018} has little commonality with the ``collective argumentation'' of \cite{bochman2003collective}, which is concerned with argumentation in which relationships exist between sets of arguments.} \cite{Awad:2015:JAM,caminada2011judgment,Ganzer2018} which we will discuss more below.
Finally, one might view our work as being about the combination of different sets of arguments, one for each person who votes on the arguments or the relationships between them.
From that perspective, our work also connects with that of  \cite{Coste-Marquis2007}, which takes as input different sets of arguments and relationships between then, and outputs consistent sets of arguments, thus ``merging'' the input sets.
See \cite{bodanza2017collective} for a survey of work on this topic, and \cite{chen2019preservation} for an excellent short overview of developments to date.

\subsection{Social choice theory}
\label{subsec:socialChoice}

Given a set of alternatives and a set of agents who possess preference relations over the alternatives, social choice theory focuses on how to yield a collective choice that appropriately
reflects the agents’ individual preferences \cite{aziz2017computational}. With this aim, social choice theory has extensively explored many ways of aggregating agents’ individual preferences \cite{gaertner2009primer}.
Since there is a consensus in the literature on the desirable properties that a ``fair'' way of aggregating preferences should satisfy (e.g. no single agent can impose their view on the aggregate; if all agents agree, the aggregate must reflect the agreement; and so on \cite{gaertner2009primer}), aggregation functions can be characterised and compared in terms of the desirable properties they satisfy. Notice though that social choice theory counts on multiple \emph{negative} results, namely impossibility results showing the incompatibility of certain sets of desirable properties (e.g. Arrow’s famous impossibility theorem \cite{Arrow2012}).

Much of the work in social choice theory has placed little emphasis on the structure of the objects over which agents are expressing their preferences.
However, there is a growing body of research that takes the subject of the preferences to be arguments in some form or other. 
Here the foundational work was that of \cite{rahwan2010collective}, later developed in  \cite{Awad:2015:JAM}, which considered --- from one perspective ---  the very same problem that we tackle here: given a topic of discussion and a set of agents expressing their individual opinions about the statements made in the discussion, how can the agents reach a collectively rational decision?. 
The way that this is tackled in \cite{Awad:2015:JAM}, as \cite{bodanza2017collective} explains, is as a version of the 
``merging'' problem mentioned above. 
That is, \cite{Awad:2015:JAM} consider each participant to have a set of arguments, and the relationships between them, and an opinion about which arguments are labelled as being acceptable and which are not. 
The problem they then solve is how to compute a set of labels for the arguments that reflect the opinions of all the participants such that the aggregation of opinions satisfies desirable social choice properties
The same problem was, subsequent to \cite{rahwan2010collective} but before the extended treatment in \cite{Awad:2015:JAM}, considered by \cite{caminada2011judgment}, and more recently by us in \cite{Ganzer2018}.
As \cite{awad2017experimental} points out, \cite{Awad:2015:JAM} and \cite{caminada2011judgment} take different approaches, with \cite{Awad:2015:JAM} considering the opinions as votes, resolved by taking the plurality for individual arguments, while \cite{caminada2011judgment} offers a range of operators that yield a labelling which confirms to the constraints of argumentation semantics\footnote{The resulting labellings are either admissible or complete \cite{Dung1995}.} while also not disagreeing with the opinions of any participant. \cite{awad2017experimental} compares the plurality approach with one of the operators \cite{caminada2011judgment} using human participants. 

The recent work in \cite{chen2019preservation} can be viewed as an extension of the line of work in \cite{Awad:2015:JAM,caminada2011judgment}. Like  \cite{Awad:2015:JAM,caminada2011judgment}, \cite{chen2019preservation} proposes methods for aggregating a collection of individual argumentation frameworks, corresponding to participants in a debate, into a single argumentation framework that appropriately reflects the views of the group as a whole. 
\cite{chen2019preservation} investigates the properties of the aggregation rules  introduced in the paper, and, like ours, the work employs techniques from social choice theory in this analysis.
However, the aim is different.
In \cite{chen2019preservation} the aim is to analyse aggregation rules in terms of their preservation of \emph{semantic properties} of argumentation framework, while our focus in this paper is on \emph{social choice properties} of the aggregation operators.

From the perspective of argumentation, the major difference between \cite{Awad:2015:JAM,caminada2011judgment,chen2019preservation} and our work in \cite{Ganzer2018} and this paper is that we do not start from a set of opinions that are well-formed in an argumentation sense. \cite{Ganzer2018}, which does directly use the notion of arguments and labels (though the work is based on a variation on bipolar argumentation \cite{amgoud2008bipolarity}) does not, unlike \cite{Awad:2015:JAM,caminada2011judgment}, start from a ``legal'' \cite{baroni:caminada:giacomini:ker} labelling. Similarly here we do not require any consistency of  opinions. In both cases this is because we want to represent human opinions that may not be rational in an argumentation-theoretic sense.

Finally, a rather different line of work (and one that seems to have no knowledge of \cite{Awad:2015:JAM,caminada2011judgment,chen2019preservation}) is that of \cite{RagoToni2017}, which extends the QuAD framework first introduced in \cite{baroni2015automatic} to deal with multiple agent debates.
Similarly to our work in \cite{Ganzer2017,Ganzer2018}, the QuAD-V framework in \cite{RagoToni2017} allows pro and con arguments (attackers and defenders in our terminology) and agents' votes  over arguments (labels). 
However, the main focus of the work is not on computing a collective opinion, but on the agents contribute with individually rational opinions, a weaker version of our notion of coherent labelling defined in \cite{Ganzer2018}. 
This contrasts with our focus on the design of aggregation functions that satisfy desirable social choice properties, particularly collective rationality (strict rationality in Rago and Toni's terms), without requiring agents to be individually rational.

Beyond that, the major difference between the work in this paper and what has been done before is the richness of the representation. Here there are three main extensions.
First, \cite{Awad:2015:JAM,caminada2011judgment,chen2019preservation,Ganzer2018} and \cite{RagoToni2017} all deal with abstract arguments. 
Here we deal with structured objects, and, as already mentioned, these are objects that are more general than the usual object studied in structured argumentation since we place no real constraints on the kind of reasoning captured by the relationships that hold between statements.
Second, unlike previous work, we allow opinions to be expressed both about individual statements and the relations between them. 
This combines what has been studied in \cite{Awad:2015:JAM,caminada2011judgment,chen2019preservation,Ganzer2018,RagoToni2017}, where opinions are expressed about individual arguments, but not the relations between them (these are assumed to be fixed), and \cite{Leite2011}, where opinions are expressed about the relationships between arguments but not about the arguments themselves.
Third, we allow opinions to be real-valued. In this we move away from the labellings studied in \cite{Awad:2015:JAM,caminada2011judgment,Ganzer2018}, and their grounding in argumentation semantics, and towards the kind of representation allowed in \cite{Leite2011,RagoToni2017}.

Finally, following the assumptions already established in \cite{Ganzer2018}, we neither assume independence between arguments as a fundamental postulate as is the case in \cite{Awad:2015:JAM,caminada2011judgment,chen2019preservation}, nor do we require the resulting aggregation to agree with an argumentation semantics. We will deal with these differences in turn.

Dropping the independence of arguments should come as no surprise since \cite{Awad:2015:JAM} questions the necessity of assuming independence because of the dependencies between arguments that come already encoded in the form of relationships such as attack. 
Despite the importance of  independence as a fundamental property in the judgement aggregation literature because of its theoretical value in proving strategy-proofness and strategic manipulation\footnote{If the independence criterion is not satisfied, then the function aggregating judgements is not immune to strategic manipulation \cite{dietrich2007strategy}.}, we are not alone in regarding independence as too strong a property. This is because together with mild further conditions, it implies dictatorship \cite{lang2016agenda} and because it is also considered as not very plausible \cite{mongin2008factoring}. This explains why relaxing independence has been subject of much research (e.g. \cite{dietrich2010premiss,lang2016agenda,mongin2008factoring,pigozzi2008independence}). 
This paper goes beyond relaxing independence.
Rather, in this paper we introduce several opinion aggregation functions that use the participants' opinions to compute a collective opinion while considering \emph{dependencies} between statements.
This is in line with our former work in \cite{Ganzer2018}, but here we allow to express dependencies between multiple  statements per relationship, while \cite{Ganzer2018} (and all the work which starts with abstract argumentation frameworks such as \cite{Awad:2015:JAM,caminada2011judgment,chen2019preservation,RagoToni2017}) constrain relationships, and thus dependencies, to exist between pairs of arguments, hence limiting expressiveness. 

Turning to the fact that the result of our aggregations do not match an standard argumentation semantics, we substitute the notion of ``coherence of opinions'' for the form of rationality embodied by those semantics. 
We do this for reasons that we have already touched on above with respect to the input opinions --- we feel that insisting on an output that is rational in an argumentation theoretic sense is not necessarily realistic given that we start from opinions that are put forward by human participants who may not be not consistent in their views. Instead of forcing the output of aggregation to be rational in an argumentation theoretic sense, we instead compute a measure which assesses how much concordance there is between related opinions, and assess our novel aggregation functions by the degree to which they can assure that their output ensures collective coherence.
 

Finally, from a pure social choice (not a combined argumentation and social choice) perspective notice, that it is common in the literature on judgement aggregation and preference aggregation to impose properties on the objects under aggregation in order that aggregation operators can guarantee desirable properties. 
For instance, in the case of distance-based aggregators, the \emph{Kemeny} rule \cite{endriss_moulin_2016} only considers consistent judgement sets, and hence disregards those which are not, whereas premise-based aggregators \cite{endriss_moulin_2016} typically make assumptions on the agenda to guarantee consistency and completeness. 
Our work does not rely on this structuring of the target objects of the aggregation operators. 
Instead, we have introduced aggregation operators capable of guaranteeing collective coherence when opinions are unconstrained.
This is motivated by the need for disregarding rationality when humans are involved in debates, since their opinions may eventually contain  contradictions and inconsistencies.



\section{Conclusions and future work}
\label{sec:conclusions}

There is an emerging interest in the use of information and communication systems to allow citizens to  participate in the governance process.  
These e-participation systems aim to strengthen the collaboration between governments and citizens, and to empower citizens in participating in policy decision-making. 
The majority of such systems allow citizens to express their views on particular issues, but they either fail to aggregate the different views in a meaningful way, or they aggregate in a way that limits users to just voting in favour of or against an issue. 
Current research in the areas of social choice and argumentation could help to formally represent richer human debates in e-participation systems and to define computational processes for aggregating users views. 
However, approaches belonging to these areas have two main drawbacks that significantly decrease their potential as a solution for real e-participation systems. 
Firstly, they assume that participants agree on the structure of a debate, that is the pieces of information that are relevant to particular issues and  the relationships among them. 
Secondly, they assume that users opinions are rational, which is too restrictive for human participants.

To address these limitations, we have proposed a model to represent and reason about human debates. 
Our model allows users not only to introduce new pieces of information to the discussion and relate them to existing pieces but also to express disagreement with the relationships put forward by other users. 
In addition, our model does not assume that users opinions are rational and we define a weaker notion of rationality  to characterise \textit{coherent} user opinions. 
Considering the degree of coherence of individual users opinions and the level of consensus that users have about the debate structure among users, we analyse the outcomes of different opinion aggregation functions in terms of social choice properties. 
Our analysis demonstrates that the recursive aggregation method could compute a coherent collective opinion even if individual opinions are incoherent and there is a lack of consensus on the debate structure. 
As we impose more restrictions on the coherence of individual opinions and consensus among users on the debate structure, more aggregation methods also compute coherent collective opinions. 
We conclude our analysis with a computational evaluation in which we study the computational cost of aggregating collective opinions and experimentally demonstrate that collective opinions can be computed efficiently for real-sized debates. 

There are several lines of future work that we have in mind. On the theoretical side, we are investigating methods to analyse the quality of a debate represented in the relational model. 
For example, we are studying the use of systematic incoherence in participant opinions as a way to identify structural problems in a debate. 
On the practical side, we are planning to evaluate the practical impact of using the relational model in real online debates.
We have already evaluated \cite{serramia:ccia19} an earlier debate model \cite{Ganzer2018} using data from Decidim Barcelona, and intend to do the same for the relational model.

\section*{Acknowledgements}
This work was supported by the UK Engineering \& Physical Sciences Research Council (EPSRC) under grant \#EP/P010105/1 and by projects LOGISTAR (H2020-769142), AI4EU (H2020-825619), Crowd4SDG (H2020-872944), CI-SUSTAIN (PID2019-104156GB-I00), PGC2018-096212-B-C33,  2017 SGR 172, and 2017 SGR 00341. 
Jordi Ganzer was supported by a PhD studentship from King's College London.

\newpage

\appendix

\section{Formal proofs and results}\label{sec:Proofs}
\comment{}

\renewcommand\thesection{\Alph{section}}

In the following, we prove all the formal results presented in section \ref{sec:Analysis} regarding the satisfaction of social choice properties by the opinion aggregation functions introduced in section \ref{sec:OpAggFunc}.The section is divided into four parts, one per debate scenario as analysed in section \ref{sec:Analysis}. 
\begin{enumerate}
    \item Unconstrained opinion profiles;
    \item Constrained opinion profiles: assuming consensus on acceptance degrees;
    \item Constrained opinion profiles: assuming coherent profiles; and
   \item Constrained opinion profile: assuming consensus on acceptance degrees and coherent profiles.
\end{enumerate}
Furthermore, for each scenario, our results will be grouped by aggregation function according to the following order: Direct aggregation, Indirect aggregation, Recursive aggregation, Balanced family aggregation and Recursive family aggregation



\subsection{Unconstrained opinion profiles}
\label{Proofs:general}

In this section we analyse the social choice properties fulfilled by the aggregation functions introduced in Section \ref{sec:OpAggFunc}: assuming unconstrained opinions profiles (\emph{any} opinion profile is deemed to be possible input for the aggregation functions).
The results of this section are summarised in table \ref{tableGeneralResults} in section \ref{subsec:generalResults}.


\begin{prop}\label{prop:D}
The aggregation function $D$ satisfies the following properties:
\begin{enumerate}[(i)]
    \item Exhaustive Domain and Coherent Domain;
    \item Anonymity and Non-Dictatorship;
    \item Monotonicity and Familiar Monotonicity;
    \item Narrow Unanimity, Sided Unanimity and Weak Unanimity; and
    \item Independence.
\end{enumerate}
And does not satisfy:
\begin{enumerate}[(i)]
\setcounter{enumi}{5}
    \item Collective coherence; and
    \item Endorsed Unanimity.
\end{enumerate}
\end{prop}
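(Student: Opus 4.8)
The plan is to obtain the positive assertions (i)--(v) directly from the fact that $v_{D(P)}$ and $w_{D(P)}$ are coordinatewise arithmetic means, leaning on the structural results already established, and to refute (vi)--(vii) with a single small witness frame. First I would dispatch (i)--(v) as follows. \textbf{Exhaustive Domain}: an average of numbers in $[-1,1]$ (resp.\ $[0,1]$) again lies in $[-1,1]$ (resp.\ $[0,1]$), so $D$ is total on $\mathbb{O}(DRF)^n$; \textbf{Coherent Domain} then follows from Lemma~\ref{lemma:ED-CD}. \textbf{Anonymity} holds because a finite sum is insensitive to reindexing of the agents, so $D(P)=D(P')$ whenever $P'$ is a permutation of $P$; \textbf{Non-Dictatorship} follows since anonymity implies it. \textbf{Monotonicity}: $v_i(s)\le v_i'(s)$ for all $i$ gives $\tfrac1n\sum_i v_i(s)\le\tfrac1n\sum_i v_i'(s)$; \textbf{Familiar Monotonicity} then follows from Lemma~\ref{lem:monotonicity} and \textbf{Independence} from Proposition~\ref{prop:MtoI}. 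For the unanimity clauses: $v_i(s)=\lambda$ for all $i$ yields $v_{D(P)}(s)=\lambda$ (\textbf{Narrow Unanimity}); all $v_i(s)$ strictly positive (resp.\ negative) forces their mean to be strictly positive (resp.\ negative) (\textbf{Sided Unanimity}); and \textbf{Weak Unanimity} then follows from Proposition~\ref{prop:unanimity}.

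For the negative claims I would use one frame and two profiles. Take $\sen=\{\tau_1,\tau_2,s_1\}$, $T=\{\tau_1,\tau_2\}$ and relationships $(\{\tau_1\},s_1,1)$, $(\{\tau_2\},s_1,2)$; this is acyclic, targets occur only as sources, $s_1$ is reachable from a target, and $\tau_1,\tau_2$ share the descendant $s_1$, so it is a legal DRF. Fix $n$ agents with $w_i\equiv 1$. To refute \textbf{$\epsilon$-Collective Coherence}, let every agent set $v_i(\tau_1)=1$, $v_i(s_1)=-1$ (and $v_i(\tau_2)$ arbitrary): then $v_{D(P)}(\tau_1)=1$, $v_{D(P)}(s_1)=-1$, every aggregated acceptance is $1$, so the estimation of the collective opinion at $\tau_1$ equals $e(\tau_1)=v_{D(P)}(s_1)=-1$, whence $|v_{D(P)}(\tau_1)-e(\tau_1)|=2\ge\epsilon$ for every $\epsilon\in(0,1)$ and $D(P)\notin\mathbb{C}_\epsilon(DRF)$. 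To refute \textbf{Endorsed Unanimity}, let every agent set $v_i(s_1)=1$ (so $s_1$, a descendant of $\tau_1$, receives full positive support) but $v_i(\tau_1)=-1$: then $v_{D(P)}(\tau_1)=-1\not>0$, violating clause~(i) of Endorsed Unanimity.

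The only genuine care points are, first, checking that the witness frame satisfies all of Definition~\ref{def:DRF} --- in particular clause~(v), which requires a second target sharing a descendant, which is exactly why I use the two-target ``fan into $s_1$'' gadget rather than a lone target with one child --- and, second, remembering that the estimation function of the \emph{collective} opinion is computed from the aggregated acceptances $w_{D(P)}$, so I keep all individual acceptances equal and nonzero to make that estimate collapse to the plain valuation of $s_1$. I would also note, as a sanity check, that the opinions in both counterexamples are themselves incoherent: this is unavoidable, since $D$ averages direct opinions alone and never consults the relational structure, and it is precisely what motivates the indirect and recursive families analysed next.
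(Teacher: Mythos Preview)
Your argument is correct and follows essentially the same approach as the paper: the positive claims (i)--(v) are dispatched via the elementary properties of the arithmetic mean together with the structural lemmas, and the negative claims (vi)--(vii) are refuted by a small witness frame where the direct opinion on a statement and the opinion on its unique descendant have opposite extreme values. The only notable differences are that you derive Sided Unanimity directly from positivity of the mean rather than via Proposition~\ref{NarrowMono:Sided}, and that you add a second target to your witness frame so as to satisfy clause~(v) of Definition~\ref{def:DRF}; the paper's own counterexample uses a single-target two-node frame, which is admissible only if one reads clause~(v) as vacuous when $|T|=1$ (as the running example in the body of the paper also implicitly does), so your extra care there is well placed.
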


\begin{proof}(of proposition \ref{prop:D})
\begin{enumerate}[(i)]
    \item Exhaustive Domain is straightforward and Collective Domain follows directly.
    
    \item Anonymity and Non-Dictatorship. Let $P=(O_1,\dotsc,O_n)$ be an opinion profile over a $DRF$ and $\sigma$ a permutation over a set of agents $Ag=\{1,\dotsc,n\}$. We must show that $D$ maintains the same collective opinion over the permuted opinion profile $P'=(O_{\sigma(1)},\dotsc,O_{\sigma(n)})$, i.e. that $D(P)=D(P')$. This is the case because the next two equalities hold:
$$v_{D(P)}(s)=\frac1n\sum_{i=1}^n v_i(s)=\frac1n\sum_{i=1}^n v_{\sigma(i)}(s)=v_{D(P')}(a);$$
$$w_{D(P)}(r)=\frac1n\sum_{i=1}^n w_i(r)=\frac1n\sum_{i=1}^n w_{\sigma(i)}(r)=w_{D(P')}(r).$$
Therefore, Anonymity holds and Non-Dictatorship follows from it as we discussed in section \ref{subsec:socialChoiceProperties}.

    \item Monotonicity and Familiar Monotonicity. Let $s$ be a statement and $P$ and $P'$ two opinion profiles satisfying the Monotonicity assumptions in the definition of the property in section \ref{subsec:socialChoice}, i.e. $P=(O_1,\dotsc,O_n)$ and $P=(O'_1,\dotsc,O'_n)$ are such that $v_i(s)\leq v'_i(s)$ for every agent $i\in \{1,\ldots,n\}$. Then, from the definition of $D$, we obtain the aggregated valuation on $s$ is: 
$$v_{D(P)}(s)=\frac1n\sum_{i=1}^n v_i(s)\leq \frac1n\sum_{i=1}^n v'_i(s)= v_{D(P')}(s)$$
Therefore, $D$ satisfies Monotonicity. Hence, from this and lemma \ref{lem:monotonicity}, Familiar Monotonicity also holds. 

    \item Narrow Unanimity, Sided Unanimity and Weak Unanimity. Let $P=(O_1,\dotsc,O_n)$ be an opinion profile over a $DRF$ and a statement $s\in\sen$ such that $v_i(s)=\lambda$ for every agent in $Ag = \{1,\dotsc,n\}$. The aggregated opinion on $s$ is:
$$v_{D(P)}(s)=\frac1n\sum_{i=1}^n v_i(s)=\frac1n\sum_{i=1}^n \lambda=\lambda$$

   Hence, Narrow unanimity is fulfilled by $D$. As discussed in section \ref{subsec:socialChoice}, Weak Unanimity follows from Narrow Unanimity. Furthermore, according to proposition \ref{prop:NU-SU} Sided Unanimity follows from Narrow Unanimity and Monotonicity.

    
\item Independence follows directly from the fact that $D$ satisfies Monotonicity and from proposition \ref{prop:MtoI}.

\item Collective Coherence. To prove that it does not hold, it suffices to find a $DRF$ and an opinion profile for which there is no collective coherence. Thus, consider the example depicted below in figure \ref{fig:DCountCC}.

\begin{figure}[H]
\centering
\begin{tikzpicture}[node distance=0.5cm and 0.5cm,>=latex,auto, every place/.style={draw}]

     \node [place,thick] (s) {$s$};
      \node [place,thick] (a) [right= 5cm of s]{$a$};
	
    \node[above =0.4cm of a] at (a) {$v(a)=-1$};
    \node[above=0.4cm of s] at (s) {$v(s)=1$};
    \path[->] (s) edge node {$w(r)=1$} (a); 

\end{tikzpicture}
\caption{}
\label{fig:DCountCC}
\end{figure}

If we check coherence for statement $s$, we obtain that:
$$|v_{D(P)}(s)-e_{D(P)}(s)|=v(s)-v(a)=2>\epsilon.$$
for any $\epsilon\in(0,1)$, and hence collective coherence does not hold for this profile.




\item Endorsed Unanimity. Using the opinion profile depicted in figure \ref{fig:DCountCC}, we observe that even with full negative support on $s$ (i.e. $v(a)=-1$), the result of the aggregation is the opposite ($v_{D(P)}(s)=1$). Therefore, this opinion profile also serves as a counterexample to prove that $D$ does not satisfy Endorsed Unanimity.
\end{enumerate}
\end{proof}

\begin{prop}\label{prop:I}
The aggregation function $I$ satisfies the following properties:
\begin{enumerate}[(i)]
    \item Exhaustive Domain and Coherent Domain;
    \item Anonymity and Non-Dictatorship;
    \item Endorsed Unanimity; and
    \item Familiar Monotonicity.
\end{enumerate}
And does not satisfy:
\begin{enumerate}[(i)]
\setcounter{enumi}{4}
    \item Collective coherence;
    \item Narrow Unanimity, Sided Unanimity and Weak Unanimity;
    \item Monotonicity; and
    \item Independence.
\end{enumerate}
\end{prop}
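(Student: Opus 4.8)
The plan is to follow the template of the proof of Proposition~\ref{prop:D}, treating the properties in groups, and to use throughout that $I$ forms, per agent, the estimation $e_i$ of Section~\ref{sec:Expectation_Coherence} and then averages: $v_{I(P)}(s)=\frac{1}{n}\sum_{i=1}^n e_i(s)$ while $w_{I(P)}(r)=\frac{1}{n}\sum_{i=1}^n w_i(r)$. This should reproduce exactly the entries for column $I$ in Table~\ref{tableGeneralResults}.

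\emph{Positive results.} Exhaustive Domain is immediate since $v_{I(P)}$ and $w_{I(P)}$ are defined for every profile in $\mathbb{O}(DRF)^n$, and Coherent Domain then follows from Lemma~\ref{lemma:ED-CD}. Anonymity follows because both defining sums are symmetric in the agents and $e_i(s)$ only involves agent $i$'s own valuations and acceptances, so permuting the profile leaves everything unchanged; Non-Dictatorship is then a consequence of Anonymity. For Familiar Monotonicity I would fix $s$ and profiles $P,P'$ meeting the FM hypotheses ($v_i(s)\le v'_i(s)$ for all $i$, with $w_i(r)=w'_i(r)$ and $v_i(s_r)=v'_i(s_r)$ for every $r\in R^+(s)$ and descendant $s_r$), and argue agent by agent: if $\sum_{r\in R^+(s)}w_i(r)>0$ then $e_i(s)$ depends only on the unchanged quantities, so $e_i(s)=e'_i(s)$; if $\sum_{r\in R^+(s)}w_i(r)=0$ (in particular when $R^+(s)=\emptyset$) then $e_i(s)=v_i(s)\le v'_i(s)=e'_i(s)$; either way $e_i(s)\le e'_i(s)$, hence $v_{I(P)}(s)\le v_{I(P')}(s)$. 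For Endorsed Unanimity I would assume full positive support on $s$, i.e.\ $v_i(s_d)=1$ for all $i\in Ag$ and $s_d\in D(s)$, so $R^+(s)\neq\emptyset$; for every agent $i$ whose acceptances on $R^+(s)$ are not all zero, $e_i(s)$ is a weighted mean of values all equal to $1$, hence $e_i(s)=1$, and since, by the standing assumption that each relationship is accepted with non-zero degree by some agent, at least one such agent exists, it remains only to check that the summands from agents all of whose acceptances on $R^+(s)$ vanish (for whom $e_i(s)=v_i(s)$) do not spoil $v_{I(P)}(s)>0$; the full-negative-support case is symmetric.

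\emph{Negative results.} I would refute Narrow, Sided and Weak Unanimity with a single profile: let $s$ have one descendant $s_1$ through a relationship $r$, and have every agent set $v_i(s)=1$, $v_i(s_1)=-1$ and $w_i(r)=1$; then $e_i(s)=-1$ for all $i$, so $v_{I(P)}(s)=-1<0$, which violates Weak Unanimity and hence, by Proposition~\ref{prop:unanimity}, also Sided and Narrow Unanimity. For Independence I would note that the collective valuation of a statement with descendants is insensitive to its direct opinions, so two profiles agreeing on all $v_i(s)$ but differing on some $v_i(s_1)$ yield $v_{I(P)}(s)\neq v_{I(P')}(s)$. Monotonicity then fails by the contrapositive of Proposition~\ref{prop:MtoI} (or directly, by taking $P,P'$ with $v_i(s)\le v'_i(s)$ for all $i$ but with the descendant valuation strictly lowered, flipping the output downward).

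\emph{Main obstacle.} The only step not obtained by routine adaptation of Proposition~\ref{prop:D} is the counterexample to Collective Coherence, since it must exploit that ``averaging over agents'' and ``weighted averaging over relationships'' do not commute: $I$ averages the per-agent weighted means $e_i(s)$, whereas the estimation $e_{I(P)}(s)$ of the collective opinion weights the collective valuations $v_{I(P)}(s_r)$ by the collective acceptances $w_{I(P)}(r)$. I would use: $s$ with two leaf descendants $s_1,s_2$, via $r_1,r_2$; agent~$1$ with $w_1(r_1)=1$, $w_1(r_2)=0$, $v_1(s_1)=-1$, $v_1(s_2)=1$; agent~$2$ with $w_2(r_1)=0$, $w_2(r_2)=1$, $v_2(s_1)=1$, $v_2(s_2)=-1$. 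Then $e_1(s)=e_2(s)=-1$, so $v_{I(P)}(s)=-1$, whereas $w_{I(P)}(r_1)=w_{I(P)}(r_2)=\frac{1}{2}$ and $v_{I(P)}(s_1)=v_{I(P)}(s_2)=0$, so $e_{I(P)}(s)=0$; hence $|v_{I(P)}(s)-e_{I(P)}(s)|=1>\epsilon$ for every $\epsilon\in(0,1)$, so $I$ fails $\epsilon$-collective coherence for all admissible $\epsilon$. As in Proposition~\ref{prop:D}, all the fragments above are to be understood as sub-parts of a DRF satisfying Definition~\ref{def:DRF} (adding, if needed, a symmetric copy to meet the two-target condition), which affects none of the computations on the statements involved.
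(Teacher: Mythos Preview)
Your argument closely follows the paper's, with two noteworthy deviations. For Collective Coherence, the paper uses a simpler single-agent counterexample: a chain $s\to a\to b$ with $v(s)=v(a)=1$, $v(b)=-1$ and all acceptances equal to $1$; then $v_{I(P)}(s)=v(a)=1$ but $v_{I(P)}(a)=v(b)=-1$, so $e_{I(P)}(s)=-1$ and the gap is $2$. Your two-agent branching example is also correct and nicely isolates the non-commutativity of averaging over agents versus averaging over relationships, whereas the paper's example instead exploits that $I$ shifts valuations one level while the collective estimation shifts once more; the paper's version has the side benefit of serving unchanged in the single-agent setting used for Section~\ref{subsec:consensusResults}.

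For Endorsed Unanimity you are more scrupulous than the paper: you flag the case where some agent's acceptances on $R^+(s)$ all vanish (so $e_i(s)=v_i(s)$ is unconstrained) and leave it as something that ``remains only to check''. The paper does not address this case at all; it simply computes $e_i(s)=1$ as a weighted mean of ones. You should either drop the qualification and argue exactly as the paper does, or observe that the check cannot actually be completed in full generality (take one relationship $r$, three agents with $w_1(r)=1$, $w_2(r)=w_3(r)=0$, $v_2(s)=v_3(s)=-1$ and all $v_i(s_r)=1$; then $v_{I(P)}(s)=-\tfrac13<0$), so the paper is tacitly excluding this degenerate configuration.
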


\begin{proof}(of proposition \ref{prop:I})
\begin{enumerate}[(i)]

\item Exhaustive and Coherent domain are straighforward.

\item Anonymity and Non-Dictadorship. Let $P=(O_1,\dotsc,O_n)$ be an opinion profile over a $DRF$ and $\sigma$ a permutation over the agent in $Ag=\{1,\dotsc,n\}$. 
We must show that $I$ maintains the same collective opinion over the permuted opinion profile $P'=(O_{\sigma(1)},\dotsc,O_{\sigma(n)})$, i.e. that  $I(P)=I(P')$. 

For any $i\in\{1,\dotsc,n\}$ there is only one $j\in\{1,\dotsc,n\}$ such that $\sigma(j)=i$, and hence in terms of expectation functions we know that $e_i=e_{\sigma(j)}$. Using that, we can show that $I(P)=I(P')$ as follows:

$$v_{I(P)}(s)=\frac1n\sum_{i=1}^n e_i(s)=\frac1n\sum_{i=1}^n e_{\sigma(i)}(s)=v_{I(P')}(a);$$
$$w_{I(P)}(r)=\frac1n\sum_{i=1}^n w_i(r)=\frac1n\sum_{i=1}^n w_{\sigma(i)}(r)=w_{I(P')}(r)$$.

\item Endorsed Unanimity. Let $s$ be a sentence and $P=(O_1,\dotsc,O_n)$ an opinion profile satisfying that $v_i(s_r)=1$ for any agent $i$ and descendant $s_r\in D(s)$ of sentence $s$. Since the expectation over $s$ is:
$$e_i(s)=\frac 1 {\sum_{r\in R^+(s)}w_i(r)}\sum_{r\in R^+(s)}w_i(r) v_i(s_r)=\frac 1 {\sum_{r\in R^+(s)}w_i(r)}\sum_{r\in R^+(s)} w_i(r)=1,$$
then the aggregated value for $s$ is:
$$v_{I(P)}(s)=\frac 1 n\sum_{i} e_i(s)=1.$$
Analogously, if we assume that $v_i(s_r)=-1$ for any agent $i$ and descendant $s_r\in D(s)$ of sentence $s$, we would obtain that $v_{I(P)}(s)= -1$.
Since $v_{I(P)}(s) > 0$ when there is full positive support (and $v_{I(P)}(s) < 0$ for negative support), Endorsed Unanimity holds.

\item Familiar Monotonicity. It is straightforward to see that the output of the Indirect aggregation function, which uses an expectation function, depends only on the values on descendants and their relationships. So, it is clear that a different opinion profile maintaining the same values for descendants and their relationships will not change the output of the function.

\item Collective Coherence.  To prove that it does not hold, it suffices to find a $DRF$ and an opinion profile for which there is no collective coherence. Thus, consider the example depicted below in figure \ref{fig:ICountCC}.
Here $v_{I(P)}(s)=1$ and $v_{I(P)}(a)= -1 =v_{I(P)}(b)$. Now, if we check coherence for statement $s$, we obtain that $|v_{I(P)}(s)-e_{I(P)}(s)|=2>\epsilon$ for any $\epsilon\in(0,1)$, and hence collective coherence does not hold for this profile.

\begin{figure}[H]
\centering
\begin{tikzpicture}[node distance=0.5cm and 0.5cm,>=latex,auto, every place/.style={draw}]

     \node [place,thick] (s) {$s$};
      \node [place,thick] (a) [right= 2cm of s]{$a$};
      \node [place,thick] (b) [right= 2cm of a]{$b$};
	
    \node[above =0.4cm of a] at (a) {$v(a)=1$};
    \node[above=0.4cm of s] at (s) {$v(s)=1$};
   \node[above=0.4cm of b] at (b) {$v(b)=-1$};

    \path[->] (s) edge node {$w(r_1)=1$} (a); 
        \path[->] (a) edge node {$w(r_2)=1$} (b); 

\end{tikzpicture}
\caption{}
\label{fig:ICountCC}
\end{figure}


\item Narrow Unanimity, Sided Unanimity and Weak Unanimity. Next we build a $DRF$ and an opinion profile for which Weak Unanimity does not hold despite satisfying the assumptions. Consider the example in figure \ref{fig:ICountWU} with opinion profile $P=(O=(v,w))$. Although $v(s)=1$, $v_{I(P)}(s)=-1$ instead of greater than 0, and hence $I$ does not satisfy Weak unanimity. As discussed in section \ref{subsec:socialChoiceProperties}, an aggregation function satisfying either Narrow Unanimity or Sided Unanimity also satisfies Weak Unanimity. Thus, since Weak Unanimity does not hold, neither do Narrow Unanimity and Sided Unanimity.


\begin{figure}[H]
\centering
\begin{tikzpicture}[node distance=0.5cm and 0.5cm,>=latex,auto, every place/.style={draw}]

     \node [place,thick] (s) {$s$};
      \node [place,thick] (a) [right= 2cm of s]{$a$};
	
    \node[right=0.4cm of a] at (a) {$v(a)=-1$};
    \node[left=0.4cm of s] at (s) {$v(s)=1$};
    \path[->] (s) edge node {$w(r)=1$} (a); 
\end{tikzpicture}
\caption{}
\label{fig:ICountWU}
\end{figure}


\item Monotonicity. Next we build a $DRF$ and an opinion profile for which Monotonicity does not hold despite satisfying the assumptions. Consider the opinion profile in figures \ref{fig:ICountM1} and \ref{fig:ICountM2} for the same $DRF$. The two profiles $P=(O=(v,w))$ and $P'=(O'=(v',w'))$ only differ on the valuation of $a$: $v(a)=1$ and $v'(a)=-1$.

Clearly, $x=v(s)\leq v'(s)=x$, thus satisfying the assumptions of monotonicity. However, since the aggregated valuations on $s$ are: $v_{I(P)}(s)=1$ and $v_{I(P')}(s)=-1$, it does not satisfy that $v_{I(P)}\leq v_{I(P')}$, and hence Monotonicity does not hold.

\begin{figure}[H]
\centering
\begin{tikzpicture}[node distance=0.5cm and 0.5cm,>=latex,auto, every place/.style={draw}]

     \node [place,thick] (s) {$s$};
      \node [place,thick] (a) [right= 2cm of s]{$a$};
	
    \node[right=0.4cm of a] at (a) {$v(a)=1$};
    \node[left=0.4cm of s] at (s) {$v(s)=x$};
    \path[->] (s) edge node {$w(r)=1$} (a); 
\end{tikzpicture}

\caption{}
\label{fig:ICountM1}
\end{figure}

\begin{figure}[H]
\centering
\begin{tikzpicture}[node distance=0.5cm and 0.5cm,>=latex,auto, every place/.style={draw}]

     \node [place,thick] (s) {$s$};
      \node [place,thick] (a) [right= 2cm of s]{$a$};
	
    \node[right=0.4cm of a] at (a) {$v'(a)=-1$};
    \node[left=0.4cm of s] at (s) {$v'(s)=x$};
    \path[->] (s) edge node {$w'(r)=1$} (a); 
\end{tikzpicture}

\caption{}
\label{fig:ICountM2}
\end{figure}

\item Independence. Next we build a $DRF$ and an opinion profile for which Independence does not hold despite satisfying the assumptions. Consider the opinion profiles $P$ and $P'$ in figures \ref{fig:ICountI1} and  \ref{fig:ICountI2}. Although $v(s)=v'(s)$ for those profiles, the aggregated valuations on $s$ do not match: $1=v_{I(P)}(s)\neq v_{I(P')}(s)=0$. Therefore, Independence does not hold.

\begin{figure}[H]
\centering
\begin{tikzpicture}[node distance=0.5cm and 0.5cm,>=latex,auto, every place/.style={draw}]

     \node [place,thick] (s) {$s$};
      \node [place,thick] (a) [right= 2cm of s]{$a$};
	
    \node[right=0.4cm of a] at (a) {$v(a)=1$};
    \node[left=0.4cm of s] at (s) {$v(s)=1$};
    \path[->] (s) edge node {$w(r)=1$} (a); 
\end{tikzpicture}
\caption{}
\label{fig:ICountI1}
\end{figure}
\begin{figure}[H]
\centering
\begin{tikzpicture}[node distance=0.5cm and 0.5cm,>=latex,auto, every place/.style={draw}]

     \node [place,thick] (s) {$s$};
      \node [place,thick] (a) [right= 2cm of s]{$a$};
	
    \node[right=0.4cm of a] at (a) {$v'(a)=0$};
    \node[left=0.4cm of s] at (s) {$v'(s)=1$};
    \path[->] (s) edge node {$w(r)=1$} (a); 
\end{tikzpicture}
\caption{}
\label{fig:ICountI2}
\end{figure}

\end{enumerate}
\end{proof}

\begin{prop}\label{prop:R}
The aggregation function $R$ satisfies the following properties:
\begin{enumerate}[(i)]
    \item Collective Coherence;
    \item Exhaustive Domain and Coherent Domain; 
    \item Anonymity and Non-Dictatorship;
\end{enumerate}
And does not satisfy:
\begin{enumerate}[(i)]
\setcounter{enumi}{3}
    \item Narrow Unanimity, Sided Unanimity and Weak Unanimity;
    \item Endorsed Unanimity;
    \item Familiar Monotonicity, so neither Monotonicity;
    \item Independence.
\end{enumerate}
\end{prop}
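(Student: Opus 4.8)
The proposition collects several claims about the recursive aggregation function $R$: the positive parts (collective coherence, exhaustive and coherent domain, anonymity, non-dictatorship) should follow essentially from the structure of the definitions, while the negative parts (failure of the three unanimity notions, endorsed unanimity, familiar monotonicity/monotonicity, independence) will each require a small counterexample DRF and opinion profile. I would organise the proof function-by-claim exactly as in the proofs of Propositions \ref{prop:D} and \ref{prop:I}.

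\textbf{Positive claims.} Exhaustive domain is immediate since $R$ is defined for every opinion profile, and coherent domain then follows by Lemma \ref{lemma:ED-CD}. Anonymity is the key structural observation: the recursive computation of $v_{R(P)}$ proceeds bottom-up, and at each stage it uses only the aggregated acceptances $w_{R(P)}(r) = \frac1n\sum_i w_i(r)$ (which are symmetric in the agents) and the aggregated direct valuations $v_{D(P)}(s)$ at leaves (also symmetric, as already shown for $D$). Since permuting the profile changes neither $w_{R(P)}$ nor $v_{D(P)}$, an induction on the topological order of the DRF gives $v_{R(P)} = v_{R(P')}$, hence anonymity; non-dictatorship follows as noted in Section \ref{subsec:socialChoiceProperties}. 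For collective coherence: I would argue that $R$ was \emph{designed} so that $v_{R(P)}$ coincides with its own estimated value. Concretely, for any statement $s$ with $R^+(s)\neq\emptyset$, the estimation function applied to the collective opinion $R(P)$ gives $e_{R(P)}(s) = \frac{\sum_{r\in R^+(s)} w_{R(P)}(r) v_{R(P)}(s_r)}{\sum_{r\in R^+(s)} w_{R(P)}(r)}$, which is exactly $v_{R(P)}(s)$ by definition (using the convention at leaves that $e$ returns the direct value, which is again $v_{R(P)}(s)=v_{D(P)}(s)$); one must also check the degenerate case $\sum_{r\in R^+(s)} w_{R(P)}(r)=0$, but the standing assumption that every relationship is valued nonzero by at least one agent rules this out. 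Hence $|v_{R(P)}(s)-e_{R(P)}(s)| = 0 < \epsilon$ for every $s$, so $R(P)\in\mathbb{C}_\epsilon(DRF)$ for every $\epsilon\in(0,1)$.

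\textbf{Negative claims.} Each failure needs a witness. For the unanimity failures I would reuse a two-node DRF with $s$ related to a single leaf $a$ with $w(r)=1$, profile a single opinion with $v(s)=1$ and $v(a)=-1$: then $v_{R(P)}(s) = v_R(a) = v_D(a) = -1$, contradicting Weak Unanimity, and by Proposition \ref{prop:unanimity} this also kills Narrow and Sided Unanimity. For endorsed unanimity failure I would use a three-node chain $s \to a \to b$ with all acceptances $1$, and a profile where $v(b)=-1$ but $v(a)=1$; then full positive support on $s$ holds (via the descendant $a$ with $v(a)=1$), yet $v_{R(P)}(s) = v_R(a) = v_R(b) = v_D(b) = -1 < 0$, violating the endorsed-unanimity requirement --- this is the ``recursion looks past the immediate descendants'' phenomenon discussed in Section \ref{subsec:generalResults}. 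For familiar monotonicity (hence monotonicity, by the contrapositive of Lemma \ref{lem:monotonicity}) and for independence I would again use the chain $s\to a\to b$: keeping $v(s)$ and all of $R(s)$'s data (i.e. $v(a)$ and $w(r)$ for $r\in R(s)$) fixed but changing $v(b)$ changes $v_{R(P)}(a)$ and therefore $v_{R(P)}(s)$, which simultaneously breaks FM (no change in direct opinion on $s$ nor in its descendants' data, yet the collective value on $s$ moves) and independence (same direct opinions on $s$ across the two profiles, different collective value).

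\textbf{Main obstacle.} The only genuinely delicate point is the collective-coherence argument: one has to be careful that the specific estimation function fixed in Section \ref{sec:Expectation_Coherence} is the \emph{same} weighted average that appears in the definition of $R$, so that the recursive fixed point is literally the estimation of the aggregated opinion, and one has to handle the leaf case and the (ruled-out) all-zero-acceptance case cleanly. Everything else is routine once the right tiny counterexamples are written down, and most can be recycled from the proofs of Propositions \ref{prop:D} and \ref{prop:I}.
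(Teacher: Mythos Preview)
Your proposal is correct and follows essentially the same approach as the paper's own proof: the positive claims are handled by noting that $v_{R(P)}=e_{R(P)}$ and that the bottom-up computation depends only on symmetric averages, while the negative claims are witnessed by the same small chain DRFs. The only cosmetic difference is that for Weak Unanimity the paper uses a three-node chain $s\to a\to b$ (so that the very same example doubles as the Endorsed Unanimity counterexample), whereas your two-node chain $s\to a$ is an equally valid and slightly simpler witness for that particular failure.
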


\begin{proof}
\begin{enumerate}[(i)]
    \item Collective Coherence. Since $v_{R(P)}=e_{R(P)}$, the collective opinion for $R$ is exactly the result of applying the estimation function, and hence collective coherence follows because $|v_{R(P)}(s)-e_{R(P)}(s)| = 0 < \epsilon$ for any $\epsilon\in(0,1)$ and any sentence $s\in\sen$.
    
    \item Exhaustive Domain and Coherent Domain.
    Straightforward.
    
    \item Anonymity and Non-Dictatorship.
    Let $P=(O_1,\dotsc,O_n)$ be an opinion profile over a $DRF$ and $\sigma$ a permutation over the agents in $Ag=\{1,\dotsc,n\}$. 
    We must show that $R$ maintains the same collective opinion over the permuted opinion profile $P'=(O_{\sigma(1)},\dotsc,O_{\sigma(n)})$, i.e. that  $R(P)=R(P')$.

We consider first the sentences $s\in\sen$ with no descendants such that $R^+(s)=\emptyset$. Since these have no descendants, $R$ computes the collective opinion on them using $D$. As shown by proposition \ref{prop:D}, since $D$ satisfies anonymity, it will also hold for $R$ when considering sentences with no descendants. Thus, since these sentences, which are used at the beginning of the recursive process run by $R$, will not change through permutations, the collective opinion over any sentence will be the same after permutations. Therefore, anonymity holds for $R$, and from this Non-Dictatorship.

\item Weak, Narrow and Sided Unanimity. The example of $DRF$ depicted in figure \ref{fig:RCountWU} with opinion profile $P=(O=(v,w))$ will be enough to show that $R$ does not satisfy Weak unanimity. 
Although $v(s)=1$, and hence the assumptions for Weak Unanimity hold, $v_{R(P)}(s)=-1$ influenced by the valuation on $b$. Since $v_{R(P)}(s)$ is not positive, Weak unanimity does not hold for $R$, and consequently neither Side unanimity nor Narrow unanimity.

\begin{figure}[H]
\centering
\begin{tikzpicture}[node distance=0.5cm and 0.5cm,>=latex,auto, every place/.style={draw}]

     \node [place,thick] (s) {$s$};
      \node [place,thick] (a) [right= 2cm of s]{$a$};
      \node [place,thick] (b) [right= 2cm of a]{$b$};
	
    \node[above=0.4cm of a] at (a) {$v(a)=1$};
    \node[left=0.4cm of s] at (s) {$v(s)=1$};
    \node[right=0.4cm of b] at (b) {$v(b)=-1$};
    \path[->] (s) edge node {$w(r_1)=1$} (a); 
    \path[->] (a) edge node {$w(r_2)=1$} (b);    
\end{tikzpicture}
\caption{}
\label{fig:RCountWU}
\end{figure}

\item Endorsed Unanimity. Consider again the opinion profile depicted in  figure \ref{fig:RCountWU}. Clearly, since $v(a)=1$, $s$ has full positive support, but $v_{R(P)}(s)=-1$. Since $v_{R(P)}(s)$ is not positive, Endorsed Unanimity does not hold.

\item Familiar Monotonicity and Monotonicity. We build a $DRF$ and two opinion profiles for which Familiar Monotonicity does not hold despite satisfying the assumptions. Consider the two opinion profiles $P=(O=(v,w))$ and $P'=(O'=(v',w'))$ depicted in figures \ref{fig:RCountM1} and \ref{fig:RCountM2} respectively.
Considering $s$, these two profiles satisfy the assumptions of Familiar Monotonicity: $v(s)\leq v'(s)$ and the values on the indirect opinion are the same. However, $P$ and $P'$ differ on the value on $b$: $v(b)=1$ and $v'(b)=-1$. This leads to a change of value on the aggregated value on $s$. Thus, $v_{R(P)}(s)\not\leq v_{R(P')}(s)$, and $R$ fails at satisfying Familiar Monotonicity. By lemma \ref{lem:monotonicity}, Monotonicity does not hold either.

\begin{figure}[H]
\centering
\begin{tikzpicture}[node distance=0.5cm and 0.5cm,>=latex,auto, every place/.style={draw}]

     \node [place,thick] (s) {$s$};
      \node [place,thick] (a) [right= 2cm of s]{$a$};
      \node [place,thick] (b) [right= 2cm of a]{$b$};
	
	\node[above=0.4cm of a] at (a) {$v(a)=1$};
    \node[right=0.4cm of b] at (b) {$v(b)=1$};
    \node[left=0.4cm of s] at (s) {$v(s)=x$};
    \path[->] (s) edge node {$w(r_1)=1$} (a); 
    \path[->] (a) edge node {$w(r_2)=1$} (b); 
\end{tikzpicture}

\caption{}
\label{fig:RCountM1}
\end{figure}

\begin{figure}[H]
\centering
\begin{tikzpicture}[node distance=0.5cm and 0.5cm,>=latex,auto, every place/.style={draw}]

     \node [place,thick] (s) {$s$};
      \node [place,thick] (a) [right= 2cm of s]{$a$};
      \node [place,thick] (b) [right= 2cm of a]{$b$};
	
	\node[above=0.4cm of a] at (a) {$v'(a)=1$};
    \node[right=0.4cm of b] at (b) {$v'(b)=-1$};
    \node[left=0.4cm of s] at (s) {$v'(s)=x$};
    \path[->] (s) edge node {$w'(r_1)=1$} (a); 
    \path[->] (a) edge node {$w'(r_2)=1$} (b); 
\end{tikzpicture}

\caption{}
\label{fig:RCountM2}
\end{figure}


\item Independence. Straightforward from the example employed in proposition \ref{prop:I} to prove lack of independence.
    
\end{enumerate}
\end{proof}

Next, we provide the proofs for the analysis of the families of $\alpha$-balanced aggregation functions $\{B_\alpha\}_{\alpha\in(0,1)}$ and $\alpha$-recursive aggregation functions $\{R_\alpha\}_{\alpha\in(0,1)}$. Before that, we first introduce some general lemmas that will be useful to build the proofs of the propositions for both families.
To ease notation, these general lemmas that follow consider two generic aggregation functions $F$ and $G$, as well as a generic aggregation function $H=\alpha F + (1-\alpha) G$ instead of $v_H(P)=\alpha v_F(P)+(1-\alpha)v_G(P)$. Hereafter, the following lemmas establish the social properties fulfilled by $H$.



\begin{lemma}\label{prop:FGExh}
Let $F$ and $G$ be two opinion aggregation functions satisfying Exhaustive domain. For any $\alpha\in(0,1)$, aggregation function $H=\alpha F +(1-\alpha)G$ also satisfies Exhaustive domain. 
\end{lemma}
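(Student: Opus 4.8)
\textbf{Proof proposal for Lemma \ref{prop:FGExh}.}

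The plan is to unfold the definition of \emph{Exhaustive Domain} and observe that the property is purely about the set on which the aggregation function is defined, not about how it computes its output. Recall that an opinion aggregation function $F$ satisfies Exhaustive Domain when its domain is all of $\mathbb{O}(DRF)^n$. So the only thing I need to check is that the domain of $H=\alpha F+(1-\alpha)G$ is $\mathbb{O}(DRF)^n$.

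First I would note that $H$ is well-defined exactly on the intersection of the domains of $F$ and $G$: given an opinion profile $P$, the value $v_{H(P)}(s)=\alpha\, v_{F(P)}(s)+(1-\alpha)\, v_{G(P)}(s)$ requires both $v_{F(P)}$ and $v_{G(P)}$ to be defined, and the acceptance part $w_{H(P)}(r)=\frac1n\sum_{i=1}^n w_i(r)$ is defined for every profile. Since $F$ satisfies Exhaustive Domain, its domain is $\mathbb{O}(DRF)^n$; likewise for $G$. Hence $\mathrm{dom}(H)=\mathrm{dom}(F)\cap\mathrm{dom}(G)=\mathbb{O}(DRF)^n\cap\mathbb{O}(DRF)^n=\mathbb{O}(DRF)^n$, which is exactly what Exhaustive Domain demands. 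One small point worth remarking is that the output $v_{H(P)}$ is indeed a valid valuation function, i.e. its values lie in $I=[-1,1]$: this holds because $v_{F(P)}(s),v_{G(P)}(s)\in[-1,1]$ and $\alpha\in(0,1)$, so the convex combination also lies in $[-1,1]$; and $w_{H(P)}(r)\in[0,1]$ as an average of values in $[0,1]$. Thus $H(P)\in\mathbb{O}(DRF)$ for every $P$, so $H$ is genuinely an opinion aggregation function with the full domain.

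There is essentially no obstacle here — the statement is a bookkeeping observation — so I would keep the proof to two or three sentences: unfold the definition, use that $H$'s domain is the intersection of those of $F$ and $G$, note both equal $\mathbb{O}(DRF)^n$, and (optionally) remark that the convex combination keeps outputs in the required intervals so that $H$ is well-formed. The only thing to be a little careful about is not to conflate this with $\epsilon$-Coherent Domain; but since Exhaustive Domain trivially implies $\epsilon$-Coherent Domain by Lemma \ref{lemma:ED-CD}, that is a free corollary if it is needed later. I expect the author's proof to simply say "straightforward, as the domain of $H$ is the intersection of the domains of $F$ and $G$."
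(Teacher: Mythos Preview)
Your proposal is correct and takes essentially the same approach as the paper, which simply states that the result is ``straightforward from the fact that both $F$ and $G$ satisfy Exhaustive domain.'' Your version is just a more detailed unpacking of that one-liner, including the (harmless) extra remark that the convex combination keeps outputs in $[-1,1]$.
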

\begin{proof}
Straightforward from the fact that both $F$ and $G$ satisfy Exhaustive domain.
\end{proof}

\begin{lemma}\label{prop:FGAn}
Let $F$ and $G$ two opinion aggregation functions satisfying Anonymity over domain $\mathcal{D}$ . For any $\alpha\in(0,1)$, aggregation function $H=\alpha F +(1-\alpha)G$ also satisfies Anonymity over domain $\mathcal{D}$.  
\end{lemma}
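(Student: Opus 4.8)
The plan is to prove Lemma~\ref{prop:FGAn} directly from the definitions, mirroring the pattern used in the proofs of Propositions~\ref{prop:D} and~\ref{prop:I} where anonymity was established for the individual functions $D$ and $I$. The key observation is that if both $F$ and $G$ are invariant under permutations of the input profile, then so is any fixed linear combination of them, since the permutation acts on the profile and not on the coefficients $\alpha$ and $1-\alpha$.

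First I would fix an arbitrary opinion profile $P=(O_1,\dotsc,O_n)\in\mathcal{D}$, an arbitrary permutation $\sigma$ over $Ag=\{1,\dotsc,n\}$, and set $P'=(O_{\sigma(1)},\dotsc,O_{\sigma(n)})$. (Here I would note, as the other proofs implicitly do, that $\mathcal{D}$ must be closed under such permutations for anonymity even to be stated; this is a standing assumption.) Since $F$ and $G$ satisfy anonymity over $\mathcal{D}$, we have $F(P)=F(P')$ and $G(P)=G(P')$, i.e. $v_{F(P)}(s)=v_{F(P')}(s)$ and $v_{G(P)}(s)=v_{G(P')}(s)$ for every statement $s\in\sen$, and likewise $w_{F(P)}(r)=w_{F(P')}(r)$, $w_{G(P)}(r)=w_{G(P')}(r)$ for every $r\in\rel$. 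Then, for any $s\in\sen$,
\[
v_{H(P)}(s)=\alpha\, v_{F(P)}(s)+(1-\alpha)\,v_{G(P)}(s)=\alpha\, v_{F(P')}(s)+(1-\alpha)\,v_{G(P')}(s)=v_{H(P')}(s),
\]
and since the acceptance part of all the aggregation functions in Section~\ref{sec:OpAggFunc} is the common average $\frac1n\sum_{i=1}^n w_i(r)$, which is manifestly permutation-invariant, we also get $w_{H(P)}(r)=w_{H(P')}(r)$ for every $r\in\rel$. Hence $H(P)=H(P')$, so $H$ satisfies anonymity over $\mathcal{D}$.

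I do not anticipate a genuine obstacle here; the result is essentially bookkeeping. The only subtlety worth stating explicitly is the closure of $\mathcal{D}$ under permutations (so that $P'\in\mathcal{D}$ whenever $P\in\mathcal{D}$), which is needed for the statement to make sense and is inherited from the hypothesis that $F$ and $G$ satisfy anonymity over $\mathcal{D}$. A one-line proof in the style of Lemma~\ref{prop:FGExh} suffices: ``Straightforward, since $v_{H(P)}=\alpha v_{F(P)}+(1-\alpha)v_{G(P)}$ and both $v_{F}$ and $v_{G}$ (and the common acceptance average) are invariant under permutations of the profile.''
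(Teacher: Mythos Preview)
Your proof is correct and takes essentially the same approach as the paper, which simply observes that if $F(P)=F(P')$ and $G(P)=G(P')$ for a permuted profile $P'$, then $H(P)=H(P')$. Your version is more detailed (separating valuations and acceptances, and noting the closure of $\mathcal{D}$ under permutations), but the underlying argument is identical.
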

\begin{proof}
For any given opinion profile $P$ and its permuted profile $P'$, if $F(P)=F(P')$ and $G(P)=G(P')$, then it follows that $H(P)=H(P')$.
\end{proof}

\begin{lemma}\label{prop:FGFamMon}
Let $F$ and $G$ two opinion aggregation functions satisfying Familiar Monotonicity over domain $\mathcal{D}$. For any $\alpha\in(0,1)$, aggregation function $H=\alpha F +(1-\alpha)G$ also satisfies Familiar Monotonicity on domain $\mathcal{D}$.  
\end{lemma}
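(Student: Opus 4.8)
The plan is to show that the Familiar Monotonicity property transfers through convex combinations, exactly as was done for Exhaustive Domain (Lemma~\ref{prop:FGExh}) and Anonymity (Lemma~\ref{prop:FGAn}) just above. Familiar Monotonicity is a statement of the form ``under certain hypotheses on the profiles $P,P'$, the aggregated valuation does not decrease''. Since $H = \alpha F + (1-\alpha) G$ computes its valuation as the pointwise convex combination of the valuations produced by $F$ and $G$, and a convex combination of two non-decreasing quantities is non-decreasing, the result follows almost immediately.

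First I would fix a statement $s\in S$ and two opinion profiles $P=(O_1,\dotsc,O_n)$ and $P'=(O'_1,\dotsc,O'_n)$ in $\mathcal{D}$ satisfying the hypotheses of Familiar Monotonicity for $s$: for every agent $i$, $v_i(s)\le v'_i(s)$, and moreover $w_i(r)=w'_i(r)$ and $v_i(s_r)=v'_i(s_r)$ for every relationship $r\in R(s)$ with associated descendant $s_r\in D(s)$. These hypotheses are the same ones that appear in the definition of Familiar Monotonicity for both $F$ and $G$, so since $F$ and $G$ each satisfy Familiar Monotonicity on $\mathcal{D}$, we obtain $v_{F(P)}(s)\le v_{F(P')}(s)$ and $v_{G(P)}(s)\le v_{G(P')}(s)$.

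Then I would combine these two inequalities: since $\alpha\in(0,1)$, both $\alpha$ and $1-\alpha$ are positive, so
\[
v_{H(P)}(s)=\alpha\, v_{F(P)}(s)+(1-\alpha)\, v_{G(P)}(s)\le \alpha\, v_{F(P')}(s)+(1-\alpha)\, v_{G(P')}(s)=v_{H(P')}(s),
\]
which is exactly Familiar Monotonicity for $H$ at $s$. As $s$ was arbitrary, $H$ satisfies Familiar Monotonicity on $\mathcal{D}$.

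There is really no main obstacle here — the proof is a one-line monotonicity-of-convex-combinations argument, and the only thing to be careful about is that the hypotheses of Familiar Monotonicity for $H$ are literally the hypotheses needed to invoke the property for $F$ and for $G$ (in particular that the descendant valuations and acceptance degrees are held fixed across $P$ and $P'$), so no extra assumptions sneak in. The same template also explains why the companion families $\{B_\alpha\}$ and $\{R_\alpha\}$ will inherit Familiar Monotonicity precisely when both of their building-block functions do — which is why this lemma is stated generically in terms of $F$, $G$ and $H=\alpha F+(1-\alpha)G$.
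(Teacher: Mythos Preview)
Your proof is correct and follows essentially the same approach as the paper: fix $s$ and profiles $P,P'$ satisfying the Familiar Monotonicity hypotheses, invoke the property for $F$ and $G$ separately, and combine the two inequalities via the convex combination to obtain $v_{H(P)}(s)\le v_{H(P')}(s)$. The paper's proof is slightly terser (it just says the conclusion ``follows directly''), but your explicit chain of inequalities makes the same point.
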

\begin{proof}
Let $P=(O_1=(v_1,w_1),\dotsc,O_n=(v_n,w_n))$ and $P'=(O'_1=(v'_1,w'_1),\break\dotsc,O'_n=(v'_n,w'_n))$ be a profile satisfying the assumptions of familiar montonicity for a statement $s$, i.e. $v_i(s)\leq v'_i(s)$ for any $i$ and for any $r\in R^+(s)$ then $w_i(r)=w'_i(r)$ and $v_i(s_r)=v'_i(s_r)$. Since $F$ and $G$ satisfy familiar monotonicity, then $v_{F(P)}(s)\leq v_{F(P')}(s)$ and $v_{G(P)}(s)\leq v_{G(P')}(s)$. Thus, since $H=\alpha F +(1-\alpha)G$, it follows directly that $v_{H(P)}(s)\leq v_{H(P')}(s)$, so familiar montonicity holds for $H$. \end{proof}

\begin{lemma}\label{prop:FGSided}
Let F and G two opinion aggregation functions satisfying Sided unanimity on domain $\mathcal{D}$. For any $\alpha\in(0,1)$, aggregation function $H=\alpha F+ (1-\alpha)G$ also satisfies Sided unanimity on $\mathcal{D}$.
\end{lemma}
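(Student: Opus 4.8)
The plan is to mimic closely the proof of Lemma~\ref{prop:FGFamMon}, exploiting the fact that $H$ is a convex combination of $F$ and $G$ with strictly positive coefficients. The key observation is that a sum of two strictly positive reals, each multiplied by a strictly positive weight, is again strictly positive; and symmetrically for strictly negative values.

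Concretely, I would fix an opinion profile $P=(O_1,\dotsc,O_n)\in\mathcal{D}$ and a statement $s\in\sen$, and split into the two cases of the definition of Sided Unanimity. First, assume $v_i(s)>0$ for every $i\in Ag$. Since $F$ and $G$ both satisfy Sided Unanimity on $\mathcal{D}$, we get $v_{F(P)}(s)>0$ and $v_{G(P)}(s)>0$. Because $\alpha\in(0,1)$, both $\alpha>0$ and $1-\alpha>0$, so
\[
v_{H(P)}(s)=\alpha\, v_{F(P)}(s)+(1-\alpha)\, v_{G(P)}(s)>0 .
\]
Second, assume $v_i(s)<0$ for every $i\in Ag$; then by Sided Unanimity of $F$ and $G$ we have $v_{F(P)}(s)<0$ and $v_{G(P)}(s)<0$, and the same convex-combination argument gives $v_{H(P)}(s)<0$. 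Since $s$ and $P$ were arbitrary, $H$ satisfies Sided Unanimity on $\mathcal{D}$.

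There is essentially no obstacle here: the only point requiring (minor) care is to invoke $\alpha\in(0,1)$ so that \emph{both} weights are strictly positive — at the endpoints $\alpha=0$ or $\alpha=1$ the statement would degenerate to $G$ or $F$ alone, which is why the lemma is stated for the open interval. I would also note in passing that the domain condition is inherited because $H$ is defined on the same $\mathcal{D}$ on which $F$ and $G$ are defined, so the quantification ``for all $P\in\mathcal{D}$'' makes sense without further comment.
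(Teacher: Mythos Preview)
Your proof is correct and follows essentially the same approach as the paper: apply Sided Unanimity to $F$ and $G$ separately, then use that a convex combination with strictly positive weights preserves strict positivity (and strict negativity). Your write-up is slightly more explicit about the role of $\alpha\in(0,1)$ and the negative case, but the argument is the same.
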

\begin{proof}
Since Sided unanimity holds for $F$ and $G$, we know that given any opinion profile $P$ of agents $\{1,\dotsc,n\}$, i.e. if for any $i \in \{1,\dotsc,n\}$  $v_i(s)>0$ then $v_F(s)>0$ and $v_G(s)>0$, and since $v_H=\alpha v_F+ (1-\alpha)v_G$, it also follows that $v_H(s)>0$. Likewise for the negative case, so Sided Unanimity holds for $H$.
\end{proof}

\begin{lemma}\label{prop:FGWeak}
Let F and G two opinion aggregation functions satisfying Weak unanimity on the domain $\mathcal{D}$. For any $\alpha\in(0,1)$, aggregation function $H=\alpha F+ (1-\alpha)G$ also satisfies Weak unanimity over domain $\mathcal{D}$.
\end{lemma}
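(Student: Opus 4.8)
The plan is to argue exactly as in the proof of Lemma~\ref{prop:FGSided} for Sided unanimity, since Weak unanimity is again a pointwise sign condition on the collective valuation, and sign conditions of this kind are preserved by strictly positive linear combinations. So the whole proof is a two-line computation; the only thing worth being careful about is that $\alpha$ lies \emph{strictly} inside $(0,1)$.

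First I would fix an opinion profile $P=(O_1,\dotsc,O_n)\in\mathcal{D}$ and a statement $s\in S$ satisfying the antecedent of Weak unanimity, say $v_i(s)=1$ for every agent $i\in Ag$ (the case $v_i(s)=-1$ for all $i$ being entirely symmetric). Since $F$ and $G$ both satisfy Weak unanimity on $\mathcal{D}$ and $P\in\mathcal{D}$, this gives $v_{F(P)}(s)>0$ and $v_{G(P)}(s)>0$. Because $\alpha\in(0,1)$, both weights $\alpha$ and $1-\alpha$ are strictly positive, so
\[
v_{H(P)}(s)=\alpha\, v_{F(P)}(s)+(1-\alpha)\, v_{G(P)}(s)>0,
\]
being a sum of two strictly positive terms. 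The negative case yields $v_{H(P)}(s)<0$ by the same computation with the inequalities reversed. Hence $H=\alpha F+(1-\alpha)G$ satisfies Weak unanimity over $\mathcal{D}$, which is what we had to show.

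There is essentially no obstacle here. The only subtlety to flag is the restriction $\alpha\in(0,1)$: if $\alpha$ were allowed to be $0$ or $1$, one of the two contributions would vanish and $H$ would degenerate to $G$ or $F$, so the claim would be trivially true but vacuous; keeping $\alpha$ in the open interval is precisely what guarantees that a strictly positive (respectively strictly negative) value of \emph{each} of $v_{F(P)}(s)$ and $v_{G(P)}(s)$ is inherited by $v_{H(P)}(s)$.
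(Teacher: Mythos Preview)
Your proof is correct and follows essentially the same approach as the paper's own proof: assume the antecedent of Weak unanimity, use that $F$ and $G$ satisfy it to get $v_{F(P)}(s)>0$ and $v_{G(P)}(s)>0$, and conclude by linearity that the strictly positive combination $v_{H(P)}(s)=\alpha v_{F(P)}(s)+(1-\alpha)v_{G(P)}(s)>0$. Your version is slightly more explicit about the role of $\alpha\in(0,1)$, but the argument is otherwise identical.
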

\begin{proof}
Since Sided unanimity holds for $F$ and $G$, we know that given any opinion profile $P$ of agents $\{1,\dotsc,n\}$, for any $i \in \{1,\dotsc,n\}$, if $v_i(s) = 1$, then $v_F(s)>0$ and $v_G(s)>0$. Since $v_H=\alpha v_F+ (1-\alpha)v_G$, it also follows that $v_H(s)>0$, and hence Weak Unanimity holds for $H$. Analogously for the negative case.
\end{proof}

\begin{lemma}\label{prop:FGEnd}
Let F and G two opinion aggregation functions satisfying Endorsed unanimity on domain $\mathcal{D}$. For any $\alpha\in(0,1)$, aggregation function $H=\alpha F+ (1-\alpha)G$ also satisfies Endorsed unanimity on $\mathcal{D}$.
\end{lemma}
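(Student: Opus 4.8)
The plan is to mimic exactly the structure of the preceding lemmas (\ref{prop:FGSided}, \ref{prop:FGWeak}), since Endorsed Unanimity is, like Sided and Weak Unanimity, a "sign-preservation" property whose conclusion is a strict inequality $v_{F(P)}(s)>0$ (or $<0$). The key observation is that a strictly positive convex-type combination of strictly positive quantities is strictly positive, and $\alpha\in(0,1)$ guarantees that both coefficients $\alpha$ and $1-\alpha$ are strictly positive.

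Concretely, first I would fix an opinion profile $P=(O_1,\dotsc,O_n)\in\mathcal{D}$ and a statement $s\in\sen$, and assume the antecedent of full positive support, namely that there exist $i\in Ag$ and $s_d\in D(s)$ with $v_i(s_d)=1$. Since $F$ satisfies Endorsed Unanimity on $\mathcal{D}$, we get $v_{F(P)}(s)>0$; likewise $v_{G(P)}(s)>0$ from Endorsed Unanimity of $G$. Then, because $v_{H(P)}(s)=\alpha\, v_{F(P)}(s)+(1-\alpha)\, v_{G(P)}(s)$ with $\alpha\in(0,1)$, both summands are strictly positive, so $v_{H(P)}(s)>0$. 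Second, I would repeat the argument verbatim for the full negative support case ($v_i(s_d)=-1$ for some $i$ and $s_d\in D(s)$), obtaining $v_{F(P)}(s)<0$ and $v_{G(P)}(s)<0$, hence $v_{H(P)}(s)<0$. This establishes both clauses (i) and (ii) of the definition of Endorsed Unanimity for $H$.

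I do not anticipate any real obstacle here — the only point that needs a little care is to state the antecedent of Endorsed Unanimity correctly (it is an \emph{existential} over agents and descendants, not a universal), and to note that strictness of the inequality is preserved precisely because $\alpha$ is taken in the open interval $(0,1)$, so neither coefficient vanishes; at the endpoints $\alpha=0$ or $\alpha=1$ the claim reduces to $G$ or $F$ themselves and there is nothing to prove. The proof will therefore be a two-line argument of the same flavour as Lemma~\ref{prop:FGWeak}, and can simply be written as: "Since $F$ and $G$ satisfy Endorsed unanimity, for any $P\in\mathcal{D}$ and $s\in\sen$ with full positive (resp.\ negative) support we have $v_{F(P)}(s)>0$ and $v_{G(P)}(s)>0$ (resp.\ $<0$); as $v_{H(P)}(s)=\alpha v_{F(P)}(s)+(1-\alpha)v_{G(P)}(s)$ with $\alpha\in(0,1)$, it follows that $v_{H(P)}(s)>0$ (resp.\ $<0$), so Endorsed Unanimity holds for $H$."
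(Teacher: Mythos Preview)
Your argument is correct and is exactly the paper's approach: assume full positive (resp.\ negative) support on $s$, invoke Endorsed Unanimity of $F$ and $G$ to obtain $v_{F(P)}(s)>0$ and $v_{G(P)}(s)>0$ (resp.\ $<0$), and conclude the same for the convex combination $v_{H(P)}(s)=\alpha v_{F(P)}(s)+(1-\alpha)v_{G(P)}(s)$ since $\alpha\in(0,1)$.

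One correction, though: you explicitly flag the antecedent as existential, and that is a misreading. ``Full positive support'' in the paper's definition means $v_i(s_d)=1$ for \emph{all} agents $i\in Ag$ and \emph{all} descendants $s_d\in D(s)$ --- the phrase ``for any'' there is the universal ``for every'', as is clear from the proofs elsewhere (e.g.\ the proof that $I$ satisfies EU, or the EU clause in Proposition~\ref{prop:Balpha}). This misreading does not invalidate your proof, since the argument simply passes the hypothesis through to $F$ and $G$ regardless of its logical form; but you should correct the description of the antecedent.
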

\begin{proof}
Since Endorsed unanimity holds for $F$ and $G$, we know that given any opinion profile $P$ of agents $\{1,\dotsc,n\}$, for any $i \in \{1,\dotsc,n\}$ and descendant $s_r\in D(s)$ of sentence $s$, if  $v_i(s_r) > 1$, then $v_F(s)>0$ and $v_G(s)>0$. Since $v_H=\alpha v_F+ (1-\alpha)v_G$, it also follows that $v_H(s)>0$, and hence Endorsed Unanimity holds for $H$. Analogously for the full negative support case.
\end{proof}

We are now ready to prove the results for $\alpha$-balanced aggregation functions in $\{B_\alpha\}_{\alpha\in(0,1)}$.

\begin{prop}\label{prop:Balpha}
The family of $\alpha$-balanced aggregation functions $\{B_\alpha\}_{\alpha\in(0,1)}$ satisfies the following properties:
\begin{enumerate}[(i)]
    \item Exhaustive Domain and Coherent Domain;
    \item Anonymity and Non-Dictatorship;
    \item Weak Unanimity for $\alpha> \frac 1 2$;
    \item Endorsed Unanimity for $\alpha< \frac 1 2$;
    \item Familiar Monotonicity;
\end{enumerate}
and does not satisfy:
\begin{enumerate}[(i)]
\setcounter{enumi}{5}
    \item Collective coherence;
    \item Narrow Unanimity, nor Sided Unanimity;
    \item Monotonicity;
    \item Independence.
\end{enumerate}
\end{prop}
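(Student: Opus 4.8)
The plan is to exploit that $B_\alpha=\alpha D+(1-\alpha)I$ and transfer properties from $D$ (Proposition~\ref{prop:D}) and $I$ (Proposition~\ref{prop:I}) using the generic convex-combination lemmas. The properties shared by both parent functions come essentially for free. Since $D$ and $I$ satisfy Exhaustive Domain, Lemma~\ref{prop:FGExh} gives Exhaustive Domain for $B_\alpha$, and Coherent Domain then follows by Lemma~\ref{lemma:ED-CD}. Anonymity transfers through Lemma~\ref{prop:FGAn}, and Non-Dictatorship follows from Anonymity as observed in Section~\ref{subsec:socialChoiceProperties}. Finally, since both $D$ and $I$ satisfy Familiar Monotonicity (Propositions~\ref{prop:D} and~\ref{prop:I}), Lemma~\ref{prop:FGFamMon} yields Familiar Monotonicity for $B_\alpha$. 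This disposes of items (i), (ii) and (v).

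For the two threshold unanimity claims I would argue directly, since in each case only one of the parent functions has the relevant property. Suppose $v_i(s)=1$ for every agent $i$; then $v_{D(P)}(s)=1$ by Proposition~\ref{prop:D}, while $v_{I(P)}(s)\in I=[-1,1]$, so $v_{B_\alpha(P)}(s)=\alpha v_{D(P)}(s)+(1-\alpha)v_{I(P)}(s)\ge\alpha-(1-\alpha)=2\alpha-1$, which is positive precisely when $\alpha>1/2$; the case $v_i(s)=-1$ is symmetric, giving Weak Unanimity for $\alpha>1/2$. For Endorsed Unanimity, suppose $s$ has full positive support; the computation in the proof of Proposition~\ref{prop:I} shows $v_{I(P)}(s)=1$, while $v_{D(P)}(s)\ge-1$, hence $v_{B_\alpha(P)}(s)\ge-\alpha+(1-\alpha)=1-2\alpha>0$ exactly when $\alpha<1/2$, and symmetrically for full negative support. (The profiles used below also show these thresholds are tight, matching Table~\ref{tableGeneralResults}.) This settles items (iii) and (iv).

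The remaining negative claims are obtained by recycling counterexamples from Propositions~\ref{prop:D} and~\ref{prop:I} and checking the convex combination still misbehaves for every $\alpha\in(0,1)$. Taking the two profiles on the chain $s\to a$ (with $w(r)=1$) that differ only in $v(a)$, namely $v(a)=1$ versus $v'(a)=-1$ while $v(s)=v'(s)=x$, one gets $v_{B_\alpha(P)}(s)=\alpha x+(1-\alpha)$ and $v_{B_\alpha(P')}(s)=\alpha x-(1-\alpha)$, so $v_{B_\alpha(P)}(s)-v_{B_\alpha(P')}(s)=2(1-\alpha)>0$; since the hypotheses of both Monotonicity and Independence are met, both properties fail. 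Narrow Unanimity fails on the one-agent profile with $v(s)=1$, $v(a)=-1$, $w(r)=1$, whose output at $s$ is $2\alpha-1\ne1$. Sided Unanimity needs slightly more care: on the same graph take $v_i(s)=\delta$ for all $i$ with $\delta=\min\{1,(1-\alpha)/\alpha\}>0$ and $v_i(a)=-1$, so that $v_{I(P)}(s)=-1$ and $v_{B_\alpha(P)}(s)=\alpha\delta-(1-\alpha)\le0$ even though every $v_i(s)>0$; hence Sided Unanimity is violated at $s$ for every $\alpha\in(0,1)$.

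The main obstacle is Collective Coherence (vi): a convex combination of two incoherent opinions might in principle be coherent, so one needs a \emph{single} $DRF$ and profile defeating $B_\alpha$ for all $\alpha\in(0,1)$ at once. I would use the three-statement chain $s\to a\to b$ with a single agent valuing $v(s)=v(a)=1$, $v(b)=-1$, and all acceptances equal to $1$. Then $v_{D(P)}=(1,1,-1)$, and applying the estimation function of Section~\ref{sec:Expectation_Coherence} to the single opinion gives $v_{I(P)}=(1,-1,-1)$, so $v_{B_\alpha(P)}(s)=1$, $v_{B_\alpha(P)}(a)=2\alpha-1$, $v_{B_\alpha(P)}(b)=-1$. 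Evaluating the collective estimation function on this collective opinion yields $|v_{B_\alpha(P)}(a)-e_{B_\alpha(P)}(a)|=|(2\alpha-1)-(-1)|=2\alpha$ and $|v_{B_\alpha(P)}(s)-e_{B_\alpha(P)}(s)|=|1-(2\alpha-1)|=2(1-\alpha)$. Since $\epsilon\in(0,1)$, the inequalities $2\alpha<\epsilon$ and $2(1-\alpha)<\epsilon$ cannot both hold (adding them would force $\epsilon>1$), so for every $\alpha\in(0,1)$ at least one of $a$ or $s$ witnesses $|v-e|\ge\epsilon$; hence $B_\alpha$ is never $\epsilon$-collectively coherent, completing the proof.
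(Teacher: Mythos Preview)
Your proof is correct and follows essentially the same strategy as the paper: lift the shared properties of $D$ and $I$ through the convex-combination lemmas, argue the two threshold unanimities by bounding the ``bad'' parent at $\pm 1$, and exhibit explicit counterprofiles for the negative claims. The only notable difference is your counterexample for Collective Coherence: the paper takes $v(s)=1$, $v(a)=0$, $v(b)=-1$ on the chain $s\to a\to b$, which yields $|v_{B_\alpha(P)}(s)-e_{B_\alpha(P)}(s)|=|\alpha-(-(1-\alpha))|=1$ for \emph{every} $\alpha$, so a single statement already witnesses incoherence. Your choice $v(a)=1$ forces you to split into two witnesses ($s$ and $a$) and invoke the pigeonhole observation $2\alpha+2(1-\alpha)=2$; this is perfectly valid but slightly less direct. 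Otherwise your counterexamples for Sided Unanimity, Monotonicity and Independence are minor variants of the paper's and work equally well.
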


\begin{proof}
\begin{enumerate}[(i)]
    \item Exhaustive Domain and Coherent Domain follow from propositions \ref{prop:D} and \ref{prop:I}, and from lemma \ref{prop:FGExh}.
    
    \item Anonymity and Non-Dictatorship follow from propositions \ref{prop:D} and \ref{prop:I}, and from lemma \ref{prop:FGAn}.

    \item Weak Unanimity. Let $P=(O_1,\dotsc,O_n)$ be an opinion profile over a $DRF$ for the agents in $Ag=\{1,\dotsc,n\}$, and $s\in\sen$ a sentence such that $v_i(s)=1$ for any $i$. By proposition \ref{prop:D}, we know that Weak unanimity holds for the Direct aggregation function, and hence  
    $v_{D(P)}(s)=\frac 1 n \sum_{i\in Ag} v_i(s)=1.$
    Now we turn our attention to $I$, the indirect function. The worst scenario occurs when $v_{I(P)}(s)=-1$ because aggregating this value to $v_{D(P)}(s)$ might prevent that $v_{B_\alpha(P)}(s)>0$, and thus that Weak unanimity holds. The DRF and an opinion profile depicted in figure \ref{fig:BalphaW} exemplifies this case. 
    
    
    \begin{figure}[H]
\centering
\begin{tikzpicture}[node distance=0.5cm and 0.5cm,>=latex,auto, every place/.style={draw}]

     \node [place,thick] (s) {$s$};
      \node [place,thick] (a) [right= 2cm of s]{$a$};
	
    \node[right=0.4cm of a] at (a) {$v(a)=-1$};
    \node[left=0.4cm of s] at (s) {$v(s)=1$};
    \path[->] (s) edge node {$w(r)=1$} (a); 
\end{tikzpicture}
\caption{}
\label{fig:BalphaW}
\end{figure}

Since $v_{D(P)}(s)=1$ and $v_{I(P)}(s)=-1$,  $v_{B_\alpha(P)}(s)=\alpha-(1-\alpha)=2\alpha - 1.$
Thus, if we set $\alpha$ so that $\alpha>\frac 1 2$, then we ensure that $v_{B_\alpha(P)}(s)>0$, and Weak unanimity holds. The proof is analogous for the negative case of Weak unanimity.


    
    \item Endorsed Unanimity. Let $s$ be a sentence and $P=(O_1,\dotsc,O_n)$ an opinion profile satisfying that $v_i(s_r)=-1$ for any agent $i$ and descendant $s_r\in D(s)$ of sentence $s$. In other words, $s$ has full negative support. It follows that $v_{I(P)}(s)=-1$. Likewise for our proof for Weak unanimity above, we consider the worst case, which would occur when $v_{D(P)}(s)=1$. Figure  \ref{fig:BalphaW} depicts a DRF and single-opinion profile illustrating this case. Since $v_{D(P)}(s)=1$ and $v_{I(P)}(s)=-1$,  $v_{B_\alpha(P)}(s) = \alpha - (1-\alpha)=2\alpha-1$. Thus, if we set $\alpha$ so that $\alpha < \frac 1 2$, then we ensure that $v_{B_\alpha(P)}(s)<0$, and Endorsed unanimity holds. The proof is analogous for the positive case (full positive support) of Endorsed unanimity.

    
    
    
    \item Familiar Monotonicity follows from propositions \ref{prop:D} and \ref{prop:I}, and from lemma \ref{prop:FGFamMon}.
    
    \item Collective coherence.To prove that it does not hold, it suffices to find a $DRF$ and an opinion profile for which there is no collective coherence. Thus, consider the DRF with one-opinion profile depicted below in figure \ref{fig:BalphaCountCC}. Computing the aggregations for the Direct and Indirect functions, we have that $v_{D(P)}(s)=1$, $v_{D(P)}(a)=0$, and, $v_{I(P)}(s)=0$ and $v_{I(P)}(a)=-1$. Therefore, $v_{B_\alpha(P)}(s)=\alpha$ and $v_{B_\alpha(P)}(a)=(-1)(1-\alpha)$. And hence, the coherence at sentence $s$ is:    $|v_{B_\alpha(P)}(s)-e_{B_\alpha(P)}(s)|=|v_{B_\alpha(P)}(s)- v_{B_\alpha(P)}(a)|=
    1.$     Thus, we conclude that, for any $\epsilon\in(0,1)$,  $\epsilon$-coherence cannot be satisfied regardless of the value of $\alpha$. Therefore, $B_\alpha$ does not satisfy $\epsilon$-coherence.

        \begin{figure}[H]
\centering
\begin{tikzpicture}[node distance=0.5cm and 0.5cm,>=latex,auto, every place/.style={draw}]

     \node [place,thick] (s) {$s$};
      \node [place,thick] (a) [right= 2cm of s]{$a$};
      \node [place,thick] (b) [right= 2cm of a]{$b$};
	
    \node[below=0.4cm of a] at (a) {$v(a)=0$};
    \node[left=0.4cm of s] at (s) {$v(s)=1$};
        \node[right=0.4cm of b] at (b) {$v(b)=-1$};

    \path[->] (s) edge node {$w(r)=1$} (a); 
    \path[->] (a) edge node {$w(r)=1$} (b); 
\end{tikzpicture}
\caption{}
\label{fig:BalphaCountCC}
\end{figure}

    
    
    
    \item Sided Unanimity, Narrow Unanimity. To prove that neither of these properties holds, it suffices to find a $DRF$ and an opinion profile for which there is no Sided unanimity. 
    In particular, we will show that for any $\alpha\in(0,1)$ we can find a DRF and an opinion profile for which Sided unanimity and Narrow Unanimity do not hold. 
    Consider then the DRF with single-opinion profile in figure \ref{fig:BalphaCountSU}, where $x\in(0,1)$ is such that $0<x<\frac {1-\alpha} {\alpha}$. Since $v(s) = x > 0$, the assumptions for Sided unanimity hold at sentence $s$. Now, since $v_{D(P)}(s)=x$ and $v_{I(P)}(s)=-1$, it follows that $v_{B_\alpha(P)}(s)=\alpha x +(1-\alpha)(-1) = \alpha x +\alpha - 1 < \alpha \frac{1-\alpha}{\alpha} +\alpha - 1 = 0$. Since $v_{B_\alpha(P)}\not>0$, Sided unanimity fails at $s$, and as it is single-opinion profile Narrow unanimity fails too. The proof goes analogously for the negative case of Sided unanimity.

    
    \begin{figure}[H]
\centering
\begin{tikzpicture}[node distance=0.5cm and 0.5cm,>=latex,auto, every place/.style={draw}]

     \node [place,thick] (s) {$s$};
      \node [place,thick] (a) [right= 2cm of s]{$a$};
	
    \node[right=0.4cm of a] at (a) {$v(a)=-1$};
    \node[left=0.4cm of s] at (s) {$v(s)=x$};
    \path[->] (s) edge node {$w(r)=1$} (a); 
\end{tikzpicture}
\caption{}
\label{fig:BalphaCountSU}
\end{figure}



    \item Monotonicity. It suffices to find a $DRF$ and an opinion profile for which there is no Monotonicity. Consider the two single-opinion profiles $P$ and $P'$ over the very same DRF in figures \ref{fig:BalphaCountM1}  and \ref{fig:BalphaCountM2}. 
    We will check Monotonicity at sentence $s$, where the conditions for unanimity hold because $v(s) \leq v'(s)$.
    Computing $B_\alpha$ we obtain that $v_{B_\alpha(P)}(s)=1$ and $v_{B_\alpha(P')}(s)=2\alpha - 1$. To fulfil Monotonicity both expressions must satisfy that $v_{B_\alpha(P)}(s)\leq v_{B_\alpha(P')}(s)$, namely that $1\leq 2\alpha-1$. This is only possible for $\alpha\geq 1$. Therefore, for any $\alpha\in(0,1)$ Monotonicity does not hold.
    

    \begin{figure}[H]
    \centering
\begin{tikzpicture}[node distance=0.5cm and 0.5cm,>=latex,auto, every place/.style={draw}]

     \node [place,thick] (s) {$s$};
      \node [place,thick] (a) [right= 2cm of s]{$a$};
	
    \node[right=0.4cm of a] at (a) {$v(a)=1$};
    \node[left=0.4cm of s] at (s) {$v(s)=1$};
    \path[->] (s) edge node {$w(r)=1$} (a); 
\end{tikzpicture}

\caption{}
\label{fig:BalphaCountM1}
\end{figure}

\begin{figure}[H]
    \centering
\begin{tikzpicture}[node distance=0.5cm and 0.5cm,>=latex,auto, every place/.style={draw}]

     \node [place,thick] (s) {$s$};
      \node [place,thick] (a) [right= 2cm of s]{$a$};
	
    \node[right=0.4cm of a] at (a) {$v'(a)=-1$};
    \node[left=0.4cm of s] at (s) {$v'(s)=1$};
    \path[->] (s) edge node {$w'(r)=1$} (a); 
\end{tikzpicture}
\caption{}
\label{fig:BalphaCountM2}
\end{figure}

$$1\leq 2\alpha-1$$ 

    \item Independence. For any $\alpha\neq 1$, $B_\alpha$ does not fulfil Independence due to its dependence on $I$.
    
\end{enumerate}
\end{proof}

\begin{prop}
The family of $\alpha$-recursive aggregation functions  $\{R_\alpha\}_{\alpha\in(0,1)}$ satisfies the following properties:
\begin{enumerate}[(i)]
    \item Collective Coherence for $\alpha< \frac \epsilon 2$;
    \item Exhaustive Domain and Coherent Domain;
    \item Anonymity and Non-Dictatorship;
    \item Weak Unanimity for $\alpha>\frac 1 2$;
\end{enumerate}
and does not satisfy:
\begin{enumerate}[(i)]
\setcounter{enumi}{4}
    \item Sided unanimity, so neither Narrow Unanimity;
    \item Endorsed Unanimity;
    \item Familiar Monotonicity, so neither Monotonicity;
    \item Independence.
\end{enumerate}
\end{prop}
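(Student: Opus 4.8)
The plan is to write $R_\alpha = \alpha\, D + (1-\alpha)\, R$ and build the proof on top of Propositions~\ref{prop:D} and~\ref{prop:R} together with the combination lemmas~\ref{prop:FGExh}--\ref{prop:FGEnd}, resorting to direct arguments wherever those lemmas do not apply because $R$ itself lacks the property in question. The easy structural part goes first: $D$ and $R$ both satisfy Exhaustive Domain (Propositions~\ref{prop:D} and~\ref{prop:R}), so Lemma~\ref{prop:FGExh} gives Exhaustive Domain for $R_\alpha$, and Lemma~\ref{lemma:ED-CD} then yields Coherent Domain; Anonymity and Non-Dictatorship follow the same way via Lemma~\ref{prop:FGAn}.

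For \emph{Weak Unanimity with $\alpha>\tfrac12$} I would argue directly, since $R$ fails it. If $v_i(s)=1$ for all $i$, then $v_{D(P)}(s)=1$ by Weak Unanimity of $D$ (Proposition~\ref{prop:D}), while $v_{R(P)}(s)\in I=[-1,1]$ because the recursive definition of $R$ only ever forms (iterated) weighted averages of values in $I$; hence $v_{R_\alpha(P)}(s)=\alpha\cdot 1+(1-\alpha)\,v_{R(P)}(s)\geq 2\alpha-1>0$, and the negative case is symmetric. The counterexample of Figure~\ref{fig:RCountWU}, where $v(s)=1$ yet $v_{R(P)}(s)=-1$, shows that the threshold $\tfrac12$ cannot be lowered.

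The crux is \emph{Collective Coherence for $\alpha<\tfrac{\epsilon}{2}$}, and this is the step I expect to be the main obstacle, since it requires tracking how the estimation function acts on the collective opinion of $R_\alpha$. All three aggregation functions average acceptances identically, so $w_{R_\alpha(P)}(r)=\tfrac1n\sum_i w_i(r)=w_{R(P)}(r)$ for every $r$, and by the standing assumption that at least one agent values each relationship nonzero this common value is strictly positive, so whenever $R^+(s)\neq\emptyset$ only the non-degenerate branch of the (weighted-average) estimation function applies. Expanding $v_{R_\alpha(P)}(s_r)=\alpha\, v_{D(P)}(s_r)+(1-\alpha)\,v_{R(P)}(s_r)$ inside that weighted average and using linearity, the $\alpha$-part reconstitutes $e_{D(P)}(s)$ (the estimation of the direct collective opinion) and the $(1-\alpha)$-part reconstitutes $v_{R(P)}(s)$ by the defining recursion of $R$; thus $e_{R_\alpha(P)}(s)=\alpha\, e_{D(P)}(s)+(1-\alpha)\,v_{R(P)}(s)$. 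Subtracting this from $v_{R_\alpha(P)}(s)=\alpha\, v_{D(P)}(s)+(1-\alpha)\,v_{R(P)}(s)$ cancels the $R$-term and leaves $v_{R_\alpha(P)}(s)-e_{R_\alpha(P)}(s)=\alpha\big(v_{D(P)}(s)-e_{D(P)}(s)\big)$, whose absolute value is at most $2\alpha<\epsilon$ because $v_{D(P)}(s),e_{D(P)}(s)\in I$. For statements with $R^+(s)=\emptyset$ the estimation function returns $v_{R_\alpha(P)}(s)$ itself, so coherence is trivial there; hence $R_\alpha(P)\in\mathbb{C}_\epsilon(DRF)$.

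The negative results I would dispatch with small counterexamples adapted from the proofs for $R$ and $B_\alpha$. For \emph{Sided Unanimity} (hence Narrow Unanimity) take the single-opinion framework of Figure~\ref{fig:BalphaCountSU} with $0<x<\tfrac{1-\alpha}{\alpha}$: there $v_{D(P)}(s)=x>0$ but $v_{R(P)}(s)=-1$, so $v_{R_\alpha(P)}(s)=\alpha x-(1-\alpha)<0$, contradicting Sided Unanimity, and Narrow Unanimity fails too since the profile is a single opinion. For \emph{Endorsed Unanimity}, the chain $s\to a\to b$ with a single opinion $v(s)=v(b)=-1$ and $v(a)=1$ has full positive support on $s$ (as $D(s)=\{a\}$) yet $v_{R_\alpha(P)}(s)=-1$ for every $\alpha\in(0,1)$. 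For \emph{Familiar Monotonicity} (hence, by Lemma~\ref{lem:monotonicity}, Monotonicity) the pair of profiles in Figures~\ref{fig:RCountM1}--\ref{fig:RCountM2}, which keep $v_i(s)$ and the indirect opinion on $s$ fixed but flip $v_i(b)$, give $v_{R_\alpha(P)}(s)=\alpha x+(1-\alpha)>\alpha x-(1-\alpha)=v_{R_\alpha(P')}(s)$. Finally \emph{Independence} fails for every $\alpha<1$ because $R_\alpha$ genuinely depends on $R$: the profiles of Figures~\ref{fig:ICountI1}--\ref{fig:ICountI2} yield $v_{R_\alpha(P)}(s)=1\neq\alpha=v_{R_\alpha(P')}(s)$.
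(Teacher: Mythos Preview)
Your proof is correct and follows essentially the same route as the paper: the decomposition $R_\alpha=\alpha D+(1-\alpha)R$ together with the combination lemmas handles (ii)--(iii); your direct argument for Weak Unanimity mirrors the paper's (which simply refers back to the $B_\alpha$ case); your key computation for Collective Coherence, showing $v_{R_\alpha(P)}(s)-e_{R_\alpha(P)}(s)=\alpha\big(v_{D(P)}(s)-e_{D(P)}(s)\big)$ via linearity of the weighted average and cancellation of the $R$-term, is exactly the paper's argument (you are in fact slightly more careful in noting the $R^+(s)=\emptyset$ case and the positivity of the aggregated acceptances); and your counterexamples for (v)--(viii) are either identical to or sign-flipped/reused variants of the paper's, all of which work.
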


\begin{proof}
\begin{enumerate}[(i)]
    \item Collective Coherence. Given $\epsilon>0$ and a DRF, we must prove that $|v_{R_\alpha(P)}(s)-e_{R_\alpha(P)}(s)|<\epsilon$ for any sentence $s\in\sen$. First, we develop the difference between valuation and estimation for the collective opinion:
    
    
    
    \begin{align*}
    &v_{R_\alpha(P)}(s)-e_{R_\alpha(P)}(s)=v_{R_\alpha(P)}(s)-\frac {\sum_{r\in R^+(s)} w_{R_\alpha(P)}(r)v_{R_\alpha(P)}(s_r)} {\sum_{r\in R^+(s)} w_{R_\alpha(P)}(r)}\\
    &=[\alpha v_{D(P)}(s) + (1-\alpha)v_{R(P)}(s)]-
     \frac{\sum_{r\in R^+(s)} w_{D(P)}(r)\big[\alpha v_{D(P)}(s_r) + (1-\alpha)v_{R(P)}(s_r)\big]}{\sum_{r\in R^+(s)}w_{R_\alpha(P)}(r)}\\
    &=\alpha\big[ v_{D(P)}(s) - \frac{\sum_{r\in R^+(s)} w_{R_\alpha(P)}(r) v_{D(P)}(s_r)}{\sum_{r\in R^+(s)}w_{D(P)}(r)}\big]+\\
    &+ (1-\alpha)\big[v_{R(P)}(s)-\frac{\sum_{r\in R^+(s)} w_{R(P)}(r) v_{R(P)}(s_r)}{\sum_{r\in R^+(s)}w_{R(P)}(r)}\big]\\
    &=\alpha\big[v_{D(P)}(s)-e_{D(P)}(s)\big]+(1-\alpha)\big[v_{R(P)}(s)-e_{R(P)}(s)\big]\\
    &=\alpha(v_{D(P)}(s)-e_{D(P)}(s)).\\
    \end{align*}
    
    Notice that we get rid of $v_{R(P)}(s)-e_{R(P)}(s)$ because is zero. Now the coherence of  $R_\alpha$ directly depends on the coherence of the direct aggregation function $D$ and $\alpha$. Figure \ref{fig:D_WorstExample} depicts a DRF with a single-opinion profile representing a worst-case scenario for $D$ because $v_{D(P)}(s)-e_{D(P)}(s)=2$. Considering the coherence of $R_\alpha$, we have that $|v_{R_\alpha(P)}(s)-e_{R_\alpha(P)}(s)|=\alpha|v_{D(P)}(s)-e_{D(P)}(s)|\leq 2\alpha$ for any profile $P$. Therefore, we must ensure that $\alpha<\frac \epsilon 2$  so that $|v_{R_\alpha(P)}(s)-e_{R_\alpha(P)}(s)|<\epsilon$ holds for any profile of the domain, and hence Collective coherence holds for $R_\alpha$.
    

\begin{figure}[H]
\centering
\begin{tikzpicture}[node distance=0.5cm and 0.5cm,>=latex,auto, every place/.style={draw}]

     \node [place,thick] (s) {$s$};
     \node [place,thick] (a) [right= 2cm of s]{$a$};
	
    \node[left=0.4cm of s] at (s) {$v(s)=1$};
    \node[right=0.4cm of a] at (a) {$v(a)=-1$};
    \path[->] (s) edge node {$w(r)=1$} (a); 

\end{tikzpicture}
\caption{}
\label{fig:D_WorstExample}
\end{figure}

    
    
    \item Exhaustive Domain and Coherent Domain directly follow from propositions \ref{prop:D} and \ref{prop:R} and lemma \ref{prop:FGExh}.
    \item Anonymity and Non-Dictatorship follow directly from propositions \ref{prop:D} and \ref{prop:R} and lemma \ref{prop:FGAn}.
    
    \item Weak unanimity. To prove Weak unanimity we can resort to the proof built to prove Weak unanimity for $B_\alpha$ in proposition \ref{prop:Balpha}. We simply have to substitute $B_\alpha$ for $R_\alpha$.
    
    \item Sided Unanimity and Narrow Unanimity. To prove that neither of these properties holds, it suffices to find a $DRF$ and an opinion profile for which there is no Sided unanimity (nor Narrow unanimity). 
    Consider the DRF and the single-opinion profile depicted in figure \ref{fig:RalphaCountSU}, where $x\in(0,1)$ is such that $0<x<\frac {1-\alpha} {\alpha}$.
    Since $x>0$, the assumption for Sided Unanimity holds at $s$. However, $v_{R_\alpha(P)}(s)$ is not positive, since 
 $v_{R_\alpha(P)}(s)=x\alpha-1+\alpha<\frac {1-\alpha} {\alpha}\alpha-1+\alpha=0$, and hence Sided (and Narrow) unanimity does not hold. 
    
    \begin{figure}[H]
\centering
\begin{tikzpicture}[node distance=0.5cm and 0.5cm,>=latex,auto, every place/.style={draw}]

     \node [place,thick] (s) {$s$};
     \node [place,thick] (a) [right= 2cm of s]{$a$};
	
    \node[left=0.4cm of s] at (s) {$v(s)=x$};
    \node[right=0.4cm of a] at (a) {$v(a)=-1$};
    \path[->] (s) edge node {$w(r)=1$} (a); 

\end{tikzpicture}
\caption{}
\label{fig:RalphaCountSU}
\end{figure}

    \item Endorsed Unanimity. Next we build a DRF and an opinion profile for which Endorsed Unanimity does not hold. 
    Consider the DRF and the opinion profile $P$ depicted in figure \ref{fig:CountFMRalpha1}. The assumptions for Endorsed unanimity hold at $s$ because $s$ has full negative support.
    However, $v_{R_\alpha(P)}(s)$ is not negative: since $v_{D(P)}(s)=1$ and $v_{R(P)}(s)=1$, we obtain that $v_{R_\alpha(P)}(s)=1$ for any $\alpha\in(0,1)$. Therefore, Endorsed Unanimity does not hold.
    
    \begin{figure}[H]
\centering
\begin{tikzpicture}[node distance=0.5cm and 0.5cm,>=latex,auto, every place/.style={draw}]

     \node [place,thick] (s) {$s$};
      \node [place,thick] (a) [right= 2cm of s]{$a$};
      \node [place,thick] (b) [right= 2cm of a]{$b$};
	
    \node[below=0.4cm of a] at (a) {$v(a)=-1$};
    \node[left=0.4cm of s] at (s) {$v(s)=1$};
        \node[right=0.4cm of b] at (b) {$v(b)=1$};

    \path[->] (s) edge node {$w(r_1)=1$} (a); 
    \path[->] (a) edge node {$w(r_2)=1$} (b); 
\end{tikzpicture}
\caption{}
\label{fig:CountFMRalpha1}
\end{figure}


    \item Familiar Monotonicity and Monotonicity. We build a $DRF$ and an opinion profile for which Familiar Monotonicity does not hold despite satisfying the assumptions.
    Consider the DRF and single-opinion profile $P$ in figure \ref{fig:CountFMRalpha1} together with another single-opinion profile $P'$ in figure \ref{fig:CountFMRalpha2}. 
    Clearly, the assumptions of Familiar Monotonicity are fulfilled at $s$ because $v_i(s)\leq v'_i(s)$ and the descendant of $s$ has the same value. However, we will show that $v_{R_\alpha(P)}(s) \leq v_{R_\alpha(P')}(s)$ is not true.
    For both profiles we have that $v_{D(P)}(s)=v_{D(P')}(s)=1$, and, $v_{R(P)}(s)=1$ and $v_{R(P')}(s)=1-x$. Thus, for any $\alpha\in(0,1)$: $v_{R_\alpha(P)}(s)=\alpha+(1-\alpha)=1$ and $v_{R_\alpha(P')}(s)=\alpha+(1-\alpha)(1-x)=1-x(1-\alpha) <1$ for any $x\in(0,1)$. Therefore, $v_{R_\alpha(P)}(s) > v_{R_\alpha(P')}(s)$ and Familiar Monotonicity is not satisfied. By lemma \ref{lem:monotonicity}, Monotonicity does not hold either.

 \begin{figure}[H]
\centering
\begin{tikzpicture}[node distance=0.5cm and 0.5cm,>=latex,auto, every place/.style={draw}]

     \node [place,thick] (s) {$s$};
      \node [place,thick] (a) [right= 2cm of s]{$a$};
      \node [place,thick] (b) [right= 2cm of a]{$b$};
	
    \node[below=0.4cm of a] at (a) {$v'(a)=-1$};
    \node[left=0.4cm of s] at (s) {$v'(s)=1$};
        \node[right=0.4cm of b] at (b) {$v'(b)=1-x<1$};

    \path[->] (s) edge node {$w'(r_1)=1$} (a); 
    \path[->] (a) edge node {$w'(r_2)=1$} (b); 
\end{tikzpicture}
\caption{}
\label{fig:CountFMRalpha2}
\end{figure}



    \item Independence. Clearly, $R_\alpha$ will not fulfil Independence for any $\alpha\neq 1$ because of its dependence on $R$.
\end{enumerate}
\end{proof}

\subsection{Constrained opinion profiles: assuming consensus on acceptance degrees}\label{Proofs:Uniform}

This section relates to section \ref{subsec:consensusResults}, where we assume that opinion profiles share consensus on their acceptance degrees on relationships, i.e. for each relationship $r\in \rel$ of a DRF all the agents agree on their acceptance degrees: $w_i(r)=w_j(r)$ $\forall i,j\in Ag$.

In previous section \ref{Proofs:general}, each proof and counterexample used to demonstrate that an aggregation function does or does not satisfy a property uses opinion profiles composed by one single agent. Thus, those proofs serve as well in this section when assuming consensus on acceptance degrees. For this reason, adding this assumption does not change any of the properties fulfilled by the aggregation functions in the general case (table \ref{tableGeneralResults}), and therefore there are no further desirable properties gained in this scenario with respect to the more general scenario thoroughly analysed in section \ref{Proofs:general}.

\subsection{Constrained opinion profiles: assuming coherent profiles}\label{Proofs:Coherence}

This section corresponds to the results displayed in table \ref{tableCoherentResults} in section \ref{sec:coherentResults}. We prove the results regarding the social choice properties satisfied by the aggregation functions introduced in Section \ref{sec:OpAggFunc} when assuming the domain of the aggregation functions to be $\epsilon$-coherent for some $\epsilon\in(0,1)$. This means that we consider that our aggregation functions take in coherent opinion profiles.


Since in the previous section many properties have been proven for the general case, we will not need to prove them again for this more restrictive scenario. For each opinion aggregation function, we will prove only those results regarding social choice properties that change by the addition of the coherence assumption and disprove again, this time for coherent domains, those properties which are yet not satisfied.

\begin{prop}\label{prop:DCoh} 
For any $\epsilon\in(0,1)$, $D$ over an $\epsilon$-coherent domain satisfies Endorsed Unanimity.
\end{prop}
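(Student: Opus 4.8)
The plan is to show that under the coherence restriction, unanimous full support on a descendant forces the direct aggregation $v_{D(P)}(s)$ to have the correct sign. The key observation is that coherence of each individual opinion ties each agent's direct valuation $v_i(s)$ to its estimation $e_i(s)$, and full positive (resp.\ negative) support on a descendant pushes $e_i(s)$ towards the positive (resp.\ negative) end, which then drags $v_i(s)$ along with it.

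First I would fix $\epsilon \in (0,1)$ and an $\epsilon$-coherent opinion profile $P = (O_1,\dots,O_n)$, and consider a statement $s$ with full positive support, i.e.\ there is some descendant $s_d \in D(s)$ with $v_i(s_d) = 1$ for every agent $i$. I would then argue that for each $i$ the estimation $e_i(s)$ is bounded below by a strictly positive quantity. Recall that $e_i(s)$ is the weighted average $\frac{\sum_{r\in R^+(s)} w_i(r) v_i(s_r)}{\sum_{r\in R^+(s)} w_i(r)}$ (using the concrete estimation function fixed in Section~\ref{sec:Expectation_Coherence}); since all valuations lie in $[-1,1]$ and at least one term (the one corresponding to the relationship $r_d$ leading to $s_d$, which by the standing assumption that each relationship is valued non-zero by at least one agent\,---\,but here I need it valued non-zero by \emph{this} agent) contributes $+1$. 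This is the delicate point: if agent $i$ assigns $w_i(r_d) = 0$ for the relationship leading to $s_d$ but still has other descendants valued negatively, then $e_i(s)$ could be negative. So the argument must instead go: \emph{either} $e_i(s) > 0$, \emph{or} $e_i(s) \le 0$ is still constrained. Actually the cleaner route is: by $\epsilon$-coherence, $|v_i(s) - e_i(s)| < \epsilon$, so $v_i(s) > e_i(s) - \epsilon$. Hence $v_{D(P)}(s) = \frac1n\sum_i v_i(s) > \frac1n\sum_i e_i(s) - \epsilon = v_{I(P)}(s) - \epsilon$. By Proposition~\ref{prop:I}, $I$ satisfies Endorsed Unanimity, so under full positive support $v_{I(P)}(s) = 1$ (from the explicit computation in that proof). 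Therefore $v_{D(P)}(s) > 1 - \epsilon > 0$ since $\epsilon < 1$, which is exactly what Endorsed Unanimity requires.

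For the full negative support case the argument is symmetric: $|v_i(s) - e_i(s)| < \epsilon$ gives $v_i(s) < e_i(s) + \epsilon$, hence $v_{D(P)}(s) < v_{I(P)}(s) + \epsilon = -1 + \epsilon < 0$. This completes both directions.

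The main obstacle I anticipate is being careful about the precise statement of Endorsed Unanimity and about how the estimation function behaves on agents who assign zero acceptance to some relationships; the trick that sidesteps this entirely is to route the whole argument through the Indirect aggregation (which is already known to satisfy Endorsed Unanimity by Proposition~\ref{prop:I}) together with the coherence inequality, rather than trying to reason directly about $e_i(s)$ term by term. One should double-check that the definition of coherence being used is the strict inequality $|v(s) - e(s)| < \epsilon$ (Definition~\ref{def:coherence}), so that the final bound $1 - \epsilon > 0$ genuinely follows from $\epsilon < 1$; this is exactly why the interval for $\epsilon$ was chosen to be $(0,1)$, as the footnote to Definition~\ref{def:coherence} already anticipates.
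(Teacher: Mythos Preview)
Your final argument is correct and essentially the same as the paper's, but you misread the definition of Endorsed Unanimity along the way. ``Full positive support'' on $s$ means $v_i(s_d)=1$ for \emph{every} agent $i$ and \emph{every} descendant $s_d\in D(s)$, not merely that some single descendant is unanimously valued $1$. Your worry about an agent assigning $w_i(r_d)=0$ to the one distinguished relationship is an artifact of this misreading; under the correct hypothesis every term in the weighted average equals $1$, so $e_i(s)=1$ outright for each $i$, with no need to worry about individual weights. Once you fix this, your ``cleaner route'' coincides with the paper's: the paper computes $e_i(s)=1$ directly, uses coherence to get $v_i(s)>1-\epsilon>0$ for each $i$, and then invokes Sided Unanimity of $D$ (Proposition~\ref{prop:D}) to conclude $v_{D(P)}(s)>0$. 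You instead average the inequalities $v_i(s)>e_i(s)-\epsilon$ and cite the computation $v_{I(P)}(s)=1$ from the proof of Proposition~\ref{prop:I}; since $v_{I(P)}(s)=\frac1n\sum_i e_i(s)$, this is the same computation repackaged, and your direct averaging is equivalent to the paper's appeal to Sided Unanimity.
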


\begin{proof}
Let $s$ a statement in a DRF and $R^+(s)$ the set of relationships $r$ from $s$ to its descendants $s_r$. Let $P$ be an $\epsilon$-coherent profile for $\epsilon\in(0,1)$ with full positive support on $s$, i.e. $v_i(s_r)=1$ for any $i$ and descendant $s_r\in D(s)$. Then:
$$e_i(s)=\frac 1 {\sum_{r\in R^+(s)} w_i(r)}\sum_{r\in R^+(s)} v_i(s_r)w_i(r)=\frac 1 {\sum_{r\in R^+(s)} w_i(r)}\sum_{r\in R^+(s)} w_i(r)=1$$
By the $\epsilon$-coherence of $P$ we have that:
$$|v_i(s)-e_i(s)|<\epsilon\Longrightarrow v_i(s)> e_i(s)-\epsilon=1-\epsilon.$$
Therefore, for any $\epsilon\in(0,1)$ we can ensure that $v_i(s)>0$ for any $i$ and the conditions for Sided Unanimity hold. Now, since $D$ satisfies Sided Unanimity (by proposition \ref{prop:D}), we obtain that $v_D(s)>0$, and hence $D$ fulfils Endorsed Unanimity. 
\end{proof}

\begin{prop}\label{prop:DCohCountCC}
$D$ over a $\delta$-coherent domain, where $\delta\in(0,1)$, still does not satisfy $\epsilon$-Collective coherence for any $\epsilon\in(0,1)$.
\end{prop}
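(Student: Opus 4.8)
The plan is to exhibit an explicit $\delta$-coherent opinion profile (in fact a single-agent profile, so that $\delta$-coherence for the profile reduces to $\delta$-coherence of that one opinion) together with a DRF on which $D$ produces a collective opinion that fails $\epsilon$-coherence on some statement. Since $D$ simply averages and we use one agent, the collective opinion equals the agent's opinion, so the counterexample reduces to building one opinion that is itself $\delta$-coherent yet, when passed through $D$, is not $\epsilon$-coherent. But $D$ copies the opinion verbatim, so at first glance this looks impossible — and that tension is the crux: the resolution is that $D$'s notion of ``coherent output'' is checked against the estimation function $e$ applied to the \emph{aggregated} valuations and aggregated acceptances, which for one agent is the same $e_i$; hence any single $\delta$-coherent opinion is automatically $\delta$-coherent as an output, which does \emph{not} contradict the claim as long as $\delta < \epsilon$ fails — so actually the counterexample must use \emph{more than one agent}, where averaging genuinely distorts coherence. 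So the real plan is: build a two-agent (or $n$-agent) $\delta$-coherent profile whose average valuation, compared with the estimation computed from the averaged descendant-valuations and averaged acceptances, differs by at least $\epsilon$.

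First I would reuse the small DRF with one relationship $s \to a$, $w(r)=1$, as in Figure~\ref{fig:DCountCC}, or a slightly larger chain, and pick two agents with coherent opinions of the form: agent 1 has $v_1(s)$ close to $v_1(a)$, agent 2 has $v_2(s)$ close to $v_2(a)$, but arranged so that $v_1(a)$ and $v_2(a)$ cancel while $v_1(s)$ and $v_2(s)$ do not (e.g. $v_1(a)=c$, $v_2(a)=-c$, $v_1(s)=c+\delta'$, $v_2(s)=-c+\delta'$ with $\delta' < \delta$, so each opinion is $\delta$-coherent). Then $v_{D(P)}(a)=0$ but $v_{D(P)}(s)=\delta'$, and with equal averaged acceptance $e_{D(P)}(s)=v_{D(P)}(a)=0$, giving $|v_{D(P)}(s)-e_{D(P)}(s)|=\delta' < \epsilon$ — which is \emph{not} a violation. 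So cancellation alone is too weak; I need the averaging of \emph{acceptances} to amplify the discrepancy. The key trick is to use two descendants (or two parallel relationships) where agents disagree on \emph{which} relationship to accept: agent 1 accepts $r_1$ (to descendant $a$ with $v_1(a)=-1$) and rejects $r_2$; agent 2 rejects $r_1$ and accepts $r_2$ (to descendant $b$ with $v_2(b)=-1$); each keeps $v(s)$ near $-1$ to stay coherent; but after averaging, both relationships get acceptance $1/2$, while the averaged valuations of $a$ and $b$ are dragged toward $0$ by the other agent's positive value — making $e_{D(P)}(s)$ near $0$ while $v_{D(P)}(s)$ stays near $-1$. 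That gives the required $\epsilon$-violation.

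Concretely, the proof would: (1) fix $\epsilon,\delta\in(0,1)$, pick the DRF with statements $\{s,a,b\}$ and relationships $r_1:(\{s\},a)$, $r_2:(\{s\},b)$; (2) define agent 1: $v_1(s)=-1+\delta/2$, $v_1(a)=-1$, $v_1(b)=1$, $w_1(r_1)=1$, $w_1(r_2)=0$ — check $e_1(s)=-1$ so $|v_1(s)-e_1(s)|=\delta/2<\delta$, coherent; (3) define agent 2 symmetrically: $v_2(s)=-1+\delta/2$, $v_2(a)=1$, $v_2(b)=-1$, $w_2(r_1)=0$, $w_2(r_2)=1$ — similarly coherent; (4) compute $v_{D(P)}(s)=-1+\delta/2$, $v_{D(P)}(a)=v_{D(P)}(b)=0$, $w_{D(P)}(r_1)=w_{D(P)}(r_2)=1/2$, hence $e_{D(P)}(s)=0$; (5) conclude $|v_{D(P)}(s)-e_{D(P)}(s)|=1-\delta/2 \geq 1/2 > \epsilon$ for $\epsilon$ sufficiently small — and note the argument must cover \emph{all} $\epsilon\in(0,1)$, so I would instead push $\delta/2$ and scale so that the gap is, say, $1-\delta/2$, and observe that since $\delta$ can be taken arbitrarily small while we only need \emph{one} $\delta$, but the claim quantifies $\epsilon$ universally — so I must make the gap exceed \emph{every} $\epsilon<1$, which requires the gap to equal $1$; that forces $\delta/2 \to 0$, i.e. take agents fully at $v(s)=-1$, which is still $\delta$-coherent for every $\delta$. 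The main obstacle is precisely getting the gap to be exactly $1$ (or arbitrarily close to $1$ uniformly) while keeping each individual opinion strictly $\delta$-coherent: I would handle it by letting $v_i(s) = -1$ exactly (so $|v_i(s)-e_i(s)| = 0 < \delta$, trivially coherent for all $\delta$) and obtaining $|v_{D(P)}(s) - e_{D(P)}(s)| = 1$, which exceeds any $\epsilon \in (0,1)$, completing the proof. I would also remark that this mirrors the phenomenon already seen for $B_\alpha$ in the proof of Proposition~\ref{prop:Balpha}(vi): disagreement on acceptance degrees, not on valuations, is what breaks collective coherence for $D$.
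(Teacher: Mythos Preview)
Your final construction is correct and proves the proposition: the two-agent profile on the three-statement DRF $\{s,a,b\}$ with $v_i(s)=-1$, crossed acceptances $w_1=(1,0)$, $w_2=(0,1)$, and crossed descendant valuations $(-1,1)$, $(1,-1)$ is $\delta$-coherent for every $\delta>0$ (each agent has $|v_i(s)-e_i(s)|=0$), yet $|v_{D(P)}(s)-e_{D(P)}(s)|=1$, violating $\epsilon$-coherence for every $\epsilon\in(0,1)$.

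The paper's proof uses the same mechanism---disagreement on acceptance degrees---but with a smaller DRF: just two statements $s,a$ and a single relationship $r$. Agent~1 takes $v_1(s)=v_1(a)=-1$, $w_1(r)=1$; Agent~2 takes $v_2(s)=-1$, $v_2(a)=1$, $w_2(r)=0$. Agent~2's coherence holds because the estimation function falls back to $e_2(s)=v_2(s)$ when \emph{all} outgoing acceptances are zero. After averaging, $v_{D(P)}(s)=-1$, $v_{D(P)}(a)=0$, $w_{D(P)}(r)=\tfrac12$, and again the gap is $1$. So the paper's counterexample is more minimal but leans on the special clause of the estimation function (the case $\sum_r w_i(r)=0$), whereas your construction avoids that edge case by giving every agent at least one positive acceptance. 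Both routes make the same point you articulate at the end: it is disagreement on \emph{acceptances}, not on valuations, that breaks collective coherence for $D$ even over coherent domains.
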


\begin{proof} 
Consider the DRF and $\delta$-coherent opinion profile $P$ depicted in figure \ref{fig:DCohCountCC} and any $\delta\in(0,1)$. We will show that the collective opinion yield by the direct function for this example is never $\epsilon$-coherent for any $\epsilon\in(0,1)$.

Clearly, this profile is $\delta$-coherent for any $\delta>0$. Computing the direct function at $s$ we obtain that:  $v_{D(P)}(s)=-1$, $v_{D(P)}(a)=0$ and $w_{D(P)}(r)=\frac 1 2$. Now, if we check collective coherence at $s$, we see that: $|v_{D(P)}(s)-e_{D(P)}(s)|=|-1-0|=1> \epsilon$. Thus, since $1$ is larger than any $\epsilon$ value that we take in $(0,1)$, $D$ does not satisfy $\epsilon$-Collective coherence.


\begin{figure}[H]
    \centering
    \includegraphics[scale=1.2]{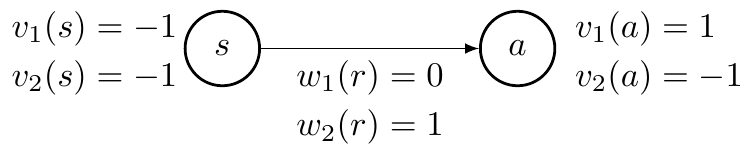}
    \caption{}
    \label{fig:DCohCountCC}
\end{figure}




\comment{We impose the failure of the coherence condition:

$$|v_{D(P)}(s)-e_{D(P)}(s)|\geq \epsilon \Longleftrightarrow \frac{1+x} 2 \geq\epsilon \Longleftrightarrow  x\geq 2\epsilon-1$$

and the initial condition  $-\delta<x<\delta$ to ensure the $\delta$-coherence of $P$. Let's check that conditions for the following set 
$\{x\ |\ -\delta<x<\delta,\ x\geq 2\epsilon-1 \}$ to be not empty, and therefore to exist counterexamples for the $\epsilon$-Collective coherence choosing $x$ from this set. 

If $\delta\leq 2\epsilon -1$ then there is no $x$ satisfying both conditions. That is not possible if $\epsilon\leq \frac 1 2$, so $2\epsilon-1\leq 0$ not allowing any $\delta<2\epsilon-1$. Then, for any $\delta>0$, if we choose  $x$ such that $2\epsilon-1\leq x<\delta$ we produce a counterexample of CC for any $0<\epsilon\leq\frac 1 2$ and any $\delta>0$. 
Now, we check the case where $\epsilon > \frac 1 2$. }

\end{proof}

\begin{prop}\label{prop:ICoh}
For $\epsilon\in(0,1)$, $I$ over an $\epsilon$-coherent domain satisfies Weak Unanimity.
\end{prop}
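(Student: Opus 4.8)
The plan is to exploit the $\epsilon$-coherence hypothesis directly to pin down the sign of each individual estimation $e_i(s)$, and then to average. First I would fix an $\epsilon$-coherent opinion profile $P=(O_1,\dotsc,O_n)$ together with a statement $s\in\sen$ for which $v_i(s)=1$ for every agent $i$ (the positive case of Weak Unanimity). Since each $O_i=(v_i,w_i)$ is $\epsilon$-coherent on $s$, Definition~\ref{def:coherence} gives $|v_i(s)-e_i(s)|<\epsilon$, that is $|1-e_i(s)|<\epsilon$, hence $e_i(s)>1-\epsilon$. Because $\epsilon\in(0,1)$, the quantity $1-\epsilon$ is strictly positive, so $e_i(s)>0$ for every $i$. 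I would note that this holds regardless of which branch of the chosen estimation function applies: if $R^+(s)=\emptyset$ or $\sum_{r\in R^+(s)}w_i(r)=0$ then $e_i(s)=v_i(s)=1>1-\epsilon$, and otherwise $e_i(s)$ is the acceptance-weighted average of the valuations on the descendants, and the coherence inequality still yields the same bound.

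Next I would average over agents. By the definition of the indirect aggregation function, $v_{I(P)}(s)=\frac1n\sum_{i=1}^n e_i(s)>\frac1n\sum_{i=1}^n(1-\epsilon)=1-\epsilon>0$, which is precisely what Weak Unanimity demands in the positive case. The negative case is symmetric: assuming $v_i(s)=-1$ for all $i$, $\epsilon$-coherence gives $|-1-e_i(s)|<\epsilon$, so $e_i(s)<-1+\epsilon<0$, and averaging yields $v_{I(P)}(s)<-1+\epsilon<0$.

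There is no genuine obstacle here: the statement is a one-line consequence of the coherence definition combined with the fact that averaging preserves a strict sign. The only point that needs a moment's care is checking that the case split in the estimation function (empty $R^+(s)$, zero total acceptance, or a genuine weighted average) does not affect the derived bound — and it does not, since $\epsilon$-coherence is imposed on $s$ in all cases, and the inequality $e_i(s)>1-\epsilon$ (respectively $e_i(s)<-1+\epsilon$) follows purely from $|v_i(s)-e_i(s)|<\epsilon$ with $v_i(s)=\pm1$.
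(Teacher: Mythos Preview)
Your proof is correct and follows essentially the same approach as the paper: use $\epsilon$-coherence with $v_i(s)=1$ to obtain $e_i(s)>1-\epsilon>0$, then average to conclude $v_{I(P)}(s)>0$, with the negative case handled symmetrically. Your additional remark about the case split in the estimation function is a small extra care point not spelled out in the paper, but the core argument is identical.
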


\begin{proof}
Consider a DRF with a statement $s\in\sen$ and $P=(O_1=(v_1,w_1),\dotsc,\break O_n=(v_n,w_n))$ an opinion profile such that $v_i(s)=1$ for every $i$. Hence, the conditions for Weak unanimity hold. If the profile $P$ is $\epsilon$-coherent, where $\epsilon\in(0,1)$, then we can conclude that for any $i$: $1-\epsilon<e_i(s)<1+\epsilon$, being $1-\epsilon>0$ for any $\epsilon\in(0,1)$. Now, computing $v_I$ at $s$ we get:
$$
    v_{I(P)}(s)=\frac 1 n \sum_i e_i(s)>\frac 1 n \sum_i 1-\epsilon> 0
$$
Since $v_{I(P)}(s) > 0$, Weak unanimity holds. The proof for the negative case of Weak unanimity is analogous.




\end{proof}

\begin{prop}\label{prop:ICohCountCC}
For any $\delta\in(0,1)$, $I$ over an $\delta$-coherent domain still does not satisfy $\epsilon$-Collective coherence for any $\epsilon\in(0,1)$.
\end{prop}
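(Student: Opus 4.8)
The plan is to establish this negative result with a single counterexample: since $\epsilon$-Collective coherence quantifies universally over the domain, its failure only requires one $\delta$-coherent opinion profile on which the indirect aggregation $I$ returns an incoherent collective opinion. I would reuse the structure underlying the proof of Proposition~\ref{prop:DCohCountCC} (cf. Figure~\ref{fig:DCohCountCC}): a minimal DRF consisting of two statements $s,a$ joined by a single relationship $r=(\{s\},a,1)$ with $s$ as target, together with a two-agent profile $P=(O_1,O_2)$ engineered so that each individual opinion is trivially coherent at $s$ while the aggregate is not. Concretely I would take $v_1(s)=v_2(s)=-1$, $v_1(a)=1$, $v_2(a)=-1$, $w_1(r)=0$ and $w_2(r)=1$. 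The crucial mechanism, inherited from the $D$ case, is that agent $1$ assigns acceptance $0$ to the only relationship out of $s$, which forces its estimation at $s$ to collapse onto its direct value ($e_1(s)=v_1(s)$) irrespective of $v_1(a)$, whereas the \emph{averaged} acceptance $w_{I(P)}(r)=\tfrac12$ is strictly positive, so the collective estimation at $s$ genuinely depends on $v_{I(P)}(a)$.

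The first step is to verify that $P$ lies in every $\delta$-coherent domain, $\delta\in(0,1)$. Since $a$ has no descendants, $e_i(a)=v_i(a)$, so both opinions are coherent on $a$. For agent $1$, $\sum_{r'\in R^+(s)}w_1(r')=w_1(r)=0$, hence $e_1(s)=v_1(s)=-1$; for agent $2$, $e_2(s)=\tfrac{w_2(r)v_2(a)}{w_2(r)}=v_2(a)=-1=v_2(s)$. In both cases $|v_i(s)-e_i(s)|=0<\delta$, so $P$ is in fact $0$-coherent and therefore a legitimate input for every $\delta$-coherent domain. I would also point out that $P$ respects the standing assumption that at least one agent values each relationship differently from $0$, since $w_2(r)=1$.

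The second step is to compute the indirect aggregation and check collective coherence at $s$. We obtain $v_{I(P)}(s)=\tfrac12\big(e_1(s)+e_2(s)\big)=-1$, $v_{I(P)}(a)=\tfrac12\big(e_1(a)+e_2(a)\big)=\tfrac12(1-1)=0$, and $w_{I(P)}(r)=\tfrac12$. Because $w_{I(P)}(r)>0$, the estimation function uses its averaging branch, giving $e_{I(P)}(s)=\tfrac{w_{I(P)}(r)v_{I(P)}(a)}{w_{I(P)}(r)}=v_{I(P)}(a)=0$, and hence $|v_{I(P)}(s)-e_{I(P)}(s)|=|-1-0|=1$. Since $1>\epsilon$ for every $\epsilon\in(0,1)$, the collective opinion $I(P)$ is not $\epsilon$-coherent, which shows that $I$ over a $\delta$-coherent domain does not satisfy $\epsilon$-Collective coherence for any $\delta,\epsilon\in(0,1)$.

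There is no genuine obstacle here; the argument is a direct transfer of the $D$-case counterexample, and the only points needing a little care are: (i) ensuring the example is $\delta$-coherent for the fixed but arbitrary $\delta$, which is achieved by making it $0$-coherent through the acceptance-$0$ trick; (ii) checking that the averaged acceptance stays strictly positive, so that $e_{I(P)}(s)$ does not collapse onto $v_{I(P)}(s)$ (in which case the collective opinion would be coherent by construction); and (iii) confirming that the profile meets the model's non-degeneracy assumption on acceptance degrees.
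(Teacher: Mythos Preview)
Your proof is correct. The profile you build is $\delta$-coherent for every $\delta\in(0,1)$ (in fact $0$-coherent), the indirect aggregation is computed correctly, and the resulting gap $|v_{I(P)}(s)-e_{I(P)}(s)|=1$ indeed exceeds every admissible $\epsilon$. You also properly flag the two subtle points: that the averaged acceptance $w_{I(P)}(r)=\tfrac12>0$ keeps $e_{I(P)}(s)$ in its averaging branch, and that the non-degeneracy assumption on relationships is met.

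Your route differs from the paper's in the structure of the counterexample. The paper uses a three-statement chain $s\to a\to b$ with two agents (Figure~\ref{fig:ICohCountCC}), obtaining $v_{I(P)}(s)=-\tfrac12$, $v_{I(P)}(a)=\tfrac12$ and hence a gap of $1$ at $s$; the extra node $b$ is what allows $v_{I(P)}(a)$ to drift away from the individual $v_i(a)$ values via the estimation at $a$. You instead stay with the two-statement DRF $s\to a$ already used for $D$ (Proposition~\ref{prop:DCohCountCC}) and rely entirely on the acceptance-$0$ trick at the level of $s$: agent $1$'s $w_1(r)=0$ decouples $e_1(s)$ from $v_1(a)$, so the individual coherence is free, while the averaged $w_{I(P)}(r)>0$ re-couples the collective estimation. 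This is more economical (one fewer statement, and it recycles the $D$-case verbatim) and makes explicit that the sole obstruction to collective coherence for $I$ under coherent domains is disagreement on acceptance degrees --- which is exactly why the property is recovered once consensus on acceptances is assumed (Proposition~\ref{prop:Uniform}). The paper's three-node example has the slight advantage of being reusable unchanged in the $B_\alpha$ case (Proposition~\ref{prop:BalphaCohCountCC}), where one wants both $D$ and $I$ to fail simultaneously on the same profile, but your example would serve there equally well.
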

\begin{proof}
To prove that this property does not hold, it suffices to find a $DRF$ and an opinion profile for which there is no $\epsilon$-Collective coherence. 
Consider the DRF and opinion profile $P$ in figure \ref{fig:ICohCountCC}. Clearly, opinions $O_1$ and $O_2$ of $P$ are $\delta$-coherent for any $\delta>0$. Now, we compute the indirect function for all the statement: $v_{I(P)}(s)=\frac {-1} 2$, $v_{I(P)}(a)=\frac 1 2,v_{I(P)}(b)=0$, and, $w_{I(P)}(r_1)=\frac 1 2 =w_{I(P)}(r_2)$. If we check coherence at $s$ we see that:
$$|v_{I(P)}(s)-e_{I(P)}(s)|=|v_{I(P)}(s)-v_{I(P)}(a)|=|\frac {-1} 2- \frac 1 2|=1> \epsilon.$$
Thus, since $1$ is larger that any $\epsilon$ value that we take in $(0,1)$, $I$ does not satisfy $\epsilon$-Collective coherence.


    \begin{figure}[H]
    \centering
    \includegraphics[scale=1.2]{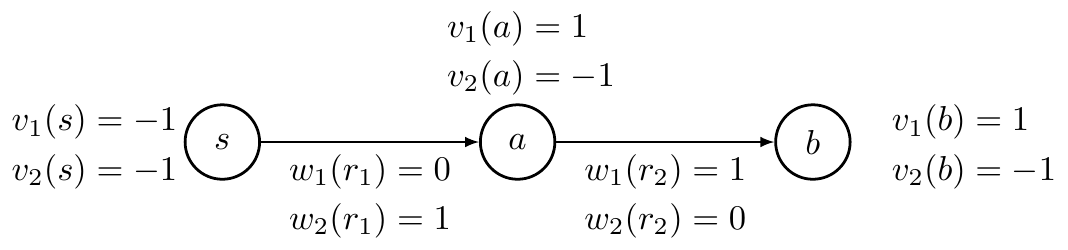}
    \caption{}
    \label{fig:ICohCountCC}
\end{figure}



\end{proof}

\begin{prop}\label{prop:ICohCount}
For any $\epsilon\in(0,1)$, $I$ over an $\epsilon$-coherent domain still does not satisfy:
\begin{enumerate}[(i)]
    \item Sided Unanimity, and therefore Narrow Unanimity;
    \item Monotonicity.
\end{enumerate}
\end{prop}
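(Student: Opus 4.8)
The plan is to supply, for each of the two properties, explicit $\epsilon$-coherent opinion profiles that falsify it, in the spirit of the counterexamples used in Proposition~\ref{prop:I} but now tailored so that the coherence constraint $|v_i(s)-e_i(s)|<\epsilon$ is respected at every statement. The single point that needs care is precisely that constraint: it limits how far the indirect opinion on a statement may diverge from its direct opinion, so the construction must flip the relevant sign while keeping the direct/indirect gap strictly below $\epsilon$. The simplifying device is to let the ``divergent'' statement be a leaf, since then its estimation equals its valuation and coherence there is automatic, leaving only the coherence condition at its parent to verify.

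For part (i), I would take the two-statement framework with statements $s$ and $a$ and a single relationship $r$ from $s$ to $a$ (the same shape used in Proposition~\ref{prop:I}), together with the single-agent profile $P=(O=(v,w))$ given by $v(s)=\epsilon/4$, $v(a)=-\epsilon/4$ and $w(r)=1$. Since $a$ is a leaf, $e(a)=v(a)$ and coherence holds there; at $s$ we have $e(s)=v(a)=-\epsilon/4$, so $|v(s)-e(s)|=\epsilon/2<\epsilon$, and thus $P$ is $\epsilon$-coherent. The unique agent satisfies $v(s)=\epsilon/4>0$, so the hypothesis of Sided Unanimity is met at $s$, yet $v_{I(P)}(s)=\frac1n\sum_i e_i(s)=e(s)=-\epsilon/4\leq 0$; hence Sided Unanimity fails. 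On the very same profile all agents value $s$ at $\lambda=\epsilon/4$ while $v_{I(P)}(s)=-\epsilon/4\neq\lambda$, so Narrow Unanimity fails as well (we cannot deduce this from the failure of Sided Unanimity, since the implication $\text{NU}\Rightarrow\text{SU}$ requires Monotonicity, which is not available here — the direct argument is needed).

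For part (ii), I would reuse the framework $s\to a$ and compare the two single-agent profiles $P=(O=(v,w))$ with $v(s)=0$, $v(a)=\epsilon/2$, $w(r)=1$, and $P'=(O'=(v',w'))$ with $v'(s)=0$, $v'(a)=-\epsilon/2$, $w'(r)=1$. Both are $\epsilon$-coherent: coherence at the leaf $a$ is immediate, and at $s$ we get $|v(s)-e(s)|=|0-(\pm\epsilon/2)|=\epsilon/2<\epsilon$ in each case. The hypothesis of Monotonicity holds since $v(s)=0=v'(s)$, hence $v_i(s)\leq v'_i(s)$ for every $i$; but $v_{I(P)}(s)=v(a)=\epsilon/2>-\epsilon/2=v'(a)=v_{I(P')}(s)$, so $v_{I(P)}(s)\not\leq v_{I(P')}(s)$ and Monotonicity fails.

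The crux — and essentially the only non-routine point — is the uniform verification that the direct/indirect gap at the parent statement $s$ stays strictly below $\epsilon$ for every $\epsilon\in(0,1)$; the magnitudes $\epsilon/4$ and $\epsilon/2$ for the descendant valuations are chosen exactly so that this gap equals $\epsilon/2<\epsilon$ irrespective of $\epsilon$. Everything else — the computation of $e$, of $v_{I(P)}$, and the observation that $I$ discards the direct opinion on $s$ entirely — follows directly from the definitions.
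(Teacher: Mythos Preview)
Your proposal is correct and follows essentially the same approach as the paper: for each property you build a two-statement DRF $s\to a$ with a single-agent profile whose valuations are chosen small enough (of order $\epsilon$) so that the coherence gap at $s$ stays strictly below $\epsilon$, while the sign or ordering at the level of $v_{I(P)}(s)=e(s)=v(a)$ breaks the property in question. The paper does exactly this, using parameters $x,y$ with $0<x<y<\epsilon$ and $v(a)=y-\epsilon$ for Sided Unanimity, and a perturbation $v'(s)=x+\omega/3$, $v'(a)=y-\omega/3$ for Monotonicity; your fixed choices $\epsilon/4$ and $\pm\epsilon/2$ are simply concrete instances of the same construction.

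One point worth noting: your care in arguing Narrow Unanimity directly (rather than appealing to an implication from the failure of Sided Unanimity) is well placed; the paper's phrasing ``and therefore Narrow Unanimity'' is really shorthand for the observation that the very same single-agent counterexample also violates NU, not a logical deduction $\neg\text{SU}\Rightarrow\neg\text{NU}$. Your explicit remark about Proposition~\ref{prop:NU-SU} requiring Monotonicity is a nice clarification.
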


\begin{proof}

\begin{enumerate}[(i)]
     
    \item Sided unanimity and narrow unanimity. To prove that these properties not hold, it suffices to find a $DRF$ and an opinion profile for which there is no Sided unanimity. 
    Consider the DRF and one-opinion profile depicted in figure \ref{fig:ICohCountSU} such that $\epsilon\in(0,1)$ and $x,y$ such that $0<x<y<\epsilon$. The assumptions of Sided unanimity are fulfilled at $s$. We check that the opinion in the profile is $\epsilon$-coherent because 
$|v(s)-e(s)|=|x-y+\epsilon|<\epsilon$. However, $v_{I(P)}(s)=y-\epsilon<0$, instead of positive, and hence Sided Unanimity is not satisfied. As in previous proofs, as the counterexample is a single-opinion, Narrow unanimity does not hold either.
    
    
    
    \begin{figure}[H]
\centering
\begin{tikzpicture}[node distance=0.5cm and 0.5cm,>=latex,auto, every place/.style={draw}]

     \node [place,thick] (s) {$s$};
      \node [place,thick] (a) [right= 2cm of s]{$a$};
	
    \node[right=0.4cm of a] at (a) {$v(a)=y-\epsilon$};
    \node[left=0.4cm of s] at (s) {$v(s)=x$};
    \path[->] (s) edge node {$w(r)=1$} (a); 
\end{tikzpicture}
\caption{}
\label{fig:ICohCountSU}
\end{figure}


\item Monotonicity.
To prove that it does not hold, it suffices to find a $DRF$ with opinion profiles satisfying the Monotonicity assumptions are satisfied, and yet Monotonicity is not. In fact, we will create a generic counterexample for any $\epsilon$-coherent domain.
Consider the $DRF=\langle \sen,\rel,\tau\rangle$ depicted in figure \ref{fig:ICohCountM1} with $\sen=\{s,a\}$ and $\rel =\{r\}$. Also in the figure,
let $P=((v,w)))$ be an opinion profile of one single agent such that $v(s)=x$, $w(r)=1$, and $v(a)=y>$, where $-1<y<x<1$ and $0<x-y<\epsilon$.



\begin{figure}[H]
\centering
\begin{tikzpicture}[node distance=0.5cm and 0.5cm,>=latex,auto, every place/.style={draw}]

     \node [place,thick] (s) {$s$};
      \node [place,thick] (a) [right= 2cm of s]{$a$};
	
    \node[right=0.4cm of a] at (a) {$v(a)=y$};
    \node[left=0.4cm of s] at (s) {$v(s)=x$};
    \path[->] (s) edge node {$w(r)=1$} (a); 
\end{tikzpicture}

\caption{}
\label{fig:ICohCountM1}
\end{figure}

Since $v(s)-e_O(s)=v(s)-v(a)$, then the profile $P$ is clearly $\epsilon$-coherent, and hence $P\in\mathbb{C}_\epsilon(DRF)$. We compute the collective opinion using $I$ at $s$ as: $v_{I(P)}(s)=e_O(s)=v(a)=y$.

Now, consider another profile $P'=(O=(v',w))$ over the same DRF, shown in figure \ref{fig:ICohCountM2}, such that $v'(s)=x+\frac \omega 3$ and $v'(s)=y-\frac \omega 3$, where   $\omega>0$, such that $x-y+\omega\leq\epsilon$, $x+\frac \omega 3 \leq 1$ and $y-\frac \omega 3 \geq -1$.Clearly $v'(s)>v(s)$ and $P$ is also $\epsilon$-coherent, i.e.:
$$v(s)-e_{O'}(s)=(x+\frac \omega 3)-(y-\frac \omega 3)<\epsilon.$$
Nonetheless, $v_{I(P')}(s)=e_{O'}(s)= y-\frac \omega 3<y$, which means that $v_{I(P)}(s)\not\leq v_{I(P')}(s)$, and hence this example cannot satisfy Monotonicity for any  $\epsilon$.

\begin{figure}[H]
\centering
\begin{tikzpicture}[node distance=0.5cm and 0.5cm,>=latex,auto, every place/.style={draw}]

     \node [place,thick] (s) {$s$};
      \node [place,thick] (a) [right= 2cm of s]{$a$};
	
    \node[right=0.4cm of a] at (a) {$v'(a)=y-\frac \omega 3$};
    \node[left=0.4cm of s] at (s) {$v'(s)=x+\frac \omega 3$};
    \path[->] (s) edge node {$w(r)=1$} (a); 
\end{tikzpicture}
\caption{}
\label{fig:ICohCountM2}
\end{figure}


\end{enumerate}
\end{proof}

\begin{prop}\label{prop:RCoh}
For any $\epsilon\in(0,1)$ and considering the domain to be $\epsilon$-coherent, $R$ does not fulfil the following properties:
\begin{enumerate}[(i)]
    \item Weak Unanimity, neither Sided nor Narrow Unanimity;
    \item Endorsed Unanimity;
    \item Familiar Monotonicity, and therefore Monotonicity.
\end{enumerate}
\end{prop}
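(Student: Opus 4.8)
The plan is to refute each listed property by exhibiting an $\epsilon$-coherent opinion profile --- in fact a single-agent profile --- over a suitable DRF on which $R$ fails. The obstacle, relative to the unconstrained analysis in Proposition \ref{prop:R}, is that the short two- and three-statement chains used there (e.g. Figures \ref{fig:RCountWU} and \ref{fig:RCountM1}) are not $\epsilon$-coherent: on a chain $s\to a$ with $a$ a leaf, $\epsilon$-coherence forces $|v(s)-v(a)|<\epsilon$, and since $v_{R(P)}(s)=v(a)$ on such a chain, the recursive value can never differ from $v(s)$ enough to flip its sign. The remedy is to spread the required total variation over a long chain, so that each local step stays below $\epsilon$ while the recursive aggregation --- which simply propagates the leaf value up the chain --- still reaches the opposite extreme at the root.

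Concretely, fix $\epsilon\in(0,1)$ and an integer $k>1+2/\epsilon$. Let the DRF consist of statements $s_0,s_1,\dots,s_k$ with relationships $r_j=(\{s_{j-1}\},s_j,1)$ for $1\le j\le k$ and target $T=\{s_0\}$ (if one insists on condition (v) of Definition \ref{def:DRF}, one may add an auxiliary target $\tau'$ with a relationship to $s_1$ and value $v(\tau')=v(s_1)$, which changes none of the computations at $s_0,\dots,s_k$). For part (i), take the single-agent profile $P^\star=(O)$ with $w(r_j)=1$ for all $j$, with $v(s_0)=v(s_1)=1$, and with $v(s_j)=1-\tfrac{2(j-1)}{k-1}$ for $1\le j\le k$. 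Then $P^\star$ is $\epsilon$-coherent: at $s_0$ the estimate is $v(s_1)=1$, giving difference $0$; at $s_j$ with $1\le j\le k-1$ the estimate is $v(s_{j+1})$, giving difference $\tfrac{2}{k-1}<\epsilon$; and $s_k$ is a leaf, so its difference is $0$. Since $R^+(s_k)=\emptyset$ we have $v_{R(P^\star)}(s_k)=v(s_k)=-1$, and because each $s_j$ with $j<k$ has the single descendant $s_{j+1}$ through a weight-$1$ relationship, a downward induction gives $v_{R(P^\star)}(s_j)=v_{R(P^\star)}(s_{j+1})=-1$ for all $j$; in particular $v_{R(P^\star)}(s_0)=-1$. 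This one example disposes of both (i) and (ii): $v_i(s_0)=1$ for every agent while $v_{R(P^\star)}(s_0)=-1$, so Narrow, Sided and Weak Unanimity all fail (Sided and Weak via Proposition \ref{prop:unanimity}); and $s_1\in D(s_0)$ with $v(s_1)=1$ is full positive support, yet $v_{R(P^\star)}(s_0)=-1\not>0$, so Endorsed Unanimity fails.

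For part (iii), reuse the same DRF, set $P'=P^\star$, and let $P=(O)$ be the constant profile with $v(s_j)=1$ and $w(r_j)=1$ for all $j$; it is trivially $\epsilon$-coherent, and the same propagation argument gives $v_{R(P)}(s_j)=1$ for all $j$. Comparing $P$ and $P'$ at $s_0$: the single agent has $v(s_0)=1\le 1=v'(s_0)$, the unique relationship $r_1\in R^+(s_0)$ has $w(r_1)=1=w'(r_1)$, and the unique descendant $s_1$ has $v(s_1)=1=v'(s_1)$, so the hypotheses of Familiar Monotonicity hold; nonetheless $v_{R(P)}(s_0)=1>-1=v_{R(P')}(s_0)$, so Familiar Monotonicity fails, and hence Monotonicity fails by Lemma \ref{lem:monotonicity}.

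The only delicate step is the bookkeeping showing the long chain is genuinely $\epsilon$-coherent while the recursive value at the root still reaches $\pm1$; the rest is the routine verification above. As a sanity check one may note the general estimate $|v(s)-v_{R(P)}(s)|<d\,\epsilon$ for a single $\epsilon$-coherent agent, where $d$ is the depth of $s$ above the leaves, which both explains why a chain of length exceeding $2/\epsilon$ is needed and confirms that nothing shorter can work.
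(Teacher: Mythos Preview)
Your proof is correct and takes essentially the same approach as the paper: build a long chain with step size below $\epsilon$ so that the profile is $\epsilon$-coherent while the leaf value (which $R$ propagates unchanged to the root) has the opposite sign. Your version is somewhat more economical than the paper's --- by arranging $v(s_0)=v(s_1)=1$ you handle Weak Unanimity and Endorsed Unanimity with a single construction (the paper adds an extra node for EU), and you reuse the same long chain for Familiar Monotonicity (the paper uses a separate three-node chain with a small perturbation at the leaf) --- but these are cosmetic differences, not a different method.
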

\begin{proof}
\begin{enumerate}[(i)]
 \item Weak Unanimity, Sided unanimity, Narrow unanimity. To prove that neither of these properties hold, it suffices to build a DRF and opinion profile to show that Weak Unanimity does not hold. This is sufficient because Weak unanimity is a weaker case of Sided and Narrow unanimity, proposition \ref{prop:unanimity} tells us that Sided and Narrow unanimity will not hold if Weak Unanimity does not. 
 Consider the DRF and opinion profile $P=((v,w))$ in picture \ref{fig:RCohCountWU} such that $w(r)=1$ for any relationship $r\in \rel$, $\epsilon\in(0,1)$ and $\delta\in (0,\epsilon)$ and $m\in\N$ so that $m\delta\geq 1>(m-1)\delta$.


\begin{figure}[H]
\centering
\begin{tikzpicture}[node distance=0.5cm and 0.5cm,>=latex,auto, every place/.style={draw}]

     \node [place,thick] (s) {$s$};
      \node [place,thick] (a1) [right= 2cm of s]{$a_1$};
      \node [thick] (dots) [right= 1cm of a1]{$\cdots$};
      \node [place,thick] (am1) [right= 1cm of dots]{$a_{m-1}$};
      \node [place,thick] (am) [right= 2cm of am1]{$a_m$};
	
    \node[below=0.4cm of a1] at (a1) {$v(a_1)=1-\delta$};
    \node[above=0.4cm of am1] at (am1) {$v(a_{m-1})=1-(m-1)\delta$};
     \node[below=0.4cm of am] at (am) {$v(a_m)=1-m\delta$};
    \node[above=0.4cm of s] at (s) {$v(s)=1$};
    \path[->] (s) edge node {} (a1); 
    \path[->] (a1) edge node {} (dots); 
    \path[->] (dots) edge node {} (am1); 
    \path[->] (am1) edge node {} (am); 

\end{tikzpicture}
\caption{}
\label{fig:RCohCountWU}
\end{figure}

Clearly, the outcome of the recursive function at each sentence is obtained from the value of the recursive function at the previous sentence, i.e.:
$$v_{R(P)}(a_m)=v_{R(P)}(a_{m-1})=...=v_{R(P)}(a_1)=v_{R(P)}(s),$$ which actually is the value $v(a_m)=1-m\delta\leq0$. So, this is an opinion profile $\epsilon$-coherent fulfilling the assumptions of Weak unanimity at sentence $s$ because $v(s)=1$. However, the value of the recursive function at $s$ is negative. Therefore, $R$ does not fulfil Weak unanimity.

\item Endorsed Unanimity. We build a DRF and opinion profile to show that Endorsed unanimity does not hold from the example in the previous proof. Figure \ref{fig:RCohCountEU} shows our example, which extends the one in figure \ref{fig:RCohCountWU} with an additional sentence $a$. Since $v(a_i)-v(a_{i-1}=\delta$, likewise in the proof above, we have an $\epsilon$-coherent opinion profile. Since $v(s)=1$ the assumption for Endorsed unanimity at $a$ is satisfied, but since $v(a)=1-m\delta\leq0$, Endorsed unanimity does not hold.
    

\begin{figure}[H]
\centering
\begin{tikzpicture}[node distance=0.5cm and 0.5cm,>=latex,auto, every place/.style={draw}]

     \node [place,thick] (s) {$s$};
     \node[place,thick] (a) [left=2cm of s] {$a$};
      \node [place,thick] (a1) [right= 2cm of s]{$a_1$};
      \node [thick] (dots) [right= 1cm of a1]{$\cdots$};
      \node [place,thick] (am1) [right= 1cm of dots]{$a_{m-1}$};
      \node [place,thick] (am) [right= 2cm of am1]{$a_m$};
	
    \node[below=0.4cm of a1] at (a1) {$v(a_1)=1-\delta$};
    \node[above=0.4cm of am1] at (am1) {$v(a_{m-1})=1-(m-1)\delta$};
     \node[below=0.4cm of am] at (am) {$v(a_m)=1-m\delta$};
    \node[above=0.4cm of s] at (s) {$v(s)=1$};
     \node[below=0.4cm of s] at (a) {$v(a)=x$};
    \path[->] (s) edge node {} (a1); 
    \path[->] (a) edge node {} (s); 
    \path[->] (a1) edge node {} (dots); 
    \path[->] (dots) edge node {} (am1); 
    \path[->] (am1) edge node {} (am); 
\end{tikzpicture}
\caption{}
\label{fig:RCohCountEU}
\end{figure}


\item Familiar Monotonicity and Monotonicity. Consider the opinion profiles $P$ and $P'$ over the same DRF depicted in figures \ref{fig:RCohCountFM1} and \ref{fig:RCohCountFM2} respectively.
Since $v(s)=v(a)=v(b) = 1$, $P$ is $\epsilon$-coherent. By setting $0<x<\epsilon$, we also obtain that $P'$ is $\epsilon$-coherent. Therefore, both $P$
and $P'$ are $\epsilon$-coherent and the assumptions for familiar monotonicity hold at $s$. However, since $1=v_{R(P)}(s)>v_{R(P')}(s)=1-x$, Familiar monotonicity cannot hold. By lemma \ref{lem:monotonicity} Monotonicity does not hold either.

\begin{figure}[H]
\centering
\begin{tikzpicture}[node distance=0.5cm and 0.5cm,>=latex,auto, every place/.style={draw}]

     \node [place,thick] (s) {$s$};
      \node [place,thick] (a) [right= 2cm of s]{$a$};
       \node [place,thick] (b) [right= 2cm of a]{$b$};
	
    \node[above=0.4cm of a] at (a) {$v(a)=1$};
    \node[left=0.4cm of s] at (s) {$v(s)=1$};
        \node[right=0.4cm of b] at (b) {$v(b)=1$};
    \path[->] (s) edge node {$w(r_1)=1$} (a); 
    \path[->] (a) edge node {$w(r_2)=1$} (b); 
\end{tikzpicture}
\caption{}
\label{fig:RCohCountFM1}
\end{figure}
\begin{figure}[H]
\centering
\begin{tikzpicture}[node distance=0.5cm and 0.5cm,>=latex,auto, every place/.style={draw}]

     \node [place,thick] (s) {$s$};
      \node [place,thick] (a) [right= 2cm of s]{$a$};
       \node [place,thick] (b) [right= 2cm of a]{$b$};
	
    \node[above=0.4cm of a] at (a) {$v'(a)=1$};
    \node[left=0.4cm of s] at (s) {$v'(s)=1$};
        \node[right=0.4cm of b] at (b) {$v'(b)=1-x$};
    \path[->] (s) edge node {$w'(r_1)=1$} (a); 
    \path[->] (a) edge node {$w'(r_2)=1$} (b); 
\end{tikzpicture}
\caption{}
\label{fig:RCohCountFM2}
\end{figure}


   \comment{ Let's consider the opinion profiles $P$ and $P'$, depicted in figures \ref{fig:RCountFM1} and \ref{fig:RCountFM2} respectively, taking $0<\delta,\gamma<1$, $x<1$ and $z>-1$.
    
    As can be seen, these two profiles meet the assumptions for familiar monotonicity, i.e., $v'(s)\geq v(s)$ and for the descendant and its relationships the values hold $v(a)=v'(a)$ and $w(r_1)=w'(r_1)$.
    
    \begin{figure}[H]
\centering
\begin{tikzpicture}[node distance=0.5cm and 0.5cm,>=latex,auto, every place/.style={draw}]

     \node [place,thick] (s) {$s$};
      \node [place,thick] (a) [right= 2cm of s]{$a$};
      \node [place,thick] (b) [right= 2cm of a]{$b$};
	
    \node[right=0.4cm of a] at (b) {$v(b)=z$};
    \node[left=0.4cm of s] at (s) {$v(s)=x$};
    \node[below=0.4cm of s] at (a) {$v(a)=y$};
    \path[->] (s) edge node {$w(r_1)=1$} (a); 
    \path[->] (a) edge node {$w(r_2)=1$} (b); 
\end{tikzpicture}
\caption{}
\label{fig:RCountFM1}
\end{figure}
\begin{figure}[H]
\centering
\begin{tikzpicture}[node distance=0.5cm and 0.5cm,>=latex,auto, every place/.style={draw}]

      \node [place,thick] (s) {$s$};
      \node [place,thick] (a) [right= 2cm of s]{$a$};
      \node [place,thick] (b) [right= 2cm of a]{$b$};
	
    \node[right=0.4cm of a] at (b) {$v'(b)=z-\gamma$};
    \node[left=0.4cm of s] at (s) {$v'(s)=x+\delta$};
    \node[below=0.4cm of s] at (a) {$v'(a)=y$};
    \path[->] (s) edge node {$w'(r_1)=1$} (a); 
    \path[->] (a) edge node {$w'(r_2)=1$} (b); 
\end{tikzpicture}
\caption{}
\label{fig:RCountFM2}
\end{figure}

But $v_{R(P)}(s)=z<v_{R(P')}(s)=z-\gamma$, proving that FM is not fulfilled.}
\end{enumerate}
\end{proof}

\begin{prop}
For any $\epsilon\in(0,1)$ and considering the domain to be $\epsilon$-coherent, then the family $\{B_\alpha\}_{\alpha\in(0,1)}$ satisfies:
\begin{enumerate}[(i)]
    \item Weak Unanimity; and
    \item Endorsed Unanimity.
\end{enumerate}
\end{prop}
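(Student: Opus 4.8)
The plan is to derive both properties for the family $\{B_\alpha\}_{\alpha\in(0,1)}$ over an $\epsilon$-coherent domain directly from the behaviour of the two parent functions $D$ and $I$ under the coherence assumption, together with the combination lemmas already established for linear combinations. Recall that $v_{B_\alpha(P)} = \alpha\, v_{D(P)} + (1-\alpha)\, v_{I(P)}$ and that the acceptance part is aggregated identically in all cases. So the whole argument reduces to: show $D$ satisfies the property over the coherent domain, show $I$ satisfies it over the coherent domain, then invoke the relevant lemma to transfer the property to any convex combination.

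For \textbf{Weak Unanimity}, the first step is to note that $D$ satisfies Weak Unanimity already in the unconstrained case by Proposition~\ref{prop:D}(iv) (it follows from Narrow Unanimity), and hence \emph{a fortiori} over the smaller $\epsilon$-coherent domain. The second step is to invoke Proposition~\ref{prop:ICoh}, which establishes that $I$ satisfies Weak Unanimity over an $\epsilon$-coherent domain: the point there is that unanimous direct opinion $v_i(s)=1$ forces, via $\epsilon$-coherence, $e_i(s) > 1-\epsilon > 0$ for every $i$, so the average $v_{I(P)}(s)$ is strictly positive (and symmetrically for $-1$). The final step is Lemma~\ref{prop:FGWeak}: since both $D$ and $I$ satisfy Weak Unanimity on the $\epsilon$-coherent domain, so does $H = \alpha D + (1-\alpha) I = B_\alpha$ for every $\alpha\in(0,1)$.

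For \textbf{Endorsed Unanimity}, the structure is the mirror image. First, $I$ satisfies Endorsed Unanimity already in the unconstrained case by Proposition~\ref{prop:I}(iii), hence also over the coherent domain. Second, $D$ satisfies Endorsed Unanimity over an $\epsilon$-coherent domain by Proposition~\ref{prop:DCoh}: when $s$ has full positive support ($v_i(s_r)=1$ for all $i$ and all descendants $s_r$), the estimation $e_i(s)$ equals $1$ for every $i$, so $\epsilon$-coherence forces $v_i(s) > 1-\epsilon > 0$, and then Sided Unanimity of $D$ (Proposition~\ref{prop:D}) gives $v_{D(P)}(s)>0$; symmetrically for full negative support. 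The final step is Lemma~\ref{prop:FGEnd}, which lifts Endorsed Unanimity from $D$ and $I$ to their convex combination $B_\alpha$, for every $\alpha\in(0,1)$. Note that here, unlike in the unconstrained scenario where $B_\alpha$ needed $\alpha<1/2$ (resp.\ $\alpha>1/2$) to satisfy Endorsed (resp.\ Weak) unanimity, the coherence assumption makes \emph{both} parent functions satisfy \emph{both} properties, so the restriction on $\alpha$ disappears.

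There is no real obstacle here beyond bookkeeping: the substantive work has already been done in Propositions~\ref{prop:ICoh} and~\ref{prop:DCoh} and in the combination lemmas. The only thing to be careful about is citing the right lemma for the right property (Lemma~\ref{prop:FGWeak} for Weak Unanimity, Lemma~\ref{prop:FGEnd} for Endorsed Unanimity) and making explicit that a property satisfied on the unconstrained domain is automatically satisfied on the restricted $\epsilon$-coherent subdomain, so that both hypotheses of each combination lemma are genuinely met over the common domain $\mathbb{C}_\epsilon(DRF)^n$. Hence the proof is essentially a three-line citation argument for each of the two items.
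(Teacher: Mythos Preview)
Your proposal is correct and follows essentially the same approach as the paper's own proof, which simply cites Propositions~\ref{prop:D}, \ref{prop:ICoh} and Lemma~\ref{prop:FGWeak} for Weak Unanimity, and Propositions~\ref{prop:I}, \ref{prop:DCoh} and Lemma~\ref{prop:FGEnd} for Endorsed Unanimity. Your version is more verbose, spelling out why the unconstrained results carry over to the coherent subdomain and why the restriction on $\alpha$ disappears, but the logical structure is identical.
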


\begin{proof}
   \begin{enumerate}[(i)]
   \item Weak Unanimity follows from propositions \ref{prop:D}, \ref{prop:ICoh} and \ref{prop:FGWeak}.
   \item Endorsed Unanimity follows from propositions  \ref{prop:I}, \ref{prop:DCoh} and \ref{prop:FGEnd}.
    \end{enumerate}

\end{proof}

\begin{prop}\label{prop:BalphaCohCountCC}
For any $\delta\in(0,1)$, $B_\alpha$ over a $\delta$-coherent domain still does not satisfy $\epsilon$-Collective coherence for any $\epsilon\in(0,1)$.
\end{prop}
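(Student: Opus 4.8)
The plan is to produce a counterexample, exactly in the spirit of Propositions~\ref{prop:DCohCountCC} and~\ref{prop:ICohCountCC}: a DRF together with a $\delta$-coherent opinion profile $P$ whose collective opinion $B_\alpha(P)$ is \emph{not} $\epsilon$-coherent, arranged so that the witnessing discrepancy equals $1$; then the single example works simultaneously for every $\delta\in(0,1)$, every $\epsilon\in(0,1)$ and every $\alpha\in(0,1)$. The mechanism to exploit is precisely the one highlighted in Section~\ref{subsec:sameAcceptanceResults}: when there is no consensus on acceptance degrees, an agent may set $w_i(r)=0$ on a relationship $r$ leaving a statement $s$; by the definition of the estimation function that agent's estimate of $s$ then equals $v_i(s)$, so the agent can be perfectly coherent while its valuation of $s$ is unrelated to its valuation of the descendant of $r$. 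The aggregated acceptance $w_{B_\alpha(P)}(r)$ is nevertheless strictly positive (by the standing assumption that some other agent values $r$ non-zero), so the collective estimate of $s$ \emph{is} forced to take the aggregated valuation of the descendant into account, and a mismatch appears.

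Concretely I would take the two-statement DRF with $\sen=\{s,a\}$ and a single relationship $r$ from $s$ to $a$, and the two-agent profile $P=(O_1,O_2)$ given by $v_1(s)=v_1(a)=1$, $w_1(r)=1$, and $v_2(s)=1$, $v_2(a)=-1$, $w_2(r)=0$. The first step is to check $\delta$-coherence of $P$ for every $\delta\in(0,1)$: for $O_1$ the estimate of $s$ is $v_1(a)=1=v_1(s)$ and $a$ has no descendants; for $O_2$ the acceptances leaving $s$ sum to $w_2(r)=0$, so the estimate of $s$ equals $v_2(s)$ by definition, and again $a$ has no descendants. Hence every individual coherence gap is exactly $0$, so $P\in\mathbb{C}_\delta(DRF)^n$ for all $\delta\in(0,1)$. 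I would also note that the modelling assumption holds, since $w_1(r)=1\neq 0$.

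The second step is the routine computation of $B_\alpha(P)$. One gets $v_{D(P)}(s)=v_{I(P)}(s)=1$, hence $v_{B_\alpha(P)}(s)=1$; $v_{D(P)}(a)=v_{I(P)}(a)=0$, hence $v_{B_\alpha(P)}(a)=0$; and $w_{B_\alpha(P)}(r)=\frac{1}{2}\neq 0$. Therefore the estimate of $s$ for the collective opinion is $e_{B_\alpha(P)}(s)=v_{B_\alpha(P)}(a)=0$, so $|v_{B_\alpha(P)}(s)-e_{B_\alpha(P)}(s)|=1\geq\epsilon$ for every $\epsilon\in(0,1)$. Thus $B_\alpha(P)\notin\mathbb{C}_\epsilon(DRF)$ although $P$ lies in the $\delta$-coherent domain, which proves the claim for every $\alpha\in(0,1)$.

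The only delicate point — and the place to be careful — is guaranteeing $\delta$-coherence \emph{for every} $\delta$, not merely for $\delta$ close to $1$: this is exactly why the example uses the zero-acceptance device, which forces the individual coherence gaps down to $0$ while leaving a collective gap of size $1$. Indeed, for a single-agent $\delta$-coherent profile one can check that $|v_{B_\alpha(P)}(s)-e_{B_\alpha(P)}(s)|<\delta<1$, so a single-agent counterexample could never refute $\epsilon$-CC for $\epsilon$ close to $1$; two disagreeing agents are genuinely needed. If one prefers the three-statement layout of Proposition~\ref{prop:ICohCountCC}, the same idea transfers verbatim by pushing the discrepancy one level deeper; the two-statement version above is just the most economical.
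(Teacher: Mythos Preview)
Your proof is correct and rests on exactly the same mechanism as the paper's: exploit a zero acceptance degree so that each agent is perfectly $\delta$-coherent (for every $\delta$) while the aggregated acceptance is still positive, forcing a collective coherence gap of size $1$. The paper differs only in presentation: it first derives the general decomposition
\[
v_{B_\alpha(P)}(s)-e_{B_\alpha(P)}(s)=\alpha\big(v_{D(P)}(s)-e_{D(P)}(s)\big)+(1-\alpha)\big(v_{I(P)}(s)-e_{I(P)}(s)\big),
\]
and then instantiates it with the three-statement, two-agent profile of Proposition~\ref{prop:ICohCountCC} to get a gap of exactly $1$, whereas you compute $B_\alpha(P)$ directly on a more economical two-statement example and obtain $v_{B_\alpha(P)}(s)=1$, $e_{B_\alpha(P)}(s)=0$ independently of $\alpha$. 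Your route is slightly shorter; the paper's decomposition has the minor advantage of making the dependence on $D$ and $I$ explicit. Your closing remark that a single-agent $\delta$-coherent profile cannot refute $\epsilon$-CC (since then the gap is bounded by $\delta$) is a nice observation that the paper does not make explicit.
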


\begin{proof}

First, we show that the $\epsilon$-coherence condition for $B_\alpha$ depends on the functions employed in its definition, namely on $D$ and $I$:

\begin{align*}
|v_{B_\alpha(P)}(s) -e_{B_\alpha(P)}(s)|&= \Big|v_{B_\alpha(P)}(s)- \frac{\sum_{r\in R^+(s)} \big(\alpha v_{D(P)}(_rs)+(1-\alpha) v_{I(P)}(s_r)\big)w_{D(P)}(r) }{\sum_{r\in(R^+(s)} w_{D(P)}(r) }\Big|\\\\
&= \Big|\big(\alpha (v_{D(P)}(s)+(1-\alpha)(v_{I(P)}(s)\big)-\big(\alpha e_{D(P)}(s))+(1-\alpha) e_{I(P)}(s))\big)\Big|\\
&= \Big|\alpha \big(v_{D(P)}(s)-e_{D(P)}(s)\big)+(1-\alpha) \big(v_{I(P)}(s)-e_{I(P)}(s)\big)\Big|
\end{align*}

Thus, since $D$ and $I$ do not satisfy $\epsilon$-collective coherence for any $\delta$-coherent profile (by propositions \ref{prop:DCohCountCC} and \ref{prop:ICohCountCC} respectively),  neither will $B_\alpha$ satisfy the property for any $\alpha\in(0,1)$. 
Indeed, consider for instance the DRF and $\delta$-coherent opinion profile $P$, with any $\delta\in(0,1)$, in figure \ref{fig:ICohCountCC} as employed in proposition \ref{prop:ICohCountCC}. If we compute $\epsilon$-collective coherence for $B_\alpha$ at sentence $s$ we obtain that:


$$|v_{B_\alpha(P)}(s) -e_{B_\alpha(P)}(s)|=\Big|\alpha (-1-0)+(1-\alpha) (-\frac 1 2 - \frac 1 2)\Big|=|-1|=1>\epsilon$$
 
 for any $\epsilon\in(0,1)$. So, $B_\alpha$ does not fulfill $\epsilon$-coherence for any $\alpha\in(0,1)$.  

\end{proof}

\begin{prop}\label{prop:BalphaCoh}
For any $\epsilon\in(0,1)$ and considering the domain to be $\epsilon$-coherent, $B_\alpha$ does not fulfil the following properties for any $\alpha\in(0,1)$:
\begin{enumerate}[(i)]
    \item Sided Unanimity and Narrow Unanimity;
    \item Monotonicity; and
    \item Independence.
\end{enumerate}
\end{prop}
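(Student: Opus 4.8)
The plan is to prove each of the three negative claims about the $\alpha$-balanced family over an $\epsilon$-coherent domain by exhibiting explicit small DRFs with single-agent opinion profiles, mirroring the strategy already used in Proposition~\ref{prop:Balpha} for the unconstrained case, but now carefully choosing parameters so that the single opinion is genuinely $\epsilon$-coherent. The key reusable fact is that on a two-statement DRF $s \to a$ with $w(r)=1$, the estimation of a one-opinion profile at $s$ is exactly $v(a)$, so $\epsilon$-coherence at $s$ reduces to $|v(s)-v(a)|<\epsilon$, while $v_{D(P)}(s)=v(s)$, $v_{I(P)}(s)=v(a)$, and hence $v_{B_\alpha(P)}(s)=\alpha v(s)+(1-\alpha)v(a)$. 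This makes all three computations transparent.

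For part (i), Sided and Narrow Unanimity: take the DRF $s\to a$ with $w(r)=1$, and set $v(s)=x$, $v(a)=x-\delta$ for some $\delta$ with $0<\delta<\epsilon$ and $x$ small positive, say $0<x<\delta$ (so $v(a)=x-\delta<0$). Then $|v(s)-v(a)|=\delta<\epsilon$, so the profile is $\epsilon$-coherent; the Sided Unanimity hypothesis holds at $s$ since $v(s)=x>0$; but $v_{B_\alpha(P)}(s)=\alpha x+(1-\alpha)(x-\delta)=x-(1-\alpha)\delta$. Since $\alpha\in(0,1)$, choosing $x<(1-\alpha)\delta$ (which is compatible with $0<x<\delta$) forces $v_{B_\alpha(P)}(s)<0$, so Sided Unanimity fails; being a single-opinion counterexample, Narrow Unanimity fails too. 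Note the parameters depend on $\alpha$, but that is fine since the claim is "for any $\alpha\in(0,1)$". I would flag that the parameter ranges are nonempty for every fixed $\alpha\in(0,1)$ and every $\epsilon\in(0,1)$.

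For part (ii), Monotonicity: this is essentially the $\epsilon$-coherent analogue of Proposition~\ref{prop:ICohCount}(ii), and in fact I would just lift that construction. Use two single-opinion profiles $P,P'$ on $s\to a$: in $P$, $v(s)=x$, $v(a)=y$ with $-1<y<x<1$ and $0<x-y<\epsilon$; in $P'$, $v'(s)=x+\omega/3$, $v'(a)=y-\omega/3$ with $\omega>0$ chosen small enough that $x-y+\omega\le\epsilon$, $x+\omega/3\le 1$, $y-\omega/3\ge -1$. Both profiles are $\epsilon$-coherent and $v'(s)>v(s)$, so the Monotonicity hypothesis at $s$ holds, but $v_{B_\alpha(P)}(s)=\alpha x+(1-\alpha)y$ while $v_{B_\alpha(P')}(s)=\alpha(x+\omega/3)+(1-\alpha)(y-\omega/3)=v_{B_\alpha(P)}(s)+\omega(2\alpha-1)/3$. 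For $\alpha<1/2$ this is strictly less than $v_{B_\alpha(P)}(s)$, breaking Monotonicity. For $\alpha\ge 1/2$ I would instead swap the roles (decrease the descendant more in $P$), i.e. take $v(s)=x$, $v(a)=y$ and $v'(s)=x+\omega/3$, $v'(a)=y+\omega/3$ reversed appropriately — more cleanly, use $v'(s)=x+\omega$, $v'(a)=y$ unchanged is not coherent, so the right move is: decrease $v(a)$ in the larger profile. Concretely, $P$: $v(s)=x, v(a)=x$ ($\epsilon$-coherent trivially); $P'$: $v'(s)=x+\delta, v'(a)=x-\delta'$ with $\delta,\delta'>0$ and $\delta+\delta'<\epsilon$; then $v_{B_\alpha}$ at $s$ moves from $x$ to $\alpha(x+\delta)+(1-\alpha)(x-\delta')=x+\alpha\delta-(1-\alpha)\delta'$, which is $<x$ once $\delta'$ is large enough relative to $\delta$. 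The main obstacle here is organising one clean construction that works uniformly for all $\alpha\in(0,1)$; I expect the cleanest route is the last variant, since choosing $\delta' > \frac{\alpha}{1-\alpha}\delta$ always produces a decrease regardless of $\alpha$.

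For part (iii), Independence: this is immediate and needs no $\epsilon$-coherent subtlety beyond checking that the witnesses are coherent. I would take the two profiles $P,P'$ on $s\to a$ with $w(r)=1$, $v(s)=v'(s)=x$ (satisfying the Independence hypothesis at $s$), but $v(a)=x$ and $v'(a)=x-\delta$ with $0<\delta<\epsilon$; both are $\epsilon$-coherent, yet $v_{B_\alpha(P)}(s)=x\ne x-(1-\alpha)\delta=v_{B_\alpha(P')}(s)$ since $\alpha\ne 1$. Hence Independence fails for every $\alpha\in(0,1)$; one can also simply remark that $B_\alpha$ depends on $I$, which fails Independence even on coherent domains by Proposition~\ref{prop:ICohCount}(i)'s construction suitably re-used. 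I expect part (ii) (getting one Monotonicity counterexample valid for all $\alpha$ simultaneously) to be the only place requiring a moment's care; parts (i) and (iii) are routine parameter chases.
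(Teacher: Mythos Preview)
Your plan for parts (i) and (iii) is correct and essentially matches the paper's approach: the same two-node DRF $s\to a$ with a single-agent profile, parameters tuned so that the opinion is $\epsilon$-coherent while the relevant conclusion fails. (In the paper's notation for (i), your $\delta$ is their $y$, and the condition $x<(1-\alpha)\delta$ is exactly their $x<y-\alpha y$.)

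The genuine difference is part (ii). The paper does not build an explicit Monotonicity counterexample at all: it simply proves (iii) first and then observes that, by Proposition~\ref{prop:MtoI}, Monotonicity implies Independence, so the failure of Independence already forces the failure of Monotonicity. This one-line argument sidesteps all the parameter juggling you are doing. Your final construction ($P$ with $v(s)=v(a)=x$; $P'$ with $v'(s)=x+\delta$, $v'(a)=x-\delta'$, $\delta'>\tfrac{\alpha}{1-\alpha}\delta$, $\delta+\delta'<\epsilon$) does work for every fixed $\alpha\in(0,1)$ once you pick $\delta$ small enough, but note that you must justify the simultaneous feasibility of $\delta'>\tfrac{\alpha}{1-\alpha}\delta$ and $\delta+\delta'<\epsilon$ (e.g.\ by taking $\delta$ of order $(1-\alpha)\epsilon$), and your earlier attempts in the plan do not cover all $\alpha$. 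So: your route is correct but noticeably more laborious; the paper's route via the contrapositive of Proposition~\ref{prop:MtoI} is cleaner and is the argument you should adopt, proving (iii) before (ii).
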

\begin{proof}
\begin{enumerate}[(i)]

    \item Sided Unanimity. It suffices to build a DRF and an opinion profile for which Sided unanimity does not hold for any values of $\alpha$ and $\epsilon$, where $\alpha, \epsilon\in(0,1)$. The next counterexample serves too see that Narrow unanimity fails too.
    
    
    Consider the set $A=\{(x,y)\in(0,1)\quad | \quad 0<y<\epsilon\ \mbox{ and }\ 0<x<y-\alpha y\}$. We check first, that this set is actually not empty. For $\alpha\in (0,1)$, $y-\alpha y>0$, thus $y>\alpha y> 0$. So for $y\in(0,\epsilon)$, there are $x\in(0,1)$ satisfying $x<y-\alpha y$.

    
    \begin{figure}[H]
\centering
\begin{tikzpicture}[node distance=0.5cm and 0.5cm,>=latex,auto, every place/.style={draw}]

     \node [place,thick] (s) {$s$};
      \node [place,thick] (a) [right= 2cm of s]{$a$};
	
    \node[right=0.4cm of a] at (a) {$v(a)=x-y$};
    \node[left=0.4cm of s] at (s) {$v(s)=x$};
    \path[->] (s) edge node {$w(r)=1$} (a); 
\end{tikzpicture}
\caption{}
\label{fig:BalphaCohCountSU}
\end{figure}

    Now, we consider the DRF and opinion profile depicted in figure \ref{fig:BalphaCohCountSU} where $x$ and $y$ are values from $A$, namely $(x,y)\in A$.
    Since $|v(s)-e(s)|=|x-(x-y)|=|y|<\epsilon$, the opinion profile in the figure is $\epsilon$-coherent, and satisfies the assumptions for Sided Unanimity at $s$ because $v(s)=x>0$. However,
\begin{align*}
    v_{B_\alpha(P)}(s)&=\alpha v_{D(P)}(s)+(1-\alpha)v_{I(P)}(s)\\
    &= \alpha x + (1-\alpha)(x-y)\\
    &= x-y+\alpha y< 0
\end{align*}
    since $(x,y)\in A$. So, clearly this example shows that Sided Unanimity does not hold for the family $B_\alpha$ in an $\epsilon$-coherent profile. 
    
    \setcounter{enumi}{3}
    \item Independence.
    For any $\alpha\in(0,1)$, $B_\alpha$ does not fulfil Independence due to its dependence on $I$.
    
    \setcounter{enumi}{2}
    \item Monotonicity. Straightforward from the fact that $B_\alpha$ does not fulfil Independence  for any $\alpha\in(0,1)$ and from proposition \ref{prop:MtoI}.
\end{enumerate}
\end{proof}

\begin{prop}\label{prop:BalphaCohCountCC}
For any $\delta\in(0,1)$, $R_\alpha$ over a $\delta$-coherent domain satisfies $\epsilon$-Collective coherence for $\alpha\leq \frac \epsilon 2$.
\end{prop}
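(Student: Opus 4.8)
The plan is to reuse the algebraic reduction already carried out for the family $\{R_\alpha\}_{\alpha\in(0,1)}$ in the unconstrained case, and then to use $\delta$-coherence at exactly one point: to upgrade a non-strict worst-case bound on the incoherence of the direct aggregation $D$ into a \emph{strict} one. First I would recall the identity, valid for every statement $s\in\sen$ and every opinion profile $P$, that $v_{R_\alpha(P)}(s)-e_{R_\alpha(P)}(s)=\alpha\,(v_{D(P)}(s)-e_{D(P)}(s))$. This is obtained exactly as in the proof of Collective coherence for $\{R_\alpha\}$ in \ref{Proofs:general}: since $w_{R_\alpha(P)}=w_{D(P)}=w_{R(P)}$ and $v_{R_\alpha(P)}=\alpha v_{D(P)}+(1-\alpha)v_{R(P)}$, the estimation of $R_\alpha(P)$ splits as $e_{R_\alpha(P)}(s)=\alpha\,e_{D(P)}(s)+(1-\alpha)\,e_{R(P)}(s)$, and $v_{R(P)}(s)=e_{R(P)}(s)$ because $R$ is collectively coherent; when $R^+(s)=\emptyset$ both sides vanish. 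Hence proving $\epsilon$-collective coherence of $R_\alpha$ over the $\delta$-coherent domain reduces to bounding $|v_{D(P)}(s)-e_{D(P)}(s)|$ for $\delta$-coherent profiles.

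The key step — and the place where the coherence hypothesis does its work — is the claim that for every $\delta\in(0,1)$, every $\delta$-coherent profile $P$, and every $s\in\sen$, the \emph{strict} inequality $|v_{D(P)}(s)-e_{D(P)}(s)|<2$ holds. I would prove this by contradiction: assume $|v_{D(P)}(s)-e_{D(P)}(s)|=2$ and, by symmetry, take $v_{D(P)}(s)=1$ and $e_{D(P)}(s)=-1$. Then $v_i(s)=1$ for every agent $i$, since an average of values in $[-1,1]$ equal to $1$ forces each summand to be $1$. We must therefore be in the case $R^+(s)\neq\emptyset$ with $\sum_{r\in R^+(s)}w_{D(P)}(r)>0$ (otherwise $e_{D(P)}(s)=v_{D(P)}(s)=1\neq-1$); moreover the standing assumption that every relationship is valued non-zero by some agent gives $w_{D(P)}(r)>0$ for every $r\in R^+(s)$. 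Since $e_{D(P)}(s)$ is then a convex combination with all positive weights of the values $v_{D(P)}(s_r)\in[-1,1]$ and equals $-1$, we get $v_{D(P)}(s_r)=-1$ for every $r\in R^+(s)$, hence $v_i(s_r)=-1$ for all $i$ and all such $r$. Now pick an agent $i_0$ and a relationship $r_0\in R^+(s)$ with $w_{i_0}(r_0)>0$ (one exists because $w_{D(P)}(r_0)>0$); its estimation satisfies $e_{i_0}(s)=\big(\sum_{r\in R^+(s)} w_{i_0}(r)v_{i_0}(s_r)\big)/\big(\sum_{r\in R^+(s)} w_{i_0}(r)\big)=-1$, so $|v_{i_0}(s)-e_{i_0}(s)|=2\not<\delta$, contradicting the $\delta$-coherence of $O_{i_0}$. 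This establishes $|v_{D(P)}(s)-e_{D(P)}(s)|<2$.

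Combining the two facts finishes the argument: for $\alpha\in(0,1)$ with $\alpha\le\epsilon/2$ and any $s\in\sen$,
\[
|v_{R_\alpha(P)}(s)-e_{R_\alpha(P)}(s)|=\alpha\,|v_{D(P)}(s)-e_{D(P)}(s)|<2\alpha\le\epsilon,
\]
so $R_\alpha(P)$ is $\epsilon$-coherent on every statement, i.e. $R_\alpha(P)\in\mathbb{C}_\epsilon(DRF)$ for every $P$ in the $\delta$-coherent domain, which is precisely $\epsilon$-collective coherence.

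The point worth emphasising — and the only genuine obstacle, modest though it is — is the strictness. In the unconstrained case $|v_{D(P)}(s)-e_{D(P)}(s)|$ can actually attain the value $2$ (the single-opinion profile with $v(s)=1$, one descendant $a$ with $v(a)=-1$, $w(r)=1$), which is why Table~\ref{tableGeneralResults} only gives $\alpha<\epsilon/2$; the $\delta$-coherence hypothesis is exactly what forbids that extremal profile, and that is the whole reason one can push the threshold up to the non-strict $\alpha=\epsilon/2$. Beyond this, the only care needed is in the degenerate cases $R^+(s)=\emptyset$ and $\sum_{r\in R^+(s)}w_{D(P)}(r)=0$, where the estimation function falls back to the direct value and the difference is simply $0$.
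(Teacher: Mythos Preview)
Your proof is correct and in fact cleaner than the paper's. Both arguments begin from the same algebraic identity $|v_{R_\alpha(P)}(s)-e_{R_\alpha(P)}(s)|=\alpha\,|v_{D(P)}(s)-e_{D(P)}(s)|$, but they diverge in how they handle the bound on the right-hand side. The paper proceeds by \emph{exhibiting} a family of $\delta$-coherent profiles (one agent with $w_1(r)=1$, $v_1(a)=1$, and $m-1$ agents with $w_i(r)=0$, $v_i(a)=-1$) for which $|v_{D(P)}(s)-e_{D(P)}(s)|=1+\tfrac{m-2}{m}\to 2$; from this it infers that $\alpha<\epsilon/2$ suffices, without ever actually proving that $2$ is a strict upper bound over the whole $\delta$-coherent domain. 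You instead prove the strict inequality $|v_{D(P)}(s)-e_{D(P)}(s)|<2$ directly, by a short contradiction argument that uses the standing assumption that every relationship has some non-zero acceptance together with $\delta$-coherence of a single witness agent. Your route has two advantages: it is a genuine proof of the bound (rather than an appeal to a worst-case family), and it delivers exactly the threshold stated in the proposition, $\alpha\le\epsilon/2$, whereas the paper's own argument concludes only $\alpha<\epsilon/2$. The paper's example is still useful information, since it shows the bound is tight, but as a proof of the proposition your argument is the more complete one.
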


\begin{proof}
As seen before in proposition \ref{prop:R}, the collective coherence of $R_\alpha$ entirely depends  on the collective coherence of $D$, i.e.:

$$|v_{R_\alpha(P)}(s)-e_{R_\alpha(P)}(s)|=\alpha|v_{D(P)}(s)-e_{D(P)}(s)|$$

Thus, finding the worst case scenario for $D$ will give us the condition on $\alpha$ that ensures that $R_\alpha$ satisfies $\epsilon$-collective coherence for any $\epsilon$. Next, we consider an example showing that $|v_{D(P)}(s)-e_{D(P)}(s)|$ can be as close to $2$ as wanted depending on the number of agents.

\begin{figure}[H]
    \centering
    \includegraphics[scale=1.2]{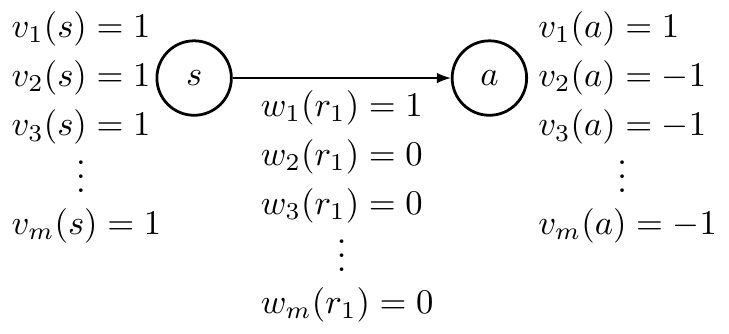}
    \caption{}
    \label{fig:RalphaCohCountCC}
\end{figure}

Let $P$ be the $\delta$-coherent opinion profile over a DRF depicted in figure \ref{fig:RalphaCohCountCC}, for any $\delta\in(0,1)$. For any $i>1$: $v_i(s)=1$, $v_i(a)=-1$ and $w_i(r_1)=0$; whereas $v_1(s)=1$, $v_1(a)=1$ and $w_1(r_1)=1$. We check the condition for collective coherence at $s$ to find that:

$$|v_{R_\alpha(P)}(s)-e_{R_\alpha(P)}(s)|=\alpha(1+\frac{m-2} m)<\epsilon$$

 if $\alpha<\frac {\epsilon}{1+\frac{m-2} m}$. Thus, by taking $\alpha<\frac \epsilon 2<\frac {\epsilon}{1+\frac{m-2} m}$ we ensure that $R_\alpha$  satisfies $\epsilon$-coherence  for the worst case. Therefore,  for any $\delta$-coherent opinion profile, $\delta\in(0,1)$, choosing $\alpha<\frac\epsilon 2$ will ensure that $R_\alpha$  satisfies $\epsilon$-collective coherence for any $\epsilon\in(0,1)$.

\end{proof}

\begin{prop}
Let $\epsilon\in(0,1)$ such that the domain of $R_\alpha$ is an $\epsilon$-coherent domain, then the family $\{R_\alpha\}_{\alpha\in(0,1)}$ satisfies:
\begin{enumerate}[(i)]
    \item Weak Unanimity for $\alpha> \frac 1 2$; and
    \item Endorsed unanimity for $\alpha> \frac 1{2-\epsilon}$.
\end{enumerate}
\end{prop}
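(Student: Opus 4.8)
The plan is to establish both items by controlling the two components of $v_{R_\alpha(P)}(s)=\alpha\, v_{D(P)}(s)+(1-\alpha)\, v_{R(P)}(s)$ separately: a sharp bound on the direct part $v_{D(P)}(s)$ coming from the unanimity/support hypothesis together with $\epsilon$-coherence, and only the trivial range bound $v_{R(P)}(s)\in[-1,1]$ on the recursive part. Nothing better than this trivial bound is available for the recursive term, since by Proposition~\ref{prop:RCoh} the function $R$ itself violates these unanimity properties even on coherent domains; this is precisely why $\alpha$ must be taken large enough for the direct component to dominate. Throughout I fix an arbitrary statement $s$ with $D(s)\neq\emptyset$ (equivalently $R^+(s)\neq\emptyset$) and spell out only the positive case, the negative one being entirely symmetric.

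For (i), Weak Unanimity, no new ingredient beyond the unconstrained analysis of the $\alpha$-recursive family is needed. If $v_i(s)=1$ for every agent $i$, then $v_{D(P)}(s)=\frac1n\sum_i v_i(s)=1$, while $v_{R(P)}(s)\geq -1$ because every value produced by the recursive function is a weighted average of values in $[-1,1]$ (the weights $w_{R(P)}(r)=\frac1n\sum_i w_i(r)$ being strictly positive by the standing assumption that each relationship receives a non-zero acceptance from at least one agent, so the denominators never vanish). Hence $v_{R_\alpha(P)}(s)=\alpha+(1-\alpha)v_{R(P)}(s)\geq 2\alpha-1>0$ whenever $\alpha>\tfrac12$, and symmetrically $v_{R_\alpha(P)}(s)\leq 1-2\alpha<0$ when $v_i(s)=-1$ for all $i$. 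Since $\mathbb{C}_\epsilon(DRF)^n\subseteq\mathbb{O}(DRF)^n$ and Weak Unanimity is universally quantified over profiles in the domain, this is just the unconstrained result restricted to a smaller domain.

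For (ii), Endorsed Unanimity, the genuinely new step is a lower bound on the direct component extracted from coherence, following the argument of Proposition~\ref{prop:DCoh}. Assume $s$ has full positive support, i.e.\ $v_i(s_r)=1$ for every agent $i$ and every descendant $s_r\in D(s)$. For each $i$ the estimation function evaluates, on its non-degenerate branch, to $e_i(s)=\frac{\sum_{r\in R^+(s)}w_i(r)v_i(s_r)}{\sum_{r\in R^+(s)}w_i(r)}=1$, so $\epsilon$-coherence of the profile forces $v_i(s)>e_i(s)-\epsilon=1-\epsilon$; averaging over agents yields $v_{D(P)}(s)>1-\epsilon$. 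Combining this with $v_{R(P)}(s)\geq -1$ gives
\[
v_{R_\alpha(P)}(s)=\alpha\, v_{D(P)}(s)+(1-\alpha)\, v_{R(P)}(s)>\alpha(1-\epsilon)-(1-\alpha)=\alpha(2-\epsilon)-1,
\]
and the right-hand side is $\geq 0$ as soon as $\alpha\geq\frac{1}{2-\epsilon}$, so $v_{R_\alpha(P)}(s)>0$ for every $\alpha>\frac{1}{2-\epsilon}$ (and $\frac{1}{2-\epsilon}\in(\tfrac12,1)$ for $\epsilon\in(0,1)$, so this range of $\alpha$ is non-empty). The full negative support case is symmetric and gives $v_{R_\alpha(P)}(s)<0$.

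The main obstacle — and the only place coherence does real work — is the bound $v_{D(P)}(s)>1-\epsilon$; the rest is manipulation of a convex combination. Two points there need care: first, one must dispatch the degenerate branch of the estimation function ($\sum_{r\in R^+(s)}w_i(r)=0$) exactly as is implicitly done in Proposition~\ref{prop:DCoh}, i.e.\ under the convention that every agent places positive acceptance on at least one relationship out of each statement with descendants; second, one should resist trying to also improve the recursive term, since the chain constructions in the proof of Proposition~\ref{prop:RCoh} show that $v_{R(P)}(s)$ can be driven down to $-1$ even on coherent profiles exhibiting full positive support on $s$, so the $-1$ bound is the best usable one and the threshold $\frac{1}{2-\epsilon}$ is the correct (tight) value.
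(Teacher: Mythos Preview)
Your proof is correct and follows essentially the same strategy as the paper: bound $v_{D(P)}(s)$ from below using the unanimity/support hypothesis (together with $\epsilon$-coherence in part (ii)), use only the trivial bound $v_{R(P)}(s)\geq -1$, and read off the threshold on $\alpha$ from the convex combination. Your treatment of (ii) is in fact slightly more streamlined than the paper's --- you average the pointwise inequality $v_i(s)>1-\epsilon$ directly, whereas the paper takes a small detour through an auxiliary profile $P'$ with $v'_i(s)=1-\max_i\delta_i$ and appeals to Monotonicity and Narrow Unanimity of $D$ to reach the same bound --- but the underlying argument is the same.
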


\begin{proof}
\begin{enumerate}[(i)]
    \item Weak Unanimity. Consider a DRF with sentences $\sen$, $P$ an opinion profile over the DRF and $s\in\sen$ a sentence such that $v_i(s)=1$ for any agent $i$. We know that
    $$v_{D(P)}(s)=\frac 1 n \sum_{i\in Ag} v_i(s)=1.$$
    Now we turn our attention to $R$, the Recursive function. We consider the worst scenario for $R_\alpha$, which happens when $v(s)=1$ and $v_{R(P)}(s)=-1$. The DRF and profile depicted in  figure \ref{fig:RCountWU} above shows that, in fact, this scenario exists with $v_{D(P)}(s)=1$ and $v_{R(P)}(s)=-1$, and hence $v_{R_\alpha(P)}(s)=\alpha+(1-\alpha)(-1)=2\alpha - 1$. To fulfil Weak unanimity, we need that $v_{R_\alpha(P)}(s)>0$ holds, but we also know that $v_{R_\alpha(P)}(s)\geq 2\alpha -1$. Therefore, we can guarantee  Weak Unanimity by choosing $\alpha>\frac 1 2$. The proof for the negative case of Weak Unanimity goes analogously.

\item Endorsed Unanimity. To prove this property we will build a customised DRF and opinion profile to demonstrate the worst case that we can find when fulfilling the assumptions of Endorsed unanimity.


Consider a DRF 
and let $P=(O_1=(v_1,w_1),\dotsc, O_n=(v_n,w_n))$ be an $\epsilon$-coherent profile with full positive support on statement $s\in \sen$. 

First, we consider the worst case where $v_{R(P)}(s)=-1$ can be achieved when $s$ has full positive support. Figure \ref{fig:RalphaCohEU} depicts a $DRF$ and an opinion profile illustrating this situation.

\begin{figure}[H]
\centering
\begin{tikzpicture}[node distance=0.5cm and 0.5cm,>=latex,auto, every place/.style={draw}]

     \node [place,thick] (s) {$s$};
      \node [place,thick] (a1) [right= 2cm of s]{$a_1$};
      \node [place,thick] (a2) [right= 2cm of a1]{$a_2$};
      \node [thick] (dots) [right= 1cm of a2]{$\cdots$};
      \node [place,thick] (am1) [right= 1cm of dots]{$a_{m-1}$};
      \node [place,thick] (am) [right= 2cm of am1]{$a_m$};
	
    \node[below=0.4cm of a1] at (a1) {$v(a_1)=1$};
    \node[below=0.4cm of a2] at (a2) {$v(a_2)=1-x$};
    \node[above=0.4cm of am1] at (am1) {$v(a_{m-1})=1-(m-1)x$};
     \node[below=0.4cm of am] at (am) {$v(a_m)=-1$};
    \node[above=0.4cm of s] at (s) {$v(s)=1$};
    \path[->] (s) edge node {} (a1); 
    \path[->] (a1) edge node {} (a2); 
    \path[->] (a2) edge node {} (dots); 
    \path[->] (dots) edge node {} (am1); 
    \path[->] (am1) edge node {} (am); 

\end{tikzpicture}
\caption{}
\label{fig:RalphaCohEU}
\end{figure}

By choosing $0<x<\epsilon$ and $m\in\N$ such that $mx>2\geq(m-1)x$, this example shows an $\epsilon$-coherent profile where $v(a_1)=1$ (full positive support) and $v_R(s)=-1$. 
Next, we move to the general setting considered by the proof, an opinion profile with $n$ agents, knowing that the worst case for this property is possible.
Since $v_i(s_r)=1$ for any descendant $s_r\in D(s)$ and any agent $i$, the estimation function on $s$ will be $e_i(s)=1$ for any agent. Therefore, from the coherence condition at $s$ we conclude that
$$1-\epsilon <v_i(s)<1+\epsilon.$$

Consider that for every $i$, $v_i(s)=1-\delta_i$ such that $0\leq\delta_i<\epsilon$. This clearly satisfies the previous inequality. Now we take $\delta=\max_i \{\delta_1,\dotsc,\delta_n\}$ to create a new opinion profile $P'=(O'_1=(v'_1,w_1),\dotsc, O'_n=(v'_n,w_n))$ such that $v_i(a)=v'_i(a)$ for $a\in\sen\setminus \{s\}$ and $v'_i(s)=1-\delta$ for any $i$. 
Then, since $D$ fulfils Monotonicity and Narrow unanimity, we know that $v_{D(P')}(s)\leq v_{D(P)}(s)$  and $v_{D(P')}(s)=1-\delta$ respectively. And, from the example in figure \ref{fig:RalphaCohEU} we know that for any $\epsilon$-coherent opinion profile $v_{R(P)}(s)\geq-1$. Therefore,
\begin{align*}
    v_{R_\alpha(P)}(s)&= \alpha v_{D(P)}(s)+(1-\alpha)v_{R(P)}(s)\\
    &\geq \alpha v_{D(P')}(s)+(1-\alpha)(-1)\\
    &= (1-\delta)\alpha - (1-\alpha)= (2-\delta)\alpha-1
\end{align*}

So, if we set $\alpha\in(0,1)$ so that $(2-\delta)\alpha-1>0$, $R_\alpha$ will satisfy Endorsed Unanimity. Since $\delta<\epsilon$, as close as possible, imposing $\alpha\geq\frac 1{2-\epsilon}>\frac 1{2-\delta}$ the property is satisfied.

\end{enumerate}
\end{proof}

\begin{prop}
For any $\epsilon\in(0,1)$, and considering the domain to be $\epsilon$-coherent, $R_\alpha$ does not fulfil the following properties for any $\alpha\in(0,1)$:
\begin{enumerate}[(i)]
    \item Sided unanimity, and therefore Narrow unanimity;
    \item Familiar Monotonicity, and therefore Monotonicity; and
    \item Independence.
\end{enumerate}
\end{prop}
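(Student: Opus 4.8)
The plan is to prove each of the three non-satisfaction claims by exhibiting an $\epsilon$-coherent profile (a single-opinion profile, in fact) that meets the hypotheses of the property but violates its conclusion. The counterexamples are obtained by lightly modifying those already used for the recursive function $R$ over a coherent domain (Proposition \ref{prop:RCoh}), so that the conclusions survive the additional $\alpha\, v_{D(P)}$ term present in $R_\alpha$. Since $R_\alpha = \alpha D + (1-\alpha)R$ and $v_{R(P)}=e_{R(P)}$ by construction, in all these chain examples the recursive part simply propagates the valuation of the deepest descendant up to $s$.

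\emph{Sided Unanimity (hence Narrow Unanimity).} First I would take a chain $DRF$ with statements $s,a_1,\dots,a_m$ and relationships $r_i$ linking $a_{i-1}$ to $a_i$ (writing $a_0=s$), and a single-agent profile $P$ with all acceptance degrees equal to $1$. Fix $\delta\in(0,\epsilon)$ and a small $x\in(0,1)$, put $v(s)=x$ and $v(a_i)=x-i\delta$ with $m$ chosen so that $x-m\delta=-1$; choosing $m$ large enough keeps $\delta<\epsilon$, and since every consecutive pair of statements differs by exactly $\delta<\epsilon$, the profile $P$ is $\epsilon$-coherent. Because each $R^+(a_i)$ is a singleton of weight $1$, a downward induction gives $v_{R(P)}(s)=v_{R(P)}(a_1)=\dots=v(a_m)=-1$, while $v_{D(P)}(s)=x$, so $v_{R_\alpha(P)}(s)=\alpha x-(1-\alpha)$. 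The hypothesis of Sided Unanimity holds at $s$ since $v(s)=x>0$, yet $v_{R_\alpha(P)}(s)\le 0$ as soon as $x\le (1-\alpha)/\alpha$, which can be arranged for every $\alpha\in(0,1)$ by taking $x$ small enough. As $P$ is a single-opinion profile, Narrow Unanimity fails as well (via Proposition \ref{prop:unanimity}).

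\emph{Familiar Monotonicity (hence Monotonicity) and Independence.} For FM I would reuse the two single-opinion profiles on the chain $s\to a\to b$ from the proof of Proposition \ref{prop:RCoh} (figures \ref{fig:RCohCountFM1} and \ref{fig:RCohCountFM2}): $P$ assigns $1$ to $s,a,b$ with all weights $1$, and $P'$ agrees with $P$ except $v'(b)=1-x$ for some $0<x<\epsilon$. Both are $\epsilon$-coherent, and they satisfy the FM hypotheses at $s$ trivially, since $v(s)=v'(s)=1$ and the descendant $a$ of $s$ together with the relationship to it are unchanged. Then $v_{D(P)}(s)=v_{D(P')}(s)=1$, whereas $v_{R(P)}(s)=1$ but $v_{R(P')}(s)=1-x$, so $v_{R_\alpha(P)}(s)=1>1-x(1-\alpha)=v_{R_\alpha(P')}(s)$ for every $\alpha\in(0,1)$; hence FM fails, and by Lemma \ref{lem:monotonicity} so does Monotonicity. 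For Independence, $v_{R_\alpha(P)}(s)$ genuinely depends on $v_{R(P)}(s)$ whenever $\alpha\neq 1$: on the one-relationship $DRF$ $s\to a$, take two $\epsilon$-coherent single-opinion profiles with $v(s)=v'(s)=x$ but $v(a)=x$ and $v'(a)=x+\epsilon/2$ (both within $\epsilon$ of $x$, with $x+\epsilon/2\le 1$); the direct opinions on $s$ coincide while $v_{R_\alpha(P)}(s)=\alpha x+(1-\alpha)x\neq \alpha x+(1-\alpha)(x+\epsilon/2)=v_{R_\alpha(P')}(s)$, so Independence fails.

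The main obstacle is the bookkeeping in the Sided Unanimity argument: the counterexample must simultaneously remain $\epsilon$-coherent at every node (which caps the per-step decrease $\delta$ below $\epsilon$ and so forces the chain length $m$ to grow with $x+1$), drive $v_{R(P)}(s)$ all the way down to $-1$, and keep $v(s)=x>0$ small enough that $\alpha x-(1-\alpha)\le 0$ for the given $\alpha$. Getting the quantifier order right — fix $\alpha$ and $\epsilon$, then choose $x$, then $\delta$, then $m$ — is the only delicate point; the remaining computations are routine.
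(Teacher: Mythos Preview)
Your proof is correct and follows essentially the same strategy as the paper: the same decreasing-chain counterexample for Sided/Narrow Unanimity, the identical $s\to a\to b$ pair of profiles from Proposition~\ref{prop:RCoh} for Familiar Monotonicity, and a slightly more explicit two-profile construction for Independence where the paper simply appeals to the dependence of $R_\alpha$ on $R$. One minor quibble: the failure of Narrow Unanimity follows directly from the fact that $P$ is a single-opinion profile (so all agents trivially agree on $v_i(s)=x$, yet the output is $\le 0\neq x$), not from Proposition~\ref{prop:unanimity}, which only gives the implications NU$\Rightarrow$WU and SU$\Rightarrow$WU.
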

\begin{proof}
\begin{enumerate}[(i)]
    \item Sided Unanimity and Narrow Unanimity. It suffices to build a DRF and an opinion profile for which Sided Unanimity does not hold for any $\alpha,\epsilon\in (0,1)$. 

    Consider the DRF and opinion profile $P$ depicted in figure \ref{fig:RalphaCohCountSU}, where: $x\in(0,1)$ is such that $0<x<\frac {1-\alpha} {\alpha}$, $0<\delta < \epsilon$;  $m\in \N$ satisfies $(m-1)\delta\leq 1+< m\delta$; and for any $r\in\rel$, $w(r)=1$.
    \begin{figure}[H]
\centering
\begin{tikzpicture}[node distance=0.5cm and 0.5cm,>=latex,auto, every place/.style={draw}]

     \node [place,thick] (s) {$s$};
      \node [place,thick] (a1) [right= 2cm of s]{$a_1$};
      \node [thick] (dots) [right= 1cm of a1]{$\cdots$};
      \node [place,thick] (am1) [right= 1cm of dots]{$a_{m-1}$};
      \node [place,thick] (am) [right= 2cm of am1]{$a_m$};
	
    \node[below=0.4cm of a1] at (a1) {$v(a_1)=x-\delta$};
    \node[above=0.4cm of am1] at (am1) {$v(a_{m-1})=x-(m-1)\delta$};
     \node[below=0.4cm of am] at (am) {$v(a_m)=-1$};
    \node[above=0.4cm of s] at (s) {$v(s)=x$};
    \path[->] (s) edge node {} (a1); 
    \path[->] (a1) edge node {} (dots); 
    \path[->] (dots) edge node {} (am1); 
    \path[->] (am1) edge node {} (am); 

\end{tikzpicture}
\caption{}
\label{fig:RalphaCohCountSU}
\end{figure}
  
Clearly, $P$ is an $\epsilon$-coherent because $|v(a_m)-e(a_m)| = 0$, and for any $i<m$, $|v(a_i)-e(a_i)|=v(a_i)-v(a_{i+1})=\delta<\epsilon$, and $|v(s)-e(s)|=x-x+\delta<\epsilon$.
Furthermore, $P$ satisfies the assumptions of Sided unanimity at $s$ since $v(s)=x>0$.   
It is straightforward to see that $v_{D(P)}(s)=x$ and $v_{R(P)}(s)=v_{R(P)}(a_m)=-1$. Hence,
$v_{R\alpha(P)}(s)=x\alpha + (1-\alpha)(-1)=x\alpha +\alpha -1$.
But since $x<\frac {1-\alpha} \alpha$, we conclude that $v_{R\alpha(P)}(s)< \frac {1-\alpha} \alpha \alpha +\alpha -1=0$. This proves that Sided unanimity is not fulfilled, and as it is a single-opinion profile Narrow unanimity fails too. We can proceed analogously for the negative case of Sided Unanimity.

\item Familiar monotonicity. The counterexample employed in proposition \ref{prop:RCoh} to show that Familiar monotonicity does not hold for $R$ serves here as well to prove that $R_\alpha$ does not satisfy Familiar monotonicity for any $\alpha\in(0,1)$. From opinion profiles $P$ and $P'$ depicted in figures \ref{fig:RCohCountFM1} and \ref{fig:RCohCountFM2} respectively, we extract that $v_{D(P)}(s)=v_{D(P')}(s)=1$ and $1=v_{R(P)}(s)>v_{R(P')}(s)=1-x$. Therefore, it follows that $v_{R_\alpha(P)}(s)>v_{R_\alpha(P')}(s)$, hence proving that Familiar monotonicity does not hold.

    \item Independence.
    For any $\alpha\in(0,1)$, function $R_\alpha$ does not fulfil Independence due to its dependence on $R$.
    
\end{enumerate}
\end{proof}

\subsection{Constrained opinion profiles: assuming Consensus on acceptance degrees and coherent profiles}\label{Proofs:UniformCoherent}

Next, we show the results regarding our fourth, and last, scenario.   We now assume that opinion profiles are both $\epsilon$-coherent, for some $\epsilon\in(0,1)$, and agree on their acceptance degrees over relationships. The results that follow are summarised in table \ref{tableUniformAcceptance} in section \ref{subsec:sameAcceptanceResults}. 


Likewise in previous sections, next we only prove per aggregation function those properties that either were partially satisfied or not satisfied at all in previous scenarios, but do hold in this new scenario. We do not prove those properties for which the proofs in the previous sections serves as well for this scenario.

\begin{prop}\label{prop:Uniform}
Let be a $DRF$ and an opinion profile $P=(O_1,\dotsc,O_n)$. For any $s\in \sen$, assume that for each $r\in R^+(s)$ $w_i(r)=\lambda_r\in(0,1]$ for any $i$, then:

\begin{enumerate}[(i)]
    \item For any $\epsilon\in(0,1)$, if $0<\delta\leq\epsilon$ and the domain $\mathcal{D}$ is $\delta$-coherent then $D(P)$ is $\epsilon$-coherent, so satisfies $\epsilon$-Collective coherence.
    \item  For any $\epsilon\in(0,1)$, if $0<\delta\leq\epsilon$ and the domain $\mathcal{D}$ is $\delta$-coherent then $I(P)$ is $\epsilon$-coherent, so satisfies $\epsilon$-Collective coherence.
    \item  For any $\epsilon\in(0,1)$, if $0<\delta\leq\epsilon$ and the domain $\mathcal{D}$ is $\delta$-coherent then $B_\alpha(P)$ is $\epsilon$-coherent for any $\alpha\in(0,1)$, so satisfies $\epsilon$-Collective coherence.
    \item  For any $\epsilon\in(0,1)$, if $0<\delta\leq\epsilon$ and the domain $\mathcal{D}$ is $\delta$-coherent then $R_\alpha(P)$ is $\epsilon$-coherent for any $\alpha\in(0,1)$, so satisfies $\epsilon$-Collective coherence.
\end{enumerate}
\end{prop}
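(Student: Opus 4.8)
\textbf{Proof proposal for Proposition~\ref{prop:Uniform}.}

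The plan is to exploit the single key fact that consensus on acceptance degrees forbids $w_{D(P)}(r)=0$ for any $r\in R^+(s)$, since we assume at least one agent values each relationship nonzero and consensus then forces all of them (hence the average) to take that same positive value $\lambda_r$. This means the ``otherwise'' branch of the estimation function $e$ is always the one that applies at every statement with descendants, for every individual opinion \emph{and} for the aggregated opinion — no division-by-zero degeneracy can occur. This is precisely the gap that the counterexamples in Propositions~\ref{prop:DCohCountCC} and \ref{prop:ICohCountCC} relied on (there a relationship had aggregated acceptance $0$ or $\tfrac12$ coming from an agent assigning $0$, decoupling $v(s)$ from the descendant values). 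With that degeneracy removed, $\epsilon$-coherence of the inputs propagates to the output essentially by linearity of weighted averages.

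For part (i), the direct function: fix a statement $s$ with $R^+(s)\neq\emptyset$. Since the weights are the common values $\lambda_r>0$, the aggregated estimation is
\[
e_{D(P)}(s)=\frac{\sum_{r\in R^+(s)}\lambda_r\, v_{D(P)}(s_r)}{\sum_{r\in R^+(s)}\lambda_r}
=\frac{\sum_{r}\lambda_r\cdot\frac1n\sum_i v_i(s_r)}{\sum_r\lambda_r}
=\frac1n\sum_{i=1}^n\frac{\sum_r\lambda_r v_i(s_r)}{\sum_r\lambda_r}
=\frac1n\sum_{i=1}^n e_i(s),
\]
where the last equality uses that agent $i$'s weights on $R^+(s)$ are also $\lambda_r$ by consensus. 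Hence
\[
|v_{D(P)}(s)-e_{D(P)}(s)|=\Bigl|\tfrac1n\textstyle\sum_i (v_i(s)-e_i(s))\Bigr|\le\tfrac1n\textstyle\sum_i|v_i(s)-e_i(s)|<\tfrac1n\sum_i\delta=\delta\le\epsilon.
\]
For statements with $R^+(s)=\emptyset$ the estimation equals the valuation, so coherence is trivial. Part (ii) is identical in spirit: here $v_{I(P)}(s)=\frac1n\sum_i e_i(s)$ and one checks that $e_{I(P)}(s)=\frac1n\sum_i e_i^{(2)}(s)$, the average of the ``second-level'' estimations $e_i(s_r)$; the $\delta$-coherence of each $O_i$ at each descendant $s_r$ gives $|v_i(s_r)-e_i(s_r)|<\delta$, and averaging with the fixed positive weights $\lambda_r$ yields $|v_{I(P)}(s)-e_{I(P)}(s)|<\delta\le\epsilon$. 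Part (iii) then follows immediately from the algebraic identity already derived in the proof of Proposition~\ref{prop:BalphaCohCountCC},
\[
|v_{B_\alpha(P)}(s)-e_{B_\alpha(P)}(s)|=\bigl|\alpha(v_{D(P)}(s)-e_{D(P)}(s))+(1-\alpha)(v_{I(P)}(s)-e_{I(P)}(s))\bigr|,
\]
bounding each summand by $\delta$ using (i) and (ii) and taking the convex combination. Part (iv) is even easier: Proposition~\ref{prop:R} shows $v_{R(P)}=e_{R(P)}$ unconditionally, and the computation in the proof of $R_\alpha$'s collective coherence gives $|v_{R_\alpha(P)}(s)-e_{R_\alpha(P)}(s)|=\alpha|v_{D(P)}(s)-e_{D(P)}(s)|$, which by (i) is at most $\alpha\delta<\delta\le\epsilon$; in fact this holds for every $\alpha\in(0,1)$, not just $\alpha<\epsilon/2$, precisely because consensus on acceptances has tamed the direct function.

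The main obstacle — and the only real content — is verifying the interchange $e_{D(P)}(s)=\frac1n\sum_i e_i(s)$ and its analogue for $I$, i.e.\ that ``aggregate then estimate'' equals ``estimate then aggregate'' under common weights. This is a one-line consequence of linearity once one observes that the denominator $\sum_r w(r)$ is the \emph{same} constant $\sum_r\lambda_r$ for every agent and for the collective, so the nonlinear division does not obstruct the swap; the subtlety is purely that this fails without the consensus hypothesis (different denominators per agent), which is exactly why the earlier propositions gave negative results. Everything else is triangle inequality and convexity.
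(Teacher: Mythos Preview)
Your proposal is correct and follows essentially the same approach as the paper: for (i) and (ii) you exploit that the common weights $\lambda_r$ make the estimation's denominator identical across agents and the aggregate, so ``aggregate then estimate'' coincides with ``estimate then aggregate,'' after which the triangle inequality and individual $\delta$-coherence (at $s$ for $D$, at the descendants $s_r$ for $I$) finish the bound; (iii) and (iv) then follow by the convex-combination identities you cite, exactly as in the paper. Your added remark that this interchange is precisely what fails without consensus (different denominators per agent) is a helpful gloss on why the earlier negative results do not contradict the present proposition.
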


\begin{proof}
\begin{enumerate}[(i)]
    \item Collective Coherence of $D$. Let $s\in\sen$. We assume that for any $i$, $|v_i(s)-e_i(s)|<\delta\leq\epsilon$. Next we calculate the coherence condition for $D$ at  sentence $s$:
    \begin{align*}
    |v_{D(P)}(s)-e_{D(P)}(s)|&= \Bigg|\frac 1 n \sum_i v_i(s) - \frac 1 {\sum_{r\in R^+(s)} w_{D(P)}(r)} \sum_{r\in R^+(s)} w_{D(P)}(r)v_{D(P)}(s_r)\Bigg|\\
    &=\Bigg|\frac 1 n \sum_i v_i(s) - \frac 1 {\sum_{r\in R^+(s)} \lambda_r} \sum_{r\in R^+(s)} \lambda_r \big(\frac 1 n \sum_i v_i(s_r)\big)\Bigg|\\
    &= \Bigg|\frac 1 n \sum_i \Big( v_i(s)- \frac 1 {\sum_{r\in R^+(s)} w_i(r)}\sum_{r\in R^+(s)} w_i(r) v_i(s_r)\Big)\Bigg|\\
    &= \Bigg|\frac 1 n \sum_i \Big(v_i(s)-e_i(s)\Big)\Bigg| \leq \frac 1 n \sum_i \Big|v_i(s)-e_i(s)\Big|\\
    \end{align*}
    
    Thus, by $\delta$-coherence of the domain we obtain that:

    $$|v_{D(P)}(s)-e_{D(P)}(s)|\leq \frac 1 n \sum_i \Big|v_i(s)-e_i(s)\Big|<\frac 1 n \sum_i \delta \leq \epsilon$$
    
    This proves that the collective opinion by $D$ is $\epsilon$-coherent.

    \item Collective Coherence of $I$. We prove collective coherence for $I$ similarly to the proof above for $D$. Let $s\in\sen$. We assume that for any $i$, $|v_i(s)-e_i(s)|<\delta\leq\epsilon$. We compute the condition for the collective coherence of $I$ at sentence $s$ as follows:
    
     \begin{align*}
    |v_{I(P)}(s)-e_{I(P)}(s)|&= \Bigg|\frac 1 n \sum_i e_i(s) - \frac 1 {\sum_{r\in R^+(s)} w_{I(P)}(r)} \sum_{r\in R^+(s)} w_{I(P)}(r)v_{I(P)}(s_r)\Bigg|\\
    &=\Bigg|\frac 1 n \sum_i e_i(s) - \frac 1 {\sum_{r\in R^+(s)} \lambda_r} \sum_{r\in R^+(s)} \lambda_r \big(\frac 1 n \sum_i e_i(s_r)\big)\Bigg|\\
    &= \Bigg|\frac 1 n \sum_i \Big( e_i(s)- \frac 1 {\sum_{r\in R^+(s)} w_i(r)}\sum_{r\in R^+(s)} w_i(r) e_i(s_r)\Big)\Bigg|\\
    &=\Bigg|\frac 1 n \sum_i \frac {\sum_{r\in R^+(s)} w_i(r) v_i(s_r) - \sum_{r\in R^+(s)} w_i(r) e_i(s_r)}{\sum_{r\in R^+(s)} w_i(r)}\Bigg|\\
    &=\Bigg|\frac 1 n \sum_i \sum_{r\in R^+(s)} \frac { w_i(r)\Big( v_i(s_r)-e_i(s_r)\Big)} {\sum_{r\in R^+(s)} w_i(r)}\Bigg|
    \end{align*}
    
   So, by $\delta$-coherence of the domain, we obtain that:
   
   \begin{align*}|v_{D(P)}(s)-e_{D(P)}(s)|&\leq \frac 1 n \sum_i \sum_{r\in R^+(s)} \frac { w_i(r)\Big| v_i(s_r)-e_i(s_r)\Big|} {\sum_{r\in R^+(s)} w_i(r)}\\
   & <  \frac 1 n \sum_i  \frac {\sum_{r\in R^+(s)}  w_i(r)\ \delta } {\sum_{r\in R^+(s)} w_i(r)}\\
   &=\frac 1 n \sum_i \delta=\delta \leq \epsilon
   \end{align*}
   
     This proves that the collective opinion by $I$ is $\epsilon$-coherent.

    \item Collective Coherence of $B_\alpha$. We have just proven that $D$ and $I$ satisfy $\epsilon$-collective coherence assuming consensus on acceptance degrees and  a $\delta$-coherent domain with $\delta<\epsilon$. It directly follows that for any $\alpha\in(0,1)$, then $B_\alpha$ on a $\delta$-coherent domain also satisfies $\epsilon$-collective coherence.

    \item Collective Coherence of $R_\alpha$. We have just proven that $D$ satisfies $\epsilon$-collective coherence assuming consensus on  acceptance degrees and a $\delta$-coherent domain. $\epsilon$-collective coherence also holds for $R$ under the same assumptions following proposition \ref{prop:R} (see collective coherence for $R$). Hence, it follows that for any $\alpha\in(0,1)$, $R_\alpha$ on a $\delta$-coherent domain also satisfies $\epsilon$-collective coherence.

\end{enumerate}
\end{proof}

\medskip

\newpage 
\bibliographystyle{elsarticle-num}
\bibliography{references}

\end{document}